%% file: main.tex
\input{preamble}

\begin{document}
\runningtitle{Finite-Step Geometry of Marginal-Preserving Velocity Fields}
\runningauthor{Ewsey Obzherin, Valentin Khrulkov}

\twocolumn[
\aistatstitle{Finite-Step Geometry of Marginal-Preserving\\Velocity Fields in Generative Flows}
\aistatsauthor{Ewsey Obzherin$^{1,2}$, Valentin Khrulkov$^{1}$}
\aistatsaddress{$^{1}$FusionBrain Lab \qquad $^{2}$MIRIAI}
]

\input{sections/00_abstract}
\input{sections/01_problem}
\input{sections/02_response}
\input{sections/03_visibility}
\input{sections/04_reachability}
\input{sections/05_solver}
\input{sections/06_application}

\input{sections/07_positioning}
\input{sections/08_conclusion}

\clearpage
\nobalance
\input{sections/09_ai_use}

\bibliographystyle{plainnat}
\bibliography{references}

\clearpage
\nobalance
\input{sections/09_checklist}
\clearpage
\appendix
\thispagestyle{empty}
\onecolumn
\input{supplement_compact/appendix_layout}
\raggedbottom
\aistatstitle{Finite-Step Geometry of Marginal-Preserving Velocity Fields in Generative Flows: Supplementary Material}
\FloatBarrier
\renewcommand{\thesection}{\AlphAlph{\value{section}}}

% Compact supplement: each definition, proof and common protocol appears once.
\input{supplement_compact/overview}

\input{supplement_compact/theory}

\input{supplement_compact/experiments}

\FloatBarrier
\input{supplement_compact/provenance}

\end{document}

%% file: preamble.tex
\documentclass[twoside]{article}
\usepackage[preprint]{aistats2027}

\usepackage{microtype}
\usepackage{balance}
\usepackage{amsmath,amssymb,amsthm,mathtools,bm}
\usepackage{booktabs}
\usepackage{alphalph}
\usepackage{longtable}
\usepackage{enumitem}
\usepackage{tabularx}
\usepackage{graphicx}
\usepackage{tikz}
\usetikzlibrary{arrows.meta,positioning,calc,decorations.pathreplacing}
\usepackage{pgfplots}
\pgfplotsset{compat=1.18}
\usepackage{placeins}
\usepackage{float}
\usepackage[round]{natbib}

\renewcommand{\refname}{References}

\usepackage{xcolor}
\usepackage{xurl}
\usepackage[hidelinks]{hyperref}
\usepackage{aliascnt}
\usepackage[nameinlink,capitalise]{cleveref}

\crefname{theorem}{Theorem}{Theorems}
\crefname{proposition}{Proposition}{Propositions}
\crefname{lemma}{Lemma}{Lemmas}
\crefname{corollary}{Corollary}{Corollaries}
\crefname{definition}{Definition}{Definitions}

\definecolor{DeepBlue}{RGB}{37,78,122}
\definecolor{DeepOrange}{RGB}{196,92,34}
\definecolor{SoftGray}{RGB}{244,246,248}

\newtheorem{theorem}{Theorem}
\newaliascnt{proposition}{theorem}
\newtheorem{proposition}[proposition]{Proposition}
\aliascntresetthe{proposition}
\newaliascnt{lemma}{theorem}
\newtheorem{lemma}[lemma]{Lemma}
\aliascntresetthe{lemma}
\newaliascnt{corollary}{theorem}
\newtheorem{corollary}[corollary]{Corollary}
\aliascntresetthe{corollary}
\theoremstyle{definition}
\newaliascnt{definition}{theorem}
\newtheorem{definition}[definition]{Definition}
\aliascntresetthe{definition}
\theoremstyle{remark}
\newaliascnt{remark}{theorem}

\aliascntresetthe{remark}

\newcommand{\R}{\mathbb R}
\newcommand{\E}{\mathbb E}
\newcommand{\cS}{\mathcal S}
\newcommand{\cT}{\mathcal T}
\newcommand{\cF}{\mathcal F}

\newcommand{\cB}{\mathcal B}

\newcommand{\Proj}{\Pi}
\newcommand{\dd}{\mathrm d}
\newcommand{\Div}{\operatorname{div}}
\newcommand{\tr}{\operatorname{tr}}

\newcommand{\dist}{\operatorname{dist}}
\newcommand{\Response}{\mathfrak R}
\newcommand{\Reach}{\mathcal M}
\newcommand{\Uad}{\mathcal U_{\rm ad}}
\newcommand{\Id}{\mathrm{Id}}
\newcommand{\law}{\mathcal L}
\newcommand{\diag}{\operatorname{diag}}

\newcommand{\rank}{\operatorname{rank}}

\newif\ifshowexperimentplan
\ifdefined\paperworking
\showexperimentplantrue
\else
\showexperimentplanfalse
\fi

\hypersetup{
  pdftitle={Finite-Step Geometry of Marginal-Preserving Velocity Fields in Generative Flows},
  pdfauthor={Ewsey Obzherin, Valentin Khrulkov},
  pdfsubject={Numerical response geometry for generative flows}
}

\newcommand{\Cov}{\operatorname{Cov}}

\newcommand{\op}{\mathrm{op}}
\newcommand{\clip}{\operatorname{clip}}

%% file: sections/00_abstract.tex
\begin{abstract}
Generative flows can admit different velocity fields that induce the same continuous marginal evolution but behave differently after numerical discretization. We study this finite-step non-equivalence while keeping the marginal path and numerical method fixed. Projected modified equations characterize the component of local integration error that changes the marginal law and give sufficient conditions for higher marginal convergence order. For affine fixed-shape distributions, we characterize exact finite-step reachability and derive spectral obstructions and sharp nearest-reachable Wasserstein errors. For diffusion probability-flow ODEs, we construct skew-score perturbations that preserve the exact marginal path under exact scores and quantify their leakage under score error. Controlled Gaussian and mixture experiments recover the predicted order, reachability, and obstruction phenomena.
\end{abstract}

%% file: sections/01_problem.tex
\section{Why the Velocity Field Still Matters}\label{sec:problem}
A generative flow turns a simple random input into a sample by following a velocity field. Diffusion probability-flow ODEs and flow-matching models implement this idea with a neural network \citep{song2021score,lipman2023flow,albergo2023stochastic}:
\[
\dot X_t=v_t(X_t),\qquad X_0\sim\rho_0.
\]
Here $\rho_t$ denotes the distribution of particles at time $t$. Sampling requires a numerical approximation to this motion, and each velocity evaluation usually costs one network call. With only a few calls, the numerical distribution can differ substantially from the intended one.

Can we improve that approximation by changing the velocity \emph{without changing the intended continuous distributions}? A distribution records probability mass, not particle identities. Different motions can therefore produce the same continuous distributions but different numerical outputs.

\paragraph{A simple example: expansion with or without rotation.}
Imagine a two-dimensional Gaussian cloud whose radius grows exponentially. Let $\beta>0$ set its expansion rate, and let $J$ rotate a vector by a quarter turn. The fields
\begin{align}
\rho_t&=\mathcal N(0,e^{2\beta t}I_2),\label{eq:intro-gaussian}\\
w_\omega(x)&=(\beta I+\omega J)x,
\quad J=\begin{pmatrix}0&-1\\1&0\end{pmatrix}\label{eq:intro-family}
\end{align}
all produce this Gaussian path: the exact motion expands by $e^{\beta t}$ and rotates by $\omega t$, leaving the circular distributions unchanged.

Now take one Euler step of length $h$. Without rotation, its radial multiplier is $1+\beta h$, smaller than the exact multiplier $e^{\beta h}$. With rotation, the covariance multiplier becomes
\begin{equation}\label{eq:intro-euler-cov}
(1+\beta h)^2+\omega^2h^2.
\end{equation}
Choosing
\begin{equation}\label{eq:intro-omega}
\omega_h^2=\frac{e^{2\beta h}-(1+\beta h)^2}{h^2}
\end{equation}
therefore makes the numerical distribution exact. This choice is real for $\beta,h>0$. On a uniform grid it matches every grid marginal, although numerical and exact \emph{particle} endpoints still differ.

\begin{figure*}[t]
\centering
\input{figures/motivating_geometry}
\caption{\textbf{Rotation can repair an under-expanding Euler step.} In this exact two-dimensional Gaussian example ($\beta h=0.7$), the radial step falls short (left). Adding rotation preserves the continuous distribution and gives Euler the correct endpoint radius (middle), while individual particle endpoints remain different. The line--circle intersection on the right gives the required rotation. Circles are Gaussian density contours, not a bounded support. These are analytic illustrations, not benchmark measurements.}
\label{fig:intro-separation}
\end{figure*}

Euler follows a straight segment rather than the curved motion. The sideways displacement from rotation lengthens that segment and supplies the missing expansion (\cref{fig:intro-separation}). The straightest representative, which also has the least kinetic energy in this family, is therefore not the best representative for this numerical method.

\paragraph{What we hold fixed.}
The primary comparison keeps the continuous distribution path, solver, timesteps and network-call budget fixed. Other approaches change the learned transport, the update formula or the schedule \citep{liu2023rectified,karras2022edm,lu2022dpmsolver,sabour2024align,shaul2024bespoke}. Selecting the correction has an offline cost beyond sampling calls; \cref{app:calibration-algorithm,tab:recorded-cost} specify what is recorded and what remains unmeasured. Tasks that require a particular initial-to-final pairing, such as a prescribed inversion map, impose an additional constraint; improving a sampled distribution alone does not solve them.

\paragraph{Contributions.}
We answer three questions about a fixed distribution path.
\begin{enumerate}[leftmargin=*,itemsep=2pt,topsep=3pt]
\item \textbf{Marginal error and order.} We project modified-equation errors onto the component that changes the law. Cancelling successive components gives sufficient conditions for higher marginal convergence order, including quantitative tolerances for approximate cancellation.
\item \textbf{Exactness and unavoidable error.} For affine fixed-shape laws, we characterize finite-step reachability and its spectral obstructions. Rank, operator-norm and energy budgets give sharp nearest-reachable Wasserstein errors; the selected representative depends on the solver and grid.
\item \textbf{Constructions and empirical mechanisms.} Skew-score perturbations preserve exact diffusion marginals, with a separate leakage bound for score error. Controlled and pretrained-model tests examine solver transfer, native window fitting, local response and integration against each field's own dense flow. An exact Gaussian calibration analysis separates minibatch and penalty effects from these measured neural outcomes.
\end{enumerate}
Notation and detailed attribution appear in \cref{tab:notation,app:novelty-boundary}.

%% file: figures/motivating_geometry.tex
% Formula-based 2D illustration: beta*h = 0.7. No empirical data.
\resizebox{\textwidth}{!}{%
\begin{tikzpicture}[x=1cm,y=1cm,>=Latex,font=\small]
\pgfmathsetmacro{\rad}{exp(0.7)}
\pgfmathsetmacro{\b}{sqrt(exp(1.4)-1.7^2)}
\foreach \k in {0,1}{
  \begin{scope}[shift={(5.9*\k,0)}]
  \fill[SoftGray] (-2.7,-2.62) rectangle (2.7,2.62);
  \draw[black!20] (0,0) circle (1);
  \draw[black!65,densely dashed] (0,0) circle (\rad);
  \draw[black!13] (-2.27,0)--(2.27,0);
  \draw[black!13] (0,-2.27)--(0,2.27);
  \ifnum\k=0
    \node[font=\bfseries] at (0,2.37) {Straight representative};
    \foreach \ang in {0,125,245}{
      \draw[DeepBlue,thick,->] ({cos(\ang)},{sin(\ang)}) -- ({\rad*cos(\ang)},{\rad*sin(\ang)});
    }
    \draw[DeepOrange,very thick,->] (1,0.04)--(1.7,0.04);
    \fill[DeepOrange] (1.7,0) circle (2.5pt);
    \fill[DeepBlue] (\rad,0) circle (2.2pt);
    \fill[black] (1,0) circle (2pt);
    \node[anchor=south,font=\scriptsize] at (1.57,0.25) {Euler};
    \node[align=center] at (0,-2.35) {$\omega=0$\\numerical radius too small};
  \else
    \node[font=\bfseries] at (0,2.37) {Rotational representative};
    \foreach \ang in {0,125,245}{
      \draw[DeepBlue,thick,->,domain=0:1,samples=55,variable=\t]
       plot ({exp(0.7*\t)*cos(\ang+\b*\t*180/pi)},
             {exp(0.7*\t)*sin(\ang+\b*\t*180/pi)});
    }
    \draw[DeepOrange,very thick,->] (1,0)--(1.7,\b);
    \fill[DeepOrange] (1.7,\b) circle (2.5pt);
    \fill[DeepBlue] ({\rad*cos(\b*180/pi)},{\rad*sin(\b*180/pi)}) circle (2.2pt);
    \fill[black] (1,0) circle (2pt);
    \node[align=center] at (0,-2.35) {$\omega=\omega_h$\\correct law, different endpoint};
  \fi
  \end{scope}
}
\begin{scope}[shift={(11.8,0)}]
 \fill[SoftGray] (-2.7,-2.62) rectangle (2.7,2.62);
 \node[font=\bfseries] at (0,2.37) {Angular-to-radial correction};
 \begin{scope}[shift={(-1.75,-1.25)}]
  \draw[->,black!60] (0,0)--(3.95,0) node[below,font=\scriptsize] {$\operatorname{Re}z$};
  \draw[->,black!60] (0,0)--(0,3.05) node[left,font=\scriptsize] {$\operatorname{Im}z$};
  \draw[black!65,densely dashed,thick] (\rad,0) arc[start angle=0,end angle=90,radius=\rad];
  \draw[DeepBlue,very thick] (0,0)--(1.7,0);
  \draw[DeepOrange,very thick,->] (1.7,0)--(1.7,\b);
  \draw[DeepOrange,thick] (0,0)--(1.7,\b);
  \fill[DeepOrange] (1.7,\b) circle (2.5pt);
  \draw[black!35,dotted] (1.7,\b)--(1.7,2.6);
  \node[anchor=north,font=\scriptsize] at (0.85,-0.10) {$1+\beta h$};
  \node[right,font=\scriptsize] at (1.72,0.45*\b) {$\omega_hh$};
  \node[font=\scriptsize,align=left,anchor=west] at (2.08,1.85) {target circle\\$|z|=e^{\beta h}$};
 \end{scope}
 \node[align=center] at (0,-2.35) {$|1+\beta h+i\omega_hh|=e^{\beta h}$\\blue: exact flow; orange: Euler};
\end{scope}
\end{tikzpicture}%
}

%% file: sections/02_response.tex
\section{What It Means to Keep the Distribution Fixed}\label{sec:response}
The Gaussian rotation illustrates a general constraint. The continuity equation describes how a velocity redistributes probability mass:
\begin{equation}\label{eq:ce}
\partial_t\rho_t+\Div(\rho_tv_t)=0.
\end{equation}
Adding $u_t$ leaves this equation unchanged exactly when
\begin{equation}\label{eq:gauge}
\Div(\rho_tu_t)=0.
\end{equation}
The quantity $\rho_tu_t$ is a probability current; zero divergence means that this additional motion creates no net local gain or loss of mass. We call it a \emph{marginal gauge}. Under the regularity and transport-uniqueness assumptions stated in \cref{app:response-guide}, $v+u$ then produces the same continuous distributions as $v$. The constraint must hold throughout the interval, not just at its first node. A correction need not move along density contours to satisfy it.

\paragraph{The numerical response is simply the output distribution.}
Start a step with particles distributed as $\rho_t$, apply the chosen numerical map $\Psi_{t,h}^{v+u}$, and inspect their new distribution:
\begin{equation}\label{eq:response}
\Response_{t,h}(u)=(\Psi_{t,h}^{v+u})_\#\rho_t.
\end{equation}
The symbol $\#\rho_t$ means the distribution obtained by applying the map to a draw from $\rho_t$. Exact integration always returns the prescribed $\rho_{t+h}$ for an admissible correction. One numerical step need not. A low-rank restriction or an amplitude cap limits which responses are available.

We use this single output map in three ways. At small $h$, we ask what its leading distributional error is. At a fixed $h$, we ask whether any allowed correction reaches $\rho_{t+h}$ exactly. If it cannot, we minimize the distance to that target. We use $W_2$, the Wasserstein distance that measures the least root-mean-square displacement needed to transport one distribution to another. The formal reachable-set and minimization definitions are in \cref{app:admissible-response}.

\paragraph{Why local success is not enough.}
Later steps can amplify local error. Under Lipschitz stability, the final transport error is bounded by local errors weighted by their subsequent amplification (\cref{app:superconv}). This motivates separate local, own-field integration and image-quality checks.

%% file: sections/03_visibility.tex
\section{Which Integration Errors Change the Law?}\label{sec:visibility}
For general smooth fields, we test whether the omitted motion changes probability mass.

\paragraph{Euler misses acceleration, but not all acceleration matters.}
For $w=v+u$, the exact trajectory begins as $x+hw+h^2a_w/2+O(h^3)$, where
\begin{equation}\label{eq:material-acc}
a_w=\partial_tw+(\nabla w)w
\end{equation}
is the acceleration along the trajectory. Euler keeps only $x+hw$. Nevertheless, its leading distributional error depends only on the weighted divergence of the omitted acceleration:
\begin{equation}\label{eq:euler-density-defect}
(\Psi_{h,\mathrm E}^w)_\#\rho_t-\rho_{t+h}
=\frac{h^2}{2}\Div(\rho_ta_w)+O(h^3).
\end{equation}
This expansion is weak: it compares expectations of smooth test functions. If $\Div(\rho_ta_w)=0$, the leading particle error merely rearranges mass and the leading density error disappears.

We can separate any field into a component that changes the distribution and an orthogonal component that only rearranges it. Write $\Proj_\rho$ for the projection onto the former, the closure of gradient fields in $L^2(\rho)$. For Euler the relevant component is
\begin{equation}\label{eq:euler-active}
\Proj_{\rho_t}a_w.
\end{equation}
It can vanish even when the full acceleration is nonzero. The weighted projection also supplies the local Wasserstein error coefficient \citep{benamou2000computational,ambrosio2008gradient}; its construction and norm identity appear in \cref{app:hodge}.

\paragraph{The same idea applies beyond Euler.}
A modified equation represents a numerical method as the exact flow of a nearby velocity, expanded in powers of $h$ \citep{hairer2006geometric}. Its successive coefficients $E_{p_{\rm meth}},E_{{p_{\rm meth}}+1},\ldots$ describe errors introduced by a method of classical order ${p_{\rm meth}}$. We test each coefficient with the same mass-conservation condition as in \cref{eq:gauge}.

\textbf{Order result.} If the coefficients $E_{p_{\rm meth}},\ldots,E_{q-1}$ all satisfy $\Div(\rho_tE_j)=0$, the local law error is $O(h^{q+1})$ and stable repeated steps have terminal $W_2$ error $O(h^q)$. This statement holds for a smooth, uniformly positive density path on a flat torus, with a uniformly controlled modified expansion, sufficient map agreement and Lipschitz stability. \Cref{app:modified-order} states these conditions and the leading error coefficient precisely; \cref{cor:tolerance} quantifies the required accuracy of approximate cancellation.

These are sufficient conditions, not a general existence theorem for a correction. For the expanding Gaussian, a fixed rotation attains local $h^3$ law error, and a moment lower bound excludes local $O(h^4)$ for any fixed finite-energy lift (\cref{prop:fixed-euler-barrier}). This sharp local result does not forbid further terminal cancellation. The method's classical order is unchanged.

\paragraph{Controlled check: state error versus law error.}
Expansion--rotation tests give trajectory and marginal tail slopes $0.984$ and $1.968$, matching first-order particle and second-order law error. A separate fixture combines large full acceleration with negligible projected acceleration, showing why curvature is not a law-error proxy (\cref{app:controlled-order,app:controlled-curvature}).

\paragraph{Two questions must remain separate.}
An order claim requires estimates uniform in $h$, whereas fixed-step exactness may select a separate field for each budget. An $h$-dependent family yields an order guarantee when it satisfies the uniform hypotheses and cancellation budgets of \cref{cor:tolerance}. A budget-specific exact correction alone does not supply those estimates.

One can select a correction by minimizing the squared size of the projected leading error (\cref{eq:Br}). A finite dictionary may only observe or cancel part of this quantity, however; an unobserved remainder can remain. The optimization, approximation and unresolved-projection conditions are given in \cref{app:general-opt}. Next we study a family where the \emph{complete} finite-step error, rather than just its leading term, can be computed.

%% file: sections/04_reachability.tex
\section{When Can One Step Be Exact?}\label{sec:reachability}
Rotation repaired the two-dimensional expanding Gaussian. It cannot repair every target: rotations act in planes, and this puts a concrete restriction on the changes they can make. We derive exact feasibility and the smallest remaining error for affine fixed-shape distributions.

\paragraph{A family whose shape can be tracked exactly.}
Write $\rho_t=\law(m_t+\Sigma_t^{1/2}Z)$, where a single spherical random vector $Z$ has mean zero and covariance $I$. The center $m_t$ and positive covariance $\Sigma_t$ change with time, while the underlying radial shape is fixed. Gaussian and normalized fixed-degree Student families with finite covariance are examples. Arbitrarily changing a distribution's radial shape is outside this family.

An affine velocity must satisfy two requirements: its instantaneous covariance change must agree with the prescribed path, and its Euler endpoint must have the target covariance. Matching one does not guarantee the other. \Cref{thm:finite-gaussian} gives a necessary-and-sufficient matrix criterion for the general case, including the separate mean condition $m_{t+h}=m_t+h\dot m_t$. This is a feasibility equivalence, not a general efficient anisotropic solver. We specialize to a covariance-$I$ source.

\paragraph{Rotation adds expansion in pairs.}
If the instantaneous covariance grows equally in every direction, the affine generator has the form
\begin{equation}\label{eq:normal-form}
B=\beta I+K,\qquad K^\top=-K.
\end{equation}
The symmetric part $\beta I$ specifies the growth; the skew part $K$ specifies rotations. Put $a=1+h\beta$. The complete Euler covariance is
\begin{equation}\label{eq:euler-normal-form}
\Gamma(K)=a^2I-h^2K^2.
\end{equation}
In a rotation plane with frequency $\omega$, \emph{both} directions have radius $\sqrt{a^2+h^2\omega^2}$. Thus a rotation can enlarge the uncorrected radius $|a|$, but cannot shrink it or enlarge only one member of a pair.

\textbf{Exactness result.} A positive target covariance ${\Sigma_\star}$ is reachable exactly if and only if every target variance is at least $a^2$ and every variance strictly above $a^2$ occurs with even multiplicity (\cref{app:spectral-pairing}). When an isotropic target radius exceeds $|a|$, all directions can be corrected exactly in even dimension; odd dimension leaves one direction unpaired. At equality no rotation is needed in either dimension. A target smaller than the Euler radius cannot be repaired by adding rotation. This is a single-step obstruction: for odd $d\geq3$, feasible affine fields $\beta I+K(t)$ can match an expanding spherical terminal law in three Euler steps, but not one or two (\cref{prop:odd-three-step}). Intermediate laws need not match.

\begin{figure*}[t]
\centering
\input{figures/response_geometry_2d}
\caption{\textbf{A correction can change the output without reaching every target.} The first two panels show all Euler covariance responses in two dimensions as the rotation $K=\omega J$ varies ($h=1$). With unequal growth rates the response is a parabola; with equal growth rates it is a ray. Targets on these curves are reachable; the displayed targets off them are not. The coordinates are ${\xi_{\rm mix}}=\Gamma_{12}$ and ${\zeta_{\rm exp}}=\tfrac12\tr(\Gamma-\Gamma(0))$, not an isometric Wasserstein chart. Right: target radii $|a|\le y_1<y_2$ must share the reachable radius $\varrho_\star=(y_1+y_2)/2$; the displayed residual cost is exact.}
\label{fig:response-2d}
\end{figure*}

\paragraph{How much error is unavoidable?}
Suppose $d$ is even and all target variances lie above the floor $a^2$. Order their square roots as $y_1\le\cdots\le y_d$. The exact minimum is
\begin{equation}\label{eq:sharp-pairing-main}
\begin{aligned}
&\inf_{K^\top=-K}W_2^2\bigl(\law((aI+hK)Z),\law({\Sigma_\star}^{1/2}Z)\bigr)\\
&\hspace{3em}=\frac12\sum_{j=1}^{d/2}(y_{2j}-y_{2j-1})^2.
\end{aligned}
\end{equation}
Each pair must share a radius, so it chooses the mean of its two target radii. Pairing adjacent ordered radii minimizes the total loss (\cref{fig:response-2d}). Rank limits restrict how many planes can be corrected; an operator-norm cap limits how far their radii can grow. For example, target radii $2$ and $3$ in one plane must share the reachable radius $2.5$, leaving squared error $0.5$ when the Euler floor is at most $2$. A cap restricting that radius to $2.2$ instead leaves $(2-2.2)^2+(3-2.2)^2=0.68$. In higher dimension, the budget decides which pairs receive a correction. These are exact law-distance calculations within the stated spherical family, not particle-coupling errors or empirical feature distances. \Cref{thm:rank,cor:budgets} give the clipped-pair optimum; \cref{cor:energy-spectral-optimum} adds an exact energy budget.

\paragraph{Controlled check: exactness and its boundary.}
Gaussian tests recover even-dimensional exactness, odd-dimensional residuals and clipped spectral optima predicted by the formulas (\cref{app:controlled-reachability}). Mixtures provide a less symmetric test (\cref{app:controlled-mixtures}): the analytic finite-step correction reduces component-scale error in the separated regime, while additional calibration gives no improvement in the overlap stress test. Exact spherical-family feasibility does not guarantee calibration improvement in mixtures.

Unequal growth rates add a first-order mixing effect, which can vanish along symmetry directions without making the full finite response vanish (\cref{prop:symmetry-strata}). This separates sensitivity to the correction from exact target reachability (\cref{app:anisotropic-proof}). The general affine criterion, symmetry classification and proofs are in \cref{app:elliptical,app:solver-strata}. The transport-distance identity used here is classical \citep{gelbrich1990formula,bhatia2019bures}; the constrained reachable-set calculation is the result. Normalizing a covariance preserves feasibility, but a nonorthogonal change of coordinates changes the Euclidean $W_2$ metric.

The expanding-cloud mechanism also has a useful limit. For a stationary Euclidean law with finite variance, one Euler step along a finite-energy pure gauge adds $h^2\E[uu^\top]$ to its covariance. Any nonzero such correction creates error rather than repairing it. The moment identity and the contrasting torus setting are proved in \cref{app:moment-budget,app:nonlinear-atlas}.

%% file: figures/response_geometry_2d.tex
% Exact two-dimensional covariance response and aligned Bures geometry.
% Drawing coordinates are schematic. The stated mean optimum assumes
% target radii |a| <= y_1 < y_2, as specified in the accompanying caption.
\resizebox{\textwidth}{!}{%
\begin{tikzpicture}[x=1cm,y=1cm,>=Latex,font=\small]
\foreach \k in {0,1,2}{
\begin{scope}[shift={(6*\k,0)}]
\fill[SoftGray] (-2.65,-1.6) rectangle (2.85,3.9);
\ifnum\k=0
 \node[font=\bfseries] at (0.1,3.57) {Linear and quadratic effects};
 \draw[black!45,->] (-2.22,0)--(2.38,0) node[right] {$\xi_{\rm mix}$};
 \draw[black!45,->] (0,-0.55)--(0,3.03) node[left] {$\zeta_{\rm exp}$};
 \draw[DeepOrange,very thick,domain=-1.05:1.05,samples=80,variable=\t]
 plot ({1.8*\t},{2.3*\t*\t});
 \draw[DeepBlue,thick,->] (0,0)--(0.78,0);
 \fill[black] (0,0) circle (2pt);
 \fill[DeepBlue] (1.08,0.828) circle (2.6pt);
 \node[font=\scriptsize,anchor=west] at (1.2,0.67) {reachable};
 \draw[DeepBlue,line width=1.1pt] (-0.09,0.738)--(0.09,0.918) (-0.09,0.918)--(0.09,0.738);
 \node[font=\scriptsize,anchor=east] at (-0.16,0.84) {unreachable};
 \node[align=center] at (0.1,-1.12) {$(\xi_{\rm mix},\zeta_{\rm exp})=(\omega,\omega^2)$\\Unequal growth rates};
\else\ifnum\k=1
 \node[font=\bfseries] at (0.1,3.57) {Quadratic effect only};
 \draw[black!45,->] (-2.22,0)--(2.38,0) node[right] {$\xi_{\rm mix}$};
 \draw[black!45,->] (0,-0.78)--(0,3.03) node[left] {$\zeta_{\rm exp}$};
 \draw[DeepOrange,very thick,->] (0,0)--(0,2.65);
 \fill[black] (0,0) circle (2pt);
 \fill[DeepBlue] (0,0.828) circle (2.6pt);
 \node[font=\scriptsize,anchor=west] at (0.17,0.84) {reachable};
 \draw[DeepBlue,line width=1.1pt] (-0.09,-0.665)--(0.09,-0.485) (-0.09,-0.485)--(0.09,-0.665);
 \node[font=\scriptsize,anchor=east] at (-0.18,-0.58) {unreachable};
 \node[font=\scriptsize,align=center] at (1.2,2.15) {$\omega$ and $-\omega$\\give the same law};
 \node[align=center] at (0.1,-1.12) {$(\xi_{\rm mix},\zeta_{\rm exp})=(0,\omega^2)$\\Equal growth rates};
\else
 \node[font=\bfseries] at (0.1,3.57) {Closest reachable target};
 \begin{scope}[shift={(-1.70,-0.10)},x=1.3cm,y=1.15cm]
  \draw[black!45,->] (0,0)--(2.95,0) node[right] {$\varrho_1$};
  \draw[black!45,->] (0,0)--(0,2.75) node[above] {$\varrho_2$};
  \draw[black!40] (1,0.04)--(1,-0.04) node[below,font=\scriptsize] {$|a|$};
  \draw[black!40] (0.04,1)--(-0.04,1) node[left,font=\scriptsize] {$|a|$};
  \draw[black!22,densely dotted] (0,0)--(2.6,2.6);
  \draw[DeepOrange,very thick] (1,1)--(2.5,2.5);
  \fill[DeepOrange] (1,1) circle (1.5pt);
  \fill[DeepBlue] (1.1,2.1) circle (1.8pt);
  \fill[DeepOrange] (1.6,1.6) circle (1.8pt);
  \draw[DeepBlue,thick,dashed,->] (1.1,2.1)--(1.6,1.6);
  \node[font=\scriptsize,anchor=east,align=right] at (1.0,2.18) {target\\$(y_1,y_2)$};
  \node[font=\scriptsize,anchor=west,align=left] at (1.7,1.45) {best pair\\$(\varrho_\star,\varrho_\star)$};
 \end{scope}
 \node[align=center] at (0.1,-0.98) {$\varrho_1=\varrho_2\ge |a|$\\$\varrho_\star=(y_1+y_2)/2$\\$\min W_2^2=(y_2-y_1)^2/2$};
\fi\fi
\end{scope}
}
\end{tikzpicture}%
}

%% file: sections/05_solver.tex
\section{Why the Choice of Solver Matters}\label{sec:solver}
The continuous path fixes admissibility, but the solver fixes the response: a correction that repairs Euler need not repair Heun on the same timesteps.

In a constant two-dimensional expansion--rotation block, write the generator as $\lambda I+\omega J$. The exact flow multiplies the radius by $e^{h\lambda}$. A numerical method instead multiplies it by $|R(h\lambda+ih\omega)|$, where $R$ is its scalar amplification function: $R(z)=1+z$ for Euler and $R(z)=1+z+z^2/2$ for Heun. Exact marginal matching is precisely
\begin{equation}\label{eq:solver-root-main}
|R(h\lambda+ih\omega)|=e^{h\lambda}.
\end{equation}
Changing the method changes the useful frequencies. Bounded exact branches exist for small $h$, with limits $|\lambda|$ for Euler and $|\lambda|/\sqrt3$ for Heun (\cref{cor:bounded-frequency-branches}); they still vary with $h$. The comparison is with the intended expansion $e^{h\lambda}$, rather than with unit amplitude as in an absolute-stability test.

This argument extends to constant generators of the whole path $\law(e^{tS}Z)$ when the rotations commute with the symmetric growth matrix $S$. Within each growth eigenspace, planar rotations must satisfy \cref{eq:solver-root-main}; a leftover zero-frequency direction must already have the correct scalar multiplier. Consequently an odd-dimensional growth block can be obstructed even when the total dimension is even. \Cref{thm:block-classification} gives the complete classification. The response formula and derivative-kernel analysis are in \cref{app:solver-strata,thm:response-kernel}.

These conclusions concern frozen autonomous linear fields. Nonlinear, time-varying fields require the actual multistage map, not a Jacobian in $R$. Frequencies of order $1/h$ fall outside the fixed-field order theorem (\cref{sec:visibility}).

\paragraph{Controlled check: native fitting versus solver transfer.}
Across ten independently refit Digits latent PCA/class-Gaussian models ($d=16$, Euler with four field calls), finite-step calibration reduces prescribed class-scale RMSE to $0.0238\pm0.0010$, from uncorrected Base $0.461\pm0.002$ and leading-order calibration $0.174\pm0.002$ (mean $\pm$ standard error across refits); the independent sampling floor is $0.0262\pm0.0013$. Under two-step Heun, the transferred Euler correction gives $0.773\pm0.005$, versus native Heun $0.0252\pm0.0019$ and Base $0.134\pm0.002$. This is a controlled latent-distribution experiment, not an image-generation benchmark (\cref{app:controlled-digits}).

%% file: sections/06_application.tex
\section{From Exact Gauges to Pretrained Models}\label{sec:application}
Skew-score corrections reuse a pretrained denoiser call and add a low-rank matrix operation. Their path-preservation status depends on score accuracy.

\paragraph{Why a skew-score field preserves the exact path.}
The score $s_t=\nabla\log\rho_t$ points in the direction of increasing log density. A forward SDE with drift $f_t$ and scalar diffusion coefficient $g_t$ has probability-flow velocity $f_t-g_t^2s_t/2$ \citep{song2021score}. For a spatially constant skew matrix $A_t$, the additional current has zero divergence:
\[
\Div(\rho_tA_ts_t)=\Div(A_t\nabla\rho_t)=0.
\]
The last equality pairs a skew matrix with a symmetric density Hessian. Therefore
\begin{equation}\label{eq:diffusion-gauge}
w_t^A=f_t-\tfrac12g_t^2(I-A_t)s_t
\end{equation}
has the same exact marginal path. Antisymmetric currents are established \citep{ma2015recipe}; skew-score additions were already tested in generative sampling \citep{ma2022preconditioned,ma2025preconditioned}. We analyze their finite-step response.

\paragraph{How it enters a sampler.}
In the EDM parameterization, $D_\theta(x,\sigma)$ is the denoised prediction at noise level $\sigma$ \citep{karras2022edm}. The corrected update uses
\begin{equation}\label{eq:edm-gauge}
\frac{\dd x}{\dd\sigma}
=(I-A_\sigma)\frac{x-D_\theta(x,\sigma)}{\sigma}.
\end{equation}
The same denoiser output supplies both terms. A skew operator with $r$ rank pairs (rank at most $2r$) adds $O(dr)$ algebra in state dimension $d$, with no extra denoiser evaluation. If the matrix varies with position, the exact correction additionally needs its divergence; dropping that term generally loses preservation.

\paragraph{What changes when the correction is learned.}
We freeze checkpoint weights, fit a compact correction using calibration clouds, select its amplitude on an independent validation bank, and evaluate on fresh test banks. The empirical endpoint objective, specified in \cref{app:calibration-algorithm}, is a surrogate, not final FID. If the learned score differs from the exact score by $e_t$, a correction $\alpha_tA_ts_\theta$ adds at most
\begin{equation}\label{eq:learned-score-leak-main}
\|\Delta\mathcal R_t\|_{\dot H^{-1}(\rho_t)}
\le |\alpha_t|\|A_t\|_{\rm op}\|e_t\|_{L^2(\rho_t)}
\end{equation}
to the continuity residual. Here the negative norm measures mass-changing error through smooth test functions. Smaller amplitude and operator norm limit this residual; they do not certify unchanged marginals for a pretrained network. \Cref{app:diffusion,thm:leak} give the construction, leakage proof and finite-time bound.

The choice of path also matters when moving to another model family. For $X_t=(1-t)Z+tY$, with $Z\sim\mathcal N(0,I)$ independent of the data $Y$, the standard Gaussian interpolation identity gives $s_t=(tv_t-x)/(1-t)$ for $0<t<1$ \citep{albergo2023stochastic,vinograd2026eddy}. A spatially constant skew matrix then gives $\delta v_t=t(1-t){K^{\rm RF}}_t(tv_t-x)$, which vanishes at the endpoints. This is a path-specific construction, not an arbitrary rotation of a flow-matching velocity. A pretrained velocity again provides only a score proxy (\cref{prop:rf-score-gauge}).

\paragraph{Calibration itself has a finite-step limitation.}
The implemented objective compares paired feature means from common starting particles. With fixed features and iid input pairs, its expected squared discrepancy contains both the population mean error and a batch-variance term. In the expanding Gaussian of \cref{sec:problem}, the exact useful rotation can therefore lose to zero under a finite batch. At $h=.1$, batch size 1000 alone selects frequency $.68280$, while a population energy penalty $10^{-5}$ alone selects $.68214$; combining them selects zero. \Cref{app:calibration-limits} gives the exact joint threshold (\cref{prop:gaussian-joint-calibration}) and compact limit (\cref{cor:joint-calibration-limit}). This is a controlled calculation for fixed features, not a reconstruction of a neural fit. A low logged loss need not recover the best law-preserving correction.

The relevant scales differ. For bounded rotation frequencies, the squared population feature discrepancy is $O(h^4)$, whereas the extra paired-minibatch contribution at a useful rotation is of order $h^2/n_{\rm b}$. A fixed energy penalty can dominate both as the step shrinks. The compact limit in \cref{cor:joint-calibration-limit} expresses the joint effect through $1/(n_{\rm b}h^2)$ and $\lambda_h/h^4$ (\cref{fig:joint-calibration}). Its hypotheses include uniform convergence and a global optimization tolerance of $o(h^4)$. Thus the calibration analysis explains an objective-level obstruction without equating a training history, its selected-amplitude field and an asymptotic optimum.

\begin{figure*}[t]
\centering
\includegraphics[width=\textwidth]{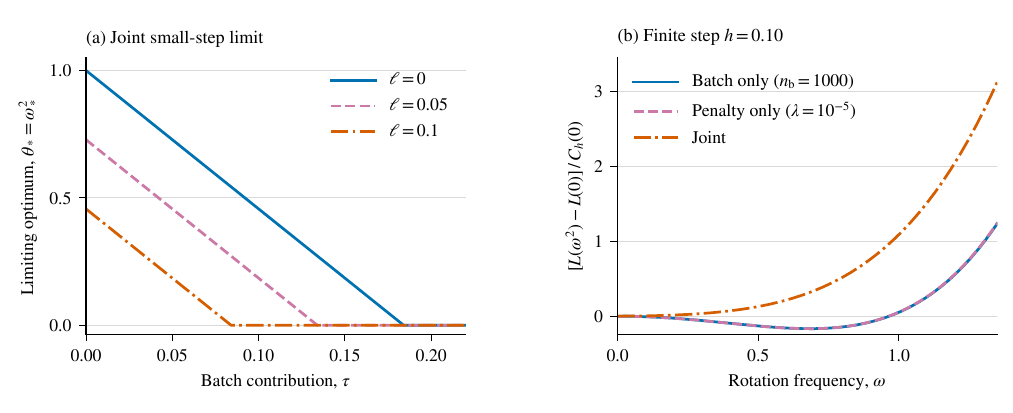}
\caption{\textbf{Sampling variance and an energy penalty can jointly suppress a useful correction.} Left: the limiting squared Gaussian rotation frequency versus $\tau_{\rm batch}=\lim(n_{\rm b}h^2)^{-1}$ for three $\ell_{\rm pen}=\lim\lambda_h/h^4$ values, with cap $\Omega=2$. Right: at $h=.1$, either effect alone prefers a nonzero correction, while their combination prefers zero. Curves subtract the objective at zero and divide by unpenalized population $C_h(0)$. These are exact fixed-feature calculations, with no neural measurements or sampling intervals.}\label{fig:joint-calibration}
\end{figure*}

%% file: sections/07_positioning.tex
\setcounter{dbltopnumber}{2}
\renewcommand{\dbltopfraction}{.9}
\section{Pretrained-Model Evidence}\label{sec:positioning}\label{sec:empirical}
We distinguish local state/feature response, integration against an approximate dense reference of the \emph{same} field, and image quality relative to real data. None certifies the others. Full frozen-model protocols, adverse/null outcomes and sample panels are in \cref{app:experiments}.

\noindent\begin{minipage}{\columnwidth}
\emph{Base} uses the uncorrected model; \emph{FH} is a correction fitted to the finite-step update. A native fit is trained for the window or solver where it is evaluated; transfer reuses it elsewhere without tuning. FID compares generated and real-image features, with lower values preferred. N4/N8 denote four/eight partition intervals; Euler uses one network call per interval, while the eight-interval Heun sampler uses 15 calls.
\end{minipage}\par

\input{sections/08_cifar_summary}

\paragraph{Native window fitting differs from redistributing a profile.}
\Cref{tab:main-cifar} summarizes CIFAR-10 gains and an adverse regime. Validation-selected corrections improve conditional-CIFAR four-call FID by $4.413$ relative to Base, with all eight independent N4 pipelines improving; unconditional-CIFAR N4 and FFHQ N8 include harmful outcomes (\cref{app:neural-checkpoints}). Relative to the original FH profile, moving that frozen profile to the late half helps both CIFAR N8 checkpoints but harms those latter cells (\cref{app:neural-temporal-profile}). We therefore train early and late corrections separately, masking the inactive half \emph{inside} the objective, then transfer the selected amplitude without receiver tuning.

For unconditional CIFAR N4, all five native late fits select $\alpha=1$ and improve FID by $1.742$--$4.364$ relative to Base. Transfer to early raises FID by $0.590$--$2.449$ relative to each corresponding native late fit; all five conditional pointwise 95\% intervals are positive (\cref{fig:window-fitting,app:neural-window}). Other cells are mixed, including FFHQ late fits that improve after transfer. Actual profiles differ, as does the original knot layout, so this supports window dependence in a specific regime rather than a universal causal rule about late-time corrections.

\input{sections/07_window_figure}

\paragraph{Better distributional integration need not mean smaller state error.}
All active rank/cap fields reduce empirical sliced-$W_2$ distance to their own dense flows, yet FID can worsen; calibration-objective--own-error associations vary by checkpoint/cap (\cref{app:neural-correlations}). In 112 local groups, one Euler step is compared with each field's own 32-substep Heun flow from common initial states. None of 84 paired intervals for the full-state RMS \emph{difference from Base} lies wholly below zero; 55 lie above. A zero correction at the Euler node need not remain zero inside the dense interval (\cref{app:neural-local}); tiny wholly inactive residuals are not learned effects. The same-bank 64-substep audit checks numerical convergence, not an independent floor. Finite-feature calibration therefore does not certify state or real-data improvement.

\paragraph{Competitiveness depends on the call budget.}
On fresh 20k banks (\cref{app:neural-cost}), Euler8+FH has lower point FID than Base Heun at 7/9 calls for all three checkpoints. Euler16 with frozen FH8 loses to Heun at 15/17 calls. These are nearby call budgets, not equal wall time; N16 tests transfer, not native N16 fitting. The independent-Gaussian rectified-flow (RF) pilot has $\Delta$FID $-0.0091$ at N8 and $-0.0084$ at N16, with both paired intervals crossing zero; its two own-field changes also differ in sign. This single-checkpoint test yields a path-specific adapter but no measurable quality gain.

\noindent\begin{minipage}{\columnwidth}
\paragraph{Scope and relation to earlier work.}
Gauge freedom and marginal observability are established \citep{horvat2024gauge,khelifa2026gradients}; Flux Matching varies fields at a fixed marginal path \citep{pao2026flux}. Weak-error and Gaussian Euler analyses are close neighbors \citep{hurault2026geometry,claustre2026gaussian}. \Cref{app:novelty-boundary} distinguishes our higher-order marginal hierarchy and constrained finite-step reachability from solver/schedule optimization \citep{obzherin2026density}.
\end{minipage}\par

%% file: sections/08_cifar_summary.tex
\begin{table*}[t]\centering\small
\caption{\textbf{CIFAR-10 gains depend on the field.} Point $\Delta\mathrm{FID}=$field--Base; bold marks lower FID within a paired bank. The native range spans all five refits, not a confidence interval. Rows come from separate studies/banks. Full conditions, adverse/null outcomes and comparator-specific intervals: \cref{app:neural-checkpoints,app:neural-temporal-profile,app:neural-window}.}\label{tab:main-cifar}
\setlength{\tabcolsep}{3pt}
\begin{tabular*}{\textwidth}{@{\extracolsep{\fill}}llrrrr@{}}\toprule
CIFAR-10 / calls & Correction & Test $n$ & Base FID & Field FID & $\Delta$FID\\\midrule
Conditional VP / 4 & FH, $\alpha=.75$ & 50k & 67.4914 & $\mathbf{63.0786}$ & $\mathbf{-4.4128}$\\
Conditional VP / 8 & FH, $\alpha=1$ & 50k & 22.2009 & $\mathbf{21.4159}$ & $\mathbf{-0.7850}$\\
Conditional VE / 8 & Frozen late & 50k & 21.4683 & $\mathbf{19.1866}$ & $\mathbf{-2.2817}$\\
Unconditional VP / 8 & Frozen late & 50k & 28.9744 & $\mathbf{26.5503}$ & $\mathbf{-2.4241}$\\
Unconditional VP / 4 & Frozen late & 50k & 95.3793 & 97.2882 & $+1.9089$\\
Unconditional VP / 4 & Native late, five refits & 20k & 96.2418 & $\mathbf{91.8783\text{--}94.4997}$ & $\mathbf{-4.3635\ldots-1.7421}$\\
\bottomrule\end{tabular*}
\end{table*}

%% file: sections/07_window_figure.tex
% Native-window evidence follows its defining paragraph.
\begin{figure*}[t]
\centering
\includegraphics[width=\textwidth]{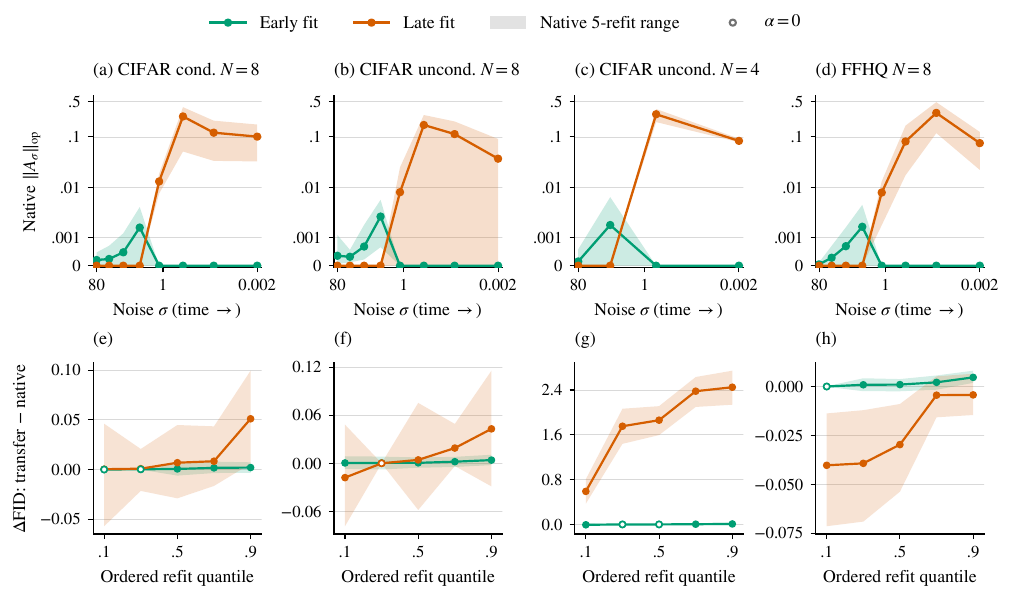}
\caption{\textbf{Fitting in one time window, then transferring to the other.} Top: mean selected $A_\sigma$ norm at Euler nodes and minimum--maximum range of five independent native refits, including zero amplitudes. Bottom: transferred minus native FID by refit quantile, with conditional pointwise 95\% paired-bootstrap bands for each fitted field. Top bands describe calibration/refit variation; bottom bands exclude calibration uncertainty. Negative differences favor transfer; positive differences favor the native window.}
\label{fig:window-fitting}
\end{figure*}

%% file: sections/08_conclusion.tex
\noindent\begin{minipage}{\columnwidth}
\section{Conclusions}\label{sec:conclusion}
With the marginal path and solver fixed, our results give sufficient order conditions, exact affine reachability criteria and sharp nearest-reachable errors. On pretrained models, gains depend on the solver, time window and call budget. Independent native-window fits can succeed in regimes where redistributing an existing profile does not; improved own-field integration still need not improve state error or FID.
\end{minipage}\par

%% file: sections/09_ai_use.tex
\subsection*{AI Use Statement}
Generative AI tools assisted with literature discovery, mathematical drafting and proof review, software implementation and debugging, experiment orchestration, manuscript editing, and figure/layout preparation. The authors are responsible for the claims, sources, code and interpretation. Analytical fixtures, pretrained-model evaluations, and posthoc summaries are identified separately; AI assistance is not treated as an independent verification of mathematical correctness or scientific conclusions.

%% file: sections/09_checklist.tex
% Official AISTATS 2027 questions are preserved verbatim.
% Only answer placeholders and brief justifications have been replaced.
\section*{Checklist}
\begin{enumerate}

  \item For all models and algorithms presented, check if you include:
  \begin{enumerate}
    \item A clear description of the mathematical setting, assumptions, algorithm, and/or model. [Yes] The paper and supplement define the fixed probability path, admissible corrections, solver response, exact structured models, learned-score approximation and independent-Gaussian RF specialization.
    \item An analysis of the properties and complexity (time, space, sample size) of any algorithm. [Yes] Theoretical exactness and obstruction results state their assumptions. Low-rank corrections add $O(dr)$ algebra per field call; the neural appendix reports actual network-call counts, sample sizes, fit updates and reference costs.
    \item (Optional) Anonymized source code, with specification of all dependencies, including external libraries. [No] The lightweight manuscript package supplies figure-generation code and compact plotting inputs, but not all sampling/fitting code and dependencies. These remain in the separate research archive; pretrained checkpoints and bulk feature/shard arrays are not bundled.
  \end{enumerate}

  \item For any theoretical claim, check if you include:
  \begin{enumerate}
    \item Statements of the full set of assumptions of all theoretical results. [Yes] Each result states its regularity, transport uniqueness, path, moment, topology, metric, rank and norm assumptions.
    \item Complete proofs of all theoretical results. [Yes] The supplement provides proofs under the stated hypotheses. Numerical regression checks complement, rather than replace, the proofs.
    \item Clear explanations of any assumptions. [Yes] The paper distinguishes local/asymptotic, fixed-step, whole-path, exact-score and neural-proxy claims and explains the restrictions underlying them.
  \end{enumerate}

  \item For all figures and tables that present empirical results, check if you include:
  \begin{enumerate}
    \item The code, data, and instructions needed to reproduce the main experimental results (either in the supplemental material or as a URL). [No] Protocols, frozen choices, scalar records and figure/galleries inputs are supplied, but a complete deposit for reproducing the main image-generation experiments is not included. Calibration clouds, fitted fields, feature arrays and sampling code remain in the separate research archive.
    \item All the training details (e.g., data splits, hyperparameters, how they were chosen). [Yes] Checkpoint weights are frozen; the neural appendix records calibration, validation/test separation, optimization settings and amplitude selection. Transfers inherit frozen choices without receiver tuning, and adverse/zero-amplitude outcomes are retained.
    \item A clear definition of the specific measure or statistics and error bars (e.g., with respect to the random seed after running experiments multiple times). [Yes] The appendix distinguishes analytic errors, empirical own-field distances, FID/KID, local state/feature diagnostics and descriptive diversity measures. Bootstrap intervals are conditional and pointwise; refit ranges, standard errors and small-family correlations have separately stated meanings.
    \item A description of the computing infrastructure used. (e.g., type of GPUs, internal cluster, or cloud provider). [Yes] Controlled/classification tests use CPU, while pretrained sampling uses eight NVIDIA H100 GPUs with resident workers. The supplement records numerical precision and network-call accounting; CPU analysis uses cached outputs.
  \end{enumerate}

  \item If you are using existing assets (e.g., code, data, models) or curating/releasing new assets, check if you include:
  \begin{enumerate}
    \item Citations of the creator If your work uses existing assets. [Yes] The paper cites Digits/scikit-learn, CIFAR-10, FFHQ, AFHQv2, EDM and Rectified Flow creators, together with the FID/KID/PRDC methods.
    \item The license information of the assets, if applicable. [No] A complete asset-license inventory is not supplied. Original dataset/model repositories and their terms govern use; the lightweight package redistributes no pretrained weights or real images, and any public research-code/data release must retain applicable licenses.
    \item New assets either in the supplemental material or as a URL, if applicable. [Yes] The package includes updated TeX, vector curves, scalar plotting inputs, cached generated-image gallery inputs and source/figure manifests. The research record distinguishes completed, deadline-partial, cancelled and deferred cases.
    \item Information about consent from data providers/curators. [Not Applicable] No new participant data or human-subject study was collected; existing datasets were used under their original access terms.
    \item Discussion of sensible content if applicable, e.g., personally identifiable information or offensive content. [Yes] FFHQ contains identifiable face imagery. No real-face images are redistributed in this package; fixed-seed generated samples are identified as qualitative illustrations.
  \end{enumerate}

  \item If you used crowdsourcing or conducted research with human subjects, check if you include:
  \begin{enumerate}
    \item The full text of instructions given to participants and screenshots. [Not Applicable] No crowdsourcing or human-subject research was conducted.
    \item Descriptions of potential participant risks, with links to Institutional Review Board (IRB) approvals if applicable. [Not Applicable] No crowdsourcing or human-subject research was conducted.
    \item The estimated hourly wage paid to participants and the total amount spent on participant compensation. [Not Applicable] No participant compensation was incurred.
  \end{enumerate}

\end{enumerate}

%% file: supplement_compact/appendix_layout.tex
% Appendix-only placement: full-size figures flow beside their related prose.
\setcounter{topnumber}{3}
\setcounter{bottomnumber}{3}
\setcounter{totalnumber}{6}
\renewcommand{\topfraction}{.95}
\renewcommand{\bottomfraction}{.95}
\renewcommand{\textfraction}{.05}
\renewcommand{\floatpagefraction}{.8}
\setlength{\floatsep}{8pt plus 2pt minus 2pt}
\setlength{\textfloatsep}{10pt plus 2pt minus 2pt}
\setlength{\intextsep}{8pt plus 2pt minus 2pt}
\setlength{\abovecaptionskip}{5pt}
\setlength{\belowcaptionskip}{0pt}

\makeatletter
\setlength{\@fptop}{0pt}
\setlength{\@fpsep}{8pt}
\setlength{\@fpbot}{0pt plus 1fil}
\makeatother

%% file: supplement_compact/overview.tex
\phantomsection\label{app:notation}\paragraph{Notation and scope.}
\Cref{tab:notation} distinguishes recurring field, density, matrix and objective roles. Dummy indices/constants retain local scope; family subscripts do not make operators interchangeable.
\begin{table}[H]\centering\small
\caption{Recurring notation. Definitions and assumptions remain in the indicated text; the table is a reading aid.}\label{tab:notation}
\begin{tabularx}{\textwidth}{>{\raggedright\arraybackslash}p{4.4cm}X}\toprule
Symbols & Meaning\\\midrule
$\rho_t$, $\widetilde\rho_t$, $\rho_t^{\rm B},\rho_t^{\rm G}$ & Prescribed density, comparison-flow density, and actual learned Base/Gauge densities.\\
$v,u,w=v+u$ & Base velocity, admissible correction, and selected representative.\\
$\Pi_\rho$, $\Psi_h^w$, $\Response_{t,h}$, $\Reach_{t,h}$ & Weighted gradient projection, numerical step map, response law, and reachable set.\\
$m_t$, $\Sigma_t$, $\Sigma_\star$ & Center (mean when integrable), scatter (covariance when normalized), and whitened target scatter.\\
$Z$, $P_{\rm PCA}$ & Spherical/Gaussian random input and fixed calibration PCA basis.\\
$B$, $\widehat B=L^{-1}BL$, $S$, $K$, $J$ & Affine generator, whitened generator, symmetric strain, skew control, and orthogonal complex structure.\\
$H_\Sigma$, $G_\Sigma$, $T_{\rm opt}$ & Whitened covariance tangent/curvature and positive optimal-transport map.\\
$A_\sigma$, $M_{\rm sk}$, $L_\sigma^{\rm res}$, $K_t^{\rm RF}$ & Dimensionless EDM skew operator, score mobility, residual factor $A_\sigma/\sigma$, and independent-path RF skew operator.\\
$U_{\rm fit},V_{\rm fit},b_\sigma$ & Latent fit factors and their noise-dependent profile coefficients.\\
$\mathcal C_{\rm cal}$, $\mathcal B_{p_{\rm meth}}$ & Finite empirical calibration objective and population projected modified-error objective.\\
$p_{\rm meth}$, $p_{\rm lin}$, $q$, $\ell_{\rm mod}$, $E_j$, $\mathcal Q_h$ & Classical method order, scalar linear order, first active marginal index, expansion cutoff, modified coefficients and remainder.\\
$r$, $\varrho$, $\upsilon$, $\ell_{\rm rad},\ell_{\rm jet}$ & Rank-pair budget, radial coordinate, source quantile, radial power and jet degree.\\
$R(z)$, $N$, $h$, $\sigma$, $\kappa_{\rm EDM}$ & Stability function, step count, signed step, noise level and EDM grid exponent (7).\\
$s=\nabla\log\rho$, $\varsigma$, $f,g$ & Score, isotropic scale, and SDE drift/diffusion coefficient.\\
$D_\theta$, $D_{\rm diff}$, $\Xi$ & Denoiser, diffusion covariance, and antisymmetric current potential.\\
$\mathcal V_\rho$, $\mathcal E_u$, $C_{\rm post}$, $\chi_m$ & State variance, gauge energy, posterior velocity covariance and mixture responsibilities.\\
$\mathcal D_j$, $\mathcal I_F$, $G_{\rm Gram}$ & Linearized projected residual operator, dictionary right inverse and gradient Gram matrix.\\
$\xi_{\rm mix}$, $\zeta_{\rm exp}$ & Mixing and expansion coordinates in \cref{fig:response-2d}; neither is the mean.\\\bottomrule
\end{tabularx}
\end{table}

\paragraph{Reading the supplement.}
\Cref{app:hodge,app:elliptical,app:solver-strata} develop marginal order, reachability and solver dependence; \cref{app:nonlinear-atlas} gives nonlinear boundaries. Controlled checks follow the corresponding proofs in \cref{app:controlled-order,app:controlled-reachability,app:controlled-solver,app:controlled-digits,app:controlled-radial,app:controlled-mixtures,app:controlled-curvature}; \cref{app:controlled-checks} collects protocol and regression scope. Neural adapters and experiments follow in \cref{app:diffusion,app:experiments}, with finite-feature calibration limits in \cref{app:calibration-limits}. Attribution is in \cref{app:novelty-boundary}; sample selection is specified below.
\input{supplement_compact/checks/roadmap}
\input{supplement_compact/sample_galleries}

%% file: supplement_compact/checks/roadmap.tex
\begin{table}[H]\centering\small
\caption{From theoretical predictions to controlled checks. Each proof cluster is followed by its corresponding verification; pretrained evidence remains separate in \cref{app:experiments}.}\label{tab:compact-controlled}\label{tab:toy-summary}
\begin{tabularx}{\linewidth}{@{}p{3.0cm}Xp{2.3cm}@{}}\toprule Theory & Prediction and retained check & Location\\\midrule
Projected marginal order & E0: trajectory versus law convergence & \cref{app:controlled-order}\\
Affine reachability & E0/E1: parity, spectral and rank/cap obstructions & \cref{app:controlled-reachability}\\
Solver response & E0/E3: exact roots, native fitting and solver/grid transfer & \cref{app:controlled-solver,app:controlled-digits}\\
Euclidean and radial limits & Stationary moment defect, torus contrast and shellwise regularity & \cref{app:controlled-moments,app:controlled-radial}\\
Mixture response & E2: separated/overlap outcomes and curvature counterexample & \cref{app:controlled-mixtures,app:controlled-curvature}\\
\bottomrule\end{tabularx}\end{table}

%% file: supplement_compact/sample_galleries.tex
\phantomsection\label{app:sample-galleries}\paragraph{Selection of sample panels.}
All sample panels use cached outputs and the first eight stored samples (per-image seed IDs for neural models; array indices for Digits). Matched rows share the declared noise/label bank. Window panels use refit 0; AFHQ retains all five refits and disabled corrections. Digits is an exact latent surrogate, separate from pretrained neural sampling. These qualitative examples add no replications, equivalence tests or evidence against mode collapse. Pixel hashes and source tags accompany the compact plotting inputs.

%% file: supplement_compact/theory.tex
% Compact submission theory. Full historical derivations remain in supplement_sections/.
% Input once, after the main paper's theorem statements and before compact experiments.
\input{supplement_compact/visuals/controlled}
\input{supplement_compact/theory/01_foundations}

\input{supplement_compact/theory/02_affine}

\input{supplement_compact/theory/03_solvers}

\input{supplement_compact/theory/04_nonlinear}
\input{supplement_compact/controlled_checks}
\input{supplement_compact/theory/05_models}

%% file: supplement_compact/visuals/controlled.tex
% Individually placed blocks keep each unchanged plot/caption next to its protocol.
\newcommand{\ControlledOrderFigure}{%
\begin{figure}[!htbp]\centering
\begin{minipage}{.34\linewidth}\centering
\includegraphics[width=\linewidth]{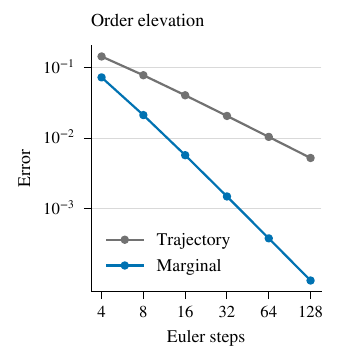}
\end{minipage}\hfill
\begin{minipage}{.62\linewidth}\raggedright
E0 gives trajectory/marginal tail slopes 0.984/1.968 over six Euler resolutions. This confirms the separation between first-order particle error and second-order law error for the analytic expansion--rotation fixture. It checks this example, not the existence of a correction for every path.
\end{minipage}
\caption{Controlled marginal-order check. Trajectory and law-error curves isolate the marginal-order prediction. Solver-root and curvature diagnostics accompany their corresponding proof blocks. This deterministic fixture has no sampling interval.}\label{fig:controlled-order-response}
\end{figure}
}
\newcommand{\ControlledSolverRootFigure}{%
\begin{figure}[!htbp]\centering
\begin{minipage}{.34\linewidth}\centering
\includegraphics[width=\linewidth]{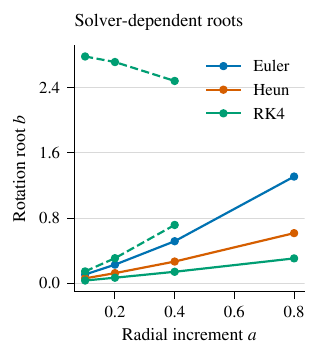}
\end{minipage}\hfill
\begin{minipage}{.62\linewidth}\raggedright
All 18 recorded Euler/Heun/RK4 root rows are retained; secondary RK4 branches are dashed. Solver-root covariance residuals are $\leq 2.9\!\times\!10^{-13}$. Different useful roots implement the same intended radial increment with different numerical response maps.
\end{minipage}
\caption{Controlled solver-response check. The recorded roots accompany \cref{cor:bounded-frequency-branches}. They are deterministic roots, not sampling intervals or matched-cost comparisons.}\label{fig:controlled-solver-roots}
\end{figure}
}
\newcommand{\ControlledCurvatureFigure}{%
\begin{figure}[!htbp]\centering
\begin{minipage}{.34\linewidth}\centering
\includegraphics[width=\linewidth]{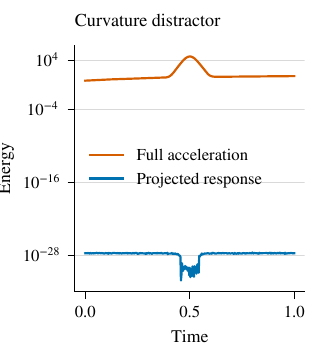}
\end{minipage}\hfill
\begin{minipage}{.62\linewidth}\raggedright
E2 combines full acceleration energy $4.9\!\times\!10^4$ with projected energy $3.0\!\times\!10^{-28}$. All 401 recorded times remain. This verifies the falsification example in \cref{prop:curvature-distractor}: full curvature is not intrinsically law-visible error.
\end{minipage}
\caption{Controlled curvature counterexample. Large full acceleration coexists with negligible projected response. This deterministic fixture has no sampling interval.}\label{fig:controlled-curvature}
\end{figure}
}

\newcommand{\ControlledObstructionFigure}{%
\begin{figure}[H]\centering
\includegraphics[width=\linewidth]{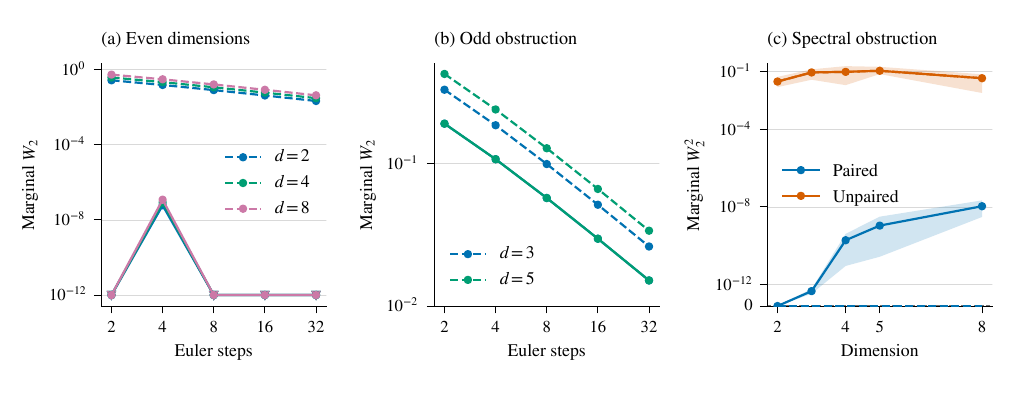}
\caption{Finite-step reachability and spectral obstruction across the retained dimensions and fixtures. Even exactness, odd residuals and paired spectral costs are distinct tests. Floating-point display floors do not turn numerical zeros into positive physical errors; fixture ranges are descriptive.}\label{fig:finite-h-obstruction}
\end{figure}
}

\newcommand{\ControlledMixtureFigure}{%
\begin{figure}[H]\centering
\begin{minipage}{\linewidth}\raggedright\normalsize
\paragraph{Mixture protocol.}\label{app:e2b-executed}\label{app:e2c-stress}
The hard GMM has eight unequal-weight components, linear centers and scales $\varsigma_m(t)=\varsigma_{m,0}e^{c_mt^2}$; its exact responsibility-weighted velocity/component rotations define the path. Random/uniform/curvature controls match per-step empirical gauge energy. Overlap stress uses scale 5, 256 calibration particles and a four-function time basis, frozen before eight held-out seeds.
\end{minipage}\par\smallskip
\includegraphics[width=\linewidth]{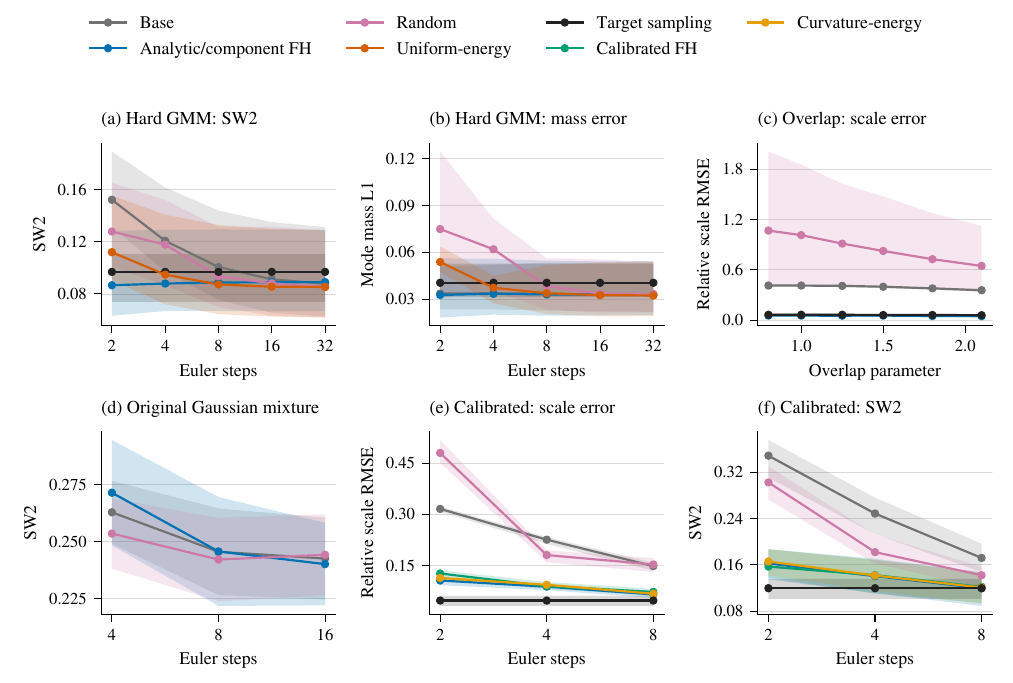}
\caption{Held-out mixture checks across actual integration budgets. Hard and overlapping mixtures retain all matched controls and negative outcomes, including competitive curvature and the failure of extra calibration to improve on analytic FH. Panels (a)--(c) show descriptive ranges over four seeds; (d) shows supplied mean $\pm$ SE over five seeds; (e)--(f) show ranges over five seeds. These bands are separate from neural paired-FID intervals.}\label{fig:controlled-mixture-response}
\end{figure}
}

\newcommand{\ControlledDigitsFigure}{%
\begin{figure}[H]\centering
\begin{minipage}{\linewidth}\raggedright\normalsize
E3 fits PCA/class-isotropic Gaussians to public Digits \citep{alpaydin1998digits,pedregosa2011scikit}: an exact specified surrogate, not the true image law. Ten stratified refits each average three particle seeds before split-level uncertainty. Dimensions 8/16/32 and all controls/negative outcomes remain (Table~\ref{tab:controlled-digits}).\label{app:e3a-digits}\label{app:e3b-transfer}
\end{minipage}\par\smallskip
\includegraphics[width=\linewidth]{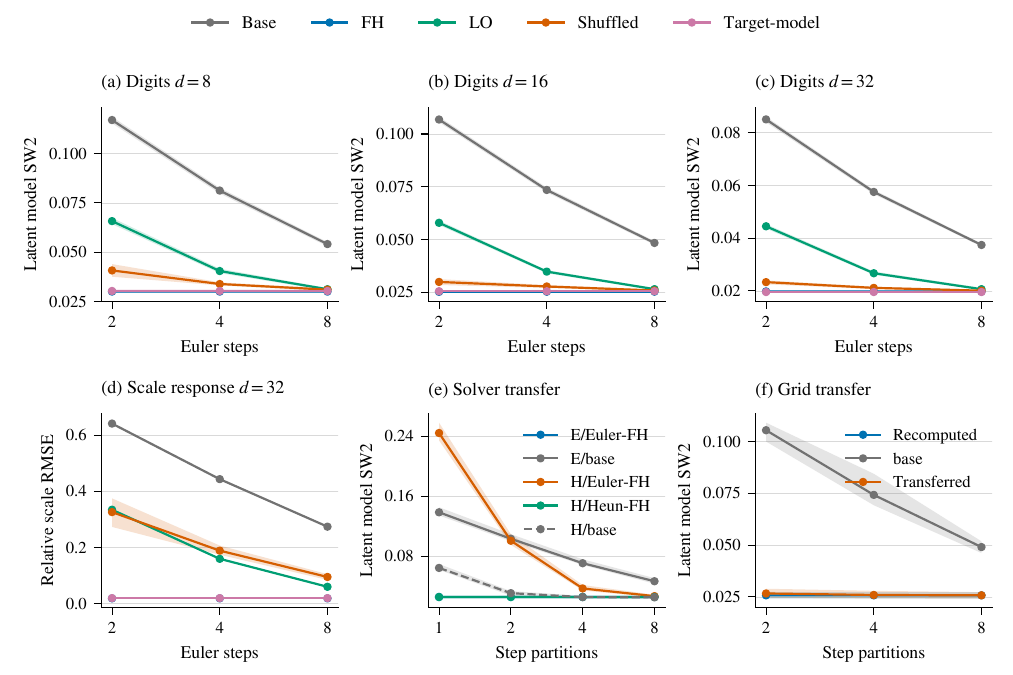}
\caption{Data-derived exact latent Digits models: dimensions 8/16/32 over integration budgets, followed by receiving-solver and grid transfer. Panels (a)--(d) use supplied 95\% split-bootstrap intervals after averaging three particle seeds within each of ten independent refits; solver/grid panels use descriptive ranges over 12 fixed seeds. These are surrogate-model integration errors, not true-image-law guarantees or matched-call solver comparisons.}\label{fig:digits-latent-robustness}
\end{figure}
}

%% file: supplement_compact/theory/01_foundations.tex
\section{Projection, Stability, and Solver-Aware Cancellation}
\label{app:hodge}\label{app:response-guide}
The state space is a flat torus, or $\R^d$ under the stated moment, decay, and complete-flow assumptions. Fix a positive smooth path ${\rho}_t$ and write $\mathcal H=L^2({\rho}_t\,\dd t)$. Integrations by parts require integrability; the formulas do not cover curved manifolds or unspecified retractions. \Cref{fig:controlled-order-response} checks the predicted numerical order.

\subsection{Admissible families and one-step response}\label{app:admissible-response}

At a fixed time, define
\begin{equation}\label{eq:fiber}
\begin{aligned}
\cS_\rho&=\{u\in L^2(\rho):\Div(\rho u)=0\},\\
\cF_{\rho,\dot\rho}&=\{w:-\Div(\rho w)=\dot\rho\}=v+\cS_\rho.
\end{aligned}
\end{equation}
For an interval $I$, choose an admissible family containing $u=0$
\[
\Uad\subset\{u=(u_s)_{s\in I}:u_s\in\cS_{\rho_s}\},
\]
with unique flows and finite second moments. Each member is a \emph{time-dependent field}: Euler uses $u_t$, while multistage methods may evaluate other times and positions. Low-rank or affine restrictions belong to $\Uad$.

\begin{definition}[Numerical response and reachable marginals]\label{def:response}
Fix a solver, $t$, and $h$ with $[t,t+h]\subset I$. Its response to a control $u\in\Uad$ is given by \cref{eq:response} in the main text.
The corresponding reachable set and best local error are
\begin{align}
\Reach_{t,h}&=\{\Response_{t,h}(u):u\in\Uad\},\label{eq:reachable}\\
\varepsilon_{t,h}&=\inf_{u\in\Uad}W_2(\Response_{t,h}(u),\rho_{t+h}).\label{eq:best-local}
\end{align}
\end{definition}
The numerical image need not be smooth, convex, or closed; no global Wasserstein logarithm is used. \Cref{prop:response-compact} gives an attainment criterion.

For the numerical law $\rho_N^u$ on a mesh $\{t_k\}$, the explicit propagation bound is
\begin{equation}\label{eq:response-telescope}
W_2(\rho_N^u,\rho_T)\le
\sum_{k=0}^{N-1}\left(\prod_{j=k+1}^{N-1}\ell_j\right)
W_2(\Response_{t_k,h_k}(u),\rho_{t_{k+1}}).
\end{equation}
Step $k$ is $\ell_k$-Lipschitz; the bound is for transport error.

\begin{equation}\label{eq:hodge}
 L^2(\rho;\R^d)=\cT_\rho\oplus\cS_\rho,
 \qquad \cT_\rho=\overline{\{\nabla\phi\}}^{L^2(\rho)}.
\end{equation}
Write $\Pi_\rho$ for the orthogonal projection onto $\mathcal T_\rho$. 
\begin{proposition}[Projection and conservation]
\label{prop:poisson-existence}\label{prop:gauge-invariance-app}\label{lem:negative-norm}\label{prop:one-dim-no-gauge}
On a connected flat torus, if $0<c\le {\rho}\le C$, each $z\in L^2({\rho})$ has a unique zero-mean $H^1$ potential satisfying
\begin{equation}\label{eq:poisson-weak-app}
 \int {\rho}\nabla\phi\cdot\nabla\psi=\int {\rho}z\cdot\nabla\psi\quad(\psi\in H^1).
\end{equation}
Its gradient is $\Pi_{\rho}z$, $z-\nabla\phi\in\mathcal S_{\rho}$, and
\begin{equation}\label{eq:negative-norm-identity}
 \|\Div({\rho}z)\|_{\dot H^{-1}({\rho})}=\|\Pi_{\rho}z\|_{L^2({\rho})},\qquad
 \|{\mathfrak f}\|_{\dot H^{-1}({\rho})}=\sup_{\|\nabla\psi\|_{L^2({\rho})}\le1}|\langle {\mathfrak f},\psi\rangle|.
\end{equation}
The potential result also holds on $\R^d$ under a weighted Poincar\'e inequality and appropriate decay. If $v$ transports ${\rho}$ and $u\in\mathcal S_{\rho}$, then $v+u$ satisfies the same continuity equation; complete unique flows and continuity-equation uniqueness identify its pushforward with ${\rho}$. In dimension one, a gauge with ${\rho}u\to0$ at both infinities is zero.
\end{proposition}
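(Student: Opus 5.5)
The statement has four parts, and I will prove them in order: the weighted Poisson problem on the torus, the projection and negative-norm identities, gauge invariance of the continuity equation, and the one-dimensional triviality.

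\emph{Weighted Poisson problem.} I will apply Lax--Milgram on the closed subspace $H^1_0:=\{\psi\in H^1:\int\psi=0\}$ of the torus. The bilinear form is $a(\phi,\psi)=\int\rho\nabla\phi\cdot\nabla\psi$.
\begin{itemize}
\item Boundedness follows from $\rho\le C$.
\item Coercivity follows from $\rho\ge c$ together with the Poincar\'e--Wirtinger inequality on the connected torus.
\item The right-hand side $\psi\mapsto\int\rho z\cdot\nabla\psi$ is bounded, by Cauchy--Schwarz in $L^2(\rho)$: its size is at most $\|z\|_{L^2(\rho)}\sqrt C\,\|\nabla\psi\|_{L^2}$.
\end{itemize}
This gives a unique zero-mean $\phi$. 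The equation extends from $H^1_0$ to all $\psi\in H^1$ because constants contribute zero on both sides.

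\emph{Projection identity.} The weak equation states that $\int\rho(z-\nabla\phi)\cdot\nabla\psi=0$ for every $\psi$, which is exactly $\Div(\rho(z-\nabla\phi))=0$ in the distributional sense. Hence $z-\nabla\phi\in\cS_\rho$.

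The gradients $\{\nabla\psi:\psi\in H^1\}$ form a closed subspace of $L^2(\rho)$. Closedness holds because the $L^2(\rho)$ and $L^2$ norms are equivalent under $c\le\rho\le C$, and Poincar\'e gives closedness of the gradient range. This subspace contains the smooth gradients, so it equals $\cT_\rho$.

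The orthogonal complement of $\cT_\rho$ is precisely $\cS_\rho$. Together these facts give the decomposition \cref{eq:hodge} and $\nabla\phi=\Pi_\rho z$.

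\emph{Negative-norm identity.} For $\mathfrak f=\Div(\rho z)$, integration by parts gives $\langle\mathfrak f,\psi\rangle=-\int\rho z\cdot\nabla\psi$. The $\cS_\rho$ part of $z$ drops out, so this equals $-\langle\Pi_\rho z,\nabla\psi\rangle_{L^2(\rho)}$.

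Taking the supremum over $\|\nabla\psi\|_{L^2(\rho)}\le1$, Cauchy--Schwarz gives the upper bound $\|\Pi_\rho z\|$. The bound is attained by choosing $\psi=-\phi/\|\nabla\phi\|$, or trivially when $\Pi_\rho z=0$.

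\emph{The $\R^d$ case.} The same argument applies once the torus Poincar\'e inequality is replaced by the assumed weighted Poincar\'e inequality. I will work in the weighted space of $\psi$ with $\nabla\psi\in L^2(\rho)$, modulo constants. The decay hypotheses are used only to justify integrations by parts against test functions.

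\emph{Gauge invariance.} Since $u\in\cS_\rho$, we have $\Div(\rho(v+u))=\Div(\rho v)=-\partial_t\rho$ distributionally, so $\rho$ also solves the continuity equation for $v+u$. Under the assumed complete unique flow of $v+u$, the pushforward $(X^{v+u}_t)_\#\rho_0$ is another solution with the same initial datum. Uniqueness for the continuity equation then identifies it with $\rho_t$.

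I expect this to be the main obstacle. The proposition only assumes uniqueness, so I will simply cite the uniqueness hypothesis, for instance Ambrosio's superposition/uniqueness theorem under Lipschitz or DiPerna--Lions regularity, rather than prove it.

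\emph{One-dimensional case.} In one dimension, $\Div(\rho u)=(\rho u)'=0$ forces $\rho u$ to be constant. The decay condition $\rho u\to0$ at infinity makes that constant zero. Since $\rho>0$, it follows that $u=0$.
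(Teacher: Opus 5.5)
Your proposal is correct and follows the paper's proof step by step: Lax--Milgram with Poincar\'e coercivity on zero-mean $H^1$; orthogonality of the residual to all gradients, giving the decomposition and $z-\nabla\phi\in\cS_\rho$; the Cauchy--Schwarz upper bound on the dual norm, attained by the normalized potential; linearity of the continuity equation plus the assumed uniqueness; and constancy of $\rho u$ in one dimension. One small step should be made explicit: identifying $\{\nabla\psi:\psi\in H^1\}$ with $\cT_\rho$, the closure of smooth gradients, needs density of smooth periodic functions in $H^1$ for the inclusion $\{\nabla\psi\}\subseteq\cT_\rho$, in addition to closedness of the $H^1$ gradient range for the reverse inclusion.
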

\begin{proof}
The weighted Dirichlet form is coercive on zero-mean $H^1$ by Poincar\'e, and its right side is bounded by Cauchy--Schwarz. Lax--Milgram gives~\eqref{eq:poisson-weak-app}; its residual is orthogonal to every gradient, proving the decomposition. Conversely gradient orthogonality is exactly weak zero weighted divergence. The dual norm in~\eqref{eq:negative-norm-identity} is bounded above by $\|\Pi_{\rho}z\|$; normalized potentials, or smooth gradient approximations, attain the bound. Linearity gives $\partial_t{\rho}+\Div({\rho}(v+u))=0$. Finally $\partial_x({\rho}u)=0$ implies ${\rho}u$ constant, and decay makes this constant zero.
\end{proof}

\begin{lemma}[Infinitesimal transport norm]\label{lem:w2-tangent}
On the torus let ${\rho}\ge c>0$ be smooth. If probability densities satisfy ${\rho}_\epsilon-{\rho}=\epsilon {\mathfrak f}+r_\epsilon$, $\|r_\epsilon\|_{\dot H^{-1}({\rho})}=o(|\epsilon|)$ and $\|{\rho}_\epsilon-{\rho}\|_\infty=o(1)$, with ${\mathfrak f}=-\Div({\rho}z)$, then
\begin{equation}\label{eq:w2-chart}
 W_2({\rho}_\epsilon,{\rho})=|\epsilon|\,\|\Pi_{\rho}z\|_{L^2({\rho})}+o(|\epsilon|).
\end{equation}
\end{lemma}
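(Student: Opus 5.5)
We prove the two inequalities $\le$ and $\ge$ separately. Both compare $W_2$ with the weighted negative norm $\|\cdot\|_{\dot H^{-1}(\rho)}$ and then use the identity \eqref{eq:negative-norm-identity} of \cref{prop:poisson-existence}:
\[
\|\Div(\rho z)\|_{\dot H^{-1}(\rho)}=\|\Pi_\rho z\|_{L^2(\rho)}.
\]
Let $\mathfrak f=-\Div(\rho z)$. The hypothesis $\|r_\epsilon\|_{\dot H^{-1}(\rho)}=o(|\epsilon|)$ gives
\[
\|\rho_\epsilon-\rho\|_{\dot H^{-1}(\rho)}=|\epsilon|\,\|\Pi_\rho z\|+o(|\epsilon|).
\]
It therefore suffices to prove
\[
W_2(\rho_\epsilon,\rho)=\|\rho_\epsilon-\rho\|_{\dot H^{-1}(\rho)}\,(1+o(1))
\]
whenever $\|\rho_\epsilon-\rho\|_\infty\to0$. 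This is the classical comparison of $W_2$ with the weighted $\dot H^{-1}$ norm (Peyre, Loeper).

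\textbf{Lower bound.} Let $(\mu_s)_{s\in[0,1]}$ be the $W_2$ geodesic from $\rho$ to $\rho_\epsilon$, with velocity $V_s$ satisfying $\int|V_s|^2\,\dd\mu_s=W_2^2$. Fix a test function $\psi$ with $\|\nabla\psi\|_{L^2(\rho)}\le1$. The continuity equation gives
\[
\langle\rho_\epsilon-\rho,\psi\rangle=\int_0^1\!\!\int\nabla\psi\cdot V_s\,\dd\mu_s\,\dd s .
\]
Cauchy--Schwarz bounds the right side by $W_2\,\sup_s\|\nabla\psi\|_{L^2(\mu_s)}$.

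The crucial input is a uniform density comparison along the geodesic: $\mu_s\le(1+o(1))\rho$. Displacement interpolation preserves $L^\infty$ bounds, since $\|\mu_s\|_\infty\le\max(\|\rho\|_\infty,\|\rho_\epsilon\|_\infty)$. Combined with $\rho\ge c$ and $\|\rho_\epsilon-\rho\|_\infty=o(1)$, this yields $\|\mu_s/\rho\|_\infty\le1+o(1)$.

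Taking the supremum over $\psi$ gives
\[
\|\rho_\epsilon-\rho\|_{\dot H^{-1}(\rho)}\le(1+o(1))\,W_2 .
\]

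\textbf{Upper bound.} Take the linear interpolation $\mu_s=\rho+s(\rho_\epsilon-\rho)$ and the velocity $V_s=\nabla\phi_\epsilon/\mu_s$, where $\phi_\epsilon$ solves the weighted Poisson problem $-\Div(\rho\nabla\phi_\epsilon)=\rho_\epsilon-\rho$ from \cref{prop:poisson-existence}. The flux $\rho\nabla\phi_\epsilon$ is independent of $s$, so $(\mu_s,V_s)$ satisfies the continuity equation.

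The Benamou--Brenier formula then gives
\[
W_2^2\le\int_0^1\!\!\int\frac{|\rho\nabla\phi_\epsilon|^2}{\mu_s}\,\dd x\,\dd s .
\]
Since $\mu_s\ge(1-o(1))\rho$ uniformly in $s$, the right side is at most
\[
(1+o(1))\int\rho|\nabla\phi_\epsilon|^2=(1+o(1))\,\|\rho_\epsilon-\rho\|_{\dot H^{-1}(\rho)}^2 .
\]

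The two bounds combine to give \eqref{eq:w2-chart}.

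The main obstacle is the lower bound's requirement that the intermediate measures $\mu_s$ stay uniformly comparable to $\rho$. This needs the $L^\infty$ stability of displacement interpolation on the flat torus, in the form $\|\mu_s\|_\infty\le\max(\|\mu_0\|_\infty,\|\mu_1\|_\infty)$ from McCann's convexity. I would quote this result rather than reprove it.

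Some routine care is also needed at a few points:
\begin{itemize}
\item Test functions must be restricted to $H^1$, or to smooth functions by density, so the dual-norm supremum of \cref{prop:poisson-existence} applies.
\item The hypothesis $\rho_\epsilon-\rho=O(\epsilon)$ in $\dot H^{-1}$ ensures the relative errors are $o(1)$ uniformly.
\item In the degenerate case $\Pi_\rho z=0$, both sides are $o(|\epsilon|)$. The bounds above give this directly, since every factor $1+o(1)$ multiplies a quantity that is already $o(|\epsilon|)$.
\end{itemize}
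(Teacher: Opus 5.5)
\textbf{Gap in the lower bound.} Your upper bound matches the paper's argument and is fine: linear interpolation with the constant flux $\rho\nabla\phi_\epsilon$, then Benamou--Brenier. The gap is in the lower bound, at the step you flag as crucial. The estimate $\|\mu_s\|_\infty\le\max(\|\rho\|_\infty,\|\rho_\epsilon\|_\infty)$ is a \emph{global} bound. Combined with $\rho\ge c$ and $\|\rho_\epsilon-\rho\|_\infty=o(1)$, it only gives $\mu_s(y)\le\max\rho+o(1)$. Hence $\|\mu_s/\rho\|_\infty\le\max\rho/\min\rho+o(1)$, not $1+o(1)$. Unless $\rho$ is constant, your argument yields only $\|\rho_\epsilon-\rho\|_{\dot H^{-1}(\rho)}\le(\max\rho/\min\rho)^{1/2}(1+o(1))\,W_2$. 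This loses the sharp constant, so the two bounds no longer combine into \eqref{eq:w2-chart}. A pointwise comparison $\mu_s\le(1+o(1))\rho$ would need a genuinely localized argument, which your proposal does not contain. It would combine McCann's inequality along trajectories, $\mu_s(T_sx)\le\max(\rho(x),\rho_\epsilon(T_1x))$, with uniform convergence of the optimal maps to the identity and uniform continuity of $\rho$.

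\textbf{How to repair it.} The simpler fix is the paper's: do not ask for a bound uniform in $\psi$. Fix a smooth $\psi$ before letting $\epsilon\to0$, and take the supremum only at the end, against the limit $\mathfrak f$ rather than against $\rho_\epsilon-\rho$. The paper Taylor-expands $\psi$ along an optimal coupling to get $|\int\psi(\rho_\epsilon-\rho)|\le\|\nabla\psi\|_{L^2(\rho)}W_2+\tfrac12\|D^2\psi\|_\infty W_2^2$. Divide by $|\epsilon|$, and use two facts: $W_2=O(|\epsilon|)$ from the upper bound, and $|\langle r_\epsilon,\psi\rangle|\le\|r_\epsilon\|_{\dot H^{-1}(\rho)}\|\nabla\psi\|_{L^2(\rho)}=o(|\epsilon|)$. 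This gives $|\langle\mathfrak f,\psi\rangle|\le\|\nabla\psi\|_{L^2(\rho)}\liminf W_2/|\epsilon|$, and \eqref{eq:negative-norm-identity} finishes. The same idea also rescues your geodesic version. For fixed $\psi$, $\sup_s\bigl|\int|\nabla\psi|^2\,\dd\mu_s-\int|\nabla\psi|^2\,\dd\rho\bigr|\le\mathrm{Lip}(|\nabla\psi|^2)\,W_2\to0$, so $\sup_s\|\nabla\psi\|_{L^2(\mu_s)}=\|\nabla\psi\|_{L^2(\rho)}+o_\psi(1)$. With that, no density comparison along the geodesic, and no McCann estimate, is needed.
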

\begin{proof}
Set $\delta={\rho}_\epsilon-{\rho}$ and solve $-\Div({\rho}\nabla\phi)=\delta$. The interpolation ${\rho}+s\delta$ with constant flux ${\rho}\nabla\phi$ has dynamic transport action at most $(1+o(1))\int {\rho}|\nabla\phi|^2$. Thus $W_2\le(1+o(1))\|\delta\|_{\dot H^{-1}({\rho})}=O(|\epsilon|)$ \citep{benamou2000computational,ambrosio2008gradient}. For a smooth periodic $\psi$ and an optimal coupling, Taylor expansion along a shortest geodesic yields
$|\int\psi\delta|\le\|\nabla\psi\|_{L^2({\rho})}W_2+\frac12\|D^2\psi\|_\infty W_2^2$.
Divide by $|\epsilon|$, take the supremum over smooth unit test gradients, and apply~\eqref{eq:negative-norm-identity}. The lower and upper limiting coefficients agree.
\end{proof}

\subsection{Finite-time stability and order}\label{sec:leak}\label{app:superconv}
Let ${\rho}_t\in\mathcal P_2$ solve the continuity equation for $v$ on $\R^d$. Put ${z^{\rm vis}}_t=\Pi_{{\rho}_t}(w_t-v_t)$ and $\epsilon_t=\|{z^{\rm vis}}_t\|_{L^2({\rho}_t)}$. Assume ${z^{\rm vis}}$ has sufficient smoothness/growth that $w-{z^{\rm vis}}$ has a complete unique flow representing its continuity equation, $w$ has a complete flow with finite second moments, and $\epsilon_t$ is integrable. Assume an integrable one-sided bound $(w_t(x)-w_t(y))\cdot(x-y)\le L_t|x-y|^2$.
\begin{theorem}[Projected-residual stability]\label{thm:leak}
The exact law ${\widetilde\rho}_t$ of $w$ started at ${\rho}_0$ obeys
\begin{equation}\label{eq:leakfull}
 W_2({\widetilde\rho}_T,{\rho}_T)\le\int_0^T\exp\!\left(\int_t^TL_s\dd s\right)\epsilon_t\dd t.
\end{equation}
On a torus the same conclusion holds with a nonnegative Lipschitz bound on periodic lifts.
\end{theorem}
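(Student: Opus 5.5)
The plan is to compare the two laws through an intermediate flow that carries $\rho_t$ exactly, and then control the distance with a synchronous coupling. Decompose $w_t=v_t+u_t$ and write $w_t-v_t=z^{\rm vis}_t+(w_t-v_t-z^{\rm vis}_t)$. By \cref{prop:poisson-existence}, the second piece lies in $\mathcal S_{\rho_t}$. Hence the field $\bar w_t:=w_t-z^{\rm vis}_t=v_t+(\text{gauge})$ satisfies the same continuity equation as $v$. By the assumed completeness and uniqueness of the flow of $w-z^{\rm vis}$, its pushforward of $\rho_0$ is exactly $\rho_t$ for all $t$. So $\rho_t=(\bar\Phi_{0,t})_\#\rho_0$, while $\widetilde\rho_t=(\Phi^w_{0,t})_\#\rho_0$, and the two generating fields differ exactly by $z^{\rm vis}_t$.

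Next I couple the two flows by their common starting point. Let $X_0\sim\rho_0$, $X_t=\Phi^w_{0,t}(X_0)$ and $Y_t=\bar\Phi_{0,t}(X_0)$. Then
\[
\tfrac{\dd}{\dd t}|X_t-Y_t|^2=2(X_t-Y_t)\cdot\bigl(w_t(X_t)-w_t(Y_t)\bigr)+2(X_t-Y_t)\cdot z^{\rm vis}_t(Y_t).
\]
The one-sided bound controls the first term by $2L_t|X_t-Y_t|^2$. Taking expectations, setting $D_t=(\E|X_t-Y_t|^2)^{1/2}$ and applying Cauchy--Schwarz to the second term gives $\tfrac{\dd}{\dd t}D_t^2\le 2L_tD_t^2+2D_t\,\|z^{\rm vis}_t\|_{L^2(\rho_t)}$. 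The key fact here is $Y_t\sim\rho_t$, so the $L^2$ norm is taken under the prescribed law; this is exactly where the projection onto $\mathcal T_{\rho_t}$ matters. Dividing by $D_t$ formally yields $\dot D_t\le L_tD_t+\epsilon_t$ with $D_0=0$. Grönwall then gives $D_T\le\int_0^T e^{\int_t^TL_s\dd s}\epsilon_t\,\dd t$, and $W_2(\widetilde\rho_T,\rho_T)\le D_T$ because the pair is a coupling.

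The main obstacle is justifying the differential inequality. This requires differentiating under the expectation, finiteness of second moments of both flows (assumed for $w$; for $\bar w$ it follows since $Y_t\sim\rho_t\in\mathcal P_2$), and handling the division by $D_t$ where it vanishes. I would work with $\sqrt{D_t^2+\eta}$, integrate the inequality rather than differentiate pointwise, and let $\eta\downarrow0$. Only absolute continuity of $t\mapsto X_t-Y_t$ along trajectories is then needed, together with the integrability of $L_t$ and $\epsilon_t$.

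On the torus I run the same argument on periodic lifts: lift $X_0$ to $\R^d$ and evolve both lifted flows by the periodic fields. The distance between lifts dominates the torus distance, and the expectation of $|z^{\rm vis}_t(Y_t)|^2$ is still $\epsilon_t^2$ because the field is periodic. The nonnegative Lipschitz bound plays the role of $L_t$, which gives the same conclusion.
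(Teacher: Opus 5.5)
Your proposal is correct and follows the paper's proof essentially step for step: the intermediate field $\bar w=w-z^{\rm vis}$ transports $\rho_t$, a synchronous coupling from a common initial draw combined with the one-sided bound and Cauchy--Schwarz (using $Y_t\sim\rho_t$) gives $\tfrac12(D_t^2)'\le L_tD_t^2+D_t\epsilon_t$, Gr\"onwall is applied to the regularized quantity $\sqrt{D_t^2+\eta}$, and the torus case is handled by coupled periodic lifts. The one detail the paper states and you leave implicit is that when $L_t<0$ the regularization adds a term of at most $|L_t|\sqrt{\eta}$, which vanishes as $\eta\downarrow0$ because $L_t$ is integrable.
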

\begin{proof}
Projection preserves weighted divergence, so $\bar w=w-{z^{\rm vis}}$ transports ${\rho}$. Couple $X$ following $w$ and $Y$ following $\bar w$ from the same initial draw. Writing ${d_{\rm coup}}^2=\E|X-Y|^2$, differentiation and Cauchy--Schwarz give $\frac12({d_{\rm coup}}^2)'\le L_t{d_{\rm coup}}^2+{d_{\rm coup}}\epsilon_t$. Apply scalar Gronwall to $\sqrt{{d_{\rm coup}}^2+\delta^2}$; for negative $L_t$ regularization adds at most $|L_t|\delta$. Let $\delta\downarrow0$, then $W_2\le {d_{\rm coup}}$. On the torus geodesic displacement is bounded by displacement of the coupled periodic lifts.
\end{proof}

\begin{proposition}[Finite-step image and error propagation]
\label{prop:response-compact}\label{prop:response-global}
If $\Theta$ is nonempty compact and $\theta\mapsto T_\theta$ is continuous in $L^2(\rho)$, its image of pushforward laws is $W_2$-compact. Distance to any target attains its minimum, which is zero exactly on this image. Without compactness, zero infimum means membership in its closure. For numerical maps with Lipschitz constants $\ell_k$ and exact-path local errors $\delta_k$, the terminal error from the exact initial law is at most $\sum_k\delta_k\prod_{j>k}\ell_j$. If $\ell_k\le1+Lh_k$, $h_k>0$, and $\delta_k\le Ch_k^{q+1}$, it is at most $CTe^{L_+T}|\pi|^q$, where $L_+=\max(L,0)$ and $|\pi|=\max h_k$.
\end{proposition}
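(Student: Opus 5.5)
The statement has three independent parts, and I would prove them in order.

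\emph{Compactness and attainment.} The pushforward map $T\mapsto T_\#\rho$ is $1$-Lipschitz from $L^2(\rho)$ into $(\mathcal P_2,W_2)$. This follows by using the coupling $(T_\theta,T_{\theta'})_\#\rho$, which gives $W_2((T_\theta)_\#\rho,(T_{\theta'})_\#\rho)\le\|T_\theta-T_{\theta'}\|_{L^2(\rho)}$. The composite $\theta\mapsto (T_\theta)_\#\rho$ is therefore continuous. A continuous image of a nonempty compact set is compact, so the image is $W_2$-compact.

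For a fixed target $\nu$, the function $\mu\mapsto W_2(\mu,\nu)$ is continuous, in fact $1$-Lipschitz by the triangle inequality. It therefore attains its minimum on the compact image. The minimum is zero iff $\nu$ equals some image point, since $W_2$ is a metric.

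Without compactness, an infimum of zero gives a sequence in the image converging to $\nu$ in $W_2$. Hence $\nu$ lies in the closure, and conversely any closure point has infimum zero.

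\emph{Error propagation.} Let $\mu_k$ be the numerical law after $k$ steps started from $\rho_{t_0}$, and let $\Phi_k$ be the $k$-th numerical map, so $\mu_{k+1}=(\Phi_k)_\#\mu_k$. The local error is $\delta_k=W_2((\Phi_k)_\#\rho_{t_k},\rho_{t_{k+1}})$. An $\ell$-Lipschitz map contracts $W_2$ by $\ell$, again by pushing forward an optimal coupling. The triangle inequality then gives
\[
W_2(\mu_{k+1},\rho_{t_{k+1}})\le \ell_k W_2(\mu_k,\rho_{t_k})+\delta_k .
\]
Starting from $W_2(\mu_0,\rho_{t_0})=0$ and inducting yields $\sum_k\delta_k\prod_{j>k}\ell_j$, which is the same telescoping as \eqref{eq:response-telescope}.

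\emph{Rate.} Use $\ell_j\le1+Lh_j\le e^{Lh_j}\le e^{L_+h_j}$, so each product is at most $e^{L_+\sum_{j>k}h_j}\le e^{L_+T}$. Then $\sum_k\delta_k\le C\sum_k h_k^{q+1}\le C|\pi|^q\sum_kh_k=C T|\pi|^q$. Here $T$ is the total mesh length and $h_k>0$ is needed for $\sum_kh_k=T$.

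There is no substantial obstacle. The only care needed is to check that the Lipschitz constant controls $W_2$ under pushforward, which requires the maps to be Borel and the laws to have finite second moments. These hold because we work in $\mathcal P_2$ and Lipschitz maps preserve second moments.
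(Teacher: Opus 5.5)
Your proposal is correct and follows the paper's proof essentially step for step. Both use the common-source coupling bound $W_2(T_{\theta\#}\rho,T_{\vartheta\#}\rho)\le\|T_\theta-T_\vartheta\|_{L^2(\rho)}$, then continuity plus compactness for the image, attainment and closure claims. Both then split through the numerical image of the exact marginal to get $e_{k+1}\le\ell_ke_k+\delta_k$, and finish with $\prod_{j>k}\ell_j\le e^{L_+T}$ and $\sum_k h_k^{q+1}\le T|\pi|^q$.
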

\begin{proof}
A common-source coupling gives $W_2(T_{\theta\#}\rho,T_{\vartheta\#}\rho)\le\|T_\theta-T_\vartheta\|_{L^2(\rho)}$. Continuity and compactness give the first claims; distance zero characterizes closure. Splitting through the numerical evolution of the exact marginal gives $e_{k+1}\le\ell_ke_k+\delta_k$. Iterate, bound products by $e^{L_+T}$, and use $\sum h_k^{q+1}\le T|\pi|^q$.
\end{proof}

\subsection{Full modified-equation order statement}\label{app:modified-order}
For a method of classical order ${p_{\rm meth}}$, the finite modified-field expansion is
\begin{equation}\label{eq:modified}
w_h=w+\sum_{j={p_{\rm meth}}}^{{\ell_{\rm mod}}}h^jE_j[w]+h^{{\ell_{\rm mod}}+1}\mathcal Q_h[w].
\end{equation}
\begin{definition}[First active modified index]\label{def:marginal-order}
For a fixed path and representative, set
\begin{equation}\label{eq:active-index}
q(w)=\min\{{p_{\rm meth}}\le j\le {\ell_{\rm mod}}:\Proj_{\rho_s}E_j[w](s)\not\equiv0\text{ on }I\}.
\end{equation}
If no coefficient is active, record only $q(w)>{\ell_{\rm mod}}$.
\end{definition}
\begin{theorem}[Gauge-induced marginal order elevation]\label{thm:superconv}
Work on the flat torus with a smooth density path uniformly bounded below and a fixed representative $w$, independent of $h$. For $0<h\le h_0$ assume \cref{eq:modified} with ${\ell_{\rm mod}}\ge q$ and, on a common time--space neighborhood,
\[
 \|w\|_{C^{q+4}}+\sum_{j=p_{\rm meth}}^{\ell_{\rm mod}}\|E_j[w]\|_{C^{q+4}}
 +\sup_{0<h\le h_0}\|\mathcal Q_h[w]\|_{C^{q+4}}\le C.
\]
Assume the numerical map agrees in $C^1$ with the modified flow up to $O(h^{q+2})$, with a constant independent of $h$. All asymptotics below are as $h\downarrow0$, with $[t,t+h]$ in this neighborhood. If, for some $q>p_{\rm meth}$,
\begin{equation}\label{eq:vertical-conditions}
\Div\!\left(\rho_tE_j[w](t)\right)=0,
\qquad j={p_{\rm meth}},\ldots,q-1,
\end{equation}
throughout a neighborhood of the step, then
\begin{align}
(\Psi_h^w)_\#\rho_t-\rho_{t+h}
&=-h^{q+1}\Div(\rho_tE_q[w])+o(h^{q+1}),\label{eq:density-q}\\
W_2((\Psi_h^w)_\#\rho_t,\rho_{t+h})
&=h^{q+1}\|\Proj_{\rho_t}E_q[w]\|_{L^2(\rho_t)}\notag\\
&\quad+o(h^{q+1}).\label{eq:w2-q}
\end{align}
If the numerical maps are uniformly \((1+Lh)\)-Lipschitz and these local bounds hold along the exact path, then for \(Nh=T\),
\begin{equation}
W_2(\rho_N^{\rm num},\rho_T)=O(N^{-q}).
\end{equation}
\end{theorem}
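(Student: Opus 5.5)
The plan has three stages: expand the law of the modified flow, absorb the gap between the numerical map and the modified flow, and then telescope through \cref{prop:response-compact}.

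\emph{Step 1: law of the modified flow.} Let $\Phi_h^{w_h}$ denote the exact flow over $[t,t+h]$ of the modified field \cref{eq:modified}. Write $\widetilde\rho_s$ for the law of $\Phi_{s-t}^{w_h}$ started from $\rho_t$, and set $D_h=w_h-w=\sum_{j\ge p_{\rm meth}}h^jE_j+h^{\ell_{\rm mod}+1}\mathcal Q_h$. Since $w$ transports $\rho$, we have $\partial_s\rho_s+\Div(\rho_s w_h)=\Div(\rho_s D_h)$. Under \cref{eq:vertical-conditions}, the terms $j<q$ vanish throughout the neighborhood, so $\Div(\rho_sD_h)=h^q\Div(\rho_sE_q)+O(h^{q+1})$ in $C^{q+2}$.

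I would then compare $\widetilde\rho$ with $\rho$ by Duhamel's formula for the linear transport equation driven by $w_h$. Take a smooth test function $\psi$ and the backward solution $\partial_s\psi_s+w_h\cdot\nabla\psi_s=0$ with $\psi_{t+h}=\psi$. This gives
\[
\int\psi\,\dd(\widetilde\rho_{t+h}-\rho_{t+h})=\int_t^{t+h}\!\!\int\nabla\psi_s\cdot D_h\,\rho_s\,\dd x\,\dd s .
\]
Because $\psi_s=\psi+O(h)$ in $C^1$, the right-hand side equals $h^{q+1}\int\nabla\psi\cdot E_q(t)\rho_t+o(h^{q+1})$, with a bound uniform over $\|\psi\|_{C^2}\le1$. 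In fact the remainder is controlled in $\dot H^{-1}(\rho_t)$, using the $C^{q+4}$ bounds and the lower bound on $\rho$. This is the weak form of \cref{eq:density-q} for the modified flow.

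\emph{Step 2: numerical map versus modified flow.} By the assumed $C^1$ agreement, $\|\Psi_h^w-\Phi_h^{w_h}\|_{C^1}=O(h^{q+2})$. A common-source coupling then gives $W_2\bigl((\Psi_h^w)_\#\rho_t,(\Phi_h^{w_h})_\#\rho_t\bigr)=O(h^{q+2})$. Pairing against test gradients gives the same order in $\dot H^{-1}$, and the $C^1$ control of the map gives $o(1)$ closeness in $L^\infty$ for the pushed-forward densities. Combining this with Step 1 yields \cref{eq:density-q}.

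\emph{Step 3: Wasserstein asymptotics and terminal order.} For \cref{eq:w2-q}, I would apply \cref{lem:w2-tangent} with $\epsilon=h^{q+1}$, $\mathfrak f=-\Div(\rho_tE_q)$ and remainder $r_\epsilon$, using the projection norm identity \cref{eq:negative-norm-identity}. The lemma needs two inputs:
\begin{itemize}
\item $\|r_\epsilon\|_{\dot H^{-1}}=o(\epsilon)$, which is Steps 1--2;
\item uniform density closeness, $\|(\Psi_h^w)_\#\rho_t-\rho_t\|_\infty=o(1)$. This follows from the change-of-variables formula, since $\Psi_h^w$ is a $C^1$-small perturbation of the identity on the torus. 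The perturbed density is compared with $\rho_t$ rather than $\rho_{t+h}$, and the two differ by $O(h)$.
\end{itemize}
One subtlety is that the lemma compares $\rho_\epsilon$ with a fixed $\rho$, while here the target is $\rho_{t+h}$. I would apply the lemma with base $\rho_{t+h}$ and the same flux, which changes the result only by $o(h^{q+1})$ because $E_q$ and $\rho$ are smooth in time. For the global statement, \cref{eq:w2-q} gives local errors $\delta_k\le Ch^{q+1}$ uniformly along the exact path. \Cref{prop:response-compact} with $\ell_k\le1+Lh$ then gives terminal error $CTe^{L_+T}h^q=O(N^{-q})$.

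The main obstacle is the uniformity in Step 1. The remainder must be $o(h^{q+1})$ in the dual norm $\dot H^{-1}(\rho_t)$, not merely against each fixed $\psi$, so that \cref{lem:w2-tangent} applies. I would secure this by writing the remainder explicitly as $\Div(\rho_t R_h)$ with $\|R_h\|_{L^2}=o(h^{q+1})$. The field $R_h$ collects three pieces:
\begin{itemize}
\item the time-shifts of $\rho_sE_q(s)$ against $\rho_tE_q(t)$;
\item the flux terms from $h^{q+1}E_{q+1}$ onward and from $\mathcal Q_h$;
\item the transport of $\psi_s$ back to $\psi$, absorbed through the pullback by $\Phi^{w_h}$, which is $C^{q+3}$-close to the identity.
\end{itemize}
Each piece is controlled by the regularity constant $C$ in the hypotheses.
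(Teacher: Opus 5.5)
Your proposal is correct and follows the paper's proof essentially step for step. Both arguments use a Duhamel expansion of the modified-flow law, a common-source coupling to absorb the $O(h^{q+2})$ map mismatch, \cref{lem:w2-tangent} based at $\rho_{t+h}$ with an $O(h)$ change of tangent norm, and \cref{prop:response-compact} for the terminal order; the only differences are that you write Duhamel in dual form with backward-transported test functions where the paper uses the forward propagator $U_h$, and that your rewriting of the remainder as $\Div(\rho_tR_h)$ with $\|R_h\|_{L^2}=o(h^{q+1})$ spells out the negative-norm control the paper compresses into one sentence.
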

\begin{proof}[Proof of \cref{thm:superconv}]
Write $w_h=w+\sum_{j={p_{\rm meth}}}^{\ell_{\rm mod}} h^jE_j+h^{{\ell_{\rm mod}}+1}\mathcal Q_h$. Since $\Div({\rho}_sE_j)=0$ for $j<q$, the prescribed density has modified-field residual $h^q\Div({\rho}_sE_q)+O(h^{q+1})$. If $\widetilde {\rho}$ is the modified-flow law and $\eta=\widetilde {\rho}-{\rho}$, then
$\partial_s\eta+\Div(w_h\eta)=-h^q\Div({\rho}_sE_q)+O(h^{q+1})$, $\eta_t=0$.
With $U_h$ the homogeneous transport propagator, variation of constants gives
\begin{equation}\label{eq:duhamel-superconv}
 \eta_{t+h}=-h^q\int_t^{t+h}U_h(t+h,s)\Div({\rho}_sE_q(s))\dd s+O(h^{q+2})
 =-h^{q+1}\Div({\rho}_tE_q(t))+O(h^{q+2}).
\end{equation}
The second equality uses the uniform smooth coefficient and flow bounds; these also give the required $L^\infty$ and negative-norm remainders. A $C^1$ map mismatch $O(h^{q+2})$ gives a law mismatch of that order by common-source coupling and smooth inverse-map change of variables. Apply~\eqref{eq:w2-chart}, based at ${\rho}_{t+h}$; smoothness changes the tangent norm from ${\rho}_t$ by $O(h)$. \Cref{prop:response-global} gives global order from local errors on the prescribed path alone.
\end{proof}
A \emph{single} feasible $w$ must solve all of \cref{eq:vertical-conditions}; general solvability is not asserted. \Cref{prop:fixed-euler-barrier,prop:acceleration-moment-barrier} delimit Euclidean feasibility without contradicting this torus result.

\begin{corollary}[Approximate cancellation budgets]\label{cor:tolerance}
Let $w^{(h)}$ be admissible representatives of the same prescribed path $\rho_t$. Assume the main order theorem's smooth coefficient/remainder and flow bounds through order $q$, its $O(h^{q+2})$ map mismatch, the stability theorem's projected-field regularity, and $(1+Lh)$-Lipschitz numerical maps, all with constants uniform over this family as $h\downarrow0$. If $\sup_t\|\Pi_{{\rho}_t}E_j[w^{(h)}](t)\|\le\epsilon_j(h)$ for ${p_{\rm meth}}\le j<q$, then
\begin{equation}\label{eq:tolorder}
 W_2(\mu_N,{\rho}_T)\le C_T\left(h^q+\sum_{j={p_{\rm meth}}}^{q-1}h^j\epsilon_j(h)\right).
\end{equation}
Thus $\epsilon_j(h)=O(h^{q-j})$ suffices for order $q$; fixed small tolerances do not generally elevate asymptotic order. For a fixed $w$, a vanishing tolerance is possible only if the corresponding projected coefficient is zero; a nonzero solenoidal coefficient can have zero projection.
\end{corollary}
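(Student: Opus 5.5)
The plan is to rerun the local argument of \cref{thm:superconv} for each fixed $h$, now keeping the lower-order coefficients that no longer vanish exactly. Then I sum the local errors with \cref{prop:response-global}. Fix $h$ and write $w=w^{(h)}$. I decompose each coefficient by \cref{prop:poisson-existence} as $E_j=\Pi_{\rho_s}E_j+(E_j-\Pi_{\rho_s}E_j)$. The second summand lies in $\mathcal S_{\rho_s}$ and contributes nothing to the continuity residual. The residual of the prescribed path under the modified field $w_h$ from \cref{eq:modified} is therefore
\[
\sum_{j=p_{\rm meth}}^{q-1}h^j\Div(\rho_s\Pi_{\rho_s}E_j)+h^q\Div(\rho_sE_q)+O(h^{q+1}),
\]
with constants uniform over the family by hypothesis.

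Next I control the law discrepancy $\eta$ between the modified-flow law and $\rho$ over one step $[t,t+h]$. Rather than expanding $\eta$ in the density sense, I use \cref{thm:leak} applied to the modified field $w_h$. Its visible part is $z^{\rm vis}_s=\Pi_{\rho_s}(w_h-v)$, and its size is bounded by $\sum_{j<q}h^j\epsilon_j(h)+Ch^q$. Starting at $\rho_t$, the exponential factor over an interval of length $h$ is $e^{O(h)}$. Integrating over the step gives a local $W_2$ error of at most $C h\bigl(h^q+\sum_{j<q}h^j\epsilon_j(h)\bigr)$. The stability theorem's regularity hypotheses on the projected field are exactly the assumed projected-field regularity, uniform in $h$. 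The $O(h^{q+2})$ $C^1$ map mismatch between $\Psi_h^{w}$ and the modified flow is handled by the common-source coupling, as in the proof of \cref{thm:superconv}. This gives
\[
\delta_k\le C\bigl(h^{q+1}+\sum_{j<q}h^{j+1}\epsilon_j(h)\bigr).
\]

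The global bound then follows from \cref{prop:response-global}. The maps are $(1+Lh)$-Lipschitz, so $\sum_k\delta_k\prod_{j>k}\ell_j\le e^{L_+T}N\max_k\delta_k$. With $Nh=T$ this yields \cref{eq:tolorder}, with $C_T=CTe^{L_+T}$. The sufficiency claim is immediate: $\epsilon_j=O(h^{q-j})$ makes every summand $O(h^q)$. The converse remark follows by taking $w$ independent of $h$. In that case, the leading local law error is $h^{j+1}\|\Pi E_j\|$, by \cref{eq:w2-q} applied with $q$ replaced by the first active index from \cref{def:marginal-order}. A fixed nonzero tolerance therefore gives only order $j$. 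A vanishing $\epsilon_j(h)$ for $h$-independent $w$ forces $\Pi E_j=0$. A solenoidal $E_j\neq0$ shows that zero projection does not force $E_j=0$.

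The main obstacle is the middle step. \Cref{thm:leak} is stated on $\mathbb R^d$ with a continuous-time field, whereas here it must be applied stepwise on the torus to an $h$-dependent modified field. This requires checking that the $h^{\ell_{\rm mod}+1}\mathcal Q_h$ remainder and the $j\ge q$ terms enter only through $\epsilon$-independent $O(h^q)$ contributions. It also requires checking that the one-sided Lipschitz constant of $w_h$ is uniform in $h$. Both follow from the assumed uniform $C^{q+4}$ bounds, but this must be verified explicitly.
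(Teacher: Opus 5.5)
Your proposal is correct and matches the paper's proof. You bound the projected modified-field difference by $\sum_{j<q}h^j\epsilon_j(h)+Ch^q$, apply \cref{thm:leak} on a length-$h$ interval to get a local law error of $Ch$ times this quantity, add the $O(h^{q+2})$ map mismatch by common-source coupling, and propagate over $T/h$ steps with \cref{prop:response-global}; the obstacles you flag are covered by the corollary's uniform hypotheses and the torus clause already stated in \cref{thm:leak}, and for fixed $w$ an exact local order $h^{j+1}$ only shows that the guarantee is not elevated, not that the terminal error cannot cancel further.
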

\begin{proof}
The projected modified-field difference is bounded by $\sum_{j<q}h^j\epsilon_j+Ch^q$. Equation~\eqref{eq:leakfull} on a length-$h$ interval bounds the local law error by $Ch$ times this quantity; add map mismatch and propagate over $T/h$ steps.
\end{proof}
Frequencies of size $1/h$ need not satisfy the uniform bounds. Unlike classical invariant-measure results \citep{debussche2012weak,abdulle2014invariant,laurent2022invariant,bronasco2025exotic}, cancellation here concerns a prescribed time-dependent path and feasible representatives.

\subsection{Optimization and finite dictionaries}\label{app:general-opt}\label{sec:galerkin}\label{app:connection}\label{app:interpretation}
Choose a dense smooth/Sobolev domain $\mathcal X\subset\mathcal H$ for $E_j:\mathcal X\to\mathcal H$, with $\mathcal U=\mathcal X\cap\mathcal S_{\rho}$ dense in the closed feasible subspace. Differentiate in the $\mathcal X$ norm. Tested residuals must lie in the adjoint domain with an $\mathcal H$ representative; projected $L^2$ descent is smooth admissible only in $\mathcal X$. The leading objective is
\begin{equation}\label{eq:Br}
\cB_{p_{\rm meth}}(w)=\frac12\int_0^T\|\Proj_{\rho_t}E_{p_{\rm meth}}[w](t)\|_{L^2(\rho_t)}^2\,\dd t
\end{equation}
\begin{proposition}[Stationarity and linearized reachability]
\label{prop:generic-stationarity}\label{prop:linearized-cancel}\label{thm:stacked-range}\label{prop:euler-adjoint}
Let $b_j=\Pi_{\rho}E_j[w]$ and $\mathcal D_j=\Pi_{\rho}DE_j[w]|_{\mathcal U}$. For $\mathcal B_{p_{\rm meth}}=\frac12\|b_{p_{\rm meth}}\|_{\mathcal H}^2$,
\begin{equation}\label{eq:generic-firstvar-app}
 D\mathcal B_{p_{\rm meth}}[w]u=\langle DE_{p_{\rm meth}}[w]^*b_{p_{\rm meth}},u\rangle_{\mathcal H},\qquad
 w\text{ stationary}\ \Longleftrightarrow\ P_{\mathcal S_{\rho}}DE_{p_{\rm meth}}[w]^*b_{p_{\rm meth}}=0.
\end{equation}
At the linearized level only, one correction cancels the residuals ${p_{\rm meth}},\ldots,q-1$ iff $-b=(-b_{p_{\rm meth}},\ldots,-b_{q-1})$ belongs to the range of the stacked operator $\mathcal D=(\mathcal D_{p_{\rm meth}},\ldots,\mathcal D_{q-1})$; the infimal squared residual is $\dist^2(-b,\overline{\operatorname{Range}\mathcal D})$. If the range is closed the least-squares infimum is attained. For Euler's acceleration action $\frac12\|\Pi_{\rho}a_w\|_{\mathcal H}^2$, endpoint-zero variations have adjoint representative
\begin{equation}\label{eq:euler-firstvar-explicit}
 z=-D_t^wb+(\nabla w)^\top b,\qquad b=\Pi_{\rho}a_w,\qquad D_t^w=\partial_t+w\cdot\nabla;
\end{equation}
stationarity is $P_{\mathcal S_{\rho}}z=0$, assuming $z\in\mathcal H$.
\end{proposition}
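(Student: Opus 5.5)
We prove the statement in four parts: the first-variation formula, the linearized stacked-range criterion, its least-squares refinement, and the explicit Euler adjoint.

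\textbf{First variation and stationarity.} Write $\mathcal B_{p_{\rm meth}}(w+\epsilon u)=\frac12\|\Pi_\rho E_{p_{\rm meth}}[w+\epsilon u]\|_{\mathcal H}^2$ for $u\in\mathcal U$, with $E_{p_{\rm meth}}$ Fr\'echet differentiable in the $\mathcal X$ norm. The chain rule gives
\[
D\mathcal B_{p_{\rm meth}}[w]u=\langle b_{p_{\rm meth}},\Pi_\rho DE_{p_{\rm meth}}[w]u\rangle_{\mathcal H}.
\]
Because $\Pi_\rho$ is an orthogonal projection, hence self-adjoint, and $\Pi_\rho b=b$, the projection can be dropped from the second slot. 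Moving $DE_{p_{\rm meth}}[w]$ across uses the standing adjoint-domain assumption that the tested residual has an $\mathcal H$ representative $DE_{p_{\rm meth}}[w]^*b_{p_{\rm meth}}$. Admissible variations range over $\mathcal U$, which is dense in the closed subspace $\mathcal S_\rho$. The derivative therefore vanishes on all admissible directions iff $DE_{p_{\rm meth}}[w]^*b_{p_{\rm meth}}$ is orthogonal to $\mathcal S_\rho$, that is, iff $P_{\mathcal S_\rho}DE_{p_{\rm meth}}[w]^*b_{p_{\rm meth}}=0$. This is \cref{eq:generic-firstvar-app}.

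\textbf{Linearized reachability.} Replace $\Pi_\rho E_j[w+u]$ by its first-order expansion $b_j+\mathcal D_ju$. This is a statement about the linearization only, and no remainder is controlled. One correction cancels residuals $p_{\rm meth},\ldots,q-1$ iff the stacked equation $\mathcal Du=-b$ has a solution, which is exactly the range statement. The infimum of $\|b+\mathcal Du\|^2$ over $u$ is the squared distance from $-b$ to $\operatorname{Range}\mathcal D$, and this distance is unchanged by passing to the closure. If the range is closed, the orthogonal projection of $-b$ onto it lies in the range, so the least-squares infimum is attained. These steps are routine Hilbert-space geometry in the product space $\mathcal H^{q-p_{\rm meth}}$.

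\textbf{Euler adjoint.} For Euler, $E_1$ is $-\frac12a_w$ up to sign and scaling, and constant factors do not affect stationarity. We compute
\[
Da_w[w]u=\partial_tu+(\nabla u)w+(\nabla w)u=D_t^wu+(\nabla w)u.
\]
We then pair this with $b$ in $\mathcal H=L^2(\rho_t\,\dd t)$ and integrate by parts in space-time, so that $D_t^w$ moves from $u$ onto $b$. The boundary terms in time vanish because the variations vanish at the endpoints. The spatial boundary terms vanish on the torus, or by decay on $\R^d$.

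\textbf{Main obstacle.} The hardest step is justifying that the weight $\rho$ contributes no extra term to the adjoint. A naive adjoint of $D_t^w$ in $L^2(\rho\,\dd t)$ picks up $-(\partial_t\rho+\Div(\rho w))/\rho$. This vanishes exactly because $w\in\mathcal F_{\rho,\dot\rho}$ transports $\rho$, by \cref{eq:gauge} and \cref{eq:ce}. Using that identity gives
\[
\int\!\!\int\rho\, b\cdot D_t^wu=-\int\!\!\int\rho\,(D_t^wb)\cdot u.
\]
The term $(\nabla w)u$ has pointwise adjoint $(\nabla w)^\top b$. Together these yield the representative $z$ in \cref{eq:euler-firstvar-explicit}. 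The stationarity condition $P_{\mathcal S_\rho}z=0$ then follows from the general criterion above, under the assumption $z\in\mathcal H$.
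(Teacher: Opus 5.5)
Your proposal is correct and follows the paper's proof essentially step for step: self-adjointness of $\Pi_\rho$ and density of $\mathcal U$ give stationarity, range membership and distance to the closure of the range give the linearized cancellation, and the continuity equation for $w$ moves $D_t^w$ across the weighted pairing. The paper packages that last step as the product-rule identity $\frac{\mathrm d}{\mathrm d t}\int\rho\, b\cdot u=\int\rho\,(D_t^wb\cdot u+b\cdot D_t^wu)$, integrated in time with zero endpoint values; this is the same observation as yours, that the weighted-adjoint defect $-b\,(\partial_t\rho+\Div(\rho w))/\rho$ vanishes.
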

\begin{proof}
The projection is fixed along the gauge fiber. Differentiate its squared norm, use its self-adjointness and the adjoint definition; density of $\mathcal U$ gives stationarity on the closure. The product first-order expansion is $b+\mathcal D u+o(\|u\|_{\mathcal X})$. Solvability is range membership and least squares is distance to its closure; a closed range contains its orthogonal projection. For Euler, $Da_w[u]=D_t^wu+(\nabla w)u$. The continuity equation gives
$\frac\dd{\dd t}\int {\rho}\,b\cdot u=\int {\rho}(D_t^wb\cdot u+b\cdot D_t^wu)$.
Integrate in time, use zero endpoint values, and transpose the remaining Jacobian term to obtain~\eqref{eq:euler-firstvar-explicit}.
\end{proof}

\begin{proposition}[Observed and unresolved residuals; nonlinear correction]
\label{prop:galerkin}\label{prop:dictionary}
For independent test gradients $\nabla\phi_i$, let $(G_{\rm Gram})_{ij}=\langle \nabla\phi_i,\nabla\phi_j\rangle_{\rho}$, $b_i=\langle z,\nabla\phi_i\rangle_{\rho}$ and $P_M$ project onto their span. Then
\begin{equation}
 \|\Pi_{\rho}z\|^2=b^\top {G_{\rm Gram}}^{-1}b+\|\Pi_{\rho}z-P_Mz\|^2.
\end{equation}
For a $C^1$ finite residual $F:\R^m\to\R^n$, suppose $DF$ is $L$-Lipschitz and $DF(\theta_0)$ has a right inverse of norm at most $\beta$. Set $\epsilon=\|F(\theta_0)\|$. If its domain contains the radius-$2\beta\epsilon$ ball and $4L\beta^2\epsilon\le1$, some $\theta_*$ satisfies $F(\theta_*)=0$ and $\|\theta_*-\theta_0\|\le2\beta\epsilon$.
\end{proposition}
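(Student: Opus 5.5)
The statement has two independent parts, and I will prove them separately.

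\textbf{Orthogonal decomposition of the projected residual.} The gradients $\nabla\phi_i$ span a finite-dimensional subspace $\mathcal T_M\subset\mathcal T_\rho$, and $P_M$ is its orthogonal projection in $L^2(\rho)$. Since $\mathcal T_M\subset\mathcal T_\rho$, we have $P_M\Pi_\rho=P_M$.

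The definition $b_i=\langle z,\nabla\phi_i\rangle_\rho$ does not depend on whether one pairs with $z$ or with $\Pi_\rho z$: the residual $z-\Pi_\rho z$ lies in $\mathcal S_\rho$, and by \cref{prop:poisson-existence} it is orthogonal to every gradient. Hence $P_Mz=P_M\Pi_\rho z$, and Pythagoras in $\mathcal T_\rho$ gives
\[
\|\Pi_\rho z\|^2=\|P_Mz\|^2+\|\Pi_\rho z-P_Mz\|^2 .
\]
Write $P_Mz=\sum_jc_j\nabla\phi_j$. The normal equations read $G_{\rm Gram}c=b$, and $G_{\rm Gram}$ is invertible because the test gradients are linearly independent. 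Therefore
\[
\|P_Mz\|^2=c^\top G_{\rm Gram}c=b^\top G_{\rm Gram}^{-1}b .
\]
This part is routine. The only point needing care is checking that $\Pi_\rho z-P_Mz$ is exactly the unresolved remainder, which follows from $P_M\Pi_\rho=P_M$.

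\textbf{Nonlinear correction.} For the second part I will use a Newton-type iteration built on a bounded right inverse.

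Fix a right inverse $R$ of $DF(\theta_0)$ with $\|R\|\le\beta$. Consider the simplified (frozen-Jacobian) iteration
\[
\theta_{k+1}=\theta_k-RF(\theta_k).
\]
By the fundamental theorem of calculus and $DF(\theta_0)R=I$,
\[
F(\theta_{k+1})=\int_0^1\bigl(DF(\theta_k+s\delta_k)-DF(\theta_0)\bigr)\delta_k\,\dd s,\qquad \delta_k=-RF(\theta_k).
\]
The Lipschitz bound then gives
\[
\|F(\theta_{k+1})\|\le L\Bigl(\max_s\|\theta_k+s\delta_k-\theta_0\|\Bigr)\|\delta_k\| .
\]
The plan is an induction with two hypotheses: $\|\theta_k-\theta_0\|\le2\beta\epsilon$ and $\|F(\theta_k)\|\le 2^{-k}\epsilon$. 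The step bound $\|\delta_k\|\le\beta\|F(\theta_k)\|$ makes the increments geometric, with total length at most $2\beta\epsilon$. The segment from $\theta_k$ to $\theta_{k+1}$ stays inside the radius-$2\beta\epsilon$ ball. Along it the contraction factor is $L\cdot 2\beta\epsilon\cdot\beta\le\tfrac12$, and this is exactly where the hypothesis $4L\beta^2\epsilon\le1$ enters. The iterates are then Cauchy, and the limit $\theta_*$ satisfies $F(\theta_*)=0$ by continuity and $\|\theta_*-\theta_0\|\le2\beta\epsilon$.

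The main obstacle is keeping the constants tight enough to land inside radius $2\beta\epsilon$. If the frozen iteration's bookkeeping does not close, my fallback is the full Newton iteration. At each iterate I would use perturbed right inverses $R_k=R\bigl(DF(\theta_k)R\bigr)^{-1}$. A Neumann series shows these exist with $\|R_k\|\le\beta/(1-L\beta\|\theta_k-\theta_0\|)\le2\beta$. The resulting quadratic estimate $\|F(\theta_{k+1})\|\le \tfrac{L}{2}\|R_k\|^2\|F(\theta_k)\|^2$ closes the induction under the same hypothesis $4L\beta^2\epsilon\le1$.
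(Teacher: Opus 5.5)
Your proposal is correct and follows the paper. The Gram/Pythagoras part is identical, using $P_Mz=P_M\Pi_\rho z$, and your frozen-Jacobian iteration $\theta_{k+1}=\theta_k-RF(\theta_k)$ is exactly the Picard iteration of the paper's contraction $T(y)=y-F(\theta_0+\mathcal I_F y)$ on $\|y\|\le 2\epsilon$ under $\theta_k=\theta_0+Ry_k$, $y_0=0$. The paper's $y$-coordinates make ball invariance immediate from Banach's theorem; your induction instead needs the slightly sharper hypothesis $\|\theta_k-\theta_0\|\le(2-2^{1-k})\beta\epsilon$, which your path-length remark already supplies, and the Newton fallback is not needed.
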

\begin{proof}
Normal equations give $P_Mz=\sum_i({G_{\rm Gram}}^{-1}b)_i\nabla\phi_i$. Since $P_Mz=P_M\Pi_{\rho}z$, Pythagoras proves the identity. For the second claim write the right inverse as ${\mathcal I_F}$ and use $T(y)=y-F(\theta_0+{\mathcal I_F}y)$ on $\|y\|\le2\epsilon$. Its derivative has norm at most $L\beta^2\|y\|\le1/2$, while $\|T(0)\|=\epsilon$. It is a contraction of this ball into itself; its fixed point gives $\theta_*$. If $\epsilon=0$, take $\theta_0$.
\end{proof}
Redundant gradients use a pseudoinverse. Unobserved orthogonal gradients and empirical Gram error prevent a full Hodge certificate. Nonlinear correction covers only the specified residuals in a feasible dictionary; stationarity alone does not define an Ehresmann connection.

\input{supplement_compact/checks/order}

%% file: supplement_compact/checks/order.tex
\FloatBarrier
\subsection{Controlled order verification}\label{app:controlled-order}
The analytic expansion--rotation fixture checks the predicted separation between particle and law convergence.
\ControlledOrderFigure
\FloatBarrier

%% file: supplement_compact/theory/02_affine.tex
\section{Affine Elliptical Response and Sharp Error}
\label{app:elliptical}\label{app:finite-gaussian-proof}\label{app:complementary-statements}
Fix one spherical generator $Z$. Scatter identifies laws without moments; transport costs require the stated moments. Nonorthogonal whitening preserves feasibility but changes Euclidean cost. \Cref{fig:finite-h-obstruction,app:controlled-reachability} give numerical Gaussian/Student counterparts.

\begin{lemma}[Factors, moments, and transport distance]
\label{lem:elliptic-factor}\label{prop:momentfree}\label{thm:elliptical-w2}\label{sec:momentfree}
If $Z$ is spherical and not almost surely zero, and $L_1,L_2$ are invertible, then $\law(m_1+L_1Z)=\law(m_2+L_2Z)$ iff $m_1=m_2$ and $L_1L_1^\top=L_2L_2^\top$. If additionally $\E Z=0$ and $\E ZZ^\top=I$, the same-generator laws with positive scatters $\Sigma_1,\Sigma_2$ satisfy
\begin{equation}\label{eq:elliptical-w2}
 W_2^2=|m_1-m_2|^2+\tr\Sigma_1+\tr\Sigma_2
 -2\tr(\Sigma_1^{1/2}\Sigma_2\Sigma_1^{1/2})^{1/2}.
\end{equation}
For ${\varsigma},{\widetilde\varsigma}\ge0$ and ${\zeta}\ge1$, $W_{\zeta}(\law({\varsigma}Z),\law({\widetilde\varsigma}Z))=|{\varsigma}-{\widetilde\varsigma}|(\E|Z|^{\zeta})^{1/{\zeta}}$ whenever this moment is finite. For a standard $d$-dimensional Student $Z_\nu$,
\begin{equation}\label{eq:student-moment}
 \E|Z_\nu|^{\zeta}=\nu^{{\zeta}/2}\frac{\Gamma((d+{\zeta})/2)\Gamma((\nu-{\zeta})/2)}{\Gamma(d/2)\Gamma(\nu/2)},\quad 0<{\zeta}<\nu.
\end{equation}
\end{lemma}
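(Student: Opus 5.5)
The lemma has four parts, and I will prove them in order: factor identification, the transport distance, the scalar $W_\zeta$ formula, and the Student moment.

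\emph{Factor identification.} The "if" direction is immediate. If $L_1L_1^\top=L_2L_2^\top$, then $O=L_2^{-1}L_1$ satisfies $OO^\top=I$, so $O$ is orthogonal. Sphericity gives $OZ\overset{d}{=}Z$, hence $L_1Z=L_2OZ\overset{d}{=}L_2Z$.

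For the converse I will use characteristic functions and will not assume moments. Since $Z$ is spherical, $\E e^{i\langle\xi,Z\rangle}=\phi(|\xi|)$ for a scalar function $\phi$. Equality of the two laws then gives
\[
e^{i\langle\xi,m_1\rangle}\phi(|L_1^\top\xi|)=e^{i\langle\xi,m_2\rangle}\phi(|L_2^\top\xi|)\quad\text{for all }\xi .
\]
\begin{itemize}
\item Near $\xi=0$, $\phi$ is continuous with $\phi(0)=1$, so both factors are nonzero there. Taking moduli and comparing arguments along rays forces $m_1=m_2$. A cleaner route is to note that $m+LZ$ is symmetric about $m$, and two laws each symmetric about a centre coincide only if the centres coincide. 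I would verify this step with characteristic functions.
\item Reduce to $|L_1^\top\xi|=|L_2^\top\xi|$ for all $\xi$. Set $M=L_2^{-1}L_1$, so $L_1Z\overset{d}{=}L_2Z$ becomes $MZ\overset{d}{=}Z$. Suppose $M$ had a singular value $s\neq1$. Write $M=U\,\mathrm{diag}(s_i)\,V^\top$. Iterating $M^kZ\overset{d}{=}Z$ and using sphericity lets me compare $|M^kZ|$ with $|Z|$.
\end{itemize}

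The main obstacle is that last comparison when $Z$ has no moments. I would handle it via $|\phi(s\,r)|=|\phi(r)|$. Because $Z\neq0$ with positive probability, $|\phi|$ is not identically $1$. Continuity then rules out scale invariance of $|\phi|$ in any direction unless $s=1$. The argument is: $|\phi(s^k r)|=|\phi(r)|$ for all $k$, and letting $s^k\to0$ gives $|\phi(r)|=1$ for every $r$, a contradiction.

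\emph{Transport distance.} Write $\mu_k=\law(m_k+\Sigma_k^{1/2}Z)$. I will use Gelbrich's bound: the right-hand side of \eqref{eq:elliptical-w2} is a lower bound for $W_2^2$ between any two laws with these means and covariances. To show it is attained, take the affine map
\[
T(x)=m_2+A(x-m_1),\qquad A=\Sigma_1^{-1/2}\bigl(\Sigma_1^{1/2}\Sigma_2\Sigma_1^{1/2}\bigr)^{1/2}\Sigma_1^{-1/2}.
\]
The matrix $A$ is positive definite and satisfies $A\Sigma_1A=\Sigma_2$, so $T$ pushes $\mu_1$ to $m_2+A\Sigma_1^{1/2}Z$. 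The factor statement then identifies this law with $\mu_2$, because $(A\Sigma_1^{1/2})(A\Sigma_1^{1/2})^\top=\Sigma_2$. The map $T$ is the gradient of a convex function, so by Brenier it is optimal. Its cost equals the Gelbrich value, which is a direct trace computation.

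\emph{Scalar formula and Student moment.} The coupling $(\varsigma Z,\widetilde\varsigma Z)$ costs $|\varsigma-\widetilde\varsigma|^\zeta\,\E|Z|^\zeta$, which gives the upper bound. For the lower bound, apply the $1$-Lipschitz map $x\mapsto|x|$ and reduce to the one-dimensional laws of $\varsigma|Z|$ and $\widetilde\varsigma|Z|$. On the line, the quantile (monotone) coupling is optimal, and here it is exactly the scaling coupling. That yields equality.

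For \eqref{eq:student-moment}, represent $Z_\nu=G/\sqrt{V/\nu}$ with $G\sim\mathcal N(0,I_d)$ independent of $V\sim\chi^2_\nu$. Then $\E|G|^\zeta$ and $\E V^{-\zeta/2}$ are standard Gamma integrals. They factor to give the stated ratio, and the integral for $\E V^{-\zeta/2}$ is finite exactly when $\zeta<\nu$.
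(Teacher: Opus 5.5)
Your proposal is correct and has the same structure as the paper's proof. The factor identification uses sphericity plus a dilation-invariance argument, and the Bures formula uses the same positive map $T_{\rm opt}=\Sigma_1^{-1/2}(\Sigma_1^{1/2}\Sigma_2\Sigma_1^{1/2})^{1/2}\Sigma_1^{-1/2}$ combined with that identification. The differences are in the sub-steps.

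\emph{Dilation argument.} You work with the radial characteristic function: $\phi(sr)=\phi(r)$, iterated with $s^{\pm k}\to0$, forces $\phi\equiv1$, so $Z=0$ almost surely. The paper iterates tail probabilities instead; both avoid moments.

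\emph{Optimality.} You cite Gelbrich's lower bound, whereas the paper proves it in one line from $\E|T_{\rm opt}^{1/2}X-T_{\rm opt}^{-1/2}Y|^2\ge0$. Given that bound, your appeal to Brenier is redundant. For sources without a density, such as a uniform law on a sphere, it would have to be read as the cyclical-monotonicity (Knott--Smith) criterion.

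\emph{Scalar formula.} You project by the $1$-Lipschitz map $x\mapsto|x|$ and use optimality of the monotone coupling on the line. The paper's reverse Minkowski bound $(\E|X-Y|^\zeta)^{1/\zeta}\ge|(\E|X|^\zeta)^{1/\zeta}-(\E|Y|^\zeta)^{1/\zeta}|$ is shorter and needs no one-dimensional transport theory.

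\emph{Student moment.} Your Gaussian--$\chi^2_\nu$ representation factors into two Gamma moments and makes the condition $\zeta<\nu$ transparent. The paper instead integrates the radial density as a beta integral.

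Two points to tighten. First, the phase comparison giving $m_1=m_2$ needs $\phi$ to be real, which holds because sphericity includes $Z\mapsto-Z$, and positive near $0$. Then $e^{i\langle\xi,m_1-m_2\rangle}>0$ near $\xi=0$, which forces $m_1=m_2$. Second, drop the remark about iterating $M^kZ$: for nonnormal $M$ the singular values of $M^k$ need not be $s^k$. The iteration that works is the scalar one. It comes from a single use of the equality in law of $MZ$ and $Z$ at $\xi=rUe_i$, with $s^{-k}$ replacing $s^k$ when $s>1$.
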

\begin{proof}
Spherical laws are centrally symmetric. Two distinct symmetry centers would imply invariance under a nonzero translation (compose the reflections), impossible for a probability law: its characteristic function is nonzero near zero, whereas translation invariance would force $e^{i\xi\cdot a}=1$ there. Thus centers agree. Every scalar projection $\xi^\top LZ$ has law $\|L^\top\xi\|Z_1$. A nondegenerate scalar law cannot be invariant under a positive dilation other than one: iterating its tail probabilities forces concentration at zero. Spherical $Z\not\equiv0$ has nondegenerate $Z_1$, so equality implies equality of all Gram quadratic forms. Conversely equal Grams give $L_2=L_1Q$ with orthogonal $Q$, hence equal laws.

For the cost formula set ${T_{\rm opt}}=\Sigma_1^{-1/2}(\Sigma_1^{1/2}\Sigma_2\Sigma_1^{1/2})^{1/2}\Sigma_1^{-1/2}$. It is positive and ${T_{\rm opt}}\Sigma_1{T_{\rm opt}}=\Sigma_2$. The coupling $Y-m_2={T_{\rm opt}}(X-m_1)$ has the target law by the factor identity. For any coupling of centered variables, $\E|{T_{\rm opt}}^{1/2}X-{T_{\rm opt}}^{-1/2}Y|^2\ge0$ bounds its cross moment by this coupling's value; expansion proves optimality and~\eqref{eq:elliptical-w2} \citep{gelbrich1990formula}. The scale formula follows from the common-$Z$ upper bound and reverse Minkowski lower bound. Finally integrate the Student radial density proportional to ${\varrho}^{d-1}(1+{\varrho}^2/\nu)^{-(d+\nu)/2}$, substitute ${\varrho}^2/\nu$, and evaluate the beta integral.
\end{proof}
For an invertible linear coordinate map $L$, common-coupling bounds give $\sigma_{\min}(L)W_2(\mu,\eta)\le W_2(L_\#\mu,L_\#\eta)\le\sigma_{\max}(L)W_2(\mu,\eta)$; the lower bound applies the upper one to $L^{-1}$. Student scatter is not covariance when $\nu\le2$: affine feasibility remains meaningful, but $W_2$ requires $\nu>2$ and fourth moments require $\nu>4$.

The fixed-shape path is $\rho_t=\law(m_t+\Sigma_t^{1/2}Z)$ with $\Sigma_t\succ0$; for covariance statements $\E Z=0$ and $\E ZZ^\top=I$. Fix $h\ne0$. All affine lifts have $w_t(x)=\dot m_t+B_t(x-m_t)$ and satisfy
\begin{equation}\label{eq:lyap}
B_t\Sigma_t+\Sigma_tB_t^\top=\dot\Sigma_t.
\end{equation}
At a selected time, write $\Sigma=\Sigma_t$, $\Sigma_+=\Sigma_{t+h}$. Euler reaches the next center only if $m_{t+h}=m_t+h\dot m_t$, and reaches the next centered law only if
\begin{equation}\label{eq:exactcov}
(I+hB)\Sigma(I+hB)^\top=\Sigma_+.
\end{equation}
Exactness requires both the tangent \cref{eq:lyap} and finite chord \cref{eq:exactcov}.

\begin{theorem}[Affine finite-step reachability]\label{thm:finite-gaussian}
In the fixed-shape family above, put $L=\Sigma^{1/2}$, ${\Sigma_\star}=L^{-1}\Sigma_+L^{-1}$ and ${H_\Sigma}=L^{-1}\dot\Sigma L^{-1}$. An Euler-exact affine tangent lift exists if and only if the center condition holds and
\begin{equation}\label{eq:orthogonal-equation}
\exists Q\in O(d):\quad {\Sigma_\star}^{1/2}Q+Q^\top {\Sigma_\star}^{1/2}=2I+h{H_\Sigma}.
\end{equation}
All its linear parts are
\begin{equation}\label{eq:BQ}
B_Q=h^{-1}L({\Sigma_\star}^{1/2}Q-I)L^{-1}.
\end{equation}
\end{theorem}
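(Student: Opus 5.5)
We reduce the statement to a whitened matrix problem and solve the two constraints \cref{eq:lyap} and \cref{eq:exactcov} jointly. By \cref{lem:elliptic-factor}, the Euler image $\law(m_t+h\dot m_t+(I+hB)LZ)$ equals $\rho_{t+h}=\law(m_{t+h}+\Sigma_+^{1/2}Z)$ if and only if the centers agree and $(I+hB)\Sigma(I+hB)^\top=\Sigma_+$. We first check that $I+hB$ is invertible, which the lemma needs. This follows from \cref{eq:exactcov}: $\Sigma_+\succ0$ forces $\det(I+hB)\neq0$. Hence the center condition splits off immediately, and it remains to characterize affine lifts $B$ satisfying \cref{eq:lyap} and \cref{eq:exactcov}.

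\textbf{Whitening.} Set $\widehat B=L^{-1}BL$ and $M=I+h\widehat B$. Conjugating \cref{eq:exactcov} by $L^{-1}$ turns it into $MM^\top=\Sigma_\star$. Conjugating \cref{eq:lyap} gives $\widehat B+\widehat B^\top=H_\Sigma$, which is equivalent to $M+M^\top=2I+hH_\Sigma$ because $h\neq0$.

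\textbf{Polar parametrization.} Since $\Sigma_\star\succ0$, the matrices $M$ with $MM^\top=\Sigma_\star$ are exactly $M=\Sigma_\star^{1/2}Q$ with $Q\in O(d)$. For the forward direction, $Q:=\Sigma_\star^{-1/2}M$ satisfies $QQ^\top=\Sigma_\star^{-1/2}\Sigma_\star\Sigma_\star^{-1/2}=I$. The converse is immediate.

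Substituting $M=\Sigma_\star^{1/2}Q$ into the tangent condition yields precisely \cref{eq:orthogonal-equation}. This gives existence in both directions:
\begin{itemize}
\item If a lift exists, its $Q$ solves \cref{eq:orthogonal-equation}.
\item If $Q$ solves \cref{eq:orthogonal-equation}, define $B_Q$ by \cref{eq:BQ}, so that $\widehat B=h^{-1}(\Sigma_\star^{1/2}Q-I)$. Reversing the computations shows that $B_Q$ satisfies both \cref{eq:lyap} and \cref{eq:exactcov}.
\end{itemize}

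\textbf{All linear parts.} Every Euler-exact lift arises from some $Q$, and $B$ is recovered from $M$ as $B=h^{-1}L(M-I)L^{-1}$. The map $Q\mapsto B_Q$ is therefore onto the set of exact linear parts. It is also injective, since $\Sigma_\star^{1/2}$ is invertible. This gives the description \cref{eq:BQ} of all linear parts.

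\textbf{Main obstacle.} The argument is essentially bookkeeping, and the only delicate point is the identification of laws. That requires the factor uniqueness of \cref{lem:elliptic-factor}, including the step that $Z\not\equiv0$ forces equal scatters. For the affine lift to belong to the admissible family, it must also satisfy the divergence constraint. We would note that for affine $w$ on a fixed-shape path, \cref{eq:lyap} together with the mean equation is exactly the continuity equation. This holds because the pushforward of $\rho_t$ under the affine flow stays in the family with scatter solving the Lyapunov ODE.
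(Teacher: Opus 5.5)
Your proposal is correct and follows essentially the same route as the paper. You whiten by $L=\Sigma^{1/2}$, write the chord condition as $MM^\top=\Sigma_\star$ with $M=I+h\widehat B$, parametrize its solutions as $M=\Sigma_\star^{1/2}Q$ with $Q\in O(d)$, substitute into the whitened tangent condition $\widehat B+\widehat B^\top=H_\Sigma$, and let the factor lemma handle the center and the full law.

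One ordering point needs fixing. In the forward direction you cannot deduce invertibility of $I+hB$ from $(I+hB)\Sigma(I+hB)^\top=\Sigma_+$, because that identity is what you are trying to establish there. The gap is harmless: the factor lemma's Gram-form argument never uses invertibility, and in the covariance-normalized case comparing covariances gives the chord identity directly.
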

\begin{proof}[Proof of \cref{thm:finite-gaussian}]
An affine field has center velocity $\dot m$ and scatter tangent $B\Sigma+\Sigma B^\top=\dot\Sigma$. Let $L=\Sigma^{1/2}$, ${\widehat B}=L^{-1}BL$ and ${M_{\rm step}}=I+h{\widehat B}$. The tangent is ${\widehat B}+{\widehat B}^\top={H_\Sigma}$ and Euler's target scatter condition is ${M_{\rm step}}{M_{\rm step}}^\top={\Sigma_\star}$. Since ${\Sigma_\star}\succ0$, this is equivalent to ${M_{\rm step}}={\Sigma_\star}^{1/2}Q$, $Q\in O(d)$. Substitution gives precisely the main orbit--slice equation and $B_Q$. Euler's center is $m+h\dot m$; the factor lemma makes these necessary and sufficient for the complete fixed-generator law. The tangent remains necessary without moments: the linear ODE's scatter evolves by its matrix fundamental solution, and the factor lemma identifies that scatter uniquely.
\end{proof}
\Cref{eq:orthogonal-equation,thm:anisotropic} are \emph{feasibility equivalences}, not efficient solvers for arbitrary anisotropy. The following directional tests are necessary; block and planar cases admit explicit reconstructions.

\begin{proposition}[Additional finite-step tests and nodewise extension]
\label{prop:finite-directional}\label{prop:block-isotropic}\label{cor:grid-exact}\label{app:2d}
Every orbit--slice solution obeys $|x^\top(2I+h{H_\Sigma})x|\le2\sqrt{x^\top {\Sigma_\star}x}$ for unit $x$, and $\|2I+h{H_\Sigma}\|_F\le2\sqrt{\tr {\Sigma_\star}}$. If ${\Sigma_\star}=\lambda_\ell^2I$ and ${H_\Sigma}=2\beta_\ell I$ on common orthogonal blocks, block-diagonal solutions exist whenever $|c_\ell|\le1$, $c_\ell=(1+h\beta_\ell)/\lambda_\ell$, and strict-inequality blocks are even-dimensional. Exact nodewise lifts on a fixed grid give exact marginal laws at every grid point and can be extended to a smooth feasible continuous field.

In dimension two, diagonalize $S={H_\Sigma}/2=\diag(\beta_1,\beta_2)$, put $a_i=1+h\beta_i$, and express ${\Sigma_\star}$ in this basis. Exactness is equivalent to
\begin{equation}\label{eq:2d}
 z=(\Sigma_\star)_{11}-a_1^2=(\Sigma_\star)_{22}-a_2^2\ge0,\qquad (\Sigma_\star)_{12}^2=(a_1-a_2)^2z.
\end{equation}
If $a_1\ne a_2$, $K=\omega J$ has $\omega=(\Sigma_\star)_{12}/[h(a_1-a_2)]$; otherwise $(\Sigma_\star)_{12}=0$ and $\omega=\pm\sqrt z/|h|$.
\end{proposition}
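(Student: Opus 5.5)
The plan is to derive every part from the orbit--slice equation \eqref{eq:orthogonal-equation} of \cref{thm:finite-gaussian}: $M=\Sigma_\star^{1/2}Q$, with tangent condition $M+M^\top=2I+hH_\Sigma$. I treat the three claims in turn.

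\textbf{Directional and Frobenius tests.} Take a unit vector $x$. Then
\[
x^\top(2I+hH_\Sigma)x=2x^\top Mx\le 2|Mx|=2\sqrt{x^\top MM^\top x}.
\]
The expression under the root is $x^\top\Sigma_\star x$, because $MM^\top=\Sigma_\star$. The same bound holds with $-x$ in place of $x$, which gives the absolute value. For the Frobenius test, $\|M+M^\top\|_F\le 2\|M\|_F=2\sqrt{\tr(MM^\top)}=2\sqrt{\tr\Sigma_\star}$.

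\textbf{Block-isotropic construction.} I solve blockwise, using that $Q$ may be chosen block-diagonal. On a block with $\Sigma_\star=\lambda_\ell^2I$ and $H_\Sigma=2\beta_\ell I$, the equation reduces to
\[
\lambda_\ell(Q_\ell+Q_\ell^\top)=2(1+h\beta_\ell)I, \qquad\text{i.e.}\qquad \tfrac12(Q_\ell+Q_\ell^\top)=c_\ell I.
\]
Two cases follow.
\begin{itemize}
\item If $|c_\ell|=1$, take $Q_\ell=c_\ell I$.
\item If $|c_\ell|<1$ and the block has even dimension, pair the coordinates into planes. On each plane take the rotation $c_\ell I_2+\sqrt{1-c_\ell^2}\,J$; its symmetric part is $c_\ell I_2$.
\end{itemize}
The mean condition is part of the hypothesis, so \eqref{eq:BQ} then gives the lift.

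\textbf{Nodewise extension to a smooth field.} Exactness at each grid node gives the correct law at $t_{k+1}$ from $t_k$, and induction gives exact laws at every grid point. To extend, I need a smooth admissible field on $[t_k,t_{k+1}]$ whose value at $t_k$ is the chosen $B_{t_k}$. Write any lift as $B_t=B^{\rm sym}_t+\Sigma$-skew part: a particular Lyapunov solution of \eqref{eq:lyap} plus $L_tK_tL_t^{-1}$ with $K_t$ skew. The nodal skew values can then be joined by a smooth interpolation, for instance with bump functions making $K_t$ constant near each node. Euler only evaluates the field at nodes, so this interpolation does not affect the numerical output. This is the step I expect to need the most care: it must keep both smoothness in $t$ and exact satisfaction of \eqref{eq:lyap} for all $t$, not just at the nodes.

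\textbf{Planar case.} Write $\widehat B=S+\omega J$, which is the general solution of the tangent condition in $d=2$. Then $M=aI'+h\omega J$ with $a=\diag(a_1,a_2)$. Compute
\[
MM^\top=\diag(a_1^2,a_2^2)+h^2\omega^2I+h\omega(JA^\top+AJ^\top),\qquad A=\diag(a_1,a_2).
\]
The off-diagonal entry is $h\omega(a_2-a_1)$ up to a sign convention. Matching $MM^\top=\Sigma_\star$ gives equal diagonal excess $z=h^2\omega^2\ge0$ and $(\Sigma_\star)_{12}^2=(a_1-a_2)^2z$. This is \eqref{eq:2d}, together with the stated formulas for $\omega$, including the degenerate case $a_1=a_2$. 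The converse direction is simply reading off $\omega$ from these formulas.
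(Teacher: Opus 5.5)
Your proposal follows the paper's proof essentially step for step. It uses Cauchy--Schwarz and the triangle inequality for the two tests, and the blockwise choice $Q_\ell=c_\ell I+\sqrt{1-c_\ell^2}\,J$ or $Q_\ell=\pm I$. It then uses induction over nodes plus interpolation of the skew part of a Lyapunov solution. The paper interpolates the skew matrices $B_k\Sigma_k-\dot\Sigma_k/2$ and sets $B_t=(\dot\Sigma_t/2+C_t)\Sigma_t^{-1}$, which is your $L_tK_tL_t^{-1}$ with $C_t=L_tK_tL_t$. The planar case is direct multiplication of $\diag(a_1,a_2)+h\omega J$, as in the paper. The extension step you flag is already settled by your own parameterization, since every smooth skew interpolant $K_t$ satisfies \eqref{eq:lyap} identically.

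The directional test has two slips. First, $|Mx|^2=x^\top M^\top Mx=(Qx)^\top\Sigma_\star(Qx)$, not $x^\top MM^\top x$. Second, replacing $x$ by $-x$ leaves a quadratic form unchanged, so it cannot produce the absolute value. Instead write $|x^\top Mx|=|x\cdot M^\top x|\le|M^\top x|=\sqrt{x^\top\Sigma_\star x}$, which is the paper's $|(\Sigma_\star^{1/2}x)\cdot(Qx)|\le|\Sigma_\star^{1/2}x|$. Also, with the paper's $J$ the off-diagonal entry is exactly $h\omega(a_1-a_2)$, with no sign ambiguity. This is what fixes the sign in $\omega=(\Sigma_\star)_{12}/[h(a_1-a_2)]$.
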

\begin{proof}
Write $2I+h{H_\Sigma}={P_\Sigma}Q+Q^\top {P_\Sigma}$, ${P_\Sigma}={\Sigma_\star}^{1/2}$; Cauchy--Schwarz gives the directional bound, and the triangle inequality gives the Frobenius bound. On each common block take $Q=cI+\sqrt{1-c^2}J$ for an orthogonal complex structure, or $Q=\operatorname{sign}(c)I$ at equality. This proves sufficiency, not necessity among maps mixing blocks. Induction proves grid exactness. At a node ${\mathsf C_{\rm node}}_k=B_k\Sigma_k-\dot\Sigma_k/2$ is skew; smoothly interpolate these ${\mathsf C_{\rm node}}_k$ and set $B_t=(\dot\Sigma_t/2+{\mathsf C_{\rm node}}_t)\Sigma_t^{-1}$. Center velocities extend similarly. This does not promise uniform derivatives as grids vary. Finally direct multiplication of $\diag(a_1,a_2)+h\omega J$ gives diagonals $a_i^2+h^2\omega^2$ and off-diagonal $h\omega(a_1-a_2)$, proving the two-dimensional criterion and reconstruction.
\end{proof}

\subsection{Isotropic and spectral classifications}
\label{app:isotropic-proof}\label{app:spectral-pairing}\label{sec:spectral}
\begin{lemma}[Orthogonal complex structures]\label{lem:complex-structure-manifold}
The skew matrices $J$ satisfying $J^2=-I$ exist exactly in even dimension $2m$ and form $O(2m)/U(m)$, of dimension $m(m-1)$.
\end{lemma}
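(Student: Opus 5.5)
The plan has three parts: treat a skew $J$ with $J^2=-I$ as an orthogonal complex structure, derive the parity condition, then identify the set of such $J$ as one orbit of $O(2m)$ and count its dimension.

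\emph{Step 1: orthogonality and parity.} A skew $J$ with $J^2=-I$ satisfies $J^\top J=-J^2=I$, so $J\in O(n)$. Conversely, an orthogonal skew $J$ satisfies $J^2=-J^\top J=-I$, so the two descriptions coincide. Taking determinants of $J^2=-I$ gives $\det(J)^2=(-1)^n$. Since the left side is positive, $n=2m$ must be even. In even dimension the standard block matrix $J_0=\diag(J,\ldots,J)$, built from the planar $J$ of \cref{eq:intro-family}, is an example, so such matrices exist exactly when the dimension is even.

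\emph{Step 2: a single orbit.} I will use the real normal form of a normal matrix. Since $J$ is orthogonal and skew, it is orthogonally similar to a block diagonal of $2\times2$ rotation blocks and $1\times1$ blocks. Skewness forces the $1\times1$ blocks to vanish, and $J^2=-I$ then rules them out entirely. Skewness also forces each $2\times2$ block to be $\pm J$. Conjugating by a reflection turns $-J$ into $J$. Hence every admissible $J$ can be written as $J=PJ_0P^\top$ with $P\in O(2m)$. So the set is the conjugation orbit of $J_0$, and the orbit map induces a bijection from $O(2m)/\mathrm{Stab}(J_0)$.

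\emph{Step 3: stabilizer and dimension.} The stabilizer is $\{P\in O(2m):PJ_0=J_0P\}$. Identify $\R^{2m}\cong\mathbb C^m$ so that $J_0$ acts as multiplication by $i$. Real matrices commuting with $J_0$ are then exactly the complex-linear maps. Those that are also orthogonal preserve the Hermitian form $\langle x,y\rangle+i\langle J_0x,y\rangle$, so the stabilizer is $U(m)$.

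Since the stabilizer is closed, the orbit of the compact group $O(2m)$ is an embedded submanifold diffeomorphic to $O(2m)/U(m)$. Its dimension is
\[
\dim O(2m)-\dim U(m)=m(2m-1)-m^2=m(m-1).
\]
As a cross-check, the tangent space at $J_0$ consists of the skew $X$ that anticommute with $J_0$. This is the complement of $\mathfrak u(m)$ in $\mathfrak o(2m)$, which also has dimension $m(m-1)$.

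\emph{Main obstacle.} The only delicate point is Step 2, the reduction of every $J$ to the single normal form $J_0$. Signs of the $2\times2$ blocks must be fixed by conjugating with $\diag(1,-1)$ on each offending plane, which is allowed because the orbit is taken under $O(2m)$ rather than $SO(2m)$. The rest, namely the stabilizer computation and the dimension count, is routine.
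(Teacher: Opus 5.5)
Your proof is correct and follows the paper's route: bring every such $J$ to the canonical planar block form by an orthogonal change of basis, identify the stabilizer of $J_0$ as the complex-linear isometries $U(m)$, and subtract $m^2$ from $\dim O(2m)=m(2m-1)$. The only difference is in how the normal form is reached. The paper iterates orthonormal pairs $(e,Je)$ with $J$-invariant complements, which gives blocks already equal to $J$ (no sign fixing) and the parity at the same time, whereas you cite the real normal form of orthogonal matrices, flip signs with reflections, and get parity separately from $\det(J)^2=(-1)^n$.
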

\begin{proof}
The pairs $(e,Je)$ are orthonormal and their complements are $J$-invariant. Iteration gives the canonical planar blocks, hence even dimension and transitive orthogonal conjugation. Orthogonal matrices commuting with the canonical $J$ are precisely complex-linear isometries, $U(m)$. Subtract its dimension $m^2$ from $\dim O(2m)=m(2m-1)$.
\end{proof}

\begin{theorem}[All-sign isotropic optimum]\label{thm:all-signs}\label{thm:parity-finite}
Let $h\ne0$, ${\varsigma},{\varsigma}_+>0$, $\beta=\dot {\varsigma}/{\varsigma}$, $a=|1+h\beta|$, $\lambda={\varsigma}_+/{\varsigma}$, and $d=2{n_{\rm pair}}+\varepsilon$, $\varepsilon\in\{0,1\}$. For a spherical source with covariance $I$, the minimum squared affine-tangent Euler error is
\begin{equation}\label{eq:odd-w2}
 |m_+-m-h\dot m|^2+{\varsigma}^2\{2{n_{\rm pair}}(a-\lambda)_+^2+\varepsilon(a-\lambda)^2\}.
\end{equation}
Exactness requires the mean chord, and either $\lambda=a$ with $K=0$, or $\lambda>a$ in even dimension with
\begin{equation}\label{eq:finite-exact-gauge}
 B=\beta I+K,\qquad K=\frac{\sqrt{\lambda^2-a^2}}{|h|}J,\quad J^\top=-J,\quad J^2=-I.
\end{equation}
For $\lambda<a$, no correction can reduce the excess radius.
\end{theorem}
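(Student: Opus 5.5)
The plan is to reduce everything to the singular values of the whitened Euler matrix and then use the transport formula of \cref{lem:elliptic-factor}.

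\textbf{Reduction to an explicit matrix.} Since $\Sigma={\varsigma}^2I$ and $\dot\Sigma=2{\varsigma}\dot{\varsigma}I$, every affine tangent lift satisfies \cref{eq:lyap} with $B+B^\top=2\beta I$. Hence $B=\beta I+K$ with $K^\top=-K$ arbitrary. One Euler step sends $\law(m+{\varsigma}Z)$ to $\law(m+h\dot m+{\varsigma}MZ)$, where $M=(1+h\beta)I+hK$. The target is $\law(m_++{\varsigma}_+Z)$.

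\textbf{Computing the error.} Apply \cref{eq:elliptical-w2} with $\Sigma_1={\varsigma}^2MM^\top$ and $\Sigma_2={\varsigma}^2\lambda^2 I$. If $M$ is singular, use a limiting or direct common-$Z$ argument; this case does not occur for $a>0$, and $a=0$ can be treated by continuity. Because $\Sigma_2$ is scalar,
\[
\tr(\Sigma_1^{1/2}\Sigma_2\Sigma_1^{1/2})^{1/2}={\varsigma}^2\lambda\tr(MM^\top)^{1/2},
\]
and therefore
\[
W_2^2=|m_+-m-h\dot m|^2+{\varsigma}^2\sum_{i=1}^d(s_i-\lambda)^2,
\]
where $s_i$ are the singular values of $M$. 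The mean term is independent of $K$, so it is minimized trivially, and it must vanish for exactness.

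\textbf{Spectral structure of $M$.} Put $K$ in real canonical form: planar blocks $\omega_jJ_2$ for $j=1,\dots,{n_{\rm pair}}$, plus a zero block of size $\varepsilon$, with the $\omega_j$ allowed to vanish. Since $M$ is normal,
\[
MM^\top=(1+h\beta)^2I-h^2K^2 .
\]
Its eigenvalues are $a^2+h^2\omega_j^2$, each with multiplicity two, together with $a^2$ on the leftover direction when $d$ is odd. So the singular values come in equal pairs $s_j=\sqrt{a^2+h^2\omega_j^2}$. As $\omega_j$ varies, $s_j$ ranges over all of $[a,\infty)$. In odd dimension one singular value is pinned at $a$.

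\textbf{Minimization and equality cases.} The minimization now decouples into scalar problems. Each pair contributes $2\inf_{s\ge a}(s-\lambda)^2=2(a-\lambda)_+^2$. The unpaired direction contributes $(a-\lambda)^2$. This gives \cref{eq:odd-w2}.

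For exactness, every $s_i$ must equal $\lambda$ and the mean chord must hold.
\begin{itemize}
\item If $\lambda<a$, this is impossible, since every $s_i\ge a>\lambda$. Every admissible $K$ only raises some $s_j$ above $a$, so no correction reduces the excess radius.
\item If $\lambda=a$, all $\omega_j=0$, so $K=0$.
\item If $\lambda>a$, we need $\varepsilon=0$, i.e.\ even dimension, and $h^2\omega_j^2=\lambda^2-a^2$ for every $j$. Then $K^2=-\frac{\lambda^2-a^2}{h^2}I$. Setting $J=|h|K/\sqrt{\lambda^2-a^2}$ gives a skew matrix with $J^2=-I$, which is \cref{eq:finite-exact-gauge}. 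Conversely, any such $J$ yields $MM^\top=\lambda^2I$; \cref{lem:complex-structure-manifold} guarantees such $J$ exist exactly in even dimension.
\end{itemize}

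\textbf{Main obstacle.} The only delicate point is justifying the reduction to singular values: the trace-square-root term collapses only because the target scatter is scalar. The pairing then follows from the normality of $aI+hK$ and the canonical form. I would state these two facts carefully and leave the scalar minimization as routine.
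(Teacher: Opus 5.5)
Your proposal is correct and follows essentially the same route as the paper's proof. The tangent condition forces $B=\beta I+K$ with $K$ skew, so Euler's scatter is ${\varsigma}^2(a^2I-h^2K^2)$; canonical skew blocks give paired radii $\sqrt{a^2+h^2\omega_j^2}$, with one radius pinned at $a$ in odd dimension; \cref{eq:elliptical-w2} reduces the cost to squared radius differences; and each pair is optimally assigned $\max(a,\lambda)$, with equality forcing $K=0$ or the stated complex structure. Your treatment of the singular case $a=0$, by continuity or a common-$Z$ coupling, matches how the paper handles singular outputs in \cref{thm:rank}.
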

\begin{proof}
The tangent condition forces $B=\beta I+K$ with $K$ skew, so Euler's scatter is ${\varsigma}^2(a^2I-h^2K^2)$. Canonical skew blocks give paired radii $\sqrt{a^2+h^2\omega_j^2}\ge a$ and an unavoidable radius $a$ in odd dimension. Formula~\eqref{eq:elliptical-w2} reduces the isotropic-target cost to squared radius differences; each pair is optimally assigned $\max(a,\lambda)$. Equality forces the stated complex structures or zero. The formula includes negative chords and backward steps. Also $[\lambda^2-(1+h\beta)^2]/h^2=\ddot {\varsigma}/{\varsigma}+O(h)$ for a smooth scale. The same canonical blocks show $\min_{K^\top=-K}\|K^2+\kappa I\|_F^2$ equals zero for $\kappa>0$ in even dimension, $\kappa^2$ for $\kappa>0$ in odd dimension, and $d\kappa^2$ for $\kappa\le0$.
\end{proof}

\begin{proposition}[Three steps remove the odd terminal obstruction]\label{prop:odd-three-step}
Let $d\ge3$ be odd, $\beta,h>0$, and $Z$ a nonzero spherical random vector in $\mathcal P_2$. For $\rho_t=\law(e^{\beta t}Z)$, allow time-dependent feasible affine fields $(\beta I+K(t))x$, $K(t)^\top=-K(t)$. Exact \emph{terminal} law on a uniform Euler grid is impossible with one or two steps, but possible with three. The exact continuous path is preserved; the intermediate numerical marginals need not be.
\end{proposition}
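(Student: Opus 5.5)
The plan is to reduce the statement to a problem about products of matrices. On a uniform grid with $a=1+h\beta$, the $k$th Euler step is the linear map $A_k=aI+hK_k$, where $K_k=K(t_k)$. After $N$ steps the numerical law is $\law(MZ)$ with $M=A_N\cdots A_1$. The target is $\law(e^{\beta Nh}Z)$. By \cref{lem:elliptic-factor}, $Z$ being nonzero spherical, exact terminal matching holds iff $MM^\top=\lambda_N^2I$ with $\lambda_N=e^{\beta Nh}$, i.e.\ iff $M=\lambda_NQ$ for some $Q\in O(d)$. The continuous path is preserved for any skew $K(t)$ by the tangent condition \cref{eq:lyap}, as in \cref{thm:all-signs}. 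So everything reduces to deciding whether such a product can be a scaled orthogonal matrix.

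\textbf{Singular-value parity.} Each $A_k$ is normal. By the canonical skew blocks (as in the proof of \cref{thm:all-signs}), its singular values are $a$ on $\ker K_k$ and paired values $\sqrt{a^2+h^2\omega_j^2}>a$ on the rotation planes. Since $d$ is odd, $\dim\ker K_k$ is odd. Hence in the singular-value multiset of $A_k$, the value $a$ has odd multiplicity and every other value has even multiplicity. Inversion and positive scaling preserve this parity structure. In $cA_k^{-1}$, exactly the value $c/a$ has odd multiplicity.

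\textbf{One and two steps are impossible.} For $N=1$, $A_1=\lambda_1Q$ forces every singular value to equal $e^{\beta h}$. Parity then gives $a=e^{\beta h}$, contradicting $1+\beta h<e^{\beta h}$ for $\beta h>0$. For $N=2$, $A_2A_1=\lambda_2Q$ gives $A_2=\lambda_2QA_1^{-1}$, so $A_2$ and $\lambda_2A_1^{-1}$ have the same singular values. Comparing the unique odd-multiplicity values gives $a=\lambda_2/a$, i.e.\ $a^2=e^{2\beta h}$, the same contradiction. This step is clean.

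\textbf{Three steps: construction.} First split $\R^d=\R^3\oplus\R^{d-3}$. On the even-dimensional complement, use in every step a fixed planar complex structure with the one-step exact frequency $\omega_h$ of \cref{eq:intro-omega}. Each step then multiplies radii there by $e^{\beta h}$ times an orthogonal block, by \cref{thm:all-signs}, so the problem reduces to $d=3$. For $d=3$ I would use a cyclic ansatz. Let $P$ be the cyclic permutation of coordinate axes, take $A$ with $K$ skew, and set $A_1=A$, $A_2=PAP^\top$, $A_3=P^2AP^{2\top}$. Then $A_3A_2A_1=(P^\top A)^3$. It therefore suffices to choose $K$, i.e.\ its axis and frequency $\omega$ (three parameters), so that $N=P^\top A$ satisfies $(N^3)(N^3)^\top=e^{6\beta h}I$. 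A sufficient target is $N^3=e^{3\beta h}R$ with $R$ a rotation, for instance $N$ normal with all eigenvalues of modulus $e^{\beta h}$ and a suitable phase. The determinant gives the necessary scalar condition $a(a^2+h^2\omega^2)=e^{3\beta h}$, which fixes $\omega$. The remaining freedom is the axis of $K$ relative to $P$. The parity argument no longer obstructs here, since three odd-multiplicity values can be redistributed.

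\textbf{Main obstacle.} The hard step is showing that the axis choice can make the remaining symmetric-matrix equations hold. I would try first an axis direction $n$ in a one-parameter family interpolating between the $P$-fixed direction $(1,1,1)/\sqrt3$ and a coordinate axis. At these two extremes I would evaluate the anisotropy of $(N^3)(N^3)^\top$, for example its largest minus smallest eigenvalue, or a signed off-diagonal invariant, and look for a sign change. An intermediate-value argument would then give exactness. If that fails, I would fall back to a direct numerical-then-verified construction, dropping the cyclic symmetry and using three independent skew matrices, with nine parameters against the five conditions $MM^\top\propto I$ plus the one-dimensional determinant condition.
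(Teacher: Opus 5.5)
Your one- and two-step impossibility argument is correct and is the paper's parity argument in singular-value form: the unique odd-multiplicity value of $A_2$ must equal that of $\lambda_2A_1^{-1}$, which forces $a^2=e^{2\beta h}$. Reducing to $d=3$ by giving the even complement the one-step exact frequency is also legitimate. The gap is the existence half.

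\textbf{Why the cyclic ansatz does not work.} Once the determinant fixes $\omega$, the ansatz leaves only the axis $n$, i.e.\ two parameters, against five independent conditions on $MM^\top$. An intermediate-value argument along a curve can zero at most one scalar. Moreover $\lambda_{\max}-\lambda_{\min}$ is nonnegative, so there is no sign change to find. Your suggested sufficient target is impossible: a normal $N$ whose eigenvalues all have modulus $e^{\beta h}$ satisfies $NN^\top=e^{2\beta h}I$. Then $AA^\top=e^{2\beta h}I$, which is one-step exactness, already excluded. In fact the ansatz fails for small $\beta h$:
\begin{itemize}
\item With $Kx=\omega\,n\times x$, $\mathbf 1=(1,1,1)$ and $\sigma_2(n)=n_1n_2+n_2n_3+n_3n_1$, the traceless part of $MM^\top$ is $-h^2\omega^2\sigma_2(n)(\mathbf 1\mathbf 1^\top-I)+O(h^3)$. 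Hence $n,Pn,P^2n$ must be orthonormal up to $O(h)$.
\item After a rotation about $\mathbf 1$, which commutes with $P$, this means $n$ is $O(h)$-close to $\pm e_1$.
\item At $n=e_1$, with $Pe_1=e_2$ and $Pe_2=e_3$, the $h^3$ traceless term is $h^3\omega^3(e_1e_3^\top+e_3e_1^\top)$. The other branch only flips its sign.
\item An $O(h)$ displacement of $n$ adds only multiples of $\mathbf 1\mathbf 1^\top-I$ at that order. So $MM^\top$ cannot be scalar.
\end{itemize}
Your fallback, nine unknowns against six equations followed by numerics, is a parameter count. It cannot certify existence for every $\beta h>0$.

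\textbf{The missing idea.} Place each step's rotation in the frame already rotated by the earlier steps, so that the product telescopes into commuting diagonal factors. Concretely:
\begin{itemize}
\item Put $b=\sqrt{e^{3\beta h}/a}>a$, which is exactly your determinant condition, and $\omega=\sqrt{b^2-a^2}/h$.
\item Split the coordinates into odd blocks $\mathcal I_1,\mathcal I_2,\mathcal I_3$ of sizes $1,1,d-2$.
\item Let $J_j$ be skew with $J_j^2$ equal to minus the projection onto the (even-dimensional) complement of $\mathcal I_j$.
\item Let $\mathsf D_j$ be diagonal, equal to $a$ on $\mathcal I_j$ and $b$ elsewhere.
\item Set $\mathsf U_j=(aI+h\omega J_j)\mathsf D_j^{-1}$, which is orthogonal and commutes with $\mathsf D_j$.
\item Set $\mathsf Q_0=I$, $\mathsf Q_j=\mathsf Q_{j-1}\mathsf U_j$ and $K_j=\omega\mathsf Q_{j-1}J_j\mathsf Q_{j-1}^\top$.
\end{itemize}
Then $(aI+hK_j)\mathsf Q_{j-1}=\mathsf Q_{j-1}\mathsf U_j\mathsf D_j$. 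By induction the three-step product is $\mathsf Q_3\mathsf D_3\mathsf D_2\mathsf D_1=ab^2\mathsf Q_3=e^{3\beta h}\mathsf Q_3$, because every coordinate is slow exactly once. Smooth skew interpolation of the nodal $K_j$ then gives a feasible field, whose fundamental matrix $e^{\beta t}O(t)$ preserves the path.

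For $d=3$ this construction produces axes $e_1$, $\mathsf U_1e_2$, $\mathsf U_1\mathsf U_2e_3$, with pairwise inner products $0$, $0$ and $\sin\theta=h\omega/b$. Tilting only the angle between the first and third axes is exactly what cancels the $e_1e_3^\top+e_3e_1^\top$ term above. No fixed order-three conjugation $P$ can produce that unequal-angle configuration.
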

\begin{proof}
Put $a=1+\beta h$ and $\mathsf F_j=aI+hK_j$. Its Gram has minimum eigenvalue $a^2$ of odd multiplicity and every eigenvalue above $a^2$ of even multiplicity. One-step exactness is therefore impossible by \cref{thm:all-signs}. For two-step exactness, the factor lemma implies $(\mathsf F_2\mathsf F_1)(\mathsf F_2\mathsf F_1)^\top=\lambda_2^2I$, $\lambda_2=e^{2\beta h}>a^2$. Thus
\[
 \mathsf F_2^\top\mathsf F_2=\lambda_2^2(\mathsf F_1\mathsf F_1^\top)^{-1}.
\]
The left minimum $a^2$ has odd multiplicity, while its right counterpart comes from the first Gram's maximum $\lambda_2^2/a^2>a^2$ of even multiplicity, a contradiction.

For three steps set $\lambda_3=e^{3\beta h}$, $b=\sqrt{\lambda_3/a}>a$, and $\omega=\sqrt{b^2-a^2}/h$. Partition the coordinates into three odd nonempty blocks $\mathcal I_j$ of sizes $1,1,d-2$. Let $\mathsf P_j$ project onto the complement of $\mathcal I_j$, and choose skew $J_j$ with $J_j^2=-\mathsf P_j$. Define
\[
\begin{aligned}
 \mathsf D_j&=a(I-\mathsf P_j)+b\mathsf P_j,\quad
 \mathsf U_j=(aI+h\omega J_j)\mathsf D_j^{-1},\\
 \mathsf Q_0&=I,\quad \mathsf Q_j=\mathsf Q_{j-1}\mathsf U_j,\quad
 K_j=\omega\mathsf Q_{j-1}J_j\mathsf Q_{j-1}^\top.
\end{aligned}
\]
Each $\mathsf U_j$ is orthogonal and commutes with $\mathsf D_j$. Induction gives $\mathsf F_j\cdots\mathsf F_1=\mathsf Q_j\mathsf D_j\cdots\mathsf D_1$. Every coordinate receives one factor $a$ and two factors $b$, so the three-step product is $ab^2\mathsf Q_3=\lambda_3\mathsf Q_3$, giving the target law by spherical symmetry. Smooth skew interpolation through zero realizes the three nodal fields without increasing their maximum rank or norm. Its exact fundamental matrix is $e^{\beta t}O(t)$ with $O(t)$ orthogonal, hence it preserves the entire prescribed path.
\end{proof}
The nodal ranks are $d-1,d-1,2$, and $\omega^2\to3\beta^2/2$ as $h\downarrow0$. Derivatives may grow as $1/h$, excluding a uniform fixed-field order claim. More generally an $N$-step exact product with $\rank K_j\le2r$ necessarily has $2rN\ge d$: telescoping gives $\rank(\mathsf F_N\cdots\mathsf F_1-a^NI)\le2rN$, whereas exactness makes this difference $e^{N\beta h}Q-a^NI$ invertible. The rank count is only necessary.

At a covariance-normalized spherical source with isotropic covariance tangent $2\beta I$, let $B=\beta I+K$, $K^\top=-K$, $h\ne0$, $a=1+h\beta$, and let ${\Sigma_\star}\succ0$ be the target covariance. The centered Euler response is $a^2I-h^2K^2$.
\begin{theorem}[Spectral reachability]\label{thm:spectral-pairing}
An exact skew control exists if and only if
\begin{equation}\label{eq:spectral-condition}
{\Sigma_\star}\succeq a^2I,\qquad
{n}_{\Sigma_\star}(\lambda)\text{ is even for each }\lambda>a^2.
\end{equation}
On a $2{n}_\ell$-dimensional eigenspace of ${\Sigma_\star}$ with $\lambda_\ell>a^2$, all solutions have the form
\[
K_\ell=\frac{\sqrt{\lambda_\ell-a^2}}{|h|}J_\ell,
\quad J_\ell^\top=-J_\ell,\quad J_\ell^2=-I.
\]
On the eigenspace at $a^2$, $K=0$. Each nonzero-frequency choice is parameterized by $O(2{n}_\ell)/U({n}_\ell)$.
\end{theorem}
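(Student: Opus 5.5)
The plan is to reduce everything to the spectral structure of $-K^2$ for a real skew matrix $K$. Since $K^\top=-K$, the matrix $-K^2=K^\top K$ is symmetric positive semidefinite. By the canonical skew form, there is an orthogonal $U$ with $U^\top KU=\bigoplus_j\omega_jJ_2\oplus 0$, where $\omega_j>0$. Hence $-K^2$ has eigenvalues $\omega_j^2$, each with even multiplicity counted over the planes sharing that frequency, plus a kernel. Exactness means $a^2I-h^2K^2=\Sigma_\star$, which is equivalent to
\[
-K^2=h^{-2}(\Sigma_\star-a^2I).
\]

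\textbf{Necessity.} Given an exact $K$, the right-hand side must be positive semidefinite, so $\Sigma_\star\succeq a^2I$. Its eigenvalue $h^{-2}(\lambda-a^2)$ has the same multiplicity as $\lambda$ in $\Sigma_\star$. For $\lambda>a^2$ this eigenvalue is a strictly positive eigenvalue of $-K^2$, and such eigenvalues have even multiplicity by the canonical form. This yields \eqref{eq:spectral-condition}.

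\textbf{Sufficiency.} Assume \eqref{eq:spectral-condition}. On each eigenspace $V_\ell$ of $\Sigma_\star$ with $\lambda_\ell>a^2$ and dimension $2n_\ell$, choose an orthogonal complex structure $J_\ell$, which exists by \cref{lem:complex-structure-manifold}. Set $K_\ell=\sqrt{\lambda_\ell-a^2}\,|h|^{-1}J_\ell$ on $V_\ell$ and $K=0$ on the $a^2$-eigenspace. The direct sum is skew, and $-h^2K^2=\Sigma_\star-a^2I$. As a sanity check, this is consistent with the block construction in \cref{prop:finite-directional}.

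\textbf{Characterization of all solutions.} This is the step needing care. Let $K$ be any solution. Since $K$ commutes with $K^2$, it commutes with $\Sigma_\star$, so $K$ preserves each eigenspace of $\Sigma_\star$, and its restriction $K_\ell$ to $V_\ell$ is skew. On $V_\ell$ we have $K_\ell^2=-c_\ell^2I$ with $c_\ell=\sqrt{\lambda_\ell-a^2}/|h|$.
\begin{itemize}
\item If $c_\ell>0$, then $J_\ell:=K_\ell/c_\ell$ is skew with $J_\ell^2=-I$, which is exactly the claimed form.
\item At $\lambda=a^2$ we get $K^2=0$. Then $K^\top K=-K^2=0$, so $\|Kx\|^2=0$ for all $x$ and $K=0$ there.
\end{itemize}
The parameterization of the admissible $J_\ell$ by $O(2n_\ell)/U(n_\ell)$ is then precisely \cref{lem:complex-structure-manifold} applied within $V_\ell$. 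The solution set is therefore the product of these homogeneous spaces over the eigenspaces with $\lambda_\ell>a^2$.

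Everything else is routine linear algebra. The main obstacle is the commutation argument, which excludes solutions that mix eigenspaces and ensures that "all solutions" are block diagonal.
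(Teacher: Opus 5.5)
Your proof is correct and follows essentially the same route as the paper: you reduce exactness to $-K^2=h^{-2}(\Sigma_\star-a^2I)$, use the canonical skew form for necessity, and assign scaled complex structures on the positive-excess eigenspaces for sufficiency. You also use that $\Sigma_\star$ is a polynomial in $K$ to force every solution to preserve those eigenspaces, and your explicit argument $K^\top K=-K^2=0$ on the floor just spells out a step the paper states tersely.
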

\begin{proof}[Proof of \cref{thm:spectral-pairing}]
The reachable scatter is ${\Sigma_\star}=a^2I-h^2K^2$. Every positive eigenvalue of $-K^2$ occurs in even-dimensional planar blocks. Conversely assign a scaled complex structure to each even-dimensional positive-excess eigenspace of ${\Sigma_\star}$. Since ${\Sigma_\star}$ is a polynomial in $K$, every solution preserves these eigenspaces; $K^2=0$ on the floor forces $K=0$ there. This proves the equivalence and product of $O(2{n})/U({n})$ parameterizations. Generic simple-spectrum perturbations above the floor are obstructed.
\label{cor:generic-aniso}
\end{proof}

\begin{theorem}[Sharp projection with rank and norm budgets]\label{thm:rank}
For spherical $Z$ with covariance $I$ and $h\ne0$, set $c=1+h\beta$, $a=|c|$, and constrain $\rank K\le2r$, $\|K\|_{\op}\le\Omega$, where $r$ is an integer, $0\le r\le\lfloor d/2\rfloor$, and $0\le\Omega\le\infty$. Let $y_1\le\cdots\le y_d$ be the positive target scatter radii, $n_0=d-2r$, $b=\sqrt{a^2+h^2\Omega^2}$ (possibly infinite), and ${\widehat y}_j=\clip((y_{n_0+2j-1}+y_{n_0+2j})/2,[a,b])$. Then
\begin{equation}\label{eq:rankcost}
 \min_K W_2^2(\law((cI+hK)Z),\law({\Sigma_\star}^{1/2}Z))
 =\sum_{i=1}^{n_0}(y_i-a)^2+\sum_{j=1}^r[(y_{n_0+2j-1}-{\widehat y}_j)^2+(y_{n_0+2j}-{\widehat y}_j)^2].
\end{equation}
An optimizer aligns target eigenspaces, uses zero on the first $n_0$ directions, and successive planar frequencies $\sqrt{{\widehat y}_j^2-a^2}/|h|$.
\end{theorem}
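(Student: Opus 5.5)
The plan is to reduce the minimization to an optimization over the singular values of the Euler factor, and then to an assignment problem on ordered radii. First, by \cref{lem:elliptic-factor}, the response law $\law((cI+hK)Z)$ depends only on the Gram matrix
\[
\Gamma(K)=(cI+hK)(cI+hK)^\top=c^2I-h^2K^2=a^2I-h^2K^2 .
\]
The cross terms $ch(K+K^\top)$ vanish because $K$ is skew. Write the canonical skew form $K=O\,\mathrm{blockdiag}(\omega_1J,\dots,\omega_rJ,0_{n_0})\,O^\top$ with $O$ orthogonal and $0\le\omega_j\le\Omega$; the rank budget allows at most $r$ planes. Then $\Gamma$ has eigenvalues $a^2+h^2\omega_j^2$, each with multiplicity two, together with $a^2$ on the remaining $n_0$ directions. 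The achievable response radii are therefore exactly the multisets consisting of $n_0$ copies of $a$ and $r$ pairs $(\varrho_j,\varrho_j)$ with $\varrho_j\in[a,b]$. The eigenbasis $O$ is free. If $r<d/2$ one may also leave some planes at $\omega_j=0$; these are covered by setting $\varrho_j=a$.

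Second, apply \eqref{eq:elliptical-w2} with equal means. For fixed eigenvalues of $\Gamma$, we need to maximize $\tr(\Gamma^{1/2}\Sigma_\star\Gamma^{1/2})^{1/2}$ over the free rotation $O$. The classical bound $\tr(A^{1/2}BA^{1/2})^{1/2}\le\sum_i\sqrt{\alpha_i\beta_i}$, with both spectra sorted in the same order, is attained by commuting alignment. This bound can be obtained through von Neumann's trace inequality applied to $\tr|A^{1/2}B^{1/2}|$. It reduces the cost to $\min\sum_i(y_i-\varrho_{\pi(i)})^2$ over the achievable radius multiset, and the rearrangement inequality makes the sorted matching optimal. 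The alignment choice of $O$ is the optimizer described in the statement.

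Third, I would solve the resulting finite combinatorial problem. The variables are the sorted target radii $y_1\le\dots\le y_d$ and a sorted candidate sequence made of $n_0$ entries equal to $a$ and $r$ doubled entries in $[a,b]$. Since every candidate radius is at least $a$, the $a$-entries occupy the lowest positions in the sorted candidate sequence. The sorted matching therefore pairs them with $y_1,\dots,y_{n_0}$. The doubled entries, being adjacent in sorted order, then match consecutive pairs $(y_{n_0+2j-1},y_{n_0+2j})$. One should check that sorting remains consistent after clipping, but clipping is monotone, so $\widehat y_j$ is nondecreasing in $j$.

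For each pair, minimizing $(y-\varrho)^2+(y'-\varrho)^2$ over $\varrho\in[a,b]$ is a convex one-dimensional problem whose solution is the clipped midpoint $\widehat y_j$. There is a subtlety in this joint optimization: the placement of the unpaired $a$-entries at the bottom has to be justified over all multisets, not just for fixed ones. I would handle this with an exchange argument. If some target $y_i$ with $i\le n_0$ were assigned to a pair while a larger target sat at radius $a$, swapping them does not increase the cost, because pair radii are at least $a$ and both the rearrangement and the pair costs are monotone. This exchange step, together with the trace inequality for non-commuting alignment, is the main obstacle. Finally, the frequencies $\sqrt{\widehat y_j^2-a^2}/|h|$ realize these radii, and they lie in $[0,\Omega]$ because $\widehat y_j\in[a,b]$.
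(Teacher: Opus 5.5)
Your proposal is correct and is essentially the paper's proof. Both go through the same steps: the reachable radii are $n_0$ copies of $a$ plus $r$ doubled radii in $[a,b]$; the Bures fidelity is bounded by the sorted inner product and attained by alignment (you cite von Neumann's trace inequality, where the paper argues directly with doubly stochastic weights and rearrangement); and each pair is minimized independently at its clipped mean, which is jointly feasible because the clipped means are already ordered.

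The exchange argument you flag as the main obstacle is unnecessary, because sorting any fixed admissible multiset already places the $a$-entries first. The one genuine loose end is the degenerate case $a=0$. There $cI+hK$ can be singular, so the factor lemma and the Bures formula, both stated for invertible factors or positive scatters, do not apply directly. The paper closes this case by continuity or by a common-generator Procrustes coupling.
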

\begin{proof}
The sorted reachable radii are $(a,\ldots,a,{\widetilde y}_1,{\widetilde y}_1,\ldots,{\widetilde y}_r,{\widetilde y}_r)$ with $a\le {\widetilde y}_1\le\cdots\le {\widetilde y}_r\le b$; ${\widetilde y}_j=a$ covers smaller ranks. For fixed output radii $z_i$, Bures fidelity is at most $\sum_i y_iz_i$, attained by alignment. Indeed write it as $\|\diag(y)W\diag(z)\|_*$; its variational expression over orthogonal $Q$ has terms $y_iz_jW_{ij}Q_{ji}$. The inequality $uv\le(u^2+v^2)/2$ bounds this by two sums with doubly stochastic weights. Rearrangement bounds each by $\sum y_iz_i$. Thus the minimum is squared radius distance. Each pair minimizes independently at its clipped arithmetic mean; these means are already ordered, so they jointly attain the minimum. The displayed blocks realize them. Singular outputs at $a=0$ follow by continuity, or common-generator Procrustes coupling.
\end{proof}

\begin{corollary}[Sharp additional energy budget]\label{cor:energy-spectral-optimum}
Under \cref{thm:rank}, also constrain $\|K\|_F^2\le\mathcal E_{\max}$, $\mathcal E_{\max}\ge0$. Put $m_j=(y_{n_0+2j-1}+y_{n_0+2j})/2$. The same minimum formula \cref{eq:rankcost} holds with pair radii
\begin{equation}\label{eq:energy-pair-radii}
 z_j(\eta)=\clip\bigl(m_j/(1+\eta),[a,b]\bigr).
\end{equation}
Use $\eta=0$ if $\sum_j(z_j(0)^2-a^2)\le h^2\mathcal E_{\max}/2$; otherwise choose $\eta>0$ so that equality holds. If $\mathcal E_{\max}=0$, $r=0$, or $b=a$, all permitted pair radii are $a$. Exact reachability holds if and only if the conditions of \cref{cor:budgets} and the following energy inequality hold:
\begin{equation}\label{eq:energy-minimum-exact}
 \mathcal E_{\max}\ge\mathcal E_{\min}
 =\frac{\tr\Sigma_\star-da^2}{h^2}.
\end{equation}
\end{corollary}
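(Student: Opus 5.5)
The plan is to reuse the reduction in the proof of \cref{thm:rank} and then solve a separable convex program with a single Lagrange multiplier.

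\textbf{Reduction to sorted radii.} By that proof, a feasible $K$ with $\rank K\le2r$ and $\|K\|_{\op}\le\Omega$ produces sorted output radii $(a,\ldots,a,z_1,z_1,\ldots,z_r,z_r)$ with $a\le z_1\le\cdots\le z_r\le b$. For fixed output radii, the Bures fidelity bound shows that the minimal $W_2^2$ is the sorted squared radius distance, and it is attained by aligning eigenspaces. The new ingredient is the energy. In canonical planar blocks, $\|K\|_F^2=2\sum_j\omega_j^2=2h^{-2}\sum_j(z_j^2-a^2)$. The budget is therefore the convex constraint $\sum_j(z_j^2-a^2)\le h^2\mathcal E_{\max}/2$ on the radii alone. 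It does not interfere with the alignment step, because the alignment step keeps the spectrum of $K$ fixed.

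\textbf{The reduced program.} For each pair,
\[(y_{n_0+2j-1}-z_j)^2+(y_{n_0+2j}-z_j)^2=2(m_j-z_j)^2+\text{const}.\]
So the problem becomes minimizing $\sum_j(m_j-z_j)^2$ over the box $[a,b]^r$, subject to the energy ball. For now I drop the ordering constraint $z_1\le\cdots\le z_r$. This is strictly convex with a convex feasible set, so a minimizer exists and is unique. KKT with multiplier $\eta\ge0$ gives $z_j=\clip(m_j/(1+\eta),[a,b])$ coordinatewise. Indeed, for fixed $\eta$ each coordinate minimizes $(m_j-z)^2+\eta z^2$ on an interval, and that minimizer is the clipped stationary point.

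\textbf{Choosing $\eta$.} If $\eta=0$ is feasible, complementary slackness selects it. Otherwise $\eta\mapsto\sum_j(z_j(\eta)^2-a^2)$ is continuous and nonincreasing, and it tends to $0$ as $\eta\to\infty$. The intermediate value theorem then gives an $\eta$ attaining equality. Complementary slackness plus convexity certify global optimality. Since the $m_j$ are nondecreasing and clipping and division by $1+\eta$ are monotone, the $z_j(\eta)$ are automatically ordered. The dropped ordering constraint is therefore inactive, and the realizing blocks from \cref{thm:rank} apply with frequencies $\sqrt{z_j^2-a^2}/|h|$.

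The degenerate cases $\mathcal E_{\max}=0$, $r=0$ and $b=a$ all force every $z_j=a$.

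\textbf{Exactness.} Zero error requires the conditions of \cref{cor:budgets}: the first $n_0$ radii equal $a$, each pair is equal, and the pair radii lie in $[a,b]$. It also requires $z_j=m_j$ to be feasible for the energy constraint. The energy needed is $2h^{-2}\sum_j(m_j^2-a^2)$. Because pairs coincide and the floor directions contribute nothing, this equals $h^{-2}(\sum_iy_i^2-da^2)=(\tr\Sigma_\star-da^2)/h^2$, which gives \cref{eq:energy-minimum-exact}.

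\textbf{Main obstacle.} The delicate step is justifying that the Bures reduction of \cref{thm:rank} survives the energy constraint. One must argue that the energy depends only on the spectrum of $K$, so optimizing over eigenvectors and over radii separates. One must also confirm that the ordered-output structure, with the floor directions matched to the smallest targets, remains optimal; the monotonicity of $z_j(\eta)$ is what settles this.
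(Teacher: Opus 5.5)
Your proposal is correct and follows essentially the paper's proof: the conjugation-invariance of $\|K\|_F^2=2h^{-2}\sum_j(z_j^2-a^2)$ lets the Bures alignment of \cref{thm:rank} go through unchanged, so the problem becomes a separable convex program over $[a,b]^r$ with a quadratic budget. That program is solved by clipped multiplier radii, which stay sorted, and $\eta$ is fixed by continuity, monotonicity and complementary slackness. The only difference is cosmetic: for exactness the paper takes the trace of the exact scatter $a^2I-h^2K^2$ directly, whereas you evaluate the budget at $z_j=m_j$ in the reduced program, and the two computations are equivalent.
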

\begin{proof}
Alignment does not change the skew frequencies or their energy, $\|K\|_F^2=2h^{-2}\sum_j(z_j^2-a^2)$. After alignment the convex problem minimizes $2\sum_j(z_j-m_j)^2$ over $[a,b]^r$ with this quadratic budget. A multiplier $2\eta$ gives derivative $4[(1+\eta)z_j-m_j]$, hence \cref{eq:energy-pair-radii}. The radii remain sorted. For a positive active budget the sum is continuous and nonincreasing in $\eta$, from its unconstrained value to zero; complementary slackness proves global optimality. Degenerate zero budgets give the stated floor directly. Taking the trace of exact scatter $a^2I-h^2K^2$ proves necessity of \cref{eq:energy-minimum-exact}; the canonical blocks from \cref{cor:budgets} attain this energy, proving sufficiency.
\end{proof}
The identity $\|K\|_F^2=\E|KZ|^2$ requires covariance-normalized isotropy; neural score energy and nonorthogonal whitening require their covariance factors.

\begin{corollary}[Minimal budgets and unrestricted pairing]
\label{cor:budgets}\label{thm:sharp-pairing}
Zero error requires and suffices that ${\Sigma_\star}\succeq a^2I$, $\lambda_{\max}({\Sigma_\star})\le a^2+h^2\Omega^2$, $\rank({\Sigma_\star}-a^2I)\le2r$, and every positive-excess eigenvalue has even multiplicity. The minimal budgets are $r_{\min}=\rank({\Sigma_\star}-a^2I)/2$ and $\Omega_{\min}=\sqrt{\lambda_{\max}({\Sigma_\star})-a^2}/|h|$. Without budgets, each pair costs
\begin{equation}\label{eq:pair-cost}
 g_a(y,z)=\tfrac12(y-z)^2+2(a-(y+z)/2)_+^2;
\end{equation}
ordered adjacent pairing is optimal, leaving $y_1$ unmatched in odd dimension. Above the floor in even dimension this is $\frac12\sum_j(y_{2j}-y_{2j-1})^2$.
\end{corollary}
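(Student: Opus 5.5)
We reduce both parts of \cref{cor:budgets} to the aligned radius picture of \cref{thm:rank} and \cref{thm:spectral-pairing}. The centered Euler scatter is $a^2I-h^2K^2$, and $K$ is skew with $\rank K\le2r$ and $\|K\|_{\op}\le\Omega$. In a canonical real Schur basis, $-K^2$ is diagonal with entries $\omega_1^2,\omega_1^2,\ldots,\omega_r^2,\omega_r^2,0,\ldots,0$, where $|\omega_j|\le\Omega$.

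\emph{Zero error.} Zero error means the scatter equals ${\Sigma_\star}$. Read the eigenvalues of $a^2I-h^2K^2$ directly from this basis:
\begin{itemize}
\item all eigenvalues are at least $a^2$, so ${\Sigma_\star}\succeq a^2I$;
\item the largest is at most $a^2+h^2\Omega^2$, which bounds $\lambda_{\max}({\Sigma_\star})$;
\item the excess ${\Sigma_\star}-a^2I=-h^2K^2$ has rank equal to $\rank K\le 2r$;
\item the positive excesses occur in pairs, so each positive-excess eigenvalue has even multiplicity.
\end{itemize}
For sufficiency, use the construction in \cref{thm:spectral-pairing}. On each eigenspace of ${\Sigma_\star}$ with eigenvalue $\lambda_\ell>a^2$, place $K_\ell=\sqrt{\lambda_\ell-a^2}\,J_\ell/|h|$, and set $K=0$ on the floor eigenspace. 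The rank of this $K$ is $\rank({\Sigma_\star}-a^2I)\le2r$. Its operator norm is $\sqrt{\lambda_{\max}-a^2}/|h|\le\Omega$. The minimal budgets follow at once: the necessity argument shows any exact $K$ has rank equal to $\rank({\Sigma_\star}-a^2I)$ and norm $\sqrt{\lambda_{\max}-a^2}/|h|$, and the construction attains both. Alternatively, set the right side of \cref{eq:rankcost} to zero and use $y_i^2=\lambda_i$.

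\emph{Unrestricted pairing.} Take $r=\lfloor d/2\rfloor$ and $\Omega=\infty$. The alignment step in the proof of \cref{thm:rank} reduces the cost to a squared distance between radii. That step is valid for any fixed multiset of output radii, not only for the specific sorted pairing used there. So we may minimize over two things: a partition of the target radii into $\lfloor d/2\rfloor$ pairs, plus one leftover radius when $d$ is odd, and a common output radius $z\ge a$ for each pair. The leftover radius is always mapped to $a$.

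For a pair $(y,z)$ with shared output radius $\varrho\ge a$, minimize $(y-\varrho)^2+(z-\varrho)^2$. The minimizer is $\varrho=\max(a,(y+z)/2)$. Expanding around the midpoint gives $\tfrac12(y-z)^2+2((y+z)/2-\varrho)^2$. Clipping to $[a,\infty)$ makes the last term $2(a-(y+z)/2)_+^2$, which yields \cref{eq:pair-cost}. Above the floor every midpoint exceeds $a$, so in even dimension the penalty vanishes and the cost is $\tfrac12\sum_j(y_{2j}-y_{2j-1})^2$.

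\emph{Main obstacle.} The hard step is showing that ordered adjacent pairing minimizes the total cost $\sum g_a$, with $y_1$ left unmatched in odd dimension. We plan an exchange argument. Suppose two pairs $\{y_i,y_k\}$ and $\{y_j,y_l\}$ cross or nest, with $i<j<k<l$ or $i<j<l<k$. Uncrossing them to $\{y_i,y_j\},\{y_k,y_l\}$ should not increase the cost.

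Write $g_a(y,z)=\tfrac12(y-z)^2+\phi(y+z)$, where $\phi(s)=2(a-s/2)_+^2$ is convex and nonincreasing. The quadratic term is handled by the standard fact that sorted adjacent pairing minimizes the sum of squared within-pair differences. The term $\phi$ needs separate care, because uncrossing changes the pair sums. The sum vector $(y_i+y_j,y_k+y_l)$ majorizes the original sums, which may favour convex $\phi$ in the wrong direction.

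We therefore plan a direct check on the unconstrained pair minimum. It equals $\min_{\varrho_1,\varrho_2\ge a}$ of the four squared distances. That is a matching of four points to two values, each used twice, under a convex cost with monotone clipping, and a one-dimensional rearrangement argument on $(y-\varrho)^2$ with $\varrho_1\le\varrho_2$ shows the sorted assignment is optimal. For odd $d$, compare the choice of unmatched point: assigning $a$ to the smallest radius costs $(y_1-a)^2$. Any other leftover can be swapped with $y_1$ by the same rearrangement inequality, since $y_1$ has the smallest deviation from $a$ among points that must sit at or above it.
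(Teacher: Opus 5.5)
Your proposal is correct. The zero-error and minimal-budget part, and the derivation of $g_a$ by completing the square around the pair midpoint, match the paper's proof. The difference is in the step you call the main obstacle.

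\textbf{How the paper handles pairing.} The paper does not treat it as a separate problem. \Cref{thm:rank} with $r=\lfloor d/2\rfloor$ and $\Omega=\infty$ already minimizes over \emph{all} skew $K$, and every pairing is realized by some conjugated $K$. Hence every pairing costs at least \cref{eq:rankcost}. With $b=\infty$, that formula is exactly $\sum_j g_a(y_{n_0+2j-1},y_{n_0+2j})$, plus $(y_1-a)^2$ when $d$ is odd. There the rearrangement (fidelity) bound is applied once to the full sorted output vector $(a,\ldots,a,\widetilde y_1,\widetilde y_1,\ldots)$. Its equal entries are adjacent, so sorted matching \emph{is} adjacent pairing. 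The only extra observation needed is that the clipped means are already ordered.

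Your own reduction already contains this. The alignment step you cite gives the sorted assignment for any fixed output multiset. The obstacle only appears because you pre-optimize each pair's radius before comparing pairings.

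\textbf{Your route.} You apply the same one-dimensional rearrangement inequality locally. You freeze $\varrho_1\le\varrho_2$, reassign the four targets to the slots $(\varrho_1,\varrho_1,\varrho_2,\varrho_2)$ in sorted order, and then re-optimize. This is more elementary and does not need the ordering of the means. It also shows your majorization worry is moot: the direct exchange inequality for $\sum g_a$ does hold. It is longer, though, and as written needs two small repairs.

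\begin{enumerate}
\item \emph{Termination of repeated uncrossing.} Use the total index span $\sum_{\text{pairs}}|k-i|$. It drops by $2(k-j)$ at a crossing and by $2(l-j)$ at a nesting. A pairing with no crossings or nestings that uses every remaining index must be adjacent.
\item \emph{The odd-dimensional leftover.} ``$y_1$ has the smallest deviation from $a$'' is not the reason, and it fails for below-floor targets. For example, with $a=1$ and $y=(0.1,1,5)$, $y_2$ has zero deviation, yet leaving $y_1$ unmatched is optimal (total $8.81$ versus $12.005$). The correct reason is that the leftover slot carries the smallest output value $a$. Moving $y_1$ there from a pair with radius $\varrho\ge a$, and sending the leftover $y_m$ into that pair, changes the cost by $-2(y_m-y_1)(\varrho-a)\le0$.
\end{enumerate}
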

\begin{proof}
Canonical skew form proves the zero-error conditions and realizes the minimal rank/largest frequency. Minimizing $({\varrho}-y)^2+({\varrho}-z)^2$ over ${\varrho}\ge a$ gives $g_a$. All pairings are allowed by conjugation; the preceding theorem supplies a global sorted optimizer, including below-floor targets. The above-floor formula follows when clipping is inactive.
\end{proof}

\subsection{Leading cancellation and covariance surrogates}
\label{app:anisotropic-proof}\label{app:anis-opt}\label{app:singular}
\begin{theorem}[Leading anisotropic feasibility]\label{thm:anisotropic}
Put ${H_\Sigma}=\Sigma^{-1/2}\dot\Sigma\Sigma^{-1/2}$ and ${G_\Sigma}=\Sigma^{-1/2}\ddot\Sigma\Sigma^{-1/2}$. Leading covariance cancellation is equivalent to ${\widehat B}+{\widehat B}^\top={H_\Sigma}$, $2{\widehat B}{\widehat B}^\top={G_\Sigma}$, hence to ${G_\Sigma}\succeq0$ and
\begin{equation}\label{eq:anis-orth}
 {P_G}Q+Q^\top {P_G}={H_\Sigma},\qquad {P_G}=({G_\Sigma}/2)^{1/2},\quad Q\in O(d).
\end{equation}
Necessarily $(x^\top {H_\Sigma}x)^2\le2x^\top {G_\Sigma}x$ for unit $x$, and $\tr {G_\Sigma}\ge\|{H_\Sigma}\|_F^2/2$. In the range/kernel splitting of ${G_\Sigma}$, write $\widehat B_{++},\widehat B_{+0}$ for the indicated blocks of $\widehat B$. Feasibility is equivalent to $(H_\Sigma)_{00}=0$, $\widehat B=\left(\begin{smallmatrix}\widehat B_{++}&\widehat B_{+0}\\0&0\end{smallmatrix}\right)$, $(H_\Sigma)_{+0}=\widehat B_{+0}$, $\widehat B_{++}+\widehat B_{++}^\top=(H_\Sigma)_{++}$, and $\widehat B_{++}\widehat B_{++}^\top=(G_\Sigma)_{++}/2-\widehat B_{+0}\widehat B_{+0}^\top$.
\end{theorem}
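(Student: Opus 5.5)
The plan is to make ``leading covariance cancellation'' precise as a matching of second-order Taylor coefficients, and then reduce everything to algebra on the whitened generator $\widehat B=L^{-1}BL$, $L=\Sigma^{1/2}$, as in the proof of \cref{thm:finite-gaussian}.

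\textbf{Step 1: reduce to two matrix equations.} For an affine lift $w(x)=\dot m+B(x-m)$, one Euler step sends scatter $\Sigma$ to
\[
(I+hB)\Sigma(I+hB)^\top=\Sigma+h(B\Sigma+\Sigma B^\top)+h^2B\Sigma B^\top .
\]
The exact path has $\Sigma_{t+h}=\Sigma+h\dot\Sigma+\tfrac{h^2}{2}\ddot\Sigma+O(h^3)$. I take leading cancellation to mean that the $O(h)$ and $O(h^2)$ coefficients agree. The $O(h)$ coefficient is the tangent condition \cref{eq:lyap}, which every feasible lift already satisfies. The $O(h^2)$ coefficient gives $B\Sigma B^\top=\ddot\Sigma/2$. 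Conjugating both equations by $L^{-1}$ turns them into
\[
\widehat B+\widehat B^\top=H_\Sigma,\qquad 2\widehat B\widehat B^\top=G_\Sigma .
\]
This proves the first equivalence.

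\textbf{Step 2: the orthogonal reformulation.} The second equation forces $G_\Sigma\succeq0$. Conversely, polar decomposition writes any square $\widehat B$ as $(\widehat B\widehat B^\top)^{1/2}Q$ with $Q\in O(d)$; this holds even when $\widehat B$ is singular. So the second equation is equivalent to $\widehat B=P_GQ$ for some orthogonal $Q$. Substituting into the tangent equation gives \cref{eq:anis-orth}. The converse direction is immediate: any $Q$ solving \cref{eq:anis-orth} defines $\widehat B=P_GQ$, which satisfies both equations.

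\textbf{Step 3: the necessary inequalities.} For a unit vector $x$, $x^\top H_\Sigma x=2x^\top\widehat Bx$. By Cauchy--Schwarz, $|x^\top\widehat Bx|\le|\widehat B^\top x|=\sqrt{x^\top G_\Sigma x/2}$. Squaring gives $(x^\top H_\Sigma x)^2\le 2x^\top G_\Sigma x$. For the trace bound, the triangle inequality gives $\|H_\Sigma\|_F\le2\|\widehat B\|_F$, and $\|\widehat B\|_F^2=\tr G_\Sigma/2$. Hence $\|H_\Sigma\|_F^2\le 2\tr G_\Sigma$, which is the stated bound.

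\textbf{Step 4: the range/kernel block description.} This is the step needing the most care. Split coordinates along $\operatorname{range}G_\Sigma\oplus\ker G_\Sigma$, with blocks indexed by $+$ and $0$.
\begin{itemize}
\item For $x\in\ker G_\Sigma$ we have $|\widehat B^\top x|^2=x^\top G_\Sigma x/2=0$. So the whole $0$-row block of $\widehat B$ vanishes, i.e.\ $\widehat B_{0+}=0$ and $\widehat B_{00}=0$. This gives the stated zero lower row.
\item The symmetric tangent equation, read blockwise, then gives $(H_\Sigma)_{00}=0$, $(H_\Sigma)_{+0}=\widehat B_{+0}+\widehat B_{0+}^\top=\widehat B_{+0}$, and $\widehat B_{++}+\widehat B_{++}^\top=(H_\Sigma)_{++}$.
\item The Gram equation has vanishing $0$-blocks automatically. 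Its $++$ block reads $\widehat B_{++}\widehat B_{++}^\top+\widehat B_{+0}\widehat B_{+0}^\top=(G_\Sigma)_{++}/2$, which rearranges to the last stated condition.
\end{itemize}
For the converse, assume all listed conditions and reassemble $\widehat B$ with zero lower row. Its Gram matrix is $G_\Sigma/2$, and its symmetric part is $H_\Sigma$. The off-diagonal block $(H_\Sigma)_{0+}=(H_\Sigma)_{+0}^\top$ is handled by the symmetry of $H_\Sigma$. So the block conditions are equivalent to the two matrix equations of Step 1.

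Finally, set $B=L\widehat BL^{-1}$. This transfers every statement back to the original coordinates, and the center velocity $\dot m$ plays no role in the covariance conditions.
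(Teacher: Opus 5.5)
Your proposal is correct and follows the paper's proof essentially step for step: you whiten to the two equations for $\widehat B$, use a polar (equivalently SVD) factorization $\widehat B=P_GQ$ that remains valid for singular $G_\Sigma$, apply Cauchy--Schwarz for the directional bound, and show the kernel-indexed rows vanish before concluding by block multiplication. The only cosmetic difference is the trace bound: the paper uses the orthogonal split $\widehat B=H_\Sigma/2+K$ with $K$ skew, where you use the triangle inequality, and both give $\tr G_\Sigma\ge\|H_\Sigma\|_F^2/2$.
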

\begin{proof}
Whitening gives $B\Sigma+\Sigma B^\top=\dot\Sigma$ and $2B\Sigma B^\top=\ddot\Sigma$ exactly as the two equations for ${\widehat B}$. If ${\widehat B}=U\Lambda V^\top$ is an SVD, then ${P_G}=U\Lambda U^\top$ and $Q=UV^\top$ give ${\widehat B}={P_G}Q$, including singular ${G_\Sigma}$; the converse is immediate. Cauchy--Schwarz gives $|x^\top {H_\Sigma}x|^2=4|x^\top {\widehat B}^\top x|^2\le4x^\top {\widehat B}{\widehat B}^\top x$. Writing ${\widehat B}={H_\Sigma}/2+K$ yields $\tr {G_\Sigma}=2\|{\widehat B}\|_F^2\ge\|{H_\Sigma}\|_F^2/2$. Finally ${\widehat B}{\widehat B}^\top={G_\Sigma}/2$ forces kernel-indexed rows to vanish; block multiplication proves the stated singular criterion.
\end{proof}

\begin{proposition}[Optimization, amplitude, and orientation]
\label{prop:anis-approx}\label{prop:amplitude}\label{prop:orientation}\label{sec:amplitude}
For ${\widehat B}={H_\Sigma}/2+K$, $K$ skew, the surrogate $\frac12\|M\|_F^2$, $M=2{\widehat B}{\widehat B}^\top-{G_\Sigma}$, is coercive and attains a minimum; stationarity is $M{\widehat B}-{\widehat B}^\top M=0$. For fixed $S=S^\top,K$ and target ${\Sigma_\star}$, define $D_0=(I+hS)^2-{\Sigma_\star}$, $D_1=h^2[K,S]$, $D_2=-h^2K^2$. The amplitude objective $\frac12\|D_0+\alpha D_1+\alpha^2D_2\|_F^2+\lambda\alpha^2/2$ equals $F(0)+c_1\alpha+c_2\alpha^2+c_4\alpha^4$, where $c_1=\langle D_0,D_1\rangle$, $c_2=\langle D_0,D_2\rangle+\|D_1\|^2/2+\lambda/2$, $c_4=\|D_2\|^2/2$. For $K\ne0$, endpoints and roots of $4c_4\alpha^3+2c_2\alpha+c_1=0$ give its global minimum on $[0,\alpha_{\max}]$; if $[{\Sigma_\star},S]=0$, $\alpha_*^2=\clip(-c_2/(2c_4),[0,\alpha_{\max}^2])$.

For a covariance-normalized spherical source, $S=\beta I$ and $K^2=-\omega^2I$ make every orthogonal conjugate of $K$ produce the same law. If $S=\beta I+\epsilon {\Delta S}$ and $K_1^2=K_2^2=-\omega^2I$, their scatter difference is $h^2\epsilon[K_1-K_2,{\Delta S}]$. If both scatters are bounded below by $\mu^2I$ for $\mu>0$, their law distance is at most $|\epsilon|h^2\|{\Delta S}\|_{\op}\|K_1-K_2\|_F/\mu$.
\end{proposition}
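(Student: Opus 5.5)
The proposition has three parts: coercivity and stationarity of the surrogate, the quartic amplitude polynomial, and orientation invariance with its perturbation bound. I would handle them in that order, each by direct computation.

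\textbf{Surrogate.} Parametrize by skew $K$, so $\widehat B=H_\Sigma/2+K$ and $\|\widehat B\widehat B^\top\|_F\ge\|\widehat B\|_F^2/\sqrt d$. Since $\|\widehat B\|_F\to\infty$ as $\|K\|_F\to\infty$, we get $\|M\|_F\ge 2\|\widehat B\|_F^2/\sqrt d-\|G_\Sigma\|_F\to\infty$. This proves coercivity. Continuity on the finite-dimensional skew space then gives attainment.

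For stationarity, the first variation in a skew direction $\dot K$ is $\langle M,2(\dot K\widehat B^\top+\widehat B\dot K^\top)\rangle=4\langle M\widehat B,\dot K\rangle$, using that $M$ is symmetric. Vanishing for all skew $\dot K$ means the skew part of $M\widehat B$ is zero. That is, $M\widehat B-(M\widehat B)^\top=M\widehat B-\widehat B^\top M=0$.

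\textbf{Amplitude polynomial.} First expand the Euler covariance $(I+h(S+\alpha K))(I+h(S+\alpha K))^\top$. Using $K^\top=-K$, the cross terms give $h\alpha(K-K)=0$ at first order in $h$. At order $h^2$ they give $\alpha h^2(KS-SK)$. The quadratic term is $-\alpha^2h^2K^2$. The residual against $\Sigma_\star$ is therefore $D_0+\alpha D_1+\alpha^2D_2$.

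Squaring the Frobenius norm gives terms in $\alpha$, $\alpha^2$, $\alpha^3$ and $\alpha^4$. The key step is showing the cubic coefficient $\langle D_1,D_2\rangle$ vanishes. Here $D_2$ is a polynomial in $K$, so it commutes with $K$, and
\[
\langle[K,S],K^2\rangle=\tr(SKK^2-KSK^2)=0
\]
by cyclicity. Adding $\lambda\alpha^2/2$ then yields the stated $c_1$, $c_2$, $c_4$. Since $K\ne0$ gives $c_4>0$, the objective is a continuous quartic on a compact interval. Its minimum is therefore at an endpoint or at a critical root of $4c_4\alpha^3+2c_2\alpha+c_1$.

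If $[\Sigma_\star,S]=0$, then $D_0$ commutes with $S$. Hence $\langle D_0,[K,S]\rangle=\tr(D_0KS-D_0SK)=0$, and $D_1\perp D_2$ from the step above gives no further contribution to $c_1$. So $c_1=0$ and the objective is quadratic in $u=\alpha^2\ge0$. Minimizing over $u\in[0,\alpha_{\max}^2]$ gives the clipped vertex $-c_2/(2c_4)$.

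\textbf{Orientation.} Take $S=\beta I$ and $K^2=-\omega^2I$. The Euler scatter is $(1+h\beta)^2I+h^2\omega^2I$, which depends only on $\omega$. The factor lemma (\cref{lem:elliptic-factor}) then identifies the laws for all orthogonal conjugates of $K$.

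With $S=\beta I+\epsilon\Delta S$, the only $K$-dependent terms that differ between $K_1$ and $K_2$ are the first-order commutator $h^2[K,S]=h^2\epsilon[K,\Delta S]$ and $-h^2K^2$. The latter is the same for both. The remaining expansion terms are independent of $K$, and $KS+SK^\top$-type cross terms reduce to the commutator as before. So the scatter difference is $h^2\epsilon[K_1-K_2,\Delta S]$.

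For the law distance I would use \cref{eq:elliptical-w2} with equal centers. This needs a Lipschitz estimate for the Bures distance: $W_2\le\|\Sigma_1^{1/2}-\Sigma_2^{1/2}\|_F$ via the common-generator coupling. The square-root perturbation bound $\|\Sigma_1^{1/2}-\Sigma_2^{1/2}\|_F\le\|\Sigma_1-\Sigma_2\|_F/(2\mu)$ holds when both scatters are $\succeq\mu^2I$. It follows from the Sylvester equation $X\Sigma_1^{1/2}+\Sigma_2^{1/2}X=\Sigma_1-\Sigma_2$ and the eigenvalue lower bound $2\mu$. Finally, $\|[K_1-K_2,\Delta S]\|_F\le2\|\Delta S\|_{\op}\|K_1-K_2\|_F$, and the factor $2$ cancels the $1/2$, giving the stated bound.

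\textbf{Main obstacle.} The hardest step is keeping the constants exact in this last chain. The Sylvester square-root estimate must be stated carefully, with eigenvalues of $\Sigma_i^{1/2}$ at least $\mu$, so that the factor of $2$ cancels exactly.
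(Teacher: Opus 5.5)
Your proposal is correct and follows essentially the same route as the paper: coercivity from $\|\widehat B\widehat B^\top\|_F\ge\|\widehat B\|_F^2/\sqrt d$, stationarity from the skew part of the first variation, the vanishing cubic coefficient by trace cyclicity, $c_1=0$ under commutation, and the orientation bound via Euler's Gram formula, the common-generator coupling, the Sylvester square-root estimate and $\|[K_1-K_2,\Delta S]\|_F\le2\|\Delta S\|_{\op}\|K_1-K_2\|_F$. The only slip is cosmetic: $\tr(SKK^2-KSK^2)$ equals $-\langle[K,S],K^2\rangle$ rather than $\langle[K,S],K^2\rangle$, which is immaterial because the quantity vanishes either way.
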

\begin{proof}
Orthogonality gives $\|{\widehat B}\|_F^2=\|{H_\Sigma}/2\|_F^2+\|K\|_F^2$, while $\|{\widehat B}{\widehat B}^\top\|_F\ge\|{\widehat B}\|_F^2/\sqrt d$, proving coercivity. For skew $\delta K$, $\delta M=2(\delta K{\widehat B}^\top-{\widehat B}\delta K)$ and $\delta F=2\langle M{\widehat B}-{\widehat B}^\top M,\delta K\rangle$; its coefficient is skew, proving stationarity. Expanding the amplitude square gives the coefficients; its cubic term vanishes since $\tr([K,S]K^2)=0$. A compact-interval polynomial minimum occurs at an endpoint or derivative root. Commutation makes $c_1=0$, reducing to a quadratic in $\alpha^2$.

The orientation identities follow from Euler's Gram formula. The commutator norm is at most $2\|{\Delta S}\|_{\op}\|K_1-K_2\|_F$. Coupling principal square-root factors gives $W_2\le\|{\Sigma_a}^{1/2}-{\Sigma_b}^{1/2}\|_F$. For $X={\Sigma_a}^{1/2}-{\Sigma_b}^{1/2}$, the Sylvester identity ${\Sigma_a}^{1/2}X+X{\Sigma_b}^{1/2}={\Sigma_a}-{\Sigma_b}$ gives $2\mu\|X\|_F\le\|{\Sigma_a}-{\Sigma_b}\|_F$, completing the estimate.
\end{proof}
Covariance surrogates do not guarantee FID; spectral conjugation can change score energy: $\E|{M_{\rm sk}}s_\theta|^2=\tr({M_{\rm sk}}^\top {M_{\rm sk}}\,\E[s_\theta s_\theta^\top])$ depends on orientation unless the score covariance has suitable symmetry.

\begin{proposition}[Kinetic selection differs from numerical selection]\label{prop:separation}
For the two-dimensional Gaussian scale ${\varsigma}=e^{\beta t}$, $\beta>0$, the feasible fields $(\beta I+\omega J)x$ have kinetic action $2e^{2\beta t}(\beta^2+\omega^2)$ and projected acceleration $(\beta^2-\omega^2)x$. The kinetic minimum is $\omega=0$, the leading numerical zeros are $|\omega|=\beta$, and positive-step exactness uses $|\omega_h|=\sqrt{e^{2\beta h}-(1+\beta h)^2}/h=\beta+O(h)$.
\end{proposition}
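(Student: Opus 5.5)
This is a direct computation with the explicit linear field $w_\omega(x)=(\beta I+\omega J)x$ on the Gaussian path $\rho_t=\mathcal N(0,e^{2\beta t}I_2)$. The four claims (kinetic action, projected acceleration, leading numerical zeros, exact positive-step frequency) are handled in that order.

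\textbf{Kinetic action.} Compute $\E|w_\omega(X_t)|^2$ with $X_t\sim\rho_t$. Since $J$ is skew, $(\beta I+\omega J)^\top(\beta I+\omega J)=(\beta^2+\omega^2)I$. Hence the action is $(\beta^2+\omega^2)\E|X_t|^2=2e^{2\beta t}(\beta^2+\omega^2)$. It is minimized uniquely at $\omega=0$.

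\textbf{Projected acceleration.} The field is autonomous, so $a_w=(\nabla w)w=(\beta I+\omega J)^2x=(\beta^2-\omega^2)x+2\beta\omega Jx$. The part $Jx$ lies in $\cS_{\rho_t}$: for a radial density, $\Div(\rho_tJx)=\nabla\rho_t\cdot Jx+\rho_t\tr J=0$, because $\nabla\rho_t\parallel x\perp Jx$. The part $(\beta^2-\omega^2)x=\nabla\bigl(\tfrac12(\beta^2-\omega^2)|x|^2\bigr)$ is a gradient and lies in $\cT_{\rho_t}$. Uniqueness of the decomposition \cref{eq:hodge} (\cref{prop:poisson-existence}, applied under the Gaussian decay assumptions on $\R^2$) then gives $\Proj_{\rho_t}a_w=(\beta^2-\omega^2)x$. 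By \cref{eq:euler-density-defect}, the leading Euler law error vanishes exactly when this projection vanishes, i.e.\ when $|\omega|=\beta$.

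\textbf{Finite-step exactness.} For $h>0$, put $a=1+\beta h>0$ and $\lambda=e^{\beta h}>a$. Apply \cref{thm:all-signs} in dimension $d=2$, or equivalently the explicit covariance $\Gamma=(a^2+h^2\omega^2)I$ from \cref{eq:euler-normal-form}. Exactness holds if and only if $h^2\omega^2=\lambda^2-a^2$, which gives the stated $|\omega_h|$.

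\textbf{Asymptotics.} Expand $\lambda^2-a^2=e^{2\beta h}-(1+\beta h)^2=\beta^2h^2+\tfrac{4}{3}\beta^3h^3+O(h^4)$. Dividing by $h^2$ and taking square roots gives $|\omega_h|=\beta\sqrt{1+\tfrac43\beta h+O(h^2)}=\beta+O(h)$. This is consistent with the remark $[\lambda^2-(1+h\beta)^2]/h^2=\ddot\varsigma/\varsigma+O(h)=\beta^2+O(h)$ in the proof of \cref{thm:all-signs}.

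\textbf{Main obstacle.} The only genuinely nonalgebraic step is justifying the projection on $\R^2$ rather than on a torus. I would simply verify the two orthogonality conditions directly: $\int\rho_t\,Jx\cdot\nabla\phi=0$ for all test $\phi$, by the divergence computation above, and the gradient part is in $L^2(\rho_t)$ by the Gaussian moments. This avoids needing any Poincaré inequality.
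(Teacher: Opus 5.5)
Your proof is correct and follows the paper's argument essentially step for step: square the generator to get $(\beta^2-\omega^2)I+2\beta\omega J$, discard the Gaussian-solenoidal skew part, obtain the action from $\tr(B^\top B\Sigma_t)$, and match Euler's scalar Gram $(1+\beta h)^2+h^2\omega^2$ to $e^{2\beta h}$ before Taylor expanding. Your direct orthogonality check identifies the projection without any Poincar\'e inequality; the one step you leave implicit is the routine cutoff approximation $\nabla(\chi_R|x|^2/2)\to x$ in $L^2(\rho_t)$, which is what places $x$ in $\cT_{\rho_t}$.
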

\begin{proof}
The flow is a radial dilation times a rotation. Squaring its generator gives $(\beta^2-\omega^2)I+2\beta\omega J$, whose skew term is Gaussian-solenoidal. The action follows by tracing $B^\top B\Sigma$. Euler's scalar Gram is $(1+\beta h)^2+h^2\omega^2$; match the target and Taylor-expand.
\end{proof}

\input{supplement_compact/checks/reachability}

%% file: supplement_compact/checks/reachability.tex
\FloatBarrier
\subsection{Controlled reachability verification}\label{app:controlled-reachability}\label{app:e1-obstruction-exp}
The checks separate exact feasibility, a sharp residual, and numerical agreement with the constrained optimum; a small floating-point residual should not be interpreted as an empirical sampling floor.
\begin{table}[H]\centering\small
\caption{Controlled affine reachability results.}\label{tab:controlled-reachability}
\begin{tabularx}{\linewidth}{@{}lX@{}}\toprule Check & Retained result and boundary\\\midrule
E0: parity & Even-dimensional $W_2\leq 1.2\!\times\!10^{-7}$; odd-formula error $<4\!\times\!10^{-14}$.\\
E1: spectra, anisotropy & Pairing-cost median/max gap $2.2\!\times\!10^{-11}/1.4\!\times\!10^{-7}$; at directional violation 0.724, optimized cost 0.0791 exceeds lower bound 0.0656.\\
\bottomrule\end{tabularx}\end{table}

\ControlledObstructionFigure
The analytical suite also includes 384 fixed-generator Gaussian/Student rows ($\nu=3,5,10$; $d=2,3,8,9,32,33$; expanding/overexpanded chords; eight seeds of 32,768 particles). Population Bures costs are generator-independent in this setting, with even exactness, odd residuals and unrepairable overexpansion. Student 3 lacks a fourth moment; replicate ranges are descriptive, and empirical sliced-distance is not exact population $W_2$.
\FloatBarrier

%% file: supplement_compact/theory/03_solvers.tex
\section{Solver Classification and Transformation Boundaries}\label{app:solver-strata}
\label{app:normalforms}\label{app:block-solver}\label{app:integration-atlas}
These proofs support \cref{sec:solver}. Exact latent and fitted neural transfer tests appear in \cref{fig:digits-latent-robustness,app:neural-solver-rank}.
\subsection{Tangent response and whole-path restrictions}
\begin{proposition}[Tangent normal form and whole-path restriction]\label{prop:symmetry-strata}
\label{prop:normalform-app}\label{lem:commuting-necessary}\label{prop:relative-orientation}
At a covariance-normalized spherical source, a symmetric tangent $2S$ forces $B=S+K$, $K^\top=-K$. Euler's scatter and its derivative at $K_0$ are
\begin{equation}\label{eq:normalform-app}
 \Gamma(K)=I+2hS+h^2(S^2+[K,S]-K^2),\quad
 D\Gamma(K_0)[{\Delta K}]=h^2([{\Delta K},S]-K_0{\Delta K}-{\Delta K}K_0).
\end{equation}
At zero the kernel consists of skew rotations within repeated eigenspaces of $S$, of dimension $\sum_a m_a(m_a-1)/2$. An autonomous $S+K$ preserves the entire exponential elliptical path $\law(e^{tS}Z)$ iff $[K,S]=0$. Outside this commuting family, the separate spectra of $S$ and $K$ do not determine the response.
\end{proposition}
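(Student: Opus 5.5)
The tangent normal form comes first. At a covariance-normalized spherical source ($\Sigma=I$), the Lyapunov condition \cref{eq:lyap} reads $B+B^\top=2S$. Hence the symmetric part of $B$ is exactly $S$, and $K:=B-S$ is skew.

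\textbf{Euler scatter and its derivative.} Euler's scatter is $\Gamma(K)=(I+hB)(I+hB)^\top$. Expanding it gives
\[
\Gamma(K)=I+h(B+B^\top)+h^2BB^\top .
\]
Since $BB^\top=(S+K)(S-K)=S^2+KS-SK-K^2$, we get $\Gamma(K)=I+2hS+h^2(S^2+[K,S]-K^2)$. Differentiating at $K_0$ in the skew direction $\Delta K$ is routine. The commutator term is linear and contributes $h^2[\Delta K,S]$. The term $-K^2$ contributes $-K_0\Delta K-\Delta K K_0$. This gives \eqref{eq:normalform-app}.

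\textbf{Kernel at zero.} At $K_0=0$ the derivative reduces to $h^2[\Delta K,S]$, so the kernel is the skew centralizer of $S$. To compute it, diagonalize $S$ orthogonally with distinct eigenvalues $s_a$ of multiplicities $m_a$. In this basis $([\Delta K,S])_{ij}=(s_j-s_i)\Delta K_{ij}$. This vanishes iff $\Delta K$ is block-diagonal across eigenspaces. Skew block-diagonal matrices have dimension $\sum_a m_a(m_a-1)/2$.

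\textbf{Whole-path criterion.} The autonomous field $x\mapsto(S+K)x$ has flow $e^{t(S+K)}$. By the factor lemma (\cref{lem:elliptic-factor}), it preserves $\law(e^{tS}Z)$ for all $t$ iff
\[
e^{t(S+K)}e^{t(S-K)}=e^{2tS}\quad\text{for all }t.
\]
\emph{Sufficiency.} If $[K,S]=0$, then $e^{t(S+K)}=e^{tS}e^{tK}$ with $e^{tK}$ orthogonal, and the identity is immediate.

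\emph{Necessity.} Expand both sides to second order in $t$. The left side is $I+2tS+\tfrac{t^2}{2}\bigl((S+K)^2+(S-K)^2+2(S+K)(S-K)\bigr)+O(t^3)$, and the right side is $I+2tS+2t^2S^2+O(t^3)$. The $t^2$ coefficient difference reduces to $[K,S]$ up to a nonzero factor. Concretely, $(S+K)^2+(S-K)^2=2S^2+2K^2$ and $2(S+K)(S-K)=2S^2-2K^2+2(KS-SK)$. The left coefficient is therefore $2S^2+[K,S]$, so equality forces $[K,S]=0$. This is the main obstacle, and it is resolved by the second-order Taylor comparison. An alternative route is to differentiate the time-$t$ Lyapunov identity $B\Sigma_t+\Sigma_tB^\top=\dot\Sigma_t$ with $\Sigma_t=e^{2tS}$ at $t=0$, which gives $B(2S)+2SB^\top=4S^2$, i.e.\ $[K,S]=0$.

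\textbf{Spectra do not determine the response.} A two-dimensional example suffices. Take $S=\diag(\beta_1,\beta_2)$ with $\beta_1\ne\beta_2$, and $K=\pm\omega J$. These two skew matrices have the same spectrum $\pm i\omega$, and both have $K^2=-\omega^2 I$. Their Euler scatters nevertheless differ in the off-diagonal entry $h\omega(a_1-a_2)$, which changes sign between them, exactly as in \cref{prop:finite-directional}. The two laws therefore differ.

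More generally, orthogonal conjugation $K\mapsto QKQ^\top$ preserves the spectrum of $K$. When $Q$ does not commute with $S$, the conjugation changes $[K,S]$ and hence changes $\Gamma$, consistent with \cref{prop:orientation}.
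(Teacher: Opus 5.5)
Your treatment of the normal form, the derivative, the kernel at zero and the whole-path criterion is the paper's argument. It uses the same eigenbasis formula $[\Delta K,S]_{ij}=(\lambda_j-\lambda_i)\Delta K_{ij}$, sufficiency via $e^{t(S+K)}=e^{tS}e^{tK}$, and necessity by comparing the $t^2$ coefficient $2S^2+[K,S]$ with $2S^2$. The paper phrases this last step as second derivatives $4S^2+2[K,S]$ versus $4S^2$. Your Lyapunov-differentiation alternative is also valid.

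The real difference is in the last claim, where your example is weaker than intended.
\begin{itemize}
\item \emph{Your pair is a mirror image.} Take $(S,\omega J)$ and $(S,-\omega J)$ with $S=\diag(\beta_1,\beta_2)$. This does show that $\Gamma$ is not a function of the two spectra. However, the pairs are simultaneously conjugate by $R=\diag(1,-1)$, which commutes with $S$. So $\Gamma(-\omega J)=R\,\Gamma(\omega J)R^\top$: the two responses have identical spectra and identical $W_2$ distance to every isotropic or $S$-aligned target. The example therefore shows only a coordinate reflection, not that the relative orientation of $K$ with respect to $S$ matters.
\item \emph{The paper's example separates invariantly.} The paper takes $S=\diag(1,2,3)$ with unit rotations $K_{12}$ and $K_{13}$, both with spectrum $\{0,\pm i\}$. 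It distinguishes them by $\|[K,S]\|_F^2$, which equals $2$ versus $8$. This quantity is invariant under simultaneous orthogonal conjugation and is, up to $h^2$, the norm of the linear response $D\Gamma(0)[K]=h^2[K,S]$. Hence no change of orthonormal basis relates the two responses.
\item \emph{Your closing generalization is false as stated.} Conjugation by a $Q$ that does not commute with $S$ need not change $[K,S]$. In two dimensions every rotation $Q$ satisfies $QJQ^\top=J$, yet a nontrivial rotation does not commute with $\diag(\beta_1,\beta_2)$ when $\beta_1\ne\beta_2$; and $K=0$ is fixed by every $Q$. What you need is a change in some joint invariant of $(S,K)$, as in the paper's commutator-norm example.
\end{itemize}
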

\begin{proof}
The covariance tangent is $B+B^\top=2S$; multiplication and differentiation give~\eqref{eq:normalform-app}. In an eigenbasis, $[{\Delta K},S]_{ij}=(\lambda_j-\lambda_i){\Delta K}_{ij}$, giving the kernel and dimension. If $[K,S]=0$, $e^{t(S+K)}=e^{tS}e^{tK}$ preserves the target by spherical symmetry. Conversely the autonomous scatter has second derivative $4S^2+2[K,S]$ at zero, while the target has $4S^2$; the factor lemma forces commutation. For the last claim take $S=\diag(1,2,3)$ and unit rotations $K_{12},K_{13}$. Both have skew spectrum $\{0,i,-i\}$, but $\|[K_{12},S]\|_F^2=2$ and $\|[K_{13},S]\|_F^2=8$, invariants of simultaneous conjugation.
\end{proof}

For $S=Q\diag(\lambda_aI_{m_a})Q^\top$, with orthogonal $Q$ and distinct eigenvalues $\lambda_a$ of multiplicity $m_a$, the kernel is
\begin{equation}\label{eq:centralizer}
 \mathfrak g_S=Q\Bigl(\bigoplus_a\mathfrak{so}(m_a)\Bigr)Q^\top.
\end{equation}
This kernel describes first-order sensitivity, not exact reachability.

\input{supplement_compact/checks/whole_path}

\subsection{Exact solver response and bounded branches}
\begin{theorem}[Commuting autonomous response strata]
\label{thm:block-classification}\label{thm:stability-classification}\label{thm:stability-app}\label{cor:odd-stability}
Let $S=Q\diag(\lambda_aI_{m_a})Q^\top$ and $h>0$. Let $R$ be a real polynomial or rational stability function without poles on evaluated matrices. Every autonomous lift of the whole path is $K=Q\diag(K_a)Q^\top$, $K_a$ skew. Put
\begin{equation}\label{eq:omega-set}
 \Omega_R(a)=\{b>0:R(a+ib)\text{ exists},\ |R(a+ib)|=e^a\}.
\end{equation}
Exactness of one $R$ step requires and suffices that every nonzero planar frequency in block $a$ satisfies $h|\omega|\in\Omega_R(h\lambda_a)$, and each zero direction satisfies $|R(h\lambda_a)|=e^{h\lambda_a}$. For fixed distinct frequencies and multiplicities, the full stratum is
\begin{equation}\label{eq:block-orbits}
 \prod_a O(m_a)\big/\left(O(m_{a,0})\times\prod_\ell U(m_{a,\ell})\right),
 \qquad m_{a,0}+2\sum_\ell m_{a,\ell}=m_a.
\end{equation}
Odd-dimensional strain blocks force a zero mode and its scalar equality. This is a blockwise obstruction within the commuting autonomous family, even if total dimension is even. Root sets may be infinite.
\end{theorem}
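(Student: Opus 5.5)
The plan has three parts: reduce to commuting generators using \cref{prop:symmetry-strata}, block-diagonalize, and read exactness off the scalar amplification function on each planar block.

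\textbf{Step 1: the lifts are block diagonal.} By \cref{prop:symmetry-strata}, an autonomous lift $S+K$ preserves the whole path $\law(e^{tS}Z)$ if and only if $[K,S]=0$. A matrix commuting with $S=Q\diag(\lambda_aI_{m_a})Q^\top$ preserves each eigenspace. Hence $K=Q\diag(K_a)Q^\top$ with every $K_a$ skew.

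\textbf{Step 2: reduce exactness to blockwise radii.} On block $a$ the generator is $\lambda_aI+K_a$, and it commutes with the scalar part. The one-step map is $R(h(S+K))=Q\diag(R(h\lambda_aI+hK_a))Q^\top$. Put $K_a$ in canonical real form: zero directions of multiplicity $m_{a,0}$ and planar blocks $\omega J$. On a plane, $h\lambda_a I+h\omega J$ is the realification of the complex scalar $h\lambda_a+ih\omega$. Because $R$ has real coefficients, $R$ of this plane is the realification of $R(h\lambda_a+ih\omega)$, which equals $|R|$ times a rotation. On a zero direction the map is simply $R(h\lambda_a)$.

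The target law is $\law(e^{h S}e^{tS}Z)$ at the next node, which in whitened form means the block multiplier $e^{h\lambda_a}$. So the numerical map equals $M=\diag(e^{h\lambda_a})O$ up to a direction-wise modulus mismatch. By the factor part of \cref{lem:elliptic-factor}, the laws agree if and only if $MM^\top$ equals $\diag(e^{2h\lambda_a})$. This gives three conditions:
\begin{itemize}
\item each plane needs $|R(h\lambda_a+ih\omega)|=e^{h\lambda_a}$, that is, $h|\omega|\in\Omega_R(h\lambda_a)$ (using $|R(\bar z)|=|R(z)|$ to handle the sign of $\omega$);
\item each zero direction needs $|R(h\lambda_a)|=e^{h\lambda_a}$;
\item the existence of $R(a+ib)$ accounts for the no-pole hypothesis.
\end{itemize}
Sufficiency follows by the same computation. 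One care point: if the path is started at $\law(e^{tS}Z)$ rather than $Z$, the intermediate factor $e^{tS}$ commutes with everything, so the argument is unaffected.

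\textbf{Step 3: parameterize the stratum.} Fix the distinct frequencies $\omega_\ell$ and their multiplicities $m_{a,\ell}$. The solutions $K_a$ are exactly $O(m_a)$-conjugates of one canonical form. The stabilizer of that form is $O(m_{a,0})$ on the kernel times, for each frequency eigenspace $\ker(K_a^2+\omega_\ell^2)$, the centralizer of the complex structure $K_a/\omega_\ell$ there. By \cref{lem:complex-structure-manifold}, this centralizer is $U(m_{a,\ell})$. Taking the product over $a$ gives \eqref{eq:block-orbits}.

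If $m_a$ is odd, then $m_{a,0}\ge1$ because $m_a-2\sum_\ell m_{a,\ell}$ is odd. The scalar equality is then forced whatever the total dimension. Root sets can be infinite, for example for rational $R$ whose modulus level set meets the vertical line infinitely often, or for polynomial $R$ with special $a$; a single example such as $R\equiv e^{a}$ constant-modulus will illustrate this.

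\textbf{Main obstacle.} The delicate step is the realification claim in Step 2: that a real rational function evaluated on $aI+bJ$ acts as complex multiplication by $R(a+ib)$. I would prove it by noting that $J\mapsto i$ is a real-algebra isomorphism from $\{aI+bJ\}$ onto $\mathbb C$. This isomorphism is compatible with inverses whenever no pole is hit, which is exactly where the no-pole hypothesis enters.
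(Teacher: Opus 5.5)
Your argument is correct and is essentially the paper's proof. Commutation (via \cref{prop:symmetry-strata}) confines $K$ to the strain blocks. The identification $xI+yJ\leftrightarrow x+iy$ makes each planar Gram $|R(h\lambda_a+ih\omega)|^2I$ and each zero-direction Gram $R(h\lambda_a)^2$. The factor lemma turns this into necessity and sufficiency against $e^{2h\lambda_a}$. The stabilizer $O(m_{a,0})\times\prod_\ell U(m_{a,\ell})$ then gives \eqref{eq:block-orbits} and the parity obstruction. Your remarks on the commuting $e^{tS}$ factor and on the pole-free inverse in the realification are sound additions.

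The one thing to fix is your illustration of ``root sets may be infinite.''
\begin{itemize}
\item For a nonconstant real polynomial $R=\sum_{k\le n}c_kz^k$, the map $b\mapsto|R(a+ib)|^2-e^{2a}$ is a polynomial in $b$ with leading term $c_n^2b^{2n}$. So $\Omega_R(a)$ is finite for every $a$, and there is no ``special $a$.'' This is the same fact behind \cref{prop:poly-no-go}.
\item A constant $R$ does give $\Omega_R=(0,\infty)$ at one value of $a$. However, it is not the stability function of a consistent method, and writing ``$R\equiv e^a$'' conflates that constant with the variable $a$.
\item The intended witness is the implicit midpoint rule $R(z)=(1+z/2)/(1-z/2)$. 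Its Cayley map is orthogonal, so $|R(ib)|=1$ and $\Omega_R(0)=(0,\infty)$ for any strain block with $\lambda_a=0$.
\end{itemize}
More generally, for rational $R$ the defect is a rational function of $b$. Infinitely many roots on a vertical line therefore force the whole line, minus poles, into the level set.
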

\begin{proof}
Commutation restricts $K$ to strain blocks. Reduce each skew block to planar rotations and a zero subspace. Identifying $xI+yJ$ with $x+iy$ makes each step's planar Gram $|R(h\lambda_a+ih\omega)|^2I$; zero directions have multiplier $|R(h\lambda_a)|^2$. The target factor is $e^{2h\lambda_a}$, so the factor lemma proves necessity and sufficiency. Orthogonal conjugation exhausts the fixed-frequency skew type. Its stabilizer is orthogonal on the zero space and unitary on each repeated nonzero-frequency space; distinct frequencies cannot mix. This gives~\eqref{eq:block-orbits} and the parity condition.
\end{proof}

For $a=h\lambda$ and ${\tau_{\rm freq}}=b^2$, Euler gives ${\tau_{\rm freq}}=e^{2a}-(1+a)^2$; implicit Euler gives ${\tau_{\rm freq}}=e^{-2a}-(1-a)^2$. Heun gives ${\tau_{\rm freq}}^2/4+(a+a^2/2){\tau_{\rm freq}}+(1+a+a^2/2)^2-e^{2a}=0$. Midpoint gives $(e^{2a}-1){\tau_{\rm freq}}=4[(1+a/2)^2-e^{2a}(1-a/2)^2]$, excluding poles. Substitute $a+ib$ and take squared moduli. At $a=0$, Euler and Heun have only zero as an exact rotational frequency, since their squared moduli are $1+b^2$ and $1+b^4/4$; midpoint allows every frequency. RK4 has $|R(ib)|^2=1+b^6(b^2-8)/576$, hence the additional exact frequency $\sqrt8/h$. This frequency is budget-specific; explicit calls and implicit solves have different costs.

\begin{corollary}[Bounded exact-frequency branches]\label{cor:bounded-frequency-branches}
Let $R$ be holomorphic near zero with real Taylor coefficients and
\[
 R(z)=e^z+c_Rz^{p_{\rm lin}+1}+O(z^{p_{\rm lin}+2}),\qquad
 c_R\ne0,\quad p_{\rm lin}\in\{1,2,\ldots\}.
\]
Here $p_{\rm lin}$ is the scalar linear order, not necessarily the nonlinear method order $p_{\rm meth}$. For $\lambda\ne0$ and small positive $h$, there are $\lceil p_{\rm lin}/2\rceil$ distinct positive real-analytic branches satisfying $|R(h(\lambda+i\omega_j(h)))|=e^{h\lambda}$, with
\begin{equation}\label{eq:bounded-frequency-limit}
 \omega_j(h)=|\lambda|\tan\frac{(2j+1)\pi}{2(p_{\rm lin}+1)}+O(h),\quad
 0\le j<\lceil p_{\rm lin}/2\rceil.
\end{equation}
\end{corollary}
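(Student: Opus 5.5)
We reduce the equation to a scalar real-analytic equation in a rescaled frequency, read off its limit at $h=0$, and continue each simple root by the implicit function theorem. Put $z=h(\lambda+i\omega)$, $\sigma=\operatorname{sign}\lambda$, and write $\omega=|\lambda|s$ with $s>0$, so that $z=h|\lambda|(\sigma+is)$. Since $|e^{z}|=e^{h\lambda}$, the condition $|R(z)|=e^{h\lambda}$ is equivalent to $|e^{-z}R(z)|^2=1$. Near zero, $e^{-z}R(z)=1+c_Rz^{p_{\rm lin}+1}+z^{p_{\rm lin}+2}g(z)$ with $g$ holomorphic and real on the real axis, because $R$ and $e^z$ have real Taylor coefficients. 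Using $|1+w|^2=1+2\operatorname{Re}w+|w|^2$, we expand
\[
|e^{-z}R(z)|^2-1=2c_R\operatorname{Re}\bigl(z^{p_{\rm lin}+1}\bigr)+O\bigl(|z|^{p_{\rm lin}+2}\bigr).
\]

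Next we divide by $2c_R(h|\lambda|)^{p_{\rm lin}+1}$ and set
\[
G(s,h)=\operatorname{Re}\bigl((\sigma+is)^{p_{\rm lin}+1}\bigr)+h\,\mathcal G(s,h).
\]
This division is legitimate in the analytic category. The left side is a real-analytic function of $(s,h)$, defined for $s$ in a compact subinterval of $(0,\infty)$ and $|h|$ small; $h$ small also ensures $z$ avoids the poles of $R$. Every term of its expansion carries at least the factor $h^{p_{\rm lin}+1}$, so the quotient remains real-analytic, with $\mathcal G$ bounded on compacts. The real part involves $z^n$ and $\bar z^n$ only through real-analytic combinations of $s$ and $h$.

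At $h=0$ we use $\operatorname{Re}((\sigma+is)^n)=\sigma^n\operatorname{Re}((1+i\sigma s)^n)$, and $\operatorname{Re}((1-is)^n)=\operatorname{Re}((1+is)^n)$ by conjugation. Hence the zero set in $s>0$ is that of
\[
(1+s^2)^{n/2}\cos(n\theta),\qquad \theta=\arctan s\in(0,\pi/2),\quad n=p_{\rm lin}+1.
\]
The zeros are $\theta_j=(2j+1)\pi/(2n)$ with $\theta_j<\pi/2$, that is, $2j+1<p_{\rm lin}+1$, or $0\le j<\lceil p_{\rm lin}/2\rceil$. The limit values are $s_j=\tan\theta_j$, which gives \cref{eq:bounded-frequency-limit} after multiplying by $|\lambda|$. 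Each zero is simple: $\partial_\theta\cos(n\theta)=-n\sin(n\theta_j)=\mp n\ne0$, and $d\theta/ds>0$.

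The real-analytic implicit function theorem then gives unique real-analytic branches $s_j(h)$ with $G(s_j(h),h)=0$ and $s_j(h)=s_j+O(h)$. These branches stay positive and pairwise distinct for small $h$, because their limits are positive and distinct. Setting $\omega_j(h)=|\lambda|s_j(h)$ finishes the construction.

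The main obstacle is making the rescaling rigorous: we need the remainder to be $h$ times a real-analytic function uniformly on a compact $s$-interval containing all the $s_j$. We handle this by working with the power series of $e^{-z}R(z)$ and its conjugate in the variables $(h,s)$, factoring out $h^{p_{\rm lin}+1}$ termwise. Only existence and the stated count of branches are claimed, not the absence of other roots. Roots with $s$ unbounded as $h\downarrow0$, such as RK4's $\sqrt8/h$, lie outside this compact window and are excluded from the statement.
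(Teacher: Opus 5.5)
Your proposal is correct and follows essentially the same route as the paper: divide the defect by $h^{p_{\rm lin}+1}$ to get the real-analytic limit $2c_R\operatorname{Re}(\lambda+i\omega)^{p_{\rm lin}+1}$, locate its simple positive roots through the angular factor $\cos((p_{\rm lin}+1)\theta)$, and continue them by the real-analytic implicit function theorem. Your normalization by $e^{-z}$ and the rescaling $\omega=|\lambda|s$ with $\sigma=\operatorname{sign}\lambda$ are cosmetic variants of the paper's $R(z)R(\bar z)-e^{2h\lambda}$ quotient and its remark that reversing $\lambda$ leaves the roots unchanged.
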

\begin{proof}
The analytic quotient
\[
 \frac{R(h(\lambda+i\omega))R(h(\lambda-i\omega))-e^{2h\lambda}}
 {h^{p_{\rm lin}+1}}
\]
extends to $h=0$ with value $2c_R\operatorname{Re}(\lambda+i\omega)^{p_{\rm lin}+1}$. Its positive roots are the limits in \cref{eq:bounded-frequency-limit}. They are simple: in polar coordinates the angular factor is $\cos((p_{\rm lin}+1)\theta)$, with nonzero derivative at every listed root and $\dd\theta/\dd\omega\ne0$. Reversing $\lambda$ multiplies its real part by $(-1)^{p_{\rm lin}+1}$ and leaves the roots unchanged. The real-analytic implicit-function theorem supplies the branches; positivity and absence of nearby poles persist for small $h$.
\end{proof}
Euler and implicit Euler have $\omega/|\lambda|\to1$; Heun and midpoint have limit $1/\sqrt3$; RK4 has two limits $\tan(\pi/10)$ and $\tan(3\pi/10)$. These $h$-dependent branches give exact commuting-block steps (\cref{thm:block-classification}), not fixed-field order elevation. The argument excludes $\lambda=0$ and unbounded branches.

\FloatBarrier
\paragraph{Controlled root check.}\label{app:controlled-solver}
The recorded frequencies below test the solver-dependent branches directly.
\ControlledSolverRootFigure
\FloatBarrier

\subsection{Response collisions and transformation boundaries}
\begin{theorem}[First-response collision kernel]\label{thm:response-kernel}
Let
\begin{equation}\label{eq:solver-response-main}
 \Gamma_{R,h,S}(K)=R(h(S+K))R(h(S+K))^\top.
\end{equation}
For real analytic $R$ near the spectrum of $hS$ (or rational $R$ without poles), write $r_i=R(h\lambda_i)$. For skew ${\Delta K}$, $D\Gamma_{R,h,S}(0)[{\Delta K}]$ has zero diagonal and
\begin{equation}\label{eq:response-formula}
 [D\Gamma_{R,h,S}(0)[{\Delta K}]]_{ij}=
 \begin{cases}-(r_i-r_j)^2{\Delta K}_{ij}/(\lambda_i-\lambda_j),&\lambda_i\ne\lambda_j,\\0,&\lambda_i=\lambda_j.\end{cases}
\end{equation}
Thus its kernel is $\{{\Delta K}:[{\Delta K},R(hS)]=0\}$, with dimension $\sum_c n_c(n_c-1)/2$ for response-value multiplicities $n_c$.
\end{theorem}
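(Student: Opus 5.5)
The plan is to reduce to a diagonal strain by orthogonal conjugation and then differentiate the matrix function entrywise, using the Daleckii--Krein divided-difference formula.

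\emph{Reduction.} Write $S=U\diag(\lambda_i)U^\top$ with $U$ orthogonal. Conjugation by $U$ preserves skewness. For the functions in question we have $R(UXU^\top)=UR(X)U^\top$, so $\Gamma_{R,h,S}(K)$ and its derivative transform covariantly. We may therefore assume $S=\diag(\lambda_1,\ldots,\lambda_d)$. We take $R$ to have real Taylor coefficients, or to be real rational, so that each $r_i=R(h\lambda_i)$ is real and $R(hS)=\diag(r_i)$ is symmetric. With this convention the kernel statement is read in the eigenbasis of $S$.

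\emph{First derivative of $R$.} The next step is to prove, for a direction $E=h\Delta K$,
\[
DR(hS)[E]_{ij}=E_{ij}\,\frac{R(h\lambda_i)-R(h\lambda_j)}{h\lambda_i-h\lambda_j},
\]
where the divided difference is read as $R'(h\lambda_i)$ when $\lambda_i=\lambda_j$. For polynomials this follows from $D(X^n)[E]=\sum_k X^kEX^{n-1-k}$, evaluated at diagonal $X$, which gives $\sum_k x_i^kx_j^{n-1-k}E_{ij}$. For analytic $R$ we pass to the limit through the locally uniformly convergent power series, or equivalently use the Cauchy integral $R(X)=\frac1{2\pi i}\oint R(z)(z-X)^{-1}\dd z$ and differentiate the resolvent. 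Rational $R$ without poles on the spectrum is covered by the same resolvent argument.

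This justification is the only step that needs any care, and I expect it to be the main obstacle. It also requires checking that a contour enclosing the spectrum of $hS$ exists inside the domain of analyticity. That is guaranteed by the hypothesis "analytic near the spectrum of $hS$".

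\emph{Assembling the Gram derivative.} By the product rule,
\[
D\Gamma(0)[\Delta K]=L\,R(hS)+R(hS)\,L^\top,\qquad L=DR(hS)[h\Delta K].
\]
Set $f_{ij}=(r_i-r_j)/(\lambda_i-\lambda_j)$ for $\lambda_i\ne\lambda_j$; the factors $h$ cancel. Then $L_{ij}=f_{ij}\Delta K_{ij}$, and skewness gives $L_{ji}=-f_{ij}\Delta K_{ij}$. Hence
\[
(D\Gamma)_{ij}=L_{ij}r_j+r_iL_{ji}=f_{ij}\Delta K_{ij}(r_j-r_i)=-\frac{(r_i-r_j)^2}{\lambda_i-\lambda_j}\Delta K_{ij}.
\]
When $\lambda_i=\lambda_j$ we have $L_{ij}=hR'(h\lambda_i)\Delta K_{ij}$ and $r_i=r_j$. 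The entry becomes $hR'(h\lambda_i)r_i(\Delta K_{ij}+\Delta K_{ji})=0$. The diagonal vanishes because $\Delta K_{ii}=0$. This proves \eqref{eq:response-formula}.

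\emph{Kernel.} An entry with $\lambda_i\ne\lambda_j$ vanishes iff $r_i=r_j$ or $\Delta K_{ij}=0$. Entries with $\lambda_i=\lambda_j$ always vanish, and in that case $r_i=r_j$ automatically. So $\Delta K$ lies in the kernel iff $\Delta K_{ij}=0$ whenever $r_i\ne r_j$. Since $R(hS)=\diag(r_i)$, this is equivalent to $[\Delta K,R(hS)]_{ij}=(r_j-r_i)\Delta K_{ij}=0$ for all $i,j$. The kernel is therefore the block-diagonal skew matrices over the level sets of the values $r_i$. Its dimension is $\sum_c n_c(n_c-1)/2$, where $n_c$ are the multiplicities of the distinct response values. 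For Euler, $R(z)=1+z$ is injective on the reals, so the kernel reduces to the centralizer \eqref{eq:centralizer} and agrees with \cref{prop:symmetry-strata}.
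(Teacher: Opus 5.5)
Your proposal is correct and follows essentially the same route as the paper: you diagonalize $S$ and obtain $[DR(hS)[h\Delta K]]_{ij}=hR^{[1]}(h\lambda_i,h\lambda_j)\Delta K_{ij}$ from divided differences. You then differentiate the Gram product to get the factor $(r_j-r_i)$ and read off the kernel as the equal-response blocks, as the paper does; your resolvent/contour justification is slightly more careful than the paper's power-series remark, since it needs analyticity only near each eigenvalue of $hS$.
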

\begin{proof}
Differentiating a matrix power gives $\sum_jD^jED^{k-1-j}$, whose entries in a diagonal basis are divided differences. Power series, or the inverse derivative for rational functions, therefore give $[DR(hS)[h{\Delta K}]]_{ij}=hR^{[1]}(h\lambda_i,h\lambda_j){\Delta K}_{ij}$ \citep{higham2008functions}. This derivative is skew since the divided difference is symmetric. Differentiating the Gram product gives $hR^{[1]}(h\lambda_i,h\lambda_j)(r_j-r_i){\Delta K}_{ij}$, which is~\eqref{eq:response-formula}. Vanishing entries are exactly equal-response blocks. Euler has no additional real collisions; Heun has collisions for distinct strains with $h(\lambda_i+\lambda_j)=-2$. Only the derivative cancels; higher response and whole-path restrictions remain.
\end{proof}

\begin{proposition}[What transformations preserve]
\label{prop:poly-no-go}\label{prop:splitting-boundary}\label{lem:affine-rk}\label{prop:gauge-conjugacy}\label{prop:clock}
A nonconstant real polynomial $R$ cannot have $|R(ib)|=1$ for every real $b$, whereas midpoint's Cayley map is orthogonal for every skew generator. If ${\mathcal G}_t$ preserves the source law, ${\mathcal G}_+$ preserves the target, and ${\mathcal G}_+$ and ${\mathcal G}_+^{-1}$ have uniformly bounded Lipschitz constants $L_+,L_-$, then
\begin{equation}
 L_-^{-1}W_2({\Psi}_{h\#}{\rho}_t,{\rho}_{t+h})\le
 W_2(({\mathcal G}_+\circ {\Psi}_h\circ {\mathcal G}_t)_\#{\rho}_t,{\rho}_{t+h})
 \le L_+W_2({\Psi}_{h\#}{\rho}_t,{\rho}_{t+h}).
\end{equation}
Fixed affine changes conjugate Runge--Kutta steps (assuming corresponding unique implicit stages). For a nonlinear diffeomorphism $T$, a gauge $a$ of $\nu$ becomes $u(Tz)=DT(z)a(z)$ for $T_\#\nu$, and exact maps conjugate, but ordinary Euler does not. Under a $C^2$ monotone clock $t=\tau(s)$, the field becomes $\widetilde v=\tau'v$ and acceleration becomes $\tau''v+(\tau')^2a_v$.
\end{proposition}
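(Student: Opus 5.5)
The statement bundles four independent claims, and I will treat them separately, in order.

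\emph{Polynomial versus Cayley.} Suppose $|R(ib)|^2=1$ for all real $b$. The function $P(b)=R(ib)\overline{R(ib)}=R(ib)R(-ib)$ is a real polynomial in $b$, since $R$ has real coefficients. If $R$ has degree $n\ge1$ with leading coefficient $c_n$, then $P$ has degree $2n$ and leading coefficient $|c_n|^2>0$, so $P$ cannot be identically $1$. For midpoint, $R(hK)=(I-hK/2)^{-1}(I+hK/2)$. This inverse exists because the skew matrix $hK/2$ has imaginary spectrum. Using $K^\top=-K$ and the commutation of $I\pm hK/2$, I will check directly that $R(hK)^\top R(hK)=I$.

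\emph{Two-sided $W_2$ bound.} Write $\mu=\Psi_{h\#}\rho_t$. Since $\mathcal G_t$ preserves $\rho_t$, the conjugated pushforward equals $\mathcal G_{+\#}\mu$. Since $\mathcal G_+$ preserves $\rho_{t+h}$, the quantity to bound is $W_2(\mathcal G_{+\#}\mu,\mathcal G_{+\#}\rho_{t+h})$.
\begin{itemize}
\item For the upper bound, push an optimal coupling of $\mu$ and $\rho_{t+h}$ through $\mathcal G_+\times\mathcal G_+$. This costs at most $L_+^2$ times the original cost.
\item For the lower bound, apply the same argument with $\mathcal G_+^{-1}$, which also preserves $\rho_{t+h}$. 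Rearranging gives the factor $L_-^{-1}$.
\end{itemize}
This is the same common-coupling argument used after \cref{lem:elliptic-factor}.

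\emph{Affine conjugacy of Runge--Kutta steps.} Take $x=Ay+c$ with $A$ invertible, and set $\widetilde v(s,y)=A^{-1}v(s,Ay+c)$. I will show by induction over the stages that stage values transform as $Y_i=A^{-1}(X_i-c)$. The key point is that each stage equation is affine in the stage values and stage slopes, so it commutes with the affine map. For implicit methods, uniqueness of the stage solutions identifies the two sets of stages.

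\emph{Nonlinear diffeomorphism.} For the gauge claim, I will use the pushforward identity for divergence. For a test function $\phi$,
\[
\int (T_\#\nu)\,u\cdot\nabla\phi=\int \nu\, DT\,a\cdot(\nabla\phi)\circ T=\int\nu\,a\cdot\nabla(\phi\circ T)=0,
\]
so $u$ is a gauge for $T_\#\nu$. Exact flows conjugate by the chain rule. To show ordinary Euler does not conjugate, I will give a one-line counterexample. Take $d=1$, $T(z)=z^3+z$ and a constant field. Then $T(z+h)\ne T(z)+hT'(z)$, because the difference is $O(h^2)\ne0$.

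\emph{Clock change.} Set $\widetilde X(s)=X(\tau(s))$. Differentiating once gives $\widetilde v=\tau' v$. Differentiating $\tau'(s)v(\tau(s),X(\tau(s)))$ a second time, the product rule gives $\tau''v+(\tau')^2a_v$, where $a_v$ is as in \cref{eq:material-acc}.

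The only step needing care is the lower $W_2$ bound. There I must track that $\mathcal G_+^{-1}$ also preserves the target, which follows from invertibility. Everything else is a routine verification.
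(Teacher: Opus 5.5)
Your proposal is correct and follows the paper's proof essentially step for step. The steps match: the leading coefficient of $R(ib)R(-ib)$, the commuting Cayley factors, the Lipschitz pushforward bound applied to $\mathcal G_+$ and $\mathcal G_+^{-1}$, stagewise conjugation of the Runge--Kutta equations, testing against $\phi\circ T$, and the chain rule for the clock.

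The one difference is how you show that Euler fails to conjugate. You use a one-dimensional constant-field example, whereas the paper uses the general mismatch term $h^2D^2T[a,a]/2$. A nonzero constant field on $\R$ is not a gauge of any probability law, so a witness inside the gauge class would fit the statement's context more tightly. For example, take the rotation gauge $a(z)=Jz$ of a planar Gaussian together with a nonlinear $T$ for which $D^2T[a,a]\ne0$.
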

\begin{proof}
The polynomial identity $R(ib)R(-ib)-1=0$ would have a nonzero leading coefficient of degree $2\deg R$, a contradiction. For midpoint, transposition exchanges the commuting factors of $(I-hK/2)^{-1}(I+hK/2)$, proving orthogonality \citep{hairer2006geometric}. The source rearrangement disappears after pushforward; apply the Lipschitz pushforward inequality to ${\mathcal G}_+$ and its inverse. Multiplying every RK stage by the fixed affine map gives the transformed stage equations and update. For nonlinear $T$, testing against $\phi\circ T$ proves weighted conservation, while $T(z+ha)=Tz+hDTa+h^2D^2T[a,a]/2+O(h^3)$ exhibits Euler's mismatch. Finally the chain rule proves both the time-reparameterized continuity equation and acceleration formula.
\end{proof}
These uniformly bounded rearrangements preserve nonzero local order. Nonlinear coordinates require transformed costs; clock changes generally change the labeled tangent. For a given orientation-preserving diffeomorphism ${\Psi}_h$, exactness is precisely ${\rho}_t(x)={\rho}_{t+h}({\Psi}_hx)\det D{\Psi}_h(x)$, a change-of-variables criterion. Nonautonomous RK requires actual stage times and fields, not frozen $R(hB_t)$.

\input{supplement_compact/checks/digits}

%% file: supplement_compact/checks/whole_path.tex
\FloatBarrier
\paragraph{Controlled whole-path check.}
Repeated/simple strain has kernel dimension 2/0; commuting repeated-block whole-path error $1.93\!\times\!10^{-15}$ contrasts with repeated-cross/simple-block 0.428968/0.216728. Initial tangency alone does not settle preservation of the whole prescribed path.
\begin{figure}[!htbp]\centering
\includegraphics[width=\linewidth]{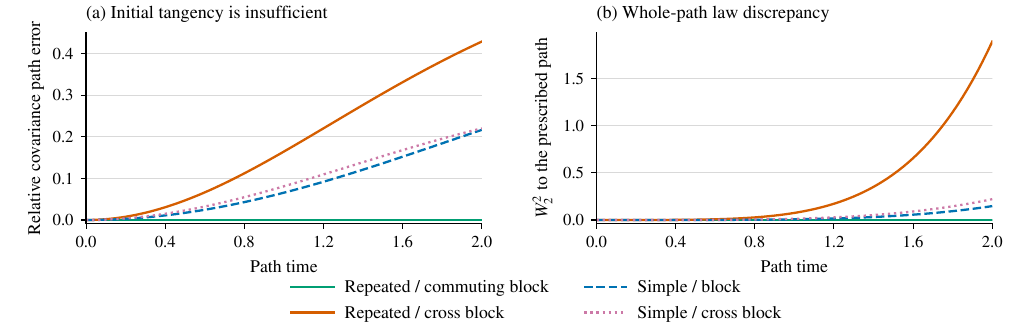}
\caption{All four repeated/simple-strain controls at 81 recorded times. These deterministic covariance and law-discrepancy curves verify the commuting whole-path boundary in \cref{prop:symmetry-strata}, without confidence intervals.}\label{fig:controlled-whole-path}
\end{figure}
\FloatBarrier

%% file: supplement_compact/checks/digits.tex
\FloatBarrier
\noindent\begin{minipage}{\linewidth}
\subsection{Controlled native fitting and solver/grid transfer}\label{app:controlled-digits}
\begin{table}[H]\centering\small
\caption{Exact latent-model integration and transfer results. Target-model sampling and solver-transfer errors are distinct.}\label{tab:controlled-digits}
\begin{tabularx}{\linewidth}{@{}lX@{}}\toprule Check & Retained result and boundary\\\midrule
E3: Digits, $d=16$, N4 & Class-scale RMSE Base/LO/FH/target: 0.46115/0.17415/0.02383/0.02623; split-bootstrap Base--FH improvement 0.43732 $[0.43374,0.44137]$, positive in all ten refits.\\
E3: solver/grid transfer & Two-step Heun scale error Base/Euler-FH/Heun-FH: 0.13366/0.77290/0.02522. Jittered Euler Base/transferred/recomputed: 0.65885/0.05061/0.02735. This matches the number of partitions, not the network-call budget.\\
\bottomrule\end{tabularx}\end{table}

\ControlledDigitsFigure
\end{minipage}\par
\input{supplement_compact/galleries/digits}

%% file: supplement_compact/galleries/digits.tex
\begin{figure}[H]\centering
\begin{minipage}{\linewidth}\raggedright\normalsize
Digits shuffling preserves class frequencies but mismatches scale response. Exact-model--held-out-code sliced-distance is about 0.0605, versus FH 0.0607 at N4: model error persists at the integration sampling baseline (decoded examples in \cref{fig:samples-digits}). Classification accuracy is not an integration metric.
\end{minipage}\par\smallskip
\includegraphics[width=\linewidth]{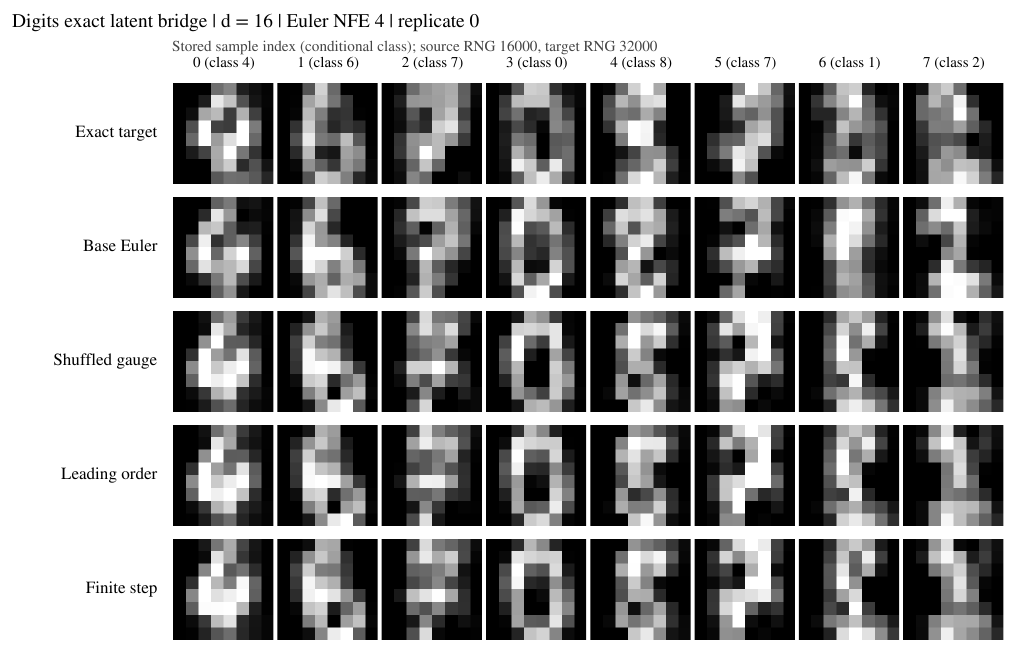}
\caption{Cached exact-latent Digits, $d=16$, Euler N4, replicate 0, source RNG seed 16000. Base/shuffled/LO/FH share initial latents, while the exact target is an independent draw with the same labels (RNG seed 32000). Linear PCA decoding illustrates the surrogate model in \cref{app:controlled-digits}; it is separate from pretrained neural sampling.}\label{fig:samples-digits}
\end{figure}

%% file: supplement_compact/theory/04_nonlinear.tex
\section{Euclidean Obstructions, Currents, Radial Laws, and Mixtures}
\label{app:atlas}\label{app:practical-classification}\label{app:moment-budget}\label{app:nonlinear-atlas}
Radial classifications (\cref{app:radial-finite}) require shell ordering and regularity; current and jet constructions (\cref{app:antisymmetric-potential}) do not classify all nonlinear numerical maps. \Cref{app:examples} pairs the mixture theory with its controlled numerical checks (\cref{fig:controlled-mixture-response}).

\begin{theorem}[Moment obstruction and required energy]
\label{lem:gauge-moments}\label{thm:moment-no-go}\label{thm:moment-budget}\label{prop:pure-visibility}
Let ${\rho}\in\mathcal P_2(\R^d)$, $u\in L^2({\rho})$ and $\Div({\rho}u)=0$. Then $\E u=0$, $\E[xu^\top+ux^\top]=0$. For $m=\E x$, ${\mathcal V_\rho}=\E|x-m|^2$, ${\mathcal E_u}=\E|u|^2$, the Euler law $\mu_h=(I+hu)_\#{\rho}$ satisfies
\begin{equation}\label{eq:moment-cov}
 \Cov\mu_h=\Cov {\rho}+h^2\E[uu^\top],\qquad
 W_2(\mu_h,{\rho})\ge\sqrt{{\mathcal V_\rho}+h^2{\mathcal E_u}}-\sqrt {\mathcal V_\rho}.
\end{equation}
\label{eq:main-moment-gate}
Hence a nonzero finite-energy pure gauge cannot be Euler-exact for $h\ne0$. Under a third-order weak Taylor bound, $\mu_h={\rho}+\frac12h^2\partial_i\partial_j({\rho}u_iu_j)+O(h^3)$; for smooth $u$ this second-order source is $\Div({\rho}(u\cdot\nabla)u)$.

For a differentiable finite-moment path with raw second moment $\mathcal M_2(t)=\E_{\rho_t}|X|^2$ and continuity-equation moment differentiation justified by cutoff limits, a feasible lift $w=v^{\rm W}+u$ with minimum-$L^2$ lift $v^{\rm W}$ can be Euler-exact at $h\ne0$ only if
\begin{equation}\label{eq:energy-budget}
 m(t+h)=m+h\dot m,\qquad
 \|u\|_{L^2({\rho}_t)}^2=\frac{\mathcal M_2(t+h)-\mathcal M_2(t)-h\dot{\mathcal M}_2(t)}{h^2}-\|v^{\rm W}\|_{L^2({\rho}_t)}^2\ge0.
\end{equation}
These conditions are not sufficient for full-law exactness.
\end{theorem}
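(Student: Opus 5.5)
The plan is to derive everything from the weak form of $\Div(\rho u)=0$: $\E[u\cdot\nabla\phi(x)]=0$ for every test function $\phi$ with $\nabla\phi$ of at most linear growth. Such $\phi$ are admissible because $\rho\in\mathcal P_2$ and $u\in L^2(\rho)$. A cutoff argument makes this rigorous: take $\phi_R=\chi(x/R)\phi$, let $R\to\infty$, and use dominated convergence. The cutoff gradient term is bounded by $|u|\,|\phi|/R$ on $\{|x|\sim R\}$, which is integrable and vanishes in the limit for quadratic $\phi$. Testing with $\phi=x_i$ gives $\E u_i=0$. Testing with $\phi=x_ix_j$ gives $\E[x_iu_j+x_ju_i]=0$, which is the symmetric cross-moment identity.

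For the covariance identity I expand the moments of $X+hu(X)$. The mean is $m+h\E u=m$. The second moment is $\E XX^\top+h\,\E[Xu^\top+uX^\top]+h^2\E uu^\top$, and the middle term vanishes by the identity above. Subtracting $mm^\top$ gives $\Cov\mu_h=\Cov\rho+h^2\E[uu^\top]$.

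For the lower bound I use that $W_2(\mu,\nu)\ge |\sigma_\mu-\sigma_\nu|$, where $\sigma^2$ is the total variance. Both laws here have mean $m$, so this is the triangle inequality in $L^2$ for centered variables under any coupling: $\|X-Y\|\ge\|X-m\|-\|Y-m\|$. Taking traces gives $\sigma_{\mu_h}^2=\mathcal V_\rho+h^2\mathcal E_u$. Nonexactness follows at once: if $u\ne0$ in $L^2(\rho)$, then $\mathcal E_u>0$ and the bound is strictly positive.

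For the weak expansion I Taylor expand a smooth test function: $\E\phi(X+hu)=\E\phi+h\E[\nabla\phi\cdot u]+\tfrac{h^2}{2}\E[\partial_{ij}\phi\,u_iu_j]+O(h^3)$, where the $O(h^3)$ term is the assumed third-order bound. The first-order term vanishes by the gauge condition. The second-order term is, distributionally, $\tfrac{h^2}{2}\langle\partial_i\partial_j(\rho u_iu_j),\phi\rangle$. For smooth $u$ I rewrite $\partial_j(\rho u_iu_j)=u_i\partial_j(\rho u_j)+\rho u_j\partial_ju_i=\rho(u\cdot\nabla)u_i$, using $\Div(\rho u)=0$. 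This identifies the source with $\Div(\rho(u\cdot\nabla)u)$, up to the sign convention of the distributional pairing, which I will check against \cref{eq:euler-density-defect}.

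For the energy budget, Euler exactness at $h$ requires matching the mean and the raw second moment of $(I+hw)_\#\rho_t$ to those of $\rho_{t+h}$. The mean condition is $m+h\E w=m(t+h)$, and differentiating the continuity equation against $x$ gives $\E w=\dot m$. The second moment of the Euler law is $\mathcal M_2(t)+2h\E[x\cdot w]+h^2\E|w|^2$. Testing the continuity equation against $|x|^2$ gives $\dot{\mathcal M}_2=2\E[x\cdot w]$; this moment differentiation is the step that needs the cutoff justification assumed in the statement. Since $v^{\rm W}$ is the minimum-$L^2$ lift, it lies in $\cT_\rho$ and is orthogonal to $u\in\cS_\rho$. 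Hence $\E|w|^2=\|v^{\rm W}\|^2+\|u\|^2$, and solving for $\|u\|^2$ gives \cref{eq:energy-budget}. Nonnegativity is automatic. Insufficiency follows from the example of \cref{thm:all-signs} in odd dimension, where the moments can be matched while the law cannot.

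The main technical obstacle is justifying the quadratic test functions on $\R^d$ under only $\mathcal P_2$ and $L^2$ integrability. Everything else is algebra.
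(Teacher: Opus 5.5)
Your proposal is correct and follows essentially the same route as the paper: cutoff testing against $x_i$ and $x_ix_j$, direct moment expansion of $x+hu$, the centered reverse-Minkowski bound, a Taylor expansion whose linear term vanishes, and continuity-equation moment identities combined with Hodge orthogonality of $v^{\rm W}$ and $u$. Your sign hedge is unnecessary, since two integrations by parts give $\int\rho u_iu_j\partial_{ij}\phi$ exactly, consistent with \cref{eq:euler-density-defect}. Your odd-dimensional insufficiency example, which the paper asserts without proof, works for $d\ge3$ and $\lambda>a$ once the planar frequencies are chosen to match the trace $d\lambda^2$ rather than each radius.
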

\begin{proof}
Test $\Div({\rho}u)=0$ with $x_i\chi_R$ and $x_ix_j\chi_R$. The cutoff errors are bounded respectively by $C\int_{|x|>R}|u|{\rho}$ and $C\int_{|x|>R}|x||u|{\rho}$, which vanish by Cauchy--Schwarz and finite second moments. Expansion of $x+hu$ gives the covariance identity; reverse Minkowski on any centered coupling gives its $W_2$ bound. Taylor expansion of a test function gives the weak source, since its linear term vanishes. The remainder is bounded by $|h|^3\|D^3\phi\|_\infty\E|u|^3/6$, and $\partial_j({\rho}u_iu_j)={\rho}(u\cdot\nabla)u_i$. For a moving path, continuity gives $\dot m=\E w$ and $\dot{\mathcal M}_2=2\E(x\cdot w)$. Expanding $\E|x+hw|^2$ and using Hodge orthogonality proves~\eqref{eq:energy-budget}.
\end{proof}
For ${\mathcal V_\rho},{\mathcal E_u}>0$ the lower bound is $h^2{\mathcal E_u}/(2\sqrt {\mathcal V_\rho})+O(h^4)$, sharp for isotropic planar rotations. The obstruction is specific to stationary Euclidean laws.

\begin{proposition}[Moving-path moment barrier]\label{prop:acceleration-moment-barrier}
In the Euclidean setting of \cref{thm:moment-budget}, assume twice-differentiable moments and justified continuity-equation/cutoff identities. Let $w=v^{\rm W}+u$, $v^{\rm W}\in\mathcal T_{\rho_t}$ be the minimum-$L^2$ lift, and $w,a_w\in L^2(\rho_t)$. The projected norms below use $L^2(\rho_t)$. Write $m=\E X$, $\mathcal V_\rho=\E|X-m|^2>0$, $\mathcal K_{\rm W}=\E|v^{\rm W}-\dot m|^2$, and $\mathcal E_u=\E|u|^2$. Then
\begin{align}
 \|\Pi_{\rho_t}a_w\|^2&\ge|\ddot m|^2+
 \frac{(\ddot{\mathcal V}_\rho/2-\mathcal K_{\rm W}-\mathcal E_u)^2}{\mathcal V_\rho},\label{eq:acceleration-moment-barrier}\\
 \inf_u\|\Pi_{\rho_t}a_w\|^2&\ge|\ddot m|^2+
 \frac{[\mathcal K_{\rm W}-\ddot{\mathcal V}_\rho/2]_+^2}{\mathcal V_\rho}.
\end{align}
In particular full cancellation requires $\ddot m=0$ and $\mathcal E_u=\ddot{\mathcal V}_\rho/2-\mathcal K_{\rm W}\ge0$. For $\mathsf r_{\rm rms}=\sqrt{\mathcal V_\rho}$ it also requires $\ddot{\mathsf r}_{\rm rms}\ge0$.
At finite $h$, put $A_h=\mathcal V_\rho+h\dot{\mathcal V}_\rho+h^2\mathcal K_{\rm W}\ge0$ and $\mathcal V_+=\mathcal V_{\rho_{t+h}}$. Then
\begin{equation}\label{eq:moving-moment-finite}
\begin{split}
 W_2^2((I+hw)_\#\rho_t,\rho_{t+h})&\ge |m+h\dot m-m_{t+h}|^2\\[-2pt]
 &\quad+(\sqrt{A_h+h^2\mathcal E_u}-\sqrt{\mathcal V_+})^2.
\end{split}
\end{equation}
Minimizing its right side over nonnegative energy yields the same bound with the last term replaced by $[\sqrt{A_h}-\sqrt{\mathcal V_+}]_+^2$.
\end{proposition}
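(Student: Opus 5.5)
The plan is to test the projected acceleration against the affine functions $x\mapsto x_i$ and $x\mapsto\frac12|x-m|^2$. Their gradients $e_i$ and $x-m$ lie in $\mathcal T_{\rho_t}$. Because $\Pi_{\rho_t}$ is the orthogonal projection onto a space containing these gradients, Bessel's inequality applies. The gradients are orthogonal in $L^2(\rho_t)$, since $\E(x-m)=0$, and their norms are $1$ and $\sqrt{\mathcal V_\rho}$. This gives
\[
\|\Pi_{\rho_t}a_w\|^2\ge|\E a_w|^2+\frac{(\E[(x-m)\cdot a_w])^2}{\mathcal V_\rho}.
\]
It then remains to identify the two moments from the continuity equation.

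For the mean, use $\dot m=\E_{\rho_t}w_t$ from \cref{thm:moment-budget}, and differentiate once more by transport along the flow. This gives $\ddot m=\E[\partial_tw+(\nabla w)w]=\E a_w$. For the second moment, first compute $\dot{\mathcal V}_\rho=2\E[(x-m)\cdot w]$, using $\E(x-m)=0$. Differentiating again along the flow gives
\[
\ddot{\mathcal V}_\rho=2\E|w|^2-2|\dot m|^2+2\E[(x-m)\cdot a_w].
\]
Next split $\E|w|^2$ by Hodge orthogonality of $v^{\rm W}\in\mathcal T$ and $u\in\mathcal S$, so that $\E|w|^2=\E|v^{\rm W}|^2+\mathcal E_u$. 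Since $\E u=0$ by \cref{thm:moment-budget}, we have $\E v^{\rm W}=\dot m$, hence $\E|v^{\rm W}|^2-|\dot m|^2=\mathcal K_{\rm W}$. Therefore $\E[(x-m)\cdot a_w]=\ddot{\mathcal V}_\rho/2-\mathcal K_{\rm W}-\mathcal E_u$, which proves \eqref{eq:acceleration-moment-barrier}. The infimum over $u$ then follows from this bound alone, because only $\mathcal E_u\ge0$ enters it. Minimizing $(c-\mathcal E_u)^2$ over $\mathcal E_u\ge0$ gives $[-c]_+^2=[\mathcal K_{\rm W}-\ddot{\mathcal V}_\rho/2]_+^2$. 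Full cancellation forces both terms to vanish. For the rms radius, $\ddot{\mathcal V}_\rho=2\dot{\mathsf r}_{\rm rms}^2+2\mathsf r_{\rm rms}\ddot{\mathsf r}_{\rm rms}$. Cauchy--Schwarz gives $\dot{\mathcal V}_\rho^2\le4\mathcal V_\rho\mathcal K_{\rm W}$, since $\dot{\mathcal V}_\rho=2\E[(x-m)\cdot(v^{\rm W}-\dot m)]$, so $\dot{\mathsf r}_{\rm rms}^2\le\mathcal K_{\rm W}$. Combining this with $\ddot{\mathcal V}_\rho/2\ge\mathcal K_{\rm W}$ yields $\mathsf r_{\rm rms}\ddot{\mathsf r}_{\rm rms}\ge0$.

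For the finite-step bound, let $Y=x+hw$. Its mean is $m+h\dot m$. Its centered second moment is
\[
\mathcal V_\rho+2h\E[(x-m)\cdot(w-\dot m)]+h^2\E|w-\dot m|^2=A_h+h^2\mathcal E_u,
\]
again by orthogonality and $\E u=0$. The required identities are $\E[(x-m)\cdot u]=0$ and $\E[(v^{\rm W}-\dot m)\cdot u]=0$, both from \cref{thm:moment-budget} and Hodge orthogonality. Then, for any coupling, decompose $W_2^2$ into the squared mean difference plus the centered transport cost. Reverse Minkowski bounds the centered cost below by the squared difference of centered rms values, which gives \eqref{eq:moving-moment-finite}. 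Minimizing over $\mathcal E_u\ge0$ is elementary: the squared difference is minimized at $\mathcal E_u=0$ when $A_h\ge\mathcal V_+$, and is zero otherwise.

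The main obstacle is justifying the second time derivatives along the flow, namely $\ddot m=\E a_w$ and the formula for $\ddot{\mathcal V}_\rho$, on $\R^d$. These require cutoff arguments with $w,a_w\in L^2$ and finite second moments. I would rely on the stated hypothesis of justified continuity-equation and cutoff identities, mirroring the truncation used in \cref{thm:moment-budget}.
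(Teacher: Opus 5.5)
Your proposal is correct and follows essentially the same route as the paper. Both arguments apply Bessel's inequality against the orthonormal fields $e_1,\ldots,e_d,(x-m)/\sqrt{\mathcal V_\rho}$ in $\mathcal T_{\rho_t}$. Both then use the moment identities $\ddot m=\E a_w$ and $\ddot{\mathcal V}_\rho/2=\E|w-\dot m|^2+\E[(X-m)\cdot a_w]$ together with Hodge orthogonality, Cauchy--Schwarz for the radius condition, and the mean/centered split with reverse Minkowski for the finite-step bound.

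One small addition: the nonnegativity $A_h\ge0$, which you need for $\sqrt{A_h}$, follows from your centered-moment identity with $u=0$, since $A_h=\E|(X-m)+h(v^{\rm W}-\dot m)|^2$.
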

\begin{proof}
Moment differentiation gives $\ddot m=\E a_w$ and $\ddot{\mathcal V}_\rho/2=\E|w-\dot m|^2+\E[(X-m)\cdot a_w]$. Constant fields and $x-m$ belong to $\mathcal T_{\rho_t}$ by cutoff approximation. Gauge orthogonality therefore gives $\E u=0$ and $\E|w-\dot m|^2=\mathcal K_{\rm W}+\mathcal E_u$. With $z=\Pi_{\rho_t}a_w$, its inner products against the orthonormal fields $e_1,\ldots,e_d,(x-m)/\sqrt{\mathcal V_\rho}$ are the mean acceleration components and $(\ddot{\mathcal V}_\rho/2-\mathcal K_{\rm W}-\mathcal E_u)/\sqrt{\mathcal V_\rho}$. Bessel's inequality proves the first bound; minimizing over all energies $\mathcal E_u\ge0$ proves the second, even if not every energy is feasible. Cauchy--Schwarz gives $\dot{\mathcal V}_\rho^2\le4\mathcal V_\rho\mathcal K_{\rm W}$, hence the radius-convexity condition. Finally Euler's mean and centered second moment are exactly $m+h\dot m$ and $A_h+h^2\mathcal E_u$. Splitting any coupling into its mean displacement and centered displacement, then applying reverse Minkowski, proves \cref{eq:moving-moment-finite}.
\end{proof}
The projected bound is sharp: for $\rho_t=\law(m+\varsigma Z)$ with covariance-normalized spherical $Z$, $v^{\rm W}=\dot m+(\dot\varsigma/\varsigma)(x-m)$. If $\ddot\varsigma<0$, $u=0$ attains $|\ddot m|^2+d\ddot\varsigma^2$. If $\ddot\varsigma>0$ and $d$ is even, $u=\sqrt{\ddot\varsigma/\varsigma}\,J(x-m)$ attains $|\ddot m|^2$; its remaining rotational acceleration is invisible. Zeros of the radicand require smoothness checks.

\begin{proposition}[Fixed-field Euler barrier and sharp example]\label{prop:fixed-euler-barrier}
Let $\mathcal M_2(t)=\E_{\rho_t}|X|^2>0$ be $C^3$ and let $w_t\in L^2(\rho_t)$ be a feasible lift independent of $h$, with $\dot{\mathcal M}_2=2\E(X\cdot w_t)$. If $\mathcal M_2'''(t)\ne0$, its local Euler error cannot be $O(h^4)$. If $\E|w_t|^2=\ddot{\mathcal M}_2/2$, more precisely
\begin{equation}\label{eq:fixed-euler-barrier}
 \liminf_{h\downarrow0}\frac{W_2((I+hw_t)_\#\rho_t,\rho_{t+h})}{h^3}
 \ge\frac{|\mathcal M_2'''(t)|}{12\sqrt{\mathcal M_2(t)}}.
\end{equation}
For $\rho_t=N(0,\varsigma_0^2e^{2\beta t}I)$, $\beta>0$, and even $d$, the fixed field $w=\beta(I+J)x$ attains this bound:
\begin{equation}\label{eq:sharp-exponential-local}
 W_2((I+hw)_\#\rho_t,\rho_{t+h})
 =\tfrac23\beta^3\sqrt{\mathcal M_2(t)}\,h^3+O(h^4).
\end{equation}
\end{proposition}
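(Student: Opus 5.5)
The plan is to reduce the law distance to a scalar second-moment comparison, using the reverse Minkowski bound already employed in \cref{thm:moment-budget,prop:acceleration-moment-barrier}. For any coupling of $\mu_h=(I+hw_t)_\#\rho_t$ and $\rho_{t+h}$, the triangle inequality in $L^2$ gives
\[
W_2(\mu_h,\rho_{t+h})\ge\bigl|\sqrt{\mathcal M_2^{\rm E}(h)}-\sqrt{\mathcal M_2(t+h)}\bigr|,
\]
where $\mathcal M_2^{\rm E}(h)=\E|X+hw_t(X)|^2$ and $X\sim\rho_t$. This uses raw rather than centered moments, which is exactly what the hypothesis is phrased in. Expanding the square gives the exact polynomial
\[
\mathcal M_2^{\rm E}(h)=\mathcal M_2+2h\E(X\cdot w_t)+h^2\E|w_t|^2
=\mathcal M_2+h\dot{\mathcal M}_2+h^2\E|w_t|^2,
\]
with no remainder, because $w_t$ is independent of $h$. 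By $C^3$ Taylor expansion, $\mathcal M_2(t+h)=\mathcal M_2+h\dot{\mathcal M}_2+\tfrac12h^2\ddot{\mathcal M}_2+\tfrac16h^3\mathcal M_2'''+o(h^3)$.

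First I treat the general claim. Take the difference of the two second moments,
\[
\mathcal M_2^{\rm E}-\mathcal M_2(t+h)=h^2\bigl(\E|w_t|^2-\ddot{\mathcal M}_2/2\bigr)-\tfrac16h^3\mathcal M_2'''+o(h^3).
\]
Dividing by the sum of the square roots, which tends to $2\sqrt{\mathcal M_2}>0$, gives the moment lower bound.
\begin{itemize}
\item If $\E|w_t|^2\ne\ddot{\mathcal M}_2/2$, the error is of exact order $h^2$.
\item If equality holds, the error is at least $|\mathcal M_2'''|h^3/(12\sqrt{\mathcal M_2})+o(h^3)$. This is \eqref{eq:fixed-euler-barrier}, and in both cases it excludes $O(h^4)$ when $\mathcal M_2'''\ne0$.
\end{itemize}
The only care needed is the liminf bookkeeping and positivity of $\mathcal M_2$.

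For the sharp example I use \cref{thm:finite-gaussian}-type exact Gaussian computations.
\begin{enumerate}
\item Take $J$ an orthogonal complex structure (\cref{lem:complex-structure-manifold}, even $d$). Then $w=\beta(I+J)x$ is feasible, since $\beta J$ is skew and $\Div(\rho\,Jx)=0$.
\item Euler's map is $(1+\beta h)I+\beta hJ$, a scalar multiple of an orthogonal matrix. Its Gram matrix is $((1+\beta h)^2+\beta^2h^2)I$, so $\mu_h=N(0,\varsigma_t^2 r_h^2 I)$ with $r_h=\sqrt{(1+\beta h)^2+\beta^2h^2}$.
\item By the scale formula in \cref{lem:elliptic-factor}, $W_2=|r_h-e^{\beta h}|\,\varsigma_t\sqrt d=|r_h-e^{\beta h}|\sqrt{\mathcal M_2(t)}$.
\item Expand $r_h^2=1+2\beta h+2\beta^2h^2$ against $e^{2\beta h}=1+2\beta h+2\beta^2h^2+\tfrac43\beta^3h^3+O(h^4)$. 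The difference of squares is $-\tfrac43\beta^3h^3$, so $r_h-e^{\beta h}=-\tfrac23\beta^3h^3+O(h^4)$, giving \eqref{eq:sharp-exponential-local}.
\end{enumerate}

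To confirm sharpness, note that $\mathcal M_2=\varsigma_0^2de^{2\beta t}$. Hence $\mathcal M_2'''=8\beta^3\mathcal M_2$ and $\ddot{\mathcal M}_2/2=2\beta^2\mathcal M_2$. Also $\E|w|^2=\beta^2\E|(I+J)x|^2=2\beta^2\mathcal M_2$, because $J$ is orthogonal and $\E[x\cdot Jx]=0$, so the equality hypothesis holds. The bound then reads $8\beta^3\mathcal M_2/(12\sqrt{\mathcal M_2})=\tfrac23\beta^3\sqrt{\mathcal M_2}$, which matches.

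The main obstacle is minor. It consists of stating the reverse-Minkowski step for raw moments with an arbitrary coupling, and of confirming that the $C^3$ assumption gives an $o(h^3)$ remainder sufficient for the liminf. There is no serious difficulty, since the Euler moment is exactly quadratic in $h$.
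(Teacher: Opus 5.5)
Your proof is correct and follows the paper's argument essentially step for step. Both use the exact quadratic Euler raw moment $\mathcal M_2+h\dot{\mathcal M}_2+h^2\E|w_t|^2$, the reverse-Minkowski RMS lower bound with division by the sum of RMS norms, and the isotropic Gaussian radius $\sqrt{1+2\beta h+2\beta^2h^2}$ expanded against $e^{\beta h}$. One small overstatement: when the quadratic coefficients mismatch, you have only shown that the error is at least of order $h^2$, not of exact order $h^2$, but that lower bound is all the statement requires.
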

\begin{proof}
Euler's raw second moment is exactly $\mathcal M_2+h\dot{\mathcal M}_2+h^2\E|w_t|^2$. For any two $\mathcal P_2$ laws, reverse Minkowski bounds $W_2$ below by the absolute difference of their RMS norms. Taylor-expand $\mathcal M_2(t+h)$. A mismatch in the quadratic coefficient gives a positive $h^2$ lower coefficient; when it vanishes, division by the sum of RMS norms gives \cref{eq:fixed-euler-barrier}. In the example Euler's spherical radius factor is $\sqrt{1+2\beta h+2\beta^2h^2}$; \cref{eq:elliptical-w2} and its expansion against $e^{\beta h}$ give \cref{eq:sharp-exponential-local}.
\end{proof}
Local $h^3$ is therefore optimal among \emph{all} fixed finite-energy feasible fields. For this particular rotating field, $Nh=T$ gives terminal error $\tfrac23\beta^3T\sqrt{\mathcal M_2(T)}h^2+O(h^3)$ by expanding $[1+2\beta h+2\beta^2h^2]^{T/(2h)}$. Other time-dependent fields may cancel terminal error; the $h$-dependent frequency in \cref{eq:finite-exact-gauge} supplies the cubic moment term. This clock-dependent barrier excludes the torus and exact EDM/RF paths, whose second moments are quadratic in $\sigma$ and $t$, respectively.

\input{supplement_compact/checks/moments}

\subsection{Current potentials and finite jets}\label{app:antisymmetric-potential}\label{sec:jets}
\begin{proposition}[Antisymmetric and periodic currents]
\label{prop:antisymmetric-current}\label{prop:spatial-score}\label{prop:torus-complete}
For smooth ${\rho}>0$, an antisymmetric potential $\Xi$ gives $u_i={\rho}^{-1}\sum_j\partial_j\Xi_{ij}$ with $\Div({\rho}u)=0$. In particular a spatial skew mobility requires
\begin{equation}\label{eq:antisymmetric-current}
 u=\operatorname{Div}{M_{\rm sk}}+{M_{\rm sk}}\nabla\log {\rho}={\rho}^{-1}\operatorname{Div}({\rho}{M_{\rm sk}}),\qquad {M_{\rm sk}}^\top=-{M_{\rm sk}}.
\end{equation}
Replacing the score by $s+e$ adds residual $\Div({\rho}{M_{\rm sk}}e)$ of negative norm at most $\|{M_{\rm sk}}e\|_{L^2({\rho})}$. On a torus every smooth gauge current has the complete representation ${\rho}u=\overline{{\rho}u}+\operatorname{Div}\Xi$, with periodic antisymmetric $\Xi$; its constant current must be retained.
\end{proposition}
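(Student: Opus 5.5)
The statement has four parts: the antisymmetric-potential divergence identity, the skew-mobility formula, the score-error residual bound, and the torus representation. I will prove them in that order. The first two are algebraic, the third uses the dual-norm characterization from \cref{prop:poisson-existence}, and the fourth is the real content.

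\emph{Divergence identity.} Set $\rho u_i=\sum_j\partial_j\Xi_{ij}$. Then
\[
\Div(\rho u)=\sum_{i,j}\partial_i\partial_j\Xi_{ij}=0,
\]
since mixed partials of a smooth field are symmetric while $\Xi$ is antisymmetric; weakly, the same holds for $C^2$ or distributional $\Xi$.

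\emph{Skew-mobility formula.} Take $\Xi=\rho M_{\rm sk}$ and use the row-divergence convention $(\operatorname{Div}M)_i=\sum_j\partial_jM_{ij}$. The product rule gives
\[
\rho^{-1}\operatorname{Div}(\rho M_{\rm sk})=\operatorname{Div}M_{\rm sk}+M_{\rm sk}\nabla\log\rho,
\]
which is \cref{eq:antisymmetric-current}.

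\emph{Score-error residual.} Replacing $s=\nabla\log\rho$ by $s+e$ changes $u$ by $M_{\rm sk}e$, so the continuity residual is $\Div(\rho M_{\rm sk}e)$. Pair it with a test function $\psi$ and integrate by parts:
\[
|\langle\Div(\rho M_{\rm sk}e),\psi\rangle|=\Bigl|\int\rho\,M_{\rm sk}e\cdot\nabla\psi\Bigr|\le\|M_{\rm sk}e\|_{L^2(\rho)}\|\nabla\psi\|_{L^2(\rho)}.
\]
Taking the supremum over unit test gradients in \cref{eq:negative-norm-identity} gives the bound. The norm identity also gives $\|\Pi_\rho(M_{\rm sk}e)\|\le\|M_{\rm sk}e\|$ directly.

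\emph{Torus representation: necessity of the constant.} Let $F=\rho u$, which is smooth, periodic and divergence-free. If $F=c+\operatorname{Div}\Xi$ with $\Xi$ periodic, then integrating over the torus kills $\operatorname{Div}\Xi$. Hence $c=\overline{F}$ is forced, so the constant current cannot be dropped in general; $F=e_1$ under uniform $\rho$ is a nonzero gauge with no periodic potential.

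\emph{Torus representation: construction.} Set $G=F-\overline F$, which has zero mean and zero divergence. I would use Fourier series. For $k\ne0$, divergence-freeness gives $k\cdot\hat G_k=0$. Define
\[
\hat\Xi_{ij}(k)=\frac{-i\,(k_j\hat G_{k,i}-k_i\hat G_{k,j})}{|k|^2},
\]
which is antisymmetric in $(i,j)$. Then
\[
\widehat{\operatorname{Div}\Xi}_i=\sum_j ik_j\hat\Xi_{ij}=\frac{|k|^2\hat G_{k,i}-k_i(k\cdot\hat G_k)}{|k|^2}=\hat G_{k,i}.
\]
Equivalently, solve $\Delta\phi=G$ componentwise with zero-mean periodic $\phi$, and take $\Xi_{ij}=\partial_j\phi_i-\partial_i\phi_j$. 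Then $\operatorname{Div}\Xi=\Delta\phi-\nabla(\Div\phi)$, and $\Div\phi=0$ because $\Delta\Div\phi=\Div G=0$ with zero mean.

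\emph{Smoothness and the main obstacle.} Smoothness of $\Xi$ follows from the rapid decay of $\hat G_k$; reality follows from $\hat G_{-k}=\overline{\hat G_k}$. The one point I expect to need care is keeping this Hodge-type inversion on the torus at the level of $G$. The density $\rho$ enters only through $F=\rho u$, and $\rho>0$ guarantees that $u=\rho^{-1}F$ is recovered from $F$.
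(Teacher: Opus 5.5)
Your proposal is correct and follows the paper's proof step for step: antisymmetry cancels against symmetric mixed partials, the product rule with $\Xi=\rho M_{\rm sk}$ gives the mobility formula, the duality/projection bound handles the score-error residual, and Fourier inversion on nonzero modes gives the torus representation, with your $\widehat\Xi_{ij}(k)$ exactly the paper's $(\widehat{\mathbf j}_i(k)k_j-k_i\widehat{\mathbf j}_j(k))/(i|k|^2)$ and the same zero-mode integral argument forcing the constant current. Your real-space variant $\Xi_{ij}=\partial_j\phi_i-\partial_i\phi_j$ with $\Delta\phi=G$ is the same inversion written without Fourier coefficients.
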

\begin{proof}
Double divergence of an antisymmetric tensor vanishes by exchanging indices and mixed derivatives. The product rule gives~\eqref{eq:antisymmetric-current}; subtraction and the projection identity bound the learned-score residual. For the periodic converse, let ${\mathbf j}={\rho}u$ and use Fourier coefficients satisfying $k\cdot\widehat {\mathbf j}(k)=0$. For $k\ne0$ set $\widehat\Xi_{ij}(k)=(\widehat {\mathbf j}_i(k)k_j-k_i\widehat {\mathbf j}_j(k))/(i|k|^2)$. Its row divergence recovers the nonzero modes; it has the reality condition and smooth decay. The zero mode is the constant current, whose nonzero integral prohibits a periodic divergence representation.
\end{proof}
This conservation construction is classical \citep{ma2015recipe,horvat2024gauge}. For uniform density, the torus shear $(x,y)\mapsto(x+hf(y),y)$ is invertible with unit determinant, hence exactly Euler-preserving.

\begin{theorem}[Compatible finite jets]\label{thm:jets}\label{thm:all-jets}\label{cor:planar-jets}\label{prop:restricted-jet}
For $d\ge2$, smooth positive ${\rho}$ near zero and a degree-${\ell_{\rm jet}}$ polynomial vector jet ${\mathsf P_u}$, a smooth compactly supported weighted-solenoidal field realizes $j_0^{\ell_{\rm jet}}u={\mathsf P_u}$ iff
\begin{equation}\label{eq:jetcondition}
 j_0^{{\ell_{\rm jet}}-1}\Div({\rho}{\mathsf P_u})=0
\end{equation}
(the condition is empty for ${\ell_{\rm jet}}=0$). First jets $({u_0},{J_u})$ obey $\tr {J_u}+s\cdot {u_0}=0$, $s=\nabla\log {\rho}(0)$; second jets additionally obey $\sum_i(H_u)_{iik}+\sum_i s_i(J_u)_{ik}+\sum_i(\partial_ks_i){u_0}_i=0$. At an equilibrium every trace-free ${J_u}$ is realizable. In dimension two these are elliptic, saddle, nonzero nilpotent shear, or zero according to the sign of $-\det {J_u}$. By contrast, if $s=0$ and ${H_\rho}=-\nabla^2\log {\rho}(0)\succ0$, restricting the velocity to exactly ${J_u}x$ and requiring its residual to vanish through quadratic degree gives ${J_u}^\top {H_\rho}+{H_\rho}{J_u}=0$.
\end{theorem}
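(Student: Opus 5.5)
The plan is to prove the jet criterion in two directions and then read off the listed consequences by expanding \cref{eq:jetcondition} at low degree.

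\emph{Necessity.} If $u$ is smooth with $\Div(\rho u)=0$ and $j_0^{\ell_{\rm jet}}u=\mathsf P_u$, then $u-\mathsf P_u=O(|x|^{\ell_{\rm jet}+1})$. Hence $\Div(\rho(u-\mathsf P_u))=O(|x|^{\ell_{\rm jet}})$, so $j_0^{\ell_{\rm jet}-1}\Div(\rho\mathsf P_u)=0$.

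\emph{Sufficiency.} I would use the antisymmetric-current representation of \cref{prop:antisymmetric-current}, i.e.\ fields $u=\rho^{-1}\operatorname{Div}\Xi$.
\begin{enumerate}
\item Let $j=\rho\,\mathsf P_u$, truncated to a polynomial current of degree $\ell_{\rm jet}$ via the jet of $\rho$. The hypothesis says $\Div j$ vanishes to order $\ell_{\rm jet}-1$. So the degree-$\ell_{\rm jet}$ polynomial jet of $j$ is divergence-free in the jet sense: its divergence has degree $\le\ell_{\rm jet}-1$ and vanishing $(\ell_{\rm jet}-1)$-jet, hence is zero as a polynomial.
\item A divergence-free polynomial vector field in $d\ge2$ is the row divergence of an antisymmetric polynomial tensor $\Xi$ of one higher degree. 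This is exactness of the polynomial de Rham (Koszul) complex for $(d-1)$-forms. Concretely, one can use the homotopy $\Xi_{ij}=\sum_k$ (homogeneous-degree weights)$\,(x_j j_i-x_i j_j)$ on each homogeneous component, the polynomial analogue of the Fourier formula in \cref{prop:antisymmetric-current}.
\item Multiply $\Xi$ by a cutoff $\chi$ equal to $1$ near zero and set $u=\rho^{-1}\operatorname{Div}(\chi\Xi)$. Then $u$ is compactly supported, smooth, and weighted-solenoidal, since the double divergence of an antisymmetric tensor vanishes.
\item Near zero, $\rho u=j$ to order $\ell_{\rm jet}$, so $j_0^{\ell_{\rm jet}}u=\mathsf P_u$. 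This requires $\rho^{-1}$ times the truncated $j$ to reproduce $\mathsf P_u$ up to order $\ell_{\rm jet}$, which holds because truncation errors are $O(|x|^{\ell_{\rm jet}+1})$.
\end{enumerate}
I expect the main bookkeeping obstacle to be keeping the truncated current's divergence exactly zero rather than zero to order $\ell_{\rm jet}-1$. The fix is to project: subtract from the truncated $j$ a degree-$\ell_{\rm jet}$ polynomial correction solving $\Div c=-\Div j_{\rm trunc}$ with $c=O(|x|^{\ell_{\rm jet}+1})$. Such a $c$ exists because the divergence from homogeneous degree $k+1$ to degree $k$ is surjective. This changes nothing at jet order $\ell_{\rm jet}$.

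\emph{Low-degree conditions.} For $\ell_{\rm jet}=1$, the condition at $x=0$ is $\Div(\rho\mathsf P_u)(0)=\rho(0)(\tr J_u+s\cdot u_0)=0$. For $\ell_{\rm jet}=2$, differentiating $\rho^{-1}\Div(\rho\mathsf P_u)=\Div\mathsf P_u+\nabla\log\rho\cdot\mathsf P_u$ once in $x_k$ at $0$ gives the stated second-jet identity, with $H_u$ the quadratic coefficient tensor.

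\emph{Equilibrium and planar cases.} At an equilibrium, $u_0=0$ and the first condition reduces to $\tr J_u=0$. In two dimensions, a trace-free real $2\times2$ matrix has eigenvalues $\pm\sqrt{-\det J_u}$. So it is elliptic if $-\det J_u<0$, a saddle if $-\det J_u>0$, and nilpotent (a shear if nonzero) or zero if $\det J_u=0$.

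\emph{Restricted linear velocity.} With $s=0$, write $\log\rho=\log\rho(0)-\tfrac12x^\top H_\rho x+O(|x|^3)$. The residual of $u=J_ux$ is $\rho\,(\tr J_u-x^\top H_\rho J_ux+O(|x|^3))$. Vanishing through quadratic degree forces $\tr J_u=0$ and the quadratic form $x^\top H_\rho J_u x$ to vanish. Since it is a symmetric-part condition, this means $J_u^\top H_\rho+H_\rho J_u=0$, which in turn implies $\tr J_u=\tr(H_\rho^{-1}H_\rho J_u)=0$ automatically. The cubic terms do not enter at this order.
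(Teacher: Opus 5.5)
Your proposal is correct and follows the paper's proof essentially step for step. Necessity comes from differentiating the conservation law. Sufficiency uses the exact divergence-freeness of the truncated current $j_0^{\ell_{\rm jet}}(\rho\mathsf P_u)$, inverts it one homogeneous piece at a time with the Euler/Koszul potential $\Xi^{(k)}_{ij}=(P_{k,i}x_j-P_{k,j}x_i)/(k+d-1)$ (this is where $d\ge2$ enters), and cuts off inside the region where $\rho>0$; the low-degree differentiation, Cayley--Hamilton classification and quadratic expansion for the restricted linear field then match the paper.

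Your ``projection fix'' is unnecessary. Your own step 1 already shows that the divergence of the truncated current is the zero polynomial, and as written the fix is vacuous, since a polynomial of degree at most $\ell_{\rm jet}$ that is $O(|x|^{\ell_{\rm jet}+1})$ is zero.
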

\begin{proof}
Necessity follows by differentiating conservation. For sufficiency write $P=j_0^{\ell_{\rm jet}}({\rho}{\mathsf P_u})=\sum_kP_k$ into homogeneous pieces; the condition says each is divergence-free. Set $\Xi^{(k)}_{ij}=(P_{k,i}x_j-P_{k,j}x_i)/(k+d-1)$. Euler's homogeneous identity gives $\sum_j\partial_j\Xi^{(k)}_{ij}=P_{k,i}$. Multiply the sum of potentials by a cutoff equal to one near zero, supported where ${\rho}>0$, and use the preceding current construction. Its jet is ${\mathsf P_u}$, and extension by zero is smooth, compactly supported and complete. Differentiating $\Div u+s(x)\cdot u=0$ gives the first and second conditions. At ${u_0}=0$ the first is $\tr {J_u}=0$. In two dimensions Cayley--Hamilton gives ${J_u}^2=-\det({J_u})I$, hence the four real-similarity types, all realizable by this construction. For an exactly linear field the residual is $\tr {J_u}-x^\top {J_u}^\top {H_\rho}x+O(|x|^3)$. Its quadratic coefficient vanishes iff the metric-skew equation holds; this also forces trace zero. Nonlinear higher velocity terms can instead cancel the quadratic divergence, so this restriction cannot classify all feasible first jets.
\end{proof}
Finite jets do not classify smooth germs or phase portraits: flat terms can have all jets zero. Near a Gaussian mode, the feasible current ${\rho} u=(x_1,-x_2)$ gives $u\propto e^{|x|^2/2}(x_1,-x_2)$, a saddle first jet.

\input{supplement_compact/checks/jets}

\subsection{Radial finite-step response}\label{app:radial-finite}\label{sec:radial}
\begin{theorem}[Shellwise exactness and regularity]
\label{thm:radial-finite}\label{thm:origin}\label{thm:topological-parity}
Let an atomless radial path have strictly increasing radial CDFs on their supports, radial velocity $b_t({\varrho})e_{\varrho}$, and smooth increasing quantile map $T=F_{t+h}^{-1}\circ F_t$ for $h\ne0$. In even dimension fix $J^\top=-J$, $J^2=-I$. The gauge $u=\gamma({\varrho})Jx$ makes Euler transport every source shell to its target shell iff
\begin{equation}\label{eq:radial-chord-condition}
 T({\varrho})^2\ge[{\varrho}+hb_t({\varrho})]^2,\qquad
 \gamma({\varrho})^2=\frac{T({\varrho})^2-[{\varrho}+hb_t({\varrho})]^2}{h^2{\varrho}^2}.
\end{equation}
A sign branch yielding a smooth complete field gives exact full-law transport; conversely exactness in this family with increasing output radius forces this formula. If near zero $\gamma^2={\varrho}^{2k}{\vartheta_{\rm reg}}({\varrho}^2)$, $k\in\{0,1,\ldots\}$, ${\vartheta_{\rm reg}}$ smooth and ${\vartheta_{\rm reg}}(0)>0$, the field is smooth for even $k$, and $C^k$ but not $C^{k+1}$ for odd $k$. Flat or other degenerate zeros need separate analysis. Any continuous tangential field with deterministic output radius and a strictly positive shell gap requires even ambient dimension, not only linear angular fields.
\end{theorem}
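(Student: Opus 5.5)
The plan is to decouple the radial and angular parts of the Euler map shell by shell. For $x$ with $|x|={\varrho}$, the corrected field is $w=b_t({\varrho})e_{\varrho}+\gamma({\varrho})Jx$, so
\[
x+hw=\bigl({\varrho}+hb_t({\varrho})\bigr)e_{\varrho}+h\gamma({\varrho}){\varrho}\,Je_{\varrho}.
\]
Because $J$ is skew, $Je_{\varrho}\perp e_{\varrho}$, and $|Je_{\varrho}|=1$ since $J^\top J=-J^2=I$. The output radius is therefore deterministic on each shell:
\[
\bigl[({\varrho}+hb_t)^2+h^2\gamma^2{\varrho}^2\bigr]^{1/2}.
\]
Setting this equal to $T({\varrho})$ gives \cref{eq:radial-chord-condition}. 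A real $\gamma$ exists exactly when $T^2\ge({\varrho}+hb_t)^2$. For the converse, suppose the output radius is increasing and the law is exact. Then the radial output map pushes $F_t$ to $F_{t+h}$, and the only increasing pushforward between atomless, strictly increasing CDFs is the quantile map $T$. This forces the displayed formula.

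Next I would upgrade shell matching to full-law exactness. The map $x\mapsto x+hw(x)$ commutes with every orthogonal $O$ that commutes with $J$, i.e.\ the unitary group $U(d/2)$. That group acts transitively on each sphere, so the pushforward of the uniform measure on a source shell is $U(d/2)$-invariant on the target shell, hence uniform. Together with correct radial marginals, this gives the correct radial law. Since $u=\gamma({\varrho})Jx$ is a gauge (its current $\rho\gamma Jx$ is tangential and divergence-free by radial symmetry), a smooth complete branch preserves the continuous path by \cref{prop:poisson-existence}.

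For regularity, write $u=\gamma({\varrho})Jx$ with $\gamma=\pm{\varrho}^k\sqrt{{\vartheta_{\rm reg}}({\varrho}^2)}$. The factor $\sqrt{{\vartheta_{\rm reg}}({\varrho}^2)}$ is a smooth function of $|x|^2$, since ${\vartheta_{\rm reg}}(0)>0$. For even $k$, ${\varrho}^k=|x|^{k}$ is a polynomial in $x$, so $u$ is smooth. For odd $k$, the field is $|x|^k$ times a smooth linear-leading factor. A homogeneity argument shows that $|x|^kJx$ is homogeneous of degree $k+1$ and $C^k$. Its $(k+1)$st derivatives are homogeneous of degree $0$ and non-constant, since $|x|^k$ with $k$ odd is not a polynomial, so they are discontinuous at $0$. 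Multiplying by the nonvanishing smooth factor does not change this.

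The last claim is the step I expect to be hardest, and I would prove it topologically. Suppose a continuous tangential field $u$ has deterministic output radius $\sqrt{({\varrho}+hb)^2+h^2|u|^2}=T$ and a strictly positive shell gap. Then $|u|$ is constant and positive on a shell. Normalizing $u$ gives a nowhere-vanishing continuous tangent field on $S^{d-1}$. By the hairy ball theorem, such a field exists only when $d-1$ is odd, so $d$ must be even. This holds for any continuous tangential field, not only linear angular ones.
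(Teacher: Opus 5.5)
Your proposal is correct and follows essentially the paper's route: the shellwise radius identity together with uniqueness of the increasing quantile map, uniform angles on each shell, the factorization $\gamma=\pm\varrho^k\sqrt{\vartheta_{\rm reg}(\varrho^2)}$ for regularity, and a nowhere-vanishing tangent field on $S^{d-1}$ for the parity claim. The differences are cosmetic. Where you use $U(d/2)$-equivariance, the paper notes directly that the normalized shell map is the orthogonal matrix $cI+sJ$ with $c^2+s^2=1$. Where you use degree-zero homogeneity to rule out $C^{k+1}$, the paper uses the unequal one-sided $(k+1)$st derivatives of $s|s|^k$ along a line. Where you cite the hairy ball theorem, the paper proves it with the degree homotopy from the identity to the antipodal map.
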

\begin{proof}
Radial amplitudes times $Jx$ are weighted-solenoidal by tangency and $\tr J=0$. Euler's squared radius is $({\varrho}+hb)^2+h^2{\varrho}^2\gamma^2$. Under the formula, the normalized angular map is orthogonal and preserves conditional uniform angles, while $T$ transports radial mass. Increasing radial transport is uniquely the quantile map, proving the converse. A continuous square-root branch is $\pm {\varrho}^k\sqrt{{\vartheta_{\rm reg}}({\varrho}^2)}$ near zero. For even $k$ this gives smooth Cartesian factors. For odd $k$, derivatives through order $k$ of ${\varrho}^kJx$ vanish continuously at zero; along a line a nonzero component is proportional to $s|s|^k$, with unequal one-sided derivatives of order $k+1$.

For the final claim a positive gap forces $u({\varrho}n)$ to be a nowhere-zero tangent field of constant norm on $S^{d-1}$. Normalize it to $\tau(n)$. The map $\cos(\pi s)n+\sin(\pi s)\tau(n)$ is a sphere-valued homotopy from identity to antipodal; their degrees are $1$ and $(-1)^d$, impossible for odd $d$ \citep{adams1962vector}. For $d=1$ tangency directly forces zero. A zero gap forces $u=0$ by the radius identity; even-dimensional $Jn$ supplies the converse construction.
\end{proof}
Angle-dependent redistribution remains possible. As $h\to0$, ${\varrho}\gamma^2=\partial_tb+b\partial_{\varrho}b+O(h)$, giving leading centrifugal compensation.

\begin{corollary}[Odd-power radius family and best constant rotation]
\label{cor:odd-power}
For ${\ell_{\rm rad}}=2k+1\ge3$, $t\ge0$, $h>0$, take ${\varrho}={\mathsf r}_t({\upsilon})={\upsilon}+t^2{\upsilon}^{\ell_{\rm rad}}$, $b_t({\varrho})=2t{\upsilon}^{\ell_{\rm rad}}$. Then
\begin{equation}\label{eq:radialgeneral}
 \gamma^2=\frac{{\upsilon}^{2k}[2+(2t^2+4th+h^2){\upsilon}^{2k}]}{(1+t^2{\upsilon}^{2k})^2}.
\end{equation}
The gauge is smooth at zero exactly for ${\ell_{\rm rad}}\equiv1\pmod4$, otherwise $C^{({\ell_{\rm rad}}-1)/2}$ but not $C^{({\ell_{\rm rad}}+1)/2}$. At $t=0$ let $Q>0$ be the source radius, $T(Q)=Q+h^2Q^{\ell_{\rm rad}}$, and assume $0<M_2=\E Q^2<\infty$ and $M_T=\E T(Q)^2<\infty$, and set $C_T=\E[QT(Q)]$. For constant rotations with output radius $cQ$, $c\ge1$,
\begin{equation}\label{eq:radialopt}
 c_{\rm cov}=\sqrt{M_T/M_2},\quad E_{\rm cov}=2M_T-2c_{\rm cov}C_T,\qquad
 c_{\rm opt}=C_T/M_2,\quad E_{\rm opt}=M_T-C_T^2/M_2.
\end{equation}
\end{corollary}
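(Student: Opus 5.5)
The plan is to specialize \cref{thm:radial-finite} to this explicit family: read off $\gamma^2$ from \cref{eq:radial-chord-condition}, transfer the regularity criterion, and then treat the constant-rotation problem at $t=0$ as a one-dimensional quadratic minimization.

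\emph{Step 1: the formula for $\gamma^2$.} For $t\ge0$ the map $\upsilon\mapsto\mathsf r_t(\upsilon)=\upsilon+t^2\upsilon^{\ell_{\rm rad}}$ is smooth and strictly increasing, so $\upsilon$ serves as a common source quantile. The radial quantile map is therefore $T(\mathsf r_t(\upsilon))=\mathsf r_{t+h}(\upsilon)=\upsilon+(t+h)^2\upsilon^{\ell_{\rm rad}}$. The Euler radius is $\varrho+hb_t=\upsilon+(t^2+2th)\upsilon^{\ell_{\rm rad}}$. I would factor
\[
T^2-(\varrho+hb_t)^2=\bigl(h^2\upsilon^{\ell_{\rm rad}}\bigr)\bigl(2\upsilon+(2t^2+4th+h^2)\upsilon^{\ell_{\rm rad}}\bigr).
\]
This is nonnegative, so the chord condition holds. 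Dividing by $h^2\varrho^2=h^2\upsilon^2(1+t^2\upsilon^{2k})^2$ and using $\ell_{\rm rad}-1=2k$ gives \cref{eq:radialgeneral}.

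\emph{Step 2: regularity.} This is the main obstacle, because \cref{thm:radial-finite} asks for $\gamma^2=\varrho^{2k}\vartheta_{\rm reg}(\varrho^2)$ in the Cartesian radius $\varrho$, not in $\upsilon$.
\begin{itemize}
\item The map $\upsilon\mapsto\upsilon+t^2\upsilon^{2k+1}$ is odd with derivative $1$ at zero. By the analytic inverse function theorem its inverse is odd and analytic, so $\upsilon=\varrho\,\phi(\varrho^2)$ with $\phi$ smooth and $\phi(0)=1$.
\item Substituting, $\upsilon^{2k}=\varrho^{2k}\phi(\varrho^2)^{2k}$. The remaining bracket and denominator in \cref{eq:radialgeneral} are smooth functions of $\upsilon^{2k}$, hence of $\varrho^2$.
\item This yields $\gamma^2=\varrho^{2k}\vartheta_{\rm reg}(\varrho^2)$ with $\vartheta_{\rm reg}(0)=2>0$.
\item \Cref{thm:radial-finite} then gives smoothness when $k$ is even, which is exactly $\ell_{\rm rad}\equiv1\pmod 4$.
\item When $k$ is odd it gives $C^k$ but not $C^{k+1}$, with $k=(\ell_{\rm rad}-1)/2$.
\end{itemize}
Completeness of the smooth branch (for $\ell_{\rm rad}\equiv1\pmod4$) I would note from the at most polynomial growth of $\gamma$ together with tangency: the field preserves radius, so its flow is complete.

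\emph{Step 3: best constant rotation at $t=0$.} Here $b_0=0$, so a constant rotation $\omega J$ sends $x\mapsto(I+h\omega J)x$. This is the scalar $c=\sqrt{1+h^2\omega^2}\ge1$ times an orthogonal map. The output law is therefore the radial law of $cQ$ with uniform direction, and the target is $T(Q)$ with uniform direction.

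I would then show that $W_2^2$ between these two radial laws equals $\E(cQ-T(Q))^2$:
\begin{itemize}
\item For the lower bound, use $|x-y|\ge\bigl||x|-|y|\bigr|$. Since $cQ$ and $T(Q)$ are both increasing functions of $Q$, the comonotone coupling is optimal for the radii, so the lower bound is the one-dimensional $W_2$ of the radii.
\item For attainment, use the coupling that shares the direction and the quantile $Q$.
\end{itemize}

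The error is then the quadratic $c^2M_2-2cC_T+M_T$.
\begin{itemize}
\item Its unconstrained minimizer is $c_{\rm opt}=C_T/M_2$, which satisfies $c_{\rm opt}\ge1$ because $T(Q)\ge Q$. The constraint $c\ge1$ is therefore inactive, and the minimum value is $E_{\rm opt}=M_T-C_T^2/M_2$.
\item The covariance-matching choice imposes $c^2M_2=M_T$, giving $c_{\rm cov}=\sqrt{M_T/M_2}\ge1$. Substituting into the quadratic gives $E_{\rm cov}=2M_T-2c_{\rm cov}C_T$.
\end{itemize}
Finite moments justify every expectation used.
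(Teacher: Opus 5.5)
Your proposal is correct and follows the paper's proof essentially step for step. It uses the factorization $T^2-(\varrho+hb_t)^2=h^2\upsilon^{\ell_{\rm rad}}(2\upsilon+(2t^2+4th+h^2)\upsilon^{\ell_{\rm rad}})$, then the inverse-function reparametrization of $\upsilon$ as $\varrho$ times a smooth function of $\varrho^2$ to invoke the origin-regularity clause of \cref{thm:radial-finite}, and finally the reduction of the constant-rotation cost to $\E(cQ-T(Q))^2$ via the radial lower bound and the shared-angle comonotone coupling. Your completeness aside only treats the tangential gauge's own flow, whereas the paper's remark tracks the full lift (radius following $\mathsf r_t(\upsilon)$, regular angular equation); this does not matter, because completeness is not asserted in the corollary.
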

\begin{proof}
The target radius exceeds its Euler radial chord by $h^2{\upsilon}^{\ell_{\rm rad}}$; expanding their squared difference proves~\eqref{eq:radialgeneral}. The implicit function theorem gives ${\upsilon}={\varrho}\eta({\varrho}^{2k})$ with $\eta$ smooth positive, so the origin result applies. On a fixed finite interval ${\mathsf r}_t({\upsilon})$ remains finite, and the angular equation is regular along its radius, giving complete trajectories for smooth branches. Any vector coupling costs at least its radial coupling; shared uniform angles and increasing quantiles attain this bound. Thus the full cost is $\E(cQ-T(Q))^2$. Matching the second moment gives $c_{\rm cov}$; minimizing the quadratic over $c\ge1$ gives $c_{\rm opt}$, which is at least one. Strict Cauchy--Schwarz gives positive residual for a genuinely nonconstant $T(Q)/Q$.
\end{proof}
For $t>0$ the transformed law is in $\mathcal P_2$ iff $Q$ has a finite $2{\ell_{\rm rad}}$th moment; changing radial shape can therefore invalidate a heavy-tail $W_2$ premise.

\input{supplement_compact/checks/radial}

\subsection{Mixture response and counterexamples}\label{app:examples}\label{sec:mixture}
\begin{theorem}[Responsibility currents and barycentric defect]
\label{prop:mixture-gauge}\label{thm:mixture}
Let ${\rho}=\sum_m\pi_m{\rho}_m$ have fixed positive weights and responsibilities ${\chi}_m=\pi_m{\rho}_m/{\rho}$. Component gauges $u_m$ give the exact gauge $\sum_m{\chi}_mu_m$. For square-integrable steps $T_m=x+hw_m$ let $\bar w=\sum_m{\chi}_mw_m$, $T=x+h\bar w$, $\mu=T_\#{\rho}$, $\widetilde\mu=\sum_m\pi_mT_{m\#}{\rho}_m$, and ${C_{\rm post}}(x)=\sum_m{\chi}_m(w_m-\bar w)(w_m-\bar w)^\top$. The two output means coincide,
\begin{equation}\label{eq:mixcov}
 \Cov\widetilde\mu-\Cov\mu=h^2\E_{\rho}{C_{\rm post}}\succeq0,\qquad \mu\preceq_{\rm cx}\widetilde\mu.
\end{equation}
For $h\ne0$, the laws coincide iff ${C_{\rm post}}=0$ almost surely. Thus componentwise exact steps give a deterministic exact mixture step iff their velocities agree on posterior overlap. With $\mathcal E_{\rm post}=\E\tr {C_{\rm post}}$, $V_{\rm out}=\tr\Cov\widetilde\mu$,
\begin{equation}\label{eq:mixbounds}
 \sqrt{V_{\rm out}}-\sqrt{V_{\rm out}-h^2\mathcal E_{\rm post}}\le W_2(\mu,\widetilde\mu)\le|h|\sqrt{\mathcal E_{\rm post}}.
\end{equation}
\end{theorem}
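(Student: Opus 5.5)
The proof has four parts: the exact gauge, the covariance identity together with the convex ordering, the equality case, and the two-sided $W_2$ bounds.

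\emph{Exact gauge.} Since $\chi_m\rho=\pi_m\rho_m$, we have $\Div(\rho\sum_m\chi_mu_m)=\sum_m\pi_m\Div(\rho_mu_m)=0$.

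\emph{Means, covariances and convex order.} We realize both outputs on one probability space. Draw $X\sim\rho$, then a label $M$ with $\mathbb P(M=m\mid X)=\chi_m(X)$; this gives $(M,X)$ the law $\pi_m\rho_m$. Set $\widetilde Y=X+hw_M(X)$, so $\widetilde Y\sim\widetilde\mu$, and $Y=X+h\bar w(X)$, so $Y\sim\mu$. Then $\E[\widetilde Y\mid X]=Y$, because $\E[w_M(X)\mid X]=\bar w(X)$. Consequently:
\begin{itemize}
\item the means coincide;
\item by the law of total covariance, $\Cov\widetilde Y=\Cov Y+\E\Cov(\widetilde Y\mid X)=\Cov Y+h^2\E_\rho C_{\rm post}$;
\item conditional Jensen applied to a convex $\phi$ gives $\E\phi(Y)\le\E\phi(\widetilde Y)$, i.e.\ $\mu\preceq_{\rm cx}\widetilde\mu$ (this is Strassen's martingale characterization).
\end{itemize}
Square integrability of the $w_m$ justifies all moments.

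\emph{Equality case.} If $C_{\rm post}=0$ a.s., then $w_M(X)=\bar w(X)$ a.s., so $\widetilde Y=Y$ and the laws agree. Conversely, if $\mu=\widetilde\mu$, the covariance traces agree, so $h^2\E\tr C_{\rm post}=0$. Since $h\ne0$ and $C_{\rm post}\succeq0$, this forces $C_{\rm post}=0$ a.s.

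This is the step I expect to be most delicate, because equality of laws must be turned into a pointwise statement. The trace identity does this cleanly, provided finite second moments hold. Without them I would instead use strict Jensen for $|\cdot|^2$ truncated.

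For the corollary on componentwise exact steps, apply this with $T_{m\#}\rho_m$ equal to the target components, so that $\widetilde\mu$ is the target mixture.

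\emph{Upper bound in \eqref{eq:mixbounds}.} The coupling $(Y,\widetilde Y)$ gives $W_2^2\le\E|\widetilde Y-Y|^2=h^2\E\sum_m\chi_m|w_m-\bar w|^2=h^2\mathcal E_{\rm post}$.

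\emph{Lower bound in \eqref{eq:mixbounds}.} Use the reverse Minkowski bound already used in \cref{thm:moment-budget}. For any coupling of two laws with equal means, the $L^2$ norm of the centered difference is at least the difference of the centered RMS norms. So $W_2\ge\sqrt{\tr\Cov\widetilde\mu}-\sqrt{\tr\Cov\mu}$. By the covariance identity, $\tr\Cov\mu=V_{\rm out}-h^2\mathcal E_{\rm post}$, which gives the stated lower bound.
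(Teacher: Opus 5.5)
Your proposal is correct and follows the paper's proof step for step. It uses the same label--state coupling, in which $\widetilde Y=T_M(X)$ has conditional mean $Y=T(X)$; drawing $X$ first and then $M$ with probabilities $\chi_m(X)$ gives the same joint law as the paper's label-first construction. Total covariance and conditional Jensen give \eqref{eq:mixcov}, the trace identity with $C_{\rm post}\succeq0$ gives the equality case, and \eqref{eq:mixbounds} comes from the coupling $(Y,\widetilde Y)$ and reverse Minkowski on centered couplings. The second moments you need follow from square integrability of the steps $T_m$ (rather than of $w_m$ alone), so your truncation fallback is unnecessary.
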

\begin{proof}
The current is ${\rho}\sum {\chi}_mu_m=\sum\pi_m{\rho}_mu_m$, whose divergence is zero. Draw a component label then its state $X$, and let $\widetilde Y=T_M(X)$. Its conditional mean is $Y=T(X)$. Total covariance and conditional Jensen give~\eqref{eq:mixcov}. Equal laws force equal second moments, hence $\E\tr {C_{\rm post}}=0$; positive semidefiniteness forces ${C_{\rm post}}=0$ almost surely, which conversely gives $Y=\widetilde Y$. A weighted covariance vanishes exactly when its positive-weight entries agree. Their coupling has squared displacement $h^2\mathcal E_{\rm post}$, proving the upper bound; reverse Minkowski on centered couplings proves the lower one.
\end{proof}
Add component-target error by triangle inequality. Randomizing the posterior label attains $\widetilde\mu$ but changes the algorithm. The covariance deficit is an on-path local identity, not off-path monotonicity.

\input{supplement_compact/checks/mixtures}

\subsection{Leading constructions and curvature counterexamples}
\begin{proposition}[Leading constructions and falsification examples]
\label{prop:conformal-aniso}\label{prop:radial-cancel}\label{prop:harmful-gauge}\label{prop:curvature-distractor}
For ${\rho}_t=N(0,{\varsigma}^2\diag(a^2,b^2))$, the gauge ${K_{\rm conf}}=cJ\diag(a^2,b^2)^{-1}$ has projected acceleration $(\ddot {\varsigma}/{\varsigma}-c^2/(a^2b^2))x$. For a planar radial lift $b({\varrho})e_{\varrho}+c({\varrho})e_\theta$, acceleration has radial part $\partial_tb+b\partial_{\varrho}b-c^2/{\varrho}$ and angular part $\partial_tc+b\partial_{\varrho}c+bc/{\varrho}$; the latter is density-invisible. These cancel leading error when the corresponding radicands are nonnegative and the field is smooth.

A continuum gauge can also harm Euler: for uniform $\mathbb T^2$, $u=(-\sin x\cos y,\cos x\sin y)$ is solenoidal but $\Div((u\cdot\nabla)u)=\cos2x+\cos2y$. Conversely full acceleration can grow arbitrarily while its projection vanishes: in dimension four, for $\kappa=\ddot {\varsigma}/{\varsigma}>0$ take $B=\beta I+\sqrt\kappa J(t)$, with $J(t)$ an orthogonal complex structure; then $\Pi_{\rho}a_w=0$ for ${\rho}_t=N(0,{\varsigma}^2I)$ even with rapidly varying $J$.
\end{proposition}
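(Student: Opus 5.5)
The plan is to treat the four claims separately. The common tool is the same for all of them. Write the acceleration $a_w=\partial_tw+(\nabla w)w$, split it into a weighted gradient part and a weighted-solenoidal part as in \cref{eq:hodge}, and then read off $\Pi_\rho a_w$. For linear fields $Mx$ at a centered Gaussian $N(0,\Sigma)$, I will first record the splitting explicitly. A field $Sx$ with $S$ symmetric is the gradient $\nabla(\tfrac12x^\top Sx)$. A field $Cx$ with $C\Sigma$ skew satisfies $\Div(\rho Cx)=\rho(\tr C-x^\top C\Sigma\Sigma^{-1}x\cdot0)$; more directly, it is the skew-score current $-(C\Sigma)\nabla\rho$, whose divergence vanishes by \cref{prop:antisymmetric-current}. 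Since every $M$ decomposes uniquely as $M=S+C$ with these two properties, this gives $\Pi_\rho(Mx)=Sx$.

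\emph{Anisotropic conformal gauge.} Take $w=\beta x+K_{\rm conf}x$ with $\beta=\dot\varsigma/\varsigma$ and $\Sigma=\varsigma^2D$, $D=\diag(a^2,b^2)$, with $a,b,c$ fixed.
\begin{enumerate}
\item Check admissibility: $K_{\rm conf}\Sigma=c\varsigma^2J$ is skew, so $K_{\rm conf}x$ is a gauge.
\item Compute the acceleration: $a_w=(\dot\beta+\beta^2)x+2\beta K_{\rm conf}x+K_{\rm conf}^2x$.
\item Use $\dot\beta+\beta^2=\ddot\varsigma/\varsigma$.
\item Use the planar identity $J\diag(p,q)J=-\diag(q,p)$ to get $K_{\rm conf}^2=-c^2(a^2b^2)^{-1}I$.
\item Apply the splitting: the middle term is a gauge and the remaining terms form a symmetric multiple of $x$. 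This yields the stated projection.
\end{enumerate}

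\emph{Planar radial lift.} I will use the standard polar kinematics $\dot e_\varrho=\dot\theta e_\theta$ and $\dot e_\theta=-\dot\theta e_\varrho$, with $\dot\theta=c/\varrho$. This gives radial acceleration $\partial_tb+b\partial_\varrho b-c^2/\varrho$ and angular acceleration $\partial_tc+b\partial_\varrho c+bc/\varrho$. For a radial density, a radial function times $e_\varrho$ is a gradient of a radial potential. A radial amplitude times $e_\theta$ is tangent to level sets and divergence free, hence weighted-solenoidal, exactly as in the proof of \cref{thm:radial-finite}. So only the radial part is visible, and choosing $c^2=\varrho(\partial_tb+b\partial_\varrho b)$ cancels it wherever the radicand is nonnegative and a smooth branch exists. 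The Gaussian case is analogous.

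\emph{Torus counterexample.} This is a direct computation. First, $\Div u=0$, so $u$ is a gauge for the uniform density. Next, $(u\cdot\nabla)u=(\tfrac12\sin2x,\tfrac12\sin2y)$ after trigonometric simplification, whose divergence is $\cos2x+\cos2y\neq0$. By \cref{eq:euler-density-defect} this is a nonzero leading Euler law error, while $u=0$ has none for the stationary law.

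\emph{Curvature distractor.} Set $B=\beta I+\sqrt\kappa J(t)$. Then $a_w=[\dot B+B^2]x$ with
\[
B^2=(\beta^2-\kappa)I+2\beta\sqrt\kappa J,\qquad \dot B=\dot\beta I+\tfrac{\dd}{\dd t}(\sqrt\kappa J).
\]
Differentiating $J^2=-I$ and $J^\top=-J$ shows that $\dot J$ is skew. The isotropic part $\dot\beta+\beta^2-\kappa$ vanishes by the definition of $\kappa$. Everything left is skew, hence a gauge at $N(0,\varsigma^2I)$, so $\Pi_\rho a_w=0$. Meanwhile $\|a_w\|$ contains $\sqrt\kappa\|\dot J x\|$, which is unbounded as $J$ rotates faster; dimension four ensures a nonconstant family of complex structures by \cref{lem:complex-structure-manifold}.

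The main obstacle I expect is making the linear-field splitting at an anisotropic Gaussian airtight, in particular uniqueness and the claim that the symmetric remainder is exactly the projection. The other delicate point is the smoothness and sign caveat for the radial radicand; I would only assert cancellation under those hypotheses, as the statement does.
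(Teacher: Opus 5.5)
Your proposal follows the paper's proof essentially step for step. You split each linear acceleration at the Gaussian into a symmetric (gradient) part and a part $Cx$ with $C\Sigma$ skew. You use polar kinematics for the radial lift and differentiate directly on the torus. For the distractor, you cancel $\dot\beta+\beta^2-\kappa$ so that only Gaussian-solenoidal multiples of $J$ and $\dot J$ remain. One small gap and one loose step need repair.

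\textbf{The gap: you fix $c$.} You take $a,b,c$ constant, but the cancelling choice $c(t)^2=a^2b^2\ddot\varsigma/\varsigma$ is time-dependent unless $\ddot\varsigma/\varsigma$ is constant. As written, your computation therefore proves the cancellation claim only for such special scales. The repair is one line, and it is what the paper writes. The acceleration picks up $\dot K_{\rm conf}x$, and $\dot K_{\rm conf}\Sigma=\dot c\,\varsigma^2J$ is skew. Hence $(\dot K_{\rm conf}+2\beta K_{\rm conf})x$ is weighted-solenoidal and the projected formula is unchanged.

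\textbf{The loose step: unboundedness.} Saying $\|a_w\|$ ``contains'' $\sqrt\kappa\|\dot Jx\|$ does not show the full acceleration is large, because the terms could partly cancel. Write $a_w=(\alpha J+\sqrt\kappa\dot J)x$ with $\alpha=2\beta\sqrt\kappa+\tfrac{\dd}{\dd t}\sqrt\kappa$. Differentiating $J^\top J=I$ gives $\langle J,\dot J\rangle_F=0$, so
\[
\|a_w\|_{L^2(\rho_t)}^2=\varsigma^2\bigl(4\alpha^2+\kappa\|\dot J\|_F^2\bigr).
\]
Then exhibit fast structures. The paper uses $J(t)=e^{MtH}J_0e^{-MtH}$ with skew $H$ and $[H,J_0]\ne0$, which gives $\|\dot J\|_F=M\|[H,J_0]\|_F$. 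Alternatively, reparametrize any nonconstant curve in $O(4)/U(2)$, which has positive dimension.

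\textbf{Minor remarks.} The obstacle you anticipate is not one. You never need uniqueness of $M=S+C$. Any explicit split into an $L^2(\rho)$ gradient plus an element of $\mathcal S_\rho$ identifies $\Pi_\rho$ by orthogonality. Also, your garbled display with ``$\cdot0$'' should read $\Div(\rho Cx)=\rho(\tr C-x^\top\Sigma^{-1}Cx)$. Both terms vanish when $C\Sigma$ is skew.
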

\begin{proof}
For the conformal example ${K_{\rm conf}}\Sigma+\Sigma {K_{\rm conf}}^\top=0$, ${K_{\rm conf}}^2=-c^2/(a^2b^2)I$, and the remaining $\dot {K_{\rm conf}}+2\beta {K_{\rm conf}}$ is weighted-solenoidal. Polar differentiation gives the two radial components; every radial-amplitude angular field has zero weighted divergence. Direct differentiation gives the harmful example's acceleration $(\sin2x,\sin2y)/2$. Finally $\dot B+B^2$ has canceling radial terms $\dot\beta+\beta^2-\kappa=0$; its remaining terms are multiples of skew $J$ and $\dot J$, hence Gaussian-solenoidal. Choose $J(t)=e^{MtH}J_0e^{-MtH}$ with skew $H$ not commuting with $J_0$. Then $\|\dot J\|_F=M\|[H,J_0]\|_F$, and nondegenerate covariance makes full acceleration arbitrarily large on a fixed subinterval while its projection remains zero.
\end{proof}

\FloatBarrier
\paragraph{Controlled curvature check.}\label{app:controlled-curvature}
The rotating-complex-structure fixture checks the last counterexample with its radial path fixed.
\ControlledCurvatureFigure
\FloatBarrier

%% file: supplement_compact/checks/moments.tex
\paragraph{Controlled Euclidean/torus contrast.}\label{app:controlled-moments}
Stationary Euclidean compact-gauge energy 0.02349038 produces the $h^2\E\|u\|^2$ moment defect (residual $2.62\!\times\!10^{-16}$); a periodic torus shear instead has determinant 1 and inverse residual $1.11\!\times\!10^{-16}$. These checks illustrate the boundary of the result; they do not extend the Euclidean obstruction to periodic laws.

%% file: supplement_compact/checks/jets.tex
\paragraph{Controlled current-jet check.}
Compact elliptic/saddle/nilpotent/zero current jets have divergence residual $\leq 1.12\!\times\!10^{-15}$ and finite-difference Jacobian error $\leq 5.01\!\times\!10^{-11}$: local jets, not global portraits.

%% file: supplement_compact/checks/radial.tex
\FloatBarrier
\paragraph{Controlled shellwise check.}\label{app:controlled-radial}
For radial quantiles $\mathsf r_t(\upsilon)=\upsilon+t^2\upsilon^{\ell_{\rm rad}}$, $d=2,t=0,h=0.2$, costs Base/covariance-only/best-constant/shellwise are 0.0768/0.02566059/0.0256/0 for $\ell_{\rm rad}=3$ and 6.144/4.77680735/4.3008/0 for $\ell_{\rm rad}=5$ (\cref{fig:analytical-boundary-cases}). The cubic field is $C^1$, not $C^2$, at the origin; zero shellwise cost does not waive smoothness.
\begin{figure}[!htbp]\centering
\includegraphics[width=\linewidth]{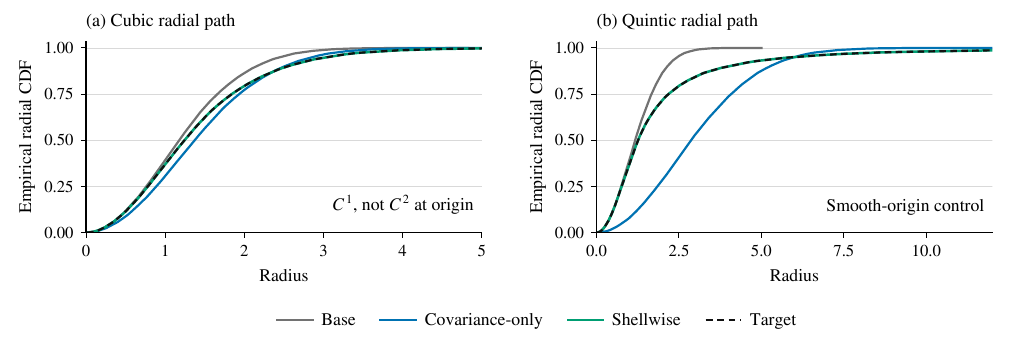}
\caption{Archived radial CDF curves for the cubic and quintic full-law fixtures. The cubic field is $C^1$, not $C^2$, at the origin; the quintic counterpart is smooth. Covariance matching alone does not settle full-law transport. These are point diagnostics without confidence intervals.}\label{fig:analytical-boundary-cases}\label{fig:controlled-radial}
\end{figure}
\FloatBarrier

%% file: supplement_compact/checks/mixtures.tex
\FloatBarrier
\phantomsection\label{app:controlled-mixtures}
\noindent\begin{minipage}{\linewidth}
\textbf{Controlled mixture check.}\par\smallskip
\begin{table}[H]\centering\small
\caption{Separated and overlapping mixture outcomes. All negative and competing-control results are retained.}\label{tab:controlled-mixtures}
\begin{tabularx}{\linewidth}{@{}lX@{}}\toprule Check & Retained result and boundary\\\midrule
E2: hard GMM, N4 & Component-scale RMSE Base/energy-random/analytic FH/target: 0.393/1.096/0.059/0.056; mode-mass errors remain comparable to target sampling.\\
E2: overlap stress, N4 & Analytic FH 0.0875 $\pm$0.0033; calibrated FH 0.0887 $\pm$0.0033; curvature-energy 0.0929 $\pm$0.0021 (eight seeds). Calibration does not improve the analytic field; curvature is competitive.\\
\bottomrule\end{tabularx}\end{table}

\ControlledMixtureFigure
\end{minipage}\par
\FloatBarrier

%% file: supplement_compact/controlled_checks.tex
\section{Controlled-Check Protocols and Regression Scope}\label{app:controlled-checks}\label{app:toy-experiments}
The controlled checks use known laws or exact latent surrogates. Their results follow the relevant proof blocks; \cref{tab:compact-controlled} maps claims to checks. Approximate-score experiments remain separate in \cref{app:experiments}.

\subsection{Shared interpretation}
E0--E2 use analytic Gaussian/mixture paths; E3 uses the data-derived exact latent Digits surrogate. Floating-point residuals, held-out empirical errors and independent target-model sampling baselines are distinct. Deterministic checks have no sampling intervals; seed/refit ranges and supplied standard errors retain the meanings stated beside each figure. Exact sampling settings, compact inputs and portable figure scripts accompany the anonymous code supplement.

\subsection{Additional analytical regressions}\label{app:classification-experiments}\label{app:extension-checks}
The 1,064-record CPU suite includes the 384 Gaussian/Student rows described in \cref{app:controlled-reachability}. Separate algebraic checks retain 348 exhaustive rank/norm cases (gap $\leq 1.43\!\times\!10^{-14}$), overlap quadrature (residual $<5.9\!\times\!10^{-15}$), six radial powers, 15 current identities, 50 anisotropic/singular and 80 amplitude/orientation cases, symbolic score/KID counterexamples and 51 original regressions. These are numerical checks of specific theorem instances, not evidence of neural-model generalization.
\FloatBarrier

%% file: supplement_compact/theory/05_models.tex
\section{Neural Adapters, Validation Boundaries, and Calibration Limits}
\label{app:diffusion}
The established skew-score current \citep{ma2015recipe,ma2022preconditioned,ma2025preconditioned} supports \cref{sec:application,app:neural-protocol}. Here we bound learned-score residuals, derive the independent-path adapter (\cref{app:rf-gauge,app:neural-rf}), and delimit own-error and feature diagnostics (\cref{app:neural-own,app:neural-local}).
\begin{proposition}[Score gauges, leakage, and acceleration]
\label{prop:diffusion-skew}\label{cor:score-lowrank}\label{prop:score-leak}\label{cor:score-leak-prop}\label{lem:diff-acc}
For a state-independent diffusion matrix ${D_{\rm diff}}={G_{\rm noise}}{G_{\rm noise}}^\top$, the Fokker--Planck law has probability-flow velocity $v=f-{D_{\rm diff}}s/2$, $s=\nabla\log {\rho}$. Any spatially constant time-skew ${M_{\rm sk}}$ gives the same path under $v+{M_{\rm sk}}s$. One may use ${M_{\rm sk}}=g^2A/2$ for ${D_{\rm diff}}=g^2I$, or ${M_{\rm sk}}={D_{\rm diff}}^{1/2}A{D_{\rm diff}}^{1/2}/2$, $A$ skew. A low-rank form ${M_{\rm sk}}=UV^\top-VU^\top$ has rank at most $2r$ and applies to a cached score as $U(V^\top s)-V(U^\top s)$ in $O(dr)$ arithmetic without another network evaluation.

For $s_\theta=s+e$, a learned correction $\alpha A s_\theta$ changes the nominal continuity residual by
\begin{equation}\label{eq:app-leak-exact}
 \alpha\Div({\rho}Ae),\qquad
 \|\alpha\Div({\rho}Ae)\|_{\dot H^{-1}({\rho})}\le|\alpha|\|A\|_{\op}\|e\|_{L^2({\rho})}.
\end{equation}
If a linearized source response has a bound with nonnegative ${\mathcal K_{\rm prop}}$, $\|\eta_T\|_X\le\int {\mathcal K_{\rm prop}}(T,t)\|{\mathfrak f}_t\|_{\dot H^{-1}({\rho}_t)}\dd t$, its incremental learned-gauge contribution is bounded by the integral of ${\mathcal K_{\rm prop}}|\alpha|\|A\|_{\op}\|e\|$. For $w=v+\alpha As$, its acceleration is
\begin{equation}\label{eq:app-diff-acc}
 a_w=a_v+\dot\alpha As+\alpha\dot A s+\alpha A\partial_ts
 +\alpha(\nabla v)As+\alpha A(\nabla s)v+\alpha^2A(\nabla s)As.
\end{equation}
\end{proposition}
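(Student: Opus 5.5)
The plan is to prove the four claims of the proposition separately, each reducing to a computation or to an earlier result.

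\emph{Probability-flow velocity and skew gauge.} Start from the Fokker--Planck equation with constant diffusion, $\partial_t\rho=-\Div(\rho f)+\tfrac12\partial_i\partial_j(D_{\rm diff}\rho)_{ij}$. Since $D_{\rm diff}$ is state-independent, $\tfrac12\Div(D_{\rm diff}\nabla\rho)=\Div(\rho\,D_{\rm diff}s/2)$, which rewrites the equation as a continuity equation with velocity $v=f-D_{\rm diff}s/2$.

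For spatially constant skew $M_{\rm sk}$, the current is $\rho M_{\rm sk}s=M_{\rm sk}\nabla\rho$. Its divergence is $\sum_{ij}(M_{\rm sk})_{ij}\partial_i\partial_j\rho$, and this vanishes because a skew matrix paired with a symmetric Hessian gives zero. This is the $M_{\rm sk}$ constant case of \cref{prop:antisymmetric-current}. Path preservation then follows from \cref{prop:poisson-existence}, under the standing uniqueness assumptions.

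The two stated choices of $M_{\rm sk}$ are skew because congruence $D^{1/2}AD^{1/2}$ preserves skewness. For the low-rank form, $(UV^\top-VU^\top)^\top=-(UV^\top-VU^\top)$ and its rank is at most $2r$. Evaluating $U(V^\top s)-V(U^\top s)$ costs two $d\times r$ products each way, reusing the cached score.

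\emph{Leakage.} For $s_\theta=s+e$ the correction $\alpha As_\theta$ equals $\alpha As+\alpha Ae$. The first term has zero weighted divergence, so the residual is $\alpha\Div(\rho Ae)$. \Cref{eq:negative-norm-identity} gives $\|\Div(\rho z)\|_{\dot H^{-1}}=\|\Pi_\rho z\|\le\|z\|$ with $z=\alpha Ae$. Bounding $\|Ae\|_{L^2(\rho)}\le\|A\|_{\op}\|e\|$ then gives \cref{eq:app-leak-exact}. The propagation claim follows by linearity: insert this source bound into the assumed kernel inequality and integrate.

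\emph{Acceleration.} Apply \cref{eq:material-acc} to $w=v+\alpha As$, so $a_w=\partial_tw+(\nabla w)w$. The time derivative gives $\partial_tv+\dot\alpha As+\alpha\dot As+\alpha A\partial_ts$. The Jacobian is $\nabla w=\nabla v+\alpha A\nabla s$ since $A$ is constant in space. Multiplying by $w=v+\alpha As$ yields $(\nabla v)v+\alpha(\nabla v)As+\alpha A(\nabla s)v+\alpha^2A(\nabla s)As$, and combining with $a_v=\partial_tv+(\nabla v)v$ gives \cref{eq:app-diff-acc}.

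\emph{Main obstacle.} The only delicate step is the Euclidean justification of the integrations by parts and the negative-norm identity on $\R^d$. For this I would invoke the weighted Poincar\'e and decay assumptions already allowed in \cref{prop:poisson-existence}, rather than reprove them. If that route is unavailable, the inequality direction alone follows from the dual-norm definition and Cauchy--Schwarz: $|\int\rho Ae\cdot\nabla\psi|\le\|Ae\|\|\nabla\psi\|$, which needs no Poincar\'e inequality.
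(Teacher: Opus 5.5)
Your proposal is correct and follows the paper's proof step for step: the Fokker--Planck rewriting, the vanishing of $\tr(M_{\rm sk}\nabla^2\rho)$ for spatially constant skew $M_{\rm sk}$, associativity for the low-rank form, subtraction of the exact-score current followed by the negative-norm projection identity and Cauchy--Schwarz, substitution into the assumed kernel bound, and expansion of $a_{v+u}$ with $u=\alpha As$. Your fallback for the Euclidean case is sound and slightly more economical than invoking the projection identity: the inequality follows from the dual-norm definition and Cauchy--Schwarz alone, with no weighted Poincar\'e inequality.
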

\begin{proof}
The added current is ${M_{\rm sk}}\nabla {\rho}$, whose divergence is $\tr({M_{\rm sk}}\nabla^2{\rho})=0$. Expanding the Fokker--Planck equation gives the stated probability flow, and the specified mobilities are skew. The low-rank expression is matrix associativity. Subtract the exact-score current to get~\eqref{eq:app-leak-exact}; the negative-norm projection identity and Cauchy--Schwarz give its bound. Substitution into the assumed response inequality proves its propagated version. Expanding $a_{v+{u_{\rm sk}}}=a_v+\partial_t{u_{\rm sk}}+(\nabla v){u_{\rm sk}}+(\nabla {u_{\rm sk}})v+(\nabla {u_{\rm sk}}){u_{\rm sk}}$ with ${u_{\rm sk}}=\alpha As$ proves~\eqref{eq:app-diff-acc}.
\end{proof}
For VP, $f=-\beta_{\rm VP} x/2$, $g^2=\beta_{\rm VP}$, so $w=-\beta_{\rm VP}[x+(I-A)s]/2$; VE uses $f=0$. In the EDM noise coordinate,
\begin{equation}\label{eq:app-edm-base}
 v_\theta=(x-D_\theta(x,\sigma))/\sigma,\qquad
 s_\theta=(D_\theta-x)/\sigma^2,\qquad w_\theta=(I-A_\sigma)v_\theta,
\end{equation}
corresponding to score mobility $M_\sigma^{\rm score}=\sigma A_\sigma$. The residual factor $L_\sigma^{\rm res}=A_\sigma/\sigma$ (stored as $B_\sigma^{\rm res}$ in the immutable experiment records) instead multiplies $x-D_\theta$; operator caps and norms require these distinct factors. Use signed negative noise increments and the checkpoint's $\sigma=0$ rule. Score reuse preserves the number of explicit sampler evaluations, but not arithmetic cost, wall-clock time, or the cost profile of other solvers.

The population leading calibration objective is $\int\|\Pi_{\rho}a_w\|^2+\lambda\E|\alpha As|^2\,\dd t$. Finite Galerkin and characteristic objectives are projection proxies. Even a conservative approximate score can leak: for ${\rho}=N(0,I_2)$ and $s_\theta=(-x_1+\epsilon x_1,-x_2)$, ${\rho}^{-1}\Div({\rho}Js_\theta)=-\epsilon x_1x_2$. When comparing two learned flows the relevant source is
\begin{equation}
 \Div({\rho}_t^{\rm B}{M_{\rm sk}}_ts_{\theta,t})
 =\Div\!\left({\rho}_t^{\rm B}{M_{\rm sk}}_t[s_{\theta,t}-\nabla\log {\rho}_t^{\rm B}]\right),
\end{equation}
for the actual learned-base law ${\rho}_t^{\rm B}$, not a nominal forward-score mismatch. Stability bounds retain their regularity assumptions. The pretrained Euclidean flows and piecewise-linear log-noise profiles have no verified torus positivity or uniform smooth modified expansion. Matched-seed displacement also measures particle recoupling, so it does not certify law error.

\paragraph{Empirical counterpart.}
The frozen EDM own-field and receiving-solver tests are in \cref{app:neural-own,app:neural-solver-rank}; score reuse and approximate scores do not make their measured gains exact-path certificates.

\subsection{A valid independent-Gaussian path adapter}\label{app:rf-gauge}
\begin{proposition}[Score recovery and division-free skew correction]\label{prop:rf-score-gauge}
Let $Z\sim N(0,I)$ be independent of $Y$, $X_t=(1-t)Z+tY$, and $v_t(x)=\E[Y-Z\mid X_t=x]$, $0<t<1$, with the differentiability and integrability needed below. Then
\begin{equation}\label{eq:rf-score-recovery}
 \nabla\log {\rho}_t=(tv_t-x)/(1-t),\qquad
 u_t=t(1-t){K^{\rm RF}}_t(tv_t-x)=t(1-t)^2{K^{\rm RF}}_t\nabla\log {\rho}_t,
\end{equation}
is weighted-solenoidal for spatially constant time-skew ${K^{\rm RF}}_t$. A neural velocity $v_\theta$ adds nominal residual $t^2(1-t)\Div({\rho}_t{K^{\rm RF}}_t(v_\theta-v_t))$, bounded in negative norm by $t^2(1-t)\|{K^{\rm RF}}_t\|_{\op}\|v_\theta-v_t\|_{L^2({\rho}_t)}$.
\end{proposition}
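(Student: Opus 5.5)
The plan has three parts: establish the Gaussian interpolation score identity, then deduce solenoidality from the skew-score construction already used for diffusion paths, and finally bound the leakage with the negative-norm projection identity.

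\emph{Score identity.} Write $X_t=(1-t)Z+tY$. Conditioning on $Y$, the law of $X_t$ is a Gaussian mixture, so $\rho_t(x)=\E_Y\,\varphi_{(1-t)^2}(x-tY)$. Differentiating under the integral, which the stated integrability permits, gives
\[
\nabla\log\rho_t(x)=-\frac{x-t\,\E[Y\mid X_t=x]}{(1-t)^2}=-\frac{\E[Z\mid X_t=x]}{1-t}.
\]
Next I will express $\E[Y\mid X_t]$ through $v_t$. Two relations hold:
\[
\E[Y-Z\mid X_t]=v_t,\qquad (1-t)\E[Z\mid X_t]+t\,\E[Y\mid X_t]=x .
\]
Solving this linear pair gives
\[
\E[Z\mid X_t]=x-t v_t,\qquad \E[Y\mid X_t]=x+(1-t)v_t .
\]
Substituting the first expression yields $\nabla\log\rho_t=(tv_t-x)/(1-t)$. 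The second equality in \cref{eq:rf-score-recovery} then follows by multiplying by $t(1-t)^2K^{\rm RF}_t$.

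\emph{Solenoidality.} The correction has the form $u_t=M_t\nabla\log\rho_t$ with $M_t=t(1-t)^2K^{\rm RF}_t$, which is spatially constant and skew. Hence $\rho_tu_t=M_t\nabla\rho_t$, and its divergence is $\tr(M_t\nabla^2\rho_t)=0$ because a skew matrix pairs to zero with a symmetric Hessian. This is exactly the computation in \cref{prop:diffusion-skew}, or the constant-mobility case of \cref{prop:antisymmetric-current}. The prefactor $t(1-t)$ makes the correction division-free and vanish at the endpoints.

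\emph{Leakage.} Replacing $v_t$ by $v_\theta$ in the division-free form changes the correction by
\[
t(1-t)K^{\rm RF}_t\,t\,(v_\theta-v_t)=t^2(1-t)K^{\rm RF}_t(v_\theta-v_t).
\]
The nominal continuity residual is therefore the stated divergence. By \cref{eq:negative-norm-identity}, its $\dot H^{-1}(\rho_t)$ norm equals $\|\Pi_{\rho_t}z\|$ with $z=t^2(1-t)K^{\rm RF}_t(v_\theta-v_t)$. This is at most $\|z\|_{L^2(\rho_t)}$, which is in turn at most $t^2(1-t)\|K^{\rm RF}_t\|_{\op}\|v_\theta-v_t\|_{L^2(\rho_t)}$.

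The main obstacle is justifying the differentiation under the expectation in the first step and, on $\R^d$, the negative-norm identity; both rely on the weighted Poincar\'e and decay hypotheses that the statement assumes. Should the identity itself be unavailable, the upper bound can be obtained directly. Test against $\psi$ and apply Cauchy--Schwarz to $\int\rho_t z\cdot\nabla\psi$, which needs only integration by parts with sufficient decay.
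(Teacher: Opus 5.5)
Your proposal is correct and follows essentially the same route as the paper. Both differentiate the Gaussian convolution to get $\E[Z\mid X_t]=-(1-t)\nabla\log\rho_t$, combine $v_t=\E[Y-Z\mid X_t]$ with $x=t\E[Y\mid X_t]+(1-t)\E[Z\mid X_t]$ to obtain $tv_t-x=-\E[Z\mid X_t]$, kill the Hessian current by skewness, and bound the leakage by projection and Cauchy--Schwarz.

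Your stated ``main obstacle'' is milder than you suggest. For $0<t<1$ the Gaussian kernel and its gradient are bounded, so differentiating under the expectation needs no Poincar\'e or moment hypothesis. The upper bound needs only the duality definition of the $\dot H^{-1}(\rho_t)$ norm, exactly as in your fallback.
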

\begin{proof}
Differentiating the Gaussian convolution gives $\E[Z\mid X_t=x]=-(1-t)\nabla\log {\rho}_t(x)$. The conditional identity $x=t\E[Y\mid X_t]+(1-t)\E[Z\mid X_t]$ yields $tv_t-x=-\E[Z\mid X_t]$, proving score recovery. Skewness annihilates the Hessian current. Subtracting the exact velocity gives the learned residual, and projection/Cauchy--Schwarz give its bound.
\end{proof}
For bounded ${K^{\rm RF}}$, state and velocity, the correction vanishes at endpoints and is implemented as zero there. The derivation requires the independent Gaussian path, not generic, correlated reflow or distilled velocities. The score-proxy pilot starts a Gaussian at $\epsilon=0.001$, approximating ${\rho}_\epsilon$; actual-base score mismatch remains separate.

\paragraph{Empirical counterpart.}
The independent-path RF pilot in \cref{app:neural-rf} retains both native budgets, their own references and null quality outcomes under this score-proxy adapter.

\subsection{Numerical benefit and finite-feature blind spots}
\label{app:benefit}\label{app:feature-limits}
Let ${\mathcal E}_{\rm B}=W_2(\mu_N^{\rm B},{\rho}_T^{\rm B})$, ${\mathcal E}_{\rm G}=W_2(\mu_N^{\rm G},{\rho}_T^{\rm G})$ be numerical errors to each representative's own exact law, and $\Lambda=W_2({\rho}_T^{\rm B},{\rho}_T^{\rm G})$. Triangle and reverse triangle inequalities give
\begin{equation}
 {\mathcal E}_{\rm B}-{\mathcal E}_{\rm G}-\Lambda\le {\mathcal E}_{\rm B}-W_2(\mu_N^{\rm G},{\rho}_T^{\rm B})
 \le {\mathcal E}_{\rm B}-{\mathcal E}_{\rm G}+\Lambda.
\end{equation}
Thus ${\mathcal E}_{\rm B}-{\mathcal E}_{\rm G}>\Lambda$ suffices for improvement against the common base target. FID/KID and dense-reference refinement do not supply these population quantities or a continuum-leakage certificate.
For any metric or pseudometric $\mathfrak d$ satisfying the triangle inequality, approximate own-reference laws $\widehat\rho_T^{\rm B},\widehat\rho_T^{\rm G}$ obey
\begin{equation}\label{eq:reference-error-budget}
\begin{aligned}
&\left|[\mathfrak d(\mu_N^{\rm B},\widehat\rho_T^{\rm B})-\mathfrak d(\mu_N^{\rm G},\widehat\rho_T^{\rm G})]
-[\mathfrak d(\mu_N^{\rm B},\rho_T^{\rm B})-\mathfrak d(\mu_N^{\rm G},\rho_T^{\rm G})]\right|\\
&\hspace{5em}\le \mathfrak d(\widehat\rho_T^{\rm B},\rho_T^{\rm B})+\mathfrak d(\widehat\rho_T^{\rm G},\rho_T^{\rm G}).
\end{aligned}
\end{equation}
Apply reverse triangle inequality to each distance. Small own-error gains require both reference errors, sampling and projection control; 256/512-step differences supply neither reference bound. This applies to $W_2$ and fixed-feature, fixed-projection sliced-$W_2$ pseudometrics, not squared FID or KID. No image-space bound on $\Lambda$ or population reference errors is claimed.

The scalar degree-three kernel has $\mathrm{MMD}_{(1+xy)^3}^2({\mu_a},{\mu_b})=\sum_{j=0}^3\binom3j(\E_{\mu_a}X^j-\E_{\mu_b}X^j)^2$, by kernel expansion and independence. Yet ${\mu_a}=N(0,1)$ and ${\mu_b}=\frac12N(a,1-a^2)+\frac12N(-a,1-a^2)$, $0<a<1$, have identical moments through degree three and zero Gaussian-moment FID, while $\E_{\mu_b}X^4=3-2a^4\ne3$. They are distinct smooth $\mathcal P_2$ laws, so $W_2>0$. Polynomial features therefore have population blind spots \citep{binkowski2018demystifying}; characteristic kernels also cannot invert a noninjective feature map. Numerical convergence, feature-level discrepancy, real-data quality, and marginal preservation are distinct claims.

\paragraph{Empirical counterpart.}
Own-reference, local-response and real-image-quality diagnostics are kept separate in \cref{app:neural-own,app:neural-local}. The next subsection analyzes the calibration estimator itself; the implemented endpoint/objective and fitting families are specified in \cref{app:calibration-algorithm}.
\FloatBarrier
\input{supplement_compact/calibration_limits}

%% file: supplement_compact/calibration_limits.tex
\subsection{Limits of paired characteristic calibration}\label{app:calibration-limits}
The squared difference of \emph{paired} feature means in \cref{eq:calibration-characteristic}, population energy penalization and zero initialization have distinct limitations. These controlled results establish neither population optimality of the unchanged archived neural fits nor neural FID gains.

\paragraph{Fixed features, fixed parameters, and the sampling target.}
Fix $\Theta$, both endpoint maps, feature directions and scales. Concatenate sine/cosine pairs with factor $M_{\rm feat}^{-1/2}$ into $\Phi$. For $n_{\rm b}\ge1$ pairs, put
\[
 \Delta_\Theta(X)=\Phi(\widehat T_\Theta(X))-\Phi(T^B(X)),\qquad
 V_{\rm batch}=\left\|n_{\rm b}^{-1}\sum_{i=1}^{n_{\rm b}}\Delta_\Theta(X_i)\right\|^2.
\]
Dependence between the two outputs sharing $X_i$ is allowed. \begin{samepage}
\begin{lemma}[Paired calibration mean]\label{lem:paired-calibration}
For iid input pairs of finite second moment, the paired mean objective satisfies
\begin{equation}\label{eq:paired-v-population}
 \E V_{\rm batch}=\|\E\Delta_\Theta\|^2+\frac{1}{n_{\rm b}}\E\|\Delta_\Theta-\E\Delta_\Theta\|^2.
\end{equation}
\end{lemma}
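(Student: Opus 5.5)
The identity is the bias--variance decomposition for the sample mean of iid vectors. I will write $\mu=\E\Delta_\Theta$ and $Y_i=\Delta_\Theta(X_i)-\mu$. The mean $\mu$ is well defined because each feature map $\Phi$ is a bounded vector of scaled sines and cosines. Each $\Delta_\Theta(X_i)$ is therefore bounded, so it has finite second moment; the finite-second-moment hypothesis on the inputs is more than enough. The $Y_i$ are iid, centered and square integrable, since $\Delta_\Theta(X_i)$ is a fixed measurable function of the $i$th input pair alone. Dependence between $\widehat T_\Theta(X_i)$ and $T^B(X_i)$ sits inside a single $Y_i$ and does not affect independence across $i$.

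The first step is to expand the batch mean as $n_{\rm b}^{-1}\sum_i\Delta_\Theta(X_i)=\mu+\bar Y$, with $\bar Y=n_{\rm b}^{-1}\sum_iY_i$. Squaring the norm gives
\[
V_{\rm batch}=\|\mu\|^2+2\langle\mu,\bar Y\rangle+\|\bar Y\|^2 .
\]
Taking expectations, the cross term vanishes because $\E\bar Y=0$ and $\mu$ is deterministic.

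The second step is to compute $\E\|\bar Y\|^2=n_{\rm b}^{-2}\sum_{i,j}\E\langle Y_i,Y_j\rangle$. For $i\ne j$, independence together with zero mean gives $\E\langle Y_i,Y_j\rangle=\langle\E Y_i,\E Y_j\rangle=0$. Integrability of the product follows from Cauchy--Schwarz. The $n_{\rm b}$ diagonal terms each equal $\E\|\Delta_\Theta-\mu\|^2$, so $\E\|\bar Y\|^2=n_{\rm b}^{-1}\E\|\Delta_\Theta-\E\Delta_\Theta\|^2$. Combining the two steps yields \eqref{eq:paired-v-population}.

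No step is a genuine obstacle. The only point needing care is to state that the feature vector is finite-dimensional, or a Hilbert-space element, so that the inner-product expansion and the exchange of expectation with the finite sums are legitimate. The other point is to stress that pairing enters only within each summand, never across summands.
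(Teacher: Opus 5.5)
Your proof is correct and follows the paper's argument, which simply expands the square and separates the equal-index ($i=j$) terms from the distinct-index ($i\ne j$) terms, as you do after centering at $\mu=\E\Delta_\Theta$. Your remarks on two points are useful clarifications rather than a different route: the bounded sine/cosine features already make $\Delta_\Theta$ square integrable, and the within-pair dependence stays inside each summand.
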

\begin{proof} Expand the square and separate equal from distinct pair indices.\end{proof}
\end{samepage}
Only the first term depends solely on output feature means; the second generally depends on the common-input coupling and parameters, even for equal output laws. This is the V-/U-statistic distinction \citep{gretton2012kernel} for paired calibration.

For a fixed bank of $n_{\rm c}\ge2$ vectors $\Delta_i$, define its mean $\overline\Delta_{\rm bank}$, $C_{\rm bank}=\|\overline\Delta_{\rm bank}\|^2$, $D_{\rm bank}=n_{\rm c}^{-1}\sum_i\|\Delta_i\|^2$ and $S_{\rm bank}^2=(n_{\rm c}-1)^{-1}\sum_i\|\Delta_i-\overline\Delta_{\rm bank}\|^2$. Conditional on that bank,
\begin{equation}\label{eq:paired-v-finite-bank}
 \E_{\rm batch}V_{\rm batch}=
 \begin{cases}
 C_{\rm bank}+(n_{\rm b}^{-1}-n_{\rm c}^{-1})S_{\rm bank}^2,&\text{uniformly without replacement},\\
 C_{\rm bank}+(D_{\rm bank}-C_{\rm bank})/n_{\rm b},&\text{iid indices with replacement}.
 \end{cases}
\end{equation}
The first identity follows from one- and two-index inclusion probabilities. Full-cloud evaluation ($n_{\rm b}=n_{\rm c}$, without replacement) adds no resampling term. For an iid bank and independently fixed $\Phi$, $\E C_{\rm bank}$ still has the population variance term in \cref{eq:paired-v-population} with $n_{\rm b}=n_{\rm c}$.

For $n_{\rm b}\ge2$, write $D_{\rm batch}=n_{\rm b}^{-1}\sum_i\|\Delta_i\|^2$. The paired U-statistic and a correction targeting the complete empirical bank are
\begin{equation}\label{eq:paired-u-correction}
 U_{\rm batch}^{\rm pair}=\frac{n_{\rm b}V_{\rm batch}-D_{\rm batch}}{n_{\rm b}-1}
 =\frac{\sum_{i\ne j}\langle\Delta_i,\Delta_j\rangle}{n_{\rm b}(n_{\rm b}-1)},
 \qquad
 \widehat C_{\rm bank}=\frac{n_{\rm c}-1}{n_{\rm c}}U_{\rm batch}^{\rm pair}+\frac{1}{n_{\rm c}} D_{\rm batch}.
\end{equation}
For iid fresh pairs, $\E U_{\rm batch}^{\rm pair}=\|\E\Delta_\Theta\|^2$. For replacement sampling from a fixed bank, $\E_{\rm batch}U_{\rm batch}^{\rm pair}=C_{\rm bank}$. Without replacement it instead equals $C_{\rm bank}-S_{\rm bank}^2/n_{\rm c}$, whereas $\E_{\rm batch}\widehat C_{\rm bank}=C_{\rm bank}$. Indeed $S_{\rm bank}^2=n_{\rm c}(D_{\rm bank}-C_{\rm bank})/(n_{\rm c}-1)$ and $\E D_{\rm batch}=D_{\rm bank}$, which prove both claims. Excluding $i=j$ must also exclude the cross terms between the two outputs within a pair. Both corrected estimates can be negative; clipping at zero reintroduces bias.

These identities are pointwise in $\Theta$. Conditional bank identities allow a feature map fitted once on that bank and frozen, without population or post-selection guarantees. Batch-dependent scales invalidate the displayed iid formula: population interpretation requires independent features/scales or explicit dependence analysis. Unbiased estimators guarantee neither smaller gradient variance nor better finite optimization.

\paragraph{Energy control.}
\begin{samepage}
\begin{proposition}[Energy control of a returned correction]\label{prop:coercive-calibration}
Suppose $\rho\in\mathcal P_2$, $X\sim\rho$, $v\in L^2(\rho)$, and the Base reference $F_h$ satisfies $\|\Id+hv-F_h\|_{L^2(\rho)}\le C_{\rm ref}h^2$. Let $\Phi$ be uniformly Lipschitz with constant $L_\Phi$, define
$\mathfrak C_h(u)=\|\E\Phi(X+h(v+u)(X))-\E\Phi(F_h(X))\|^2$, and assume $\mathcal R_{{\rm pen},h}(0)=0$, $\mathcal R_{{\rm pen},h}(u)\ge c\|u\|_{L^2(\rho)}^2$ uniformly, $c>0$. Let $\lambda_h>0$ and $\varepsilon_h\ge0$. Any returned correction satisfying
$\mathfrak C_h(u_h)+\lambda_h\mathcal R_{{\rm pen},h}(u_h)\le\mathfrak C_h(0)+\varepsilon_h$ obeys
\begin{equation}\label{eq:energy-calibration-coercivity}
 \|u_h\|_{L^2(\rho)}^2\le
 \frac{L_\Phi^2C_{\rm ref}^2h^4+\varepsilon_h}{c\lambda_h}.
\end{equation}
With fixed $\lambda_h=\lambda>0$ and $\varepsilon_h=O(h^4)$, the \emph{norm} $\|u_h\|_{L^2}=O(h^2)$, so the common-input coupling gives
\begin{equation}\label{eq:energy-calibration-leading-defect}
 W_2((\Id+h(v+u_h))_\#\rho,(\Id+hv)_\#\rho)
 \le |h|\|u_h\|_{L^2}=O(|h|^3).
\end{equation}
Consequently a Base local law error $c_0h^2+o(h^2)$, $c_0>0$, retains its leading coefficient by the reverse triangle inequality. This conclusion needs no derivative bound on $u_h$. \end{proposition}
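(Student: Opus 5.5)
The plan is to bound $\mathfrak C_h(0)$ from above, use the returned-correction inequality to transfer that bound to the penalty, and then apply coercivity. The first step uses the Lipschitz property of $\Phi$ and Jensen's inequality with a common-input coupling:
\[
\|\E\Phi(X+hv(X))-\E\Phi(F_h(X))\|\le L_\Phi\,\E|X+hv(X)-F_h(X)|\le L_\Phi\|\Id+hv-F_h\|_{L^2(\rho)}\le L_\Phi C_{\rm ref}h^2 .
\]
Hence $\mathfrak C_h(0)\le L_\Phi^2C_{\rm ref}^2h^4$. For vector-valued $\Phi$, the Lipschitz constant is taken in the Euclidean norm of the feature space. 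The mean of a vector has norm at most the mean of the norms, so the bound is unchanged.

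Next, $\mathfrak C_h(u_h)\ge0$ and the returned-correction inequality give $\lambda_h\mathcal R_{{\rm pen},h}(u_h)\le \mathfrak C_h(0)+\varepsilon_h$. Dividing by $\lambda_h>0$ and using the uniform coercivity $\mathcal R_{{\rm pen},h}(u_h)\ge c\|u_h\|^2_{L^2(\rho)}$ yields \eqref{eq:energy-calibration-coercivity} directly. When $\lambda_h=\lambda$ is fixed and $\varepsilon_h=O(h^4)$, the right side is $O(h^4)$, so $\|u_h\|_{L^2(\rho)}=O(h^2)$.

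For \eqref{eq:energy-calibration-leading-defect}, couple the two pushforwards through the same draw $X$. Their displacement is exactly $h\,u_h(X)$, so $W_2\le|h|\,\|u_h\|_{L^2(\rho)}=O(|h|^3)$, the same common-source bound used in \cref{prop:response-compact}. Finally, write $\mu_h^{u}$ and $\mu_h^{0}$ for the corrected and Base Euler laws, with target $\rho_{t+h}$. The reverse triangle inequality gives $|W_2(\mu_h^{u},\rho_{t+h})-W_2(\mu_h^{0},\rho_{t+h})|\le W_2(\mu_h^{u},\mu_h^{0})=O(|h|^3)=o(h^2)$. The expansion $c_0h^2+o(h^2)$ therefore persists with the same $c_0$, and no derivative of $u_h$ enters anywhere.

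I expect no serious obstacle, since each step is one inequality. The only points needing care are bookkeeping. One is the vector-valued Jensen step. The other is checking that the pushforwards lie in $\mathcal P_2$ so that $W_2$ is finite; this holds because $v,u_h\in L^2(\rho)$ and $\rho\in\mathcal P_2$.
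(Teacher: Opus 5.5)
Your proposal is correct and follows essentially the same route as the paper's proof: Jensen plus Lipschitz continuity to bound $\mathfrak C_h(0)\le L_\Phi^2C_{\rm ref}^2h^4$, nonnegativity of $\mathfrak C_h(u_h)$ together with coercivity of the penalty, then the common-input coupling and the reverse triangle inequality. Your added bookkeeping on the vector-valued Jensen step and on finiteness of the second moments is sound but not needed beyond what the paper states.
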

\begin{proof} Jensen and Lipschitz continuity bound $\mathfrak C_h(0)\le L_\Phi^2C_{\rm ref}^2h^4$; nonnegativity of the returned loss gives \cref{eq:energy-calibration-coercivity}. The common-input coupling and reverse triangle inequality give the transport and leading-coefficient claims.\end{proof}
\end{samepage}
The population denominator $\max\{\E\|v\|^2,d\,10^{-6}\}$ in \cref{eq:calibration-penalty} supplies coercivity when uniformly upper bounded; check this near singular endpoints. On a finite bank the argument controls empirical energy, requiring a generalization estimate for population transport. Node averaging controls mean energy, not its supremum.

\paragraph{Expanding Gaussian example with the same sine/cosine features.}
For the mechanism in \cref{prop:separation}, let $h>0$, $X\sim N(0,I_2)$, $\rho_t=N(0,e^{2t}I_2)$, $v(x)=x$, and $u_\omega(x)=\omega Jx$, where $J^\top=-J$ and $J^2=-I$. Every $v+u_\omega$ transports this density path. At $t=0$ use exact Base reference $e^hX$ and fixed unit-scale pair $\Phi(x)=(\cos x_1,\sin x_1)$. Set
\[
 a_{\rm E}=1+h,\quad r_{\rm ref}=e^h,\quad
 c_{\rm E}=e^{-a_{\rm E}^2/2},\quad c_{\rm ref}=e^{-r_{\rm ref}^2/2},\quad
 c_{\rm pair}=e^{-(a_{\rm E}-r_{\rm ref})^2/2},\quad \chi=h^2\omega^2.
\]
The first Euler coordinate is $a_{\rm E}X_1-h\omega X_2$. The Gaussian characteristic function therefore gives the population feature discrepancy and expected iid minibatch objective
\begin{equation}\label{eq:gaussian-calibration-objectives}
 \begin{aligned}
 C_h(\chi)&=(c_{\rm E}e^{-\chi/2}-c_{\rm ref})^2,\\
 L_{{\rm batch},h}(\chi)&=(1-n_{\rm b}^{-1})C_h(\chi)+2n_{\rm b}^{-1}(1-c_{\rm pair}e^{-\chi/2}).
 \end{aligned}
\end{equation}
The population minimum $\chi_{\rm exact}=r_{\rm ref}^2-a_{\rm E}^2>0$ matches the \emph{entire} Gaussian law, with covariance $(a_{\rm E}^2+\chi)I_2$.

\paragraph{Joint sampling and energy suppression.}
The population normalization of \cref{eq:calibration-penalty} gives $\mathcal R_{\rm pen}(\omega Jx)=\omega^2$. For $\theta=\omega^2$, combine the expected iid minibatch objective with this energy term:
\begin{equation}\label{eq:gaussian-joint-objective}
 L_{{\rm joint},h}(\theta)
 =(1-n_{\rm b}^{-1})(c_{\rm E}e^{-h^2\theta/2}-c_{\rm ref})^2
 +\frac{2}{n_{\rm b}}(1-c_{\rm pair}e^{-h^2\theta/2})+\lambda\theta.
\end{equation}
This population penalty differs from the expected random minibatch energy ratio in \cref{eq:calibration-penalty}.
\begin{proposition}[Joint minibatch and energy threshold]
\label{prop:gaussian-joint-calibration}
In the fixed-scale Gaussian model above, add the population energy
$\lambda\theta$, $\theta=\omega^2$, to the expected iid minibatch objective.
Let $n_{\rm b}\in\{2,3,\ldots\}\cup\{\infty\}$, $\lambda\ge0$,
$0<\Omega<\infty$, and use $1/\infty=0$. Define
\[
 \beta_{\rm batch}=1-n_{\rm b}^{-1},\quad
 c_{\rm mix}=\beta_{\rm batch}c_{\rm E}c_{\rm ref}
             +c_{\rm pair}/n_{\rm b},\quad
 q_0=\frac{c_{\rm mix}+\sqrt{c_{\rm mix}^2+
           4\beta_{\rm batch}c_{\rm E}^2\lambda/h^2}}
          {2\beta_{\rm batch}c_{\rm E}^2}.
\]
The unique minimum over $0\le\theta\le\Omega^2$ is
\[
 q_* = \operatorname{clip}(q_0,[e^{-h^2\Omega^2/2},1]),\qquad
 \theta_*=-2h^{-2}\log q_*.
\]
It is zero if and only if
\[
 \lambda\ge\lambda_{{\rm crit},{\rm batch}}
 :=h^2\left[\beta_{\rm batch}c_{\rm E}(c_{\rm E}-c_{\rm ref})
                  -c_{\rm pair}/n_{\rm b}\right].
\]
For $n_{\rm b}=1$ or $\Omega=0$, the minimum is zero.
Uniqueness concerns $\theta$; the signs of a nonzero $\omega$ are equivalent.
\end{proposition}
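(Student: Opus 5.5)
The natural move is to substitute $q=e^{-h^2\theta/2}\in[e^{-h^2\Omega^2/2},1]$. This substitution is strictly decreasing in $\theta$, so minimizing over $\theta$ is equivalent to minimizing over $q$. With $\theta=-2h^{-2}\log q$, the objective \cref{eq:gaussian-joint-objective} becomes
\[
 F(q)=\beta_{\rm batch}(c_{\rm E}q-c_{\rm ref})^2+\frac{2}{n_{\rm b}}(1-c_{\rm pair}q)-\frac{2\lambda}{h^2}\log q .
\]
I would show that $F$ is strictly convex on $(0,1]$ when $\beta_{\rm batch}>0$. The quadratic term has second derivative $2\beta_{\rm batch}c_{\rm E}^2>0$, the linear term contributes nothing, and $-\log q$ is convex. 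A strictly convex function on a compact interval has a unique minimizer. That minimizer is the clip of the unconstrained stationary point onto the interval, provided that point exists and $F$ is decreasing to its left. Uniqueness in $q$ then gives uniqueness in $\theta=\omega^2$, which covers the remark about the sign of $\omega$.

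Next I would compute the stationary point. Up to the factor $2$,
\[
 F'(q)=2\Bigl[\beta_{\rm batch}c_{\rm E}^2q-\bigl(\beta_{\rm batch}c_{\rm E}c_{\rm ref}+c_{\rm pair}/n_{\rm b}\bigr)-\frac{\lambda}{h^2q}\Bigr]
 =\frac{2}{q}\Bigl[\beta_{\rm batch}c_{\rm E}^2q^2-c_{\rm mix}q-\lambda/h^2\Bigr].
\]
The bracket is a quadratic in $q$ with negative value $-\lambda/h^2\le0$ at $q=0$, so it has exactly one nonnegative root, namely $q_0$. It is negative on $(0,q_0)$ and positive beyond. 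Hence $F$ decreases up to $q_0$ and increases after it. Clipping $q_0$ to $[e^{-h^2\Omega^2/2},1]$ gives $q_*$, and $\theta_*=-2h^{-2}\log q_*$.

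For the zero criterion, $\theta_*=0$ holds exactly when $q_*=1$, that is, when $q_0\ge1$. Because the bracket is negative below $q_0$ and nonnegative above it, $q_0\ge1$ is equivalent to the bracket at $q=1$ being $\le0$. That condition is $\beta_{\rm batch}c_{\rm E}^2-c_{\rm mix}-\lambda/h^2\le0$. Since $\beta_{\rm batch}c_{\rm E}^2-c_{\rm mix}=\beta_{\rm batch}c_{\rm E}(c_{\rm E}-c_{\rm ref})-c_{\rm pair}/n_{\rm b}$, this is exactly $\lambda\ge\lambda_{\rm crit,batch}$.

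The case $n_{\rm b}=\infty$ needs no separate argument: $\beta_{\rm batch}=1$ and the $1/n_{\rm b}$ terms vanish, so the same steps apply. The degenerate cases are handled separately. If $\Omega=0$, the interval is $\{0\}$. If $n_{\rm b}=1$, then $\beta_{\rm batch}=0$ and $F(q)=2(1-c_{\rm pair}q)-2\lambda h^{-2}\log q$, which is strictly decreasing in $q$ because $c_{\rm pair}>0$. It is therefore minimized at $q=1$, giving $\theta=0$.

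The main obstacle, modest as it is, is the boundary case $\lambda=0$. There $q_0=c_{\rm mix}/(\beta_{\rm batch}c_{\rm E}^2)$ and the log term is absent, but strict convexity still comes from the quadratic. So the clip formula and the criterion, which now reads $\lambda_{\rm crit,batch}\le0$, continue to hold. I would also check positivity of $c_{\rm E}$, $c_{\rm ref}$ and $c_{\rm pair}$ so that no sign conventions flip.
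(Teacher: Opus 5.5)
Your proposal is correct and follows the paper's proof. Like the paper, you substitute $q=e^{-h^2\theta/2}$, get the derivative $2\beta_{\rm batch}c_{\rm E}^2q-2c_{\rm mix}-2\lambda/(h^2q)$ with positive second derivative, clip its root $q_0$, read the threshold off the sign of the derivative at $q=1$, and handle $n_{\rm b}=1$ by strict monotonicity. One cosmetic fix: at $\lambda=0$ the quadratic $\beta_{\rm batch}c_{\rm E}^2q^2-c_{\rm mix}q$ has two nonnegative roots, $0$ and $q_0$, so say ``unique positive root''; that is all you use, since the admissible $q$-interval lies in $(0,1]$.
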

\begin{proof}
Put $q=e^{-h^2\theta/2}$. Substituting into \cref{eq:gaussian-joint-objective} gives the derivative
$2\beta_{\rm batch}c_{\rm E}^2q-2c_{\rm mix}-2\lambda/(h^2q)$
and its second derivative is strictly positive.
Solving the derivative's quadratic and projecting its root proves the
minimum. The upper endpoint is optimal exactly when the derivative at
$q=1$ is nonpositive, giving the threshold. For $n_{\rm b}=1$ the
objective is strictly decreasing in $q$; a zero cap is immediate.
\end{proof}

\paragraph{Special cases.}
\Cref{prop:gaussian-joint-calibration} gives the batch-only minimum by setting $\lambda=0$: for finite $n_{\rm b}>1$,
$q_0=c_{\rm ref}/c_{\rm E}+c_{\rm pair}/((n_{\rm b}-1)c_{\rm E}^2)$,
with zero optimal exactly when $n_{\rm b}\le1+c_{\rm pair}/[c_{\rm E}(c_{\rm E}-c_{\rm ref})]$.
Its uncapped minimizer satisfies $\chi_{\rm batch}^*<\chi_{\rm exact}$ for every finite $n_{\rm b}$; isotropy makes the expected objective identical for every fixed unit direction, so averaging directions does not remove this effect.
For the population-penalty case $n_{\rm b}=\infty$,
$q_0=(c_{\rm ref}+\sqrt{c_{\rm ref}^2+4\lambda/h^2})/(2c_{\rm E})$ and
$\lambda_{\rm crit}=h^2c_{\rm E}(c_{\rm E}-c_{\rm ref})=h^4/(2e)+O(h^5)$.
Zero is the unique minimum for $\lambda\ge\lambda_{\rm crit}$; in both cases the proposition's clipping enforces the cap. \Cref{fig:calibration-limits} plots the separate objectives.

\paragraph{Fixed-batch small-step suppression.}
As $h\downarrow0$, $\chi_{\rm exact}=h^2+\frac43h^3+O(h^4)$, $L_{{\rm batch},h}(\chi_{\rm exact})=h^2/n_{\rm b}+O(h^3/n_{\rm b})$, and
\begin{equation}\label{eq:gaussian-calibration-small-h}
 L_{{\rm batch},h}(0)=\frac{h^4}{4}\bigl[(1-n_{\rm b}^{-1})e^{-1}+n_{\rm b}^{-1}\bigr]+O(h^5),
 \quad
 n_{{\rm b},{\rm cutoff}}=2e/h^2+O(h^{-1}).
\end{equation}
Fixed finite $n_{\rm b}$ therefore suppresses the useful frequency at small $h$; its $1/(n_{\rm b}h^2)\to\infty$ lies outside \cref{cor:joint-calibration-limit}. Recovery of $\omega_{\rm exact}\to1$ requires $n_{\rm b}h^2\to\infty$. Under $|\omega|\le\Omega$, truncate to $\chi\le h^2\Omega^2$: a positive population optimum can coexist with a zero minibatch optimum. At $h=.1$ these are $\omega=1.06784$ and zero for $n_{\rm b}=128$; under $\Omega=.5$ they become $.5$ and zero. An eight-uniform-substep Heun Base reference substitutes $r_{\rm ref}$ with $(1+h/8+h^2/128)^8$; exactness means matching that approximate reference. Other substep schedules require their actual product of Heun multipliers.

\begin{figure}[htbp]\centering
\includegraphics[width=.95\linewidth]{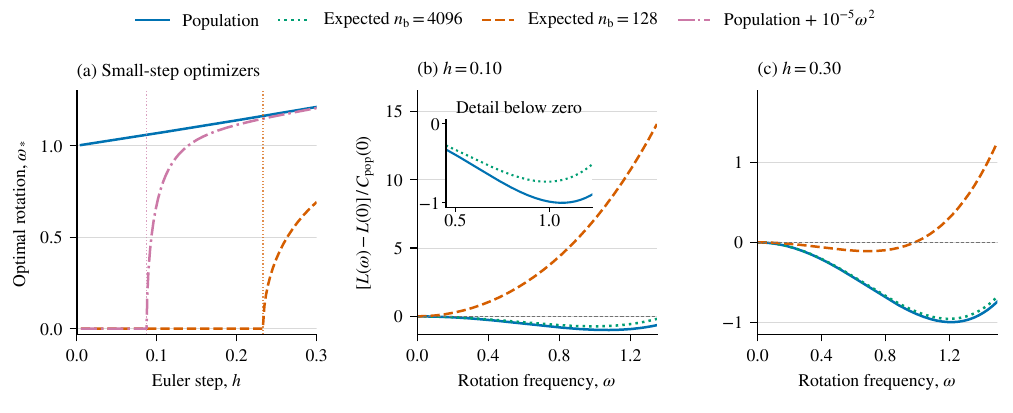}
\caption{Gaussian calibration optima. Left: optimal frequency for unpenalized population, expected iid minibatch ($n_{\rm b}=128$), and penalized population ($\lambda=10^{-5}$) objectives. Middle/right: unpenalized curves at $h=.10,.30$, subtracting each curve's value at zero and dividing by $C_h(0)$. Formula curves use fixed unit scales and an exact Base reference; they are not neural measurements.}\label{fig:calibration-limits}
\end{figure}

\paragraph{A separate result for scales recomputed on the batch.}
Take iid $X_i=(x_i,y_i)$ and $s_{\rm batch}=\max\{\operatorname{std}(x_1,\ldots,x_{n_{\rm b}}),.05\}$ in the same toy model, with divisor $n_{\rm b}-1$ and fixed $n_{\rm b}\ge2$. Define complex means $A_{\rm batch}=n_{\rm b}^{-1}\sum_i e^{ia_{\rm E}x_i/s_{\rm batch}}$ and $R_{\rm batch}=n_{\rm b}^{-1}\sum_i e^{ir_{\rm ref}x_i/s_{\rm batch}}$. Conditional on all $x_i$, $y_i$ remain iid; for $\theta=\omega^2$ and $q_\theta=e^{-h^2\theta/(2s_{\rm batch}^2)}$,
\begin{equation}\label{eq:batch-scale-toy}
 \E_y V_{\rm batch}=|q_\theta A_{\rm batch}-R_{\rm batch}|^2+(1-q_\theta^2)/n_{\rm b},
 \qquad
 \partial_\theta\E V_{\rm batch}=\frac{h^2}{n_{\rm b}}\E(s_{\rm batch}^{-2})+O(h^4).
\end{equation}
The last remainder is uniform on every fixed finite cap $0\le\theta\le\Omega^2$. For proof differentiate the first identity to obtain
$h^2q_\theta s_{\rm batch}^{-2}[\operatorname{Re}(A_{\rm batch}\overline R_{\rm batch})-q_\theta(|A_{\rm batch}|^2-n_{\rm b}^{-1})]$.
Use $|R_{\rm batch}-A_{\rm batch}|\le |r_{\rm ref}-a_{\rm E}|s_{\rm batch}^{-1}n_{\rm b}^{-1}\sum_i|x_i|$, $1-q_\theta\le h^2\Omega^2/(2s_{\rm batch}^2)$ and $s_{\rm batch}\ge.05$; the Gaussian first moment permits differentiation and expectation to commute. The derivative is positive for sufficiently small $h$, so zero is again the constrained minimum. This proof covers fixed directions in this toy model; PCA learned from the same bank does not automatically satisfy its conditional-independence step. Fixed-scale numerical cutoffs cannot be transferred to batch scales.

\paragraph{Penalty scaling and the compact joint limit.}
Supplementing \cref{prop:coercive-calibration}, on a compact parameter class, if $\mathfrak C_h/h^4\to\mathfrak C_4$ uniformly, $\mathcal R_{\rm pen}$ and $\mathfrak C_4$ are continuous, $\lambda_h/h^4\to\ell\ge0$, and returned parameters are $o(h^4)$-global minimizers, every cluster point minimizes $\mathfrak C_4+\ell\mathcal R_{\rm pen}$. Divide the minimizing inequality by $h^4$ and pass to a convergent subsequence to prove this. Thus $\lambda_h\asymp h^4$ retains an energy term in the leading problem and $\lambda_h=o(h^4)$ removes it under these hypotheses; neither scaling alone certifies order elevation.

\begin{corollary}[Compact joint calibration limit]
\label{cor:joint-calibration-limit}
Let $0\le\Omega<\infty$ be fixed, $n_{\rm b}(h)\in\{1,2,\ldots\}\cup\{\infty\}$ and $\lambda_h\ge0$. As $h\downarrow0$, suppose
$\tau_{{\rm batch},h}=1/(n_{\rm b}(h)h^2)\to\tau_{\rm batch}\ge0$
and $\ell_{{\rm pen},h}=\lambda_h/h^4\to\ell_{\rm pen}\ge0$.
Uniformly on $[0,\Omega^2]$, the joint objective divided by $h^4$ tends to
\begin{equation}\label{eq:joint-calibration-limit}
 F_{\rm joint}(\theta)=\frac{(1-\theta)^2}{4e}
             +(\tau_{\rm batch}+\ell_{\rm pen})\theta.
\end{equation}
Exact and $o(h^4)$-globally approximate minimizers converge to
$\operatorname{clip}(1-2e(\tau_{\rm batch}+\ell_{\rm pen}),[0,\Omega^2])$.
If $\Omega\ge1$, recovery of $|\omega_h|\to1$ is equivalent, under these
hypotheses, to $n_{\rm b}(h)h^2\to\infty$ and $\lambda_h/h^4\to0$.
\end{corollary}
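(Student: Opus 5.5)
Write $q_h=e^{-h^2\theta/2}$ as in the proof of \cref{prop:gaussian-joint-calibration}, and work directly with the explicit joint objective \cref{eq:gaussian-joint-objective}. The first step is to expand it uniformly in $\theta\in[0,\Omega^2]$ to order $h^4$. Since $\theta$ is bounded, $h^2\theta=O(h^2)$, so $e^{-h^2\theta/2}=1-h^2\theta/2+O(h^4)$ uniformly on the cap.

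For the population term I would record $a_{\rm E}^2=1+2h+h^2$ and $r_{\rm ref}^2=e^{2h}=1+2h+2h^2+\tfrac43h^3+O(h^4)$. Hence
\[
c_{\rm E}e^{-h^2\theta/2}-c_{\rm ref}=c_{\rm E}\bigl[e^{-h^2\theta/2}-e^{-(r_{\rm ref}^2-a_{\rm E}^2)/2}\bigr],
\]
with $r_{\rm ref}^2-a_{\rm E}^2=\chi_{\rm exact}=h^2+O(h^3)$. This difference equals $c_{\rm E}h^2(1-\theta)/2+O(h^3)$, and $c_{\rm E}\to e^{-1/2}$. The squared population discrepancy is therefore $h^4(1-\theta)^2/(4e)+O(h^5)$ uniformly, and $\beta_{\rm batch}=1-n_{\rm b}^{-1}\in[0,1]$. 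For the batch term, $c_{\rm pair}=e^{-(r_{\rm ref}-a_{\rm E})^2/2}$ with $r_{\rm ref}-a_{\rm E}=O(h^2)$, so $1-c_{\rm pair}e^{-h^2\theta/2}=h^2\theta/2+O(h^4)$ uniformly.

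Dividing by $h^4$, the batch term becomes $\tau_{{\rm batch},h}\theta+O(\tau_{{\rm batch},h}h^2)$ and the penalty becomes $\ell_{{\rm pen},h}\theta$. The population term becomes $\beta_{\rm batch}(1-\theta)^2/(4e)+O(h)$. The one delicate point is the factor $\beta_{\rm batch}$. Since $\tau_{{\rm batch},h}$ converges, $1/n_{\rm b}=\tau_{{\rm batch},h}h^2\to0$, so $\beta_{\rm batch}\to1$. The case $n_{\rm b}=1$ is harmless: it is eventually excluded, because $\tau_{{\rm batch},h}=h^{-2}$ would diverge. Together these give uniform convergence to $F_{\rm joint}$ in \cref{eq:joint-calibration-limit}, with the $\infty$ convention handled by $1/\infty=0$.

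The second step applies the compact-limit principle stated just before the corollary to the compact class $[0,\Omega^2]$. Every cluster point of exact or $o(h^4)$-approximate minimizers minimizes the strictly convex quadratic $F_{\rm joint}$ on the interval. Its unconstrained minimizer solves $-(1-\theta)/(2e)+\tau_{\rm batch}+\ell_{\rm pen}=0$, giving $\theta=1-2e(\tau_{\rm batch}+\ell_{\rm pen})$. Strict convexity makes the clipped value the unique constrained minimizer, so the whole family converges to it. When $\Omega=0$ the limit is trivially $0$.

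The third step is the equivalence. If $\Omega\ge1$, the limit point equals $1$ iff $\tau_{\rm batch}+\ell_{\rm pen}=0$. Under the convergence hypotheses this is $n_{\rm b}h^2\to\infty$ and $\lambda_h/h^4\to0$, and $\theta=\omega^2$ gives $|\omega_h|\to1$. The converse direction needs care. If $|\omega_h|\to1$ but, say, $\tau_{\rm batch}>0$, the limit would be strictly below $1$, a contradiction; this is valid only because the hypotheses assume the limits exist. I expect the main obstacle to be this bookkeeping of uniform remainders when $n_{\rm b}$ and $\lambda_h$ vary with $h$, rather than any conceptual difficulty.
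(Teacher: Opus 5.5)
Your proposal is correct and follows essentially the same route as the paper. It uses the same two uniform expansions, the observation that a finite limit of $\tau_{{\rm batch},h}$ forces $n_{\rm b}(h)\to\infty$, the strictly convex limiting quadratic with its clipped minimizer, and nonnegativity of both limiting terms for the equivalence. The only difference is how convergence of the minimizers is obtained. The paper derives it quantitatively from strong convexity, $|\theta_h-\theta_*|^2\le4e(2\delta_h+\varepsilon_h/h^4)$, where $\delta_h$ is the uniform error. Your cluster-point-plus-uniqueness argument reaches the same conclusion but gives no rate.
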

\begin{proof}
Uniformly for bounded $\theta$,
$c_{\rm E}e^{-h^2\theta/2}-c_{\rm ref}
=e^{-1/2}h^2(1-\theta)/2+O(h^3)$ and
$1-c_{\rm pair}e^{-h^2\theta/2}=h^2\theta/2+O(h^4)$.
The finite limit of $\tau_{{\rm batch},h}$ implies
$n_{\rm b}(h)\to\infty$, proving uniform convergence after division by
$h^4$. The limit has second derivative $1/(2e)$ and the displayed unique
constrained minimum. If $\delta_h$ is the uniform error and
$\varepsilon_h=o(h^4)$ is the global optimization tolerance, comparison
to that minimum gives
$|\theta_h-\theta_*|^2\le4e(2\delta_h+\varepsilon_h/h^4)\to0$.
The final equivalence follows from nonnegativity of both limiting terms.
\end{proof}

\paragraph{Stationarity at zero.}
The gradient in $\omega$ vanishes at zero for every $\lambda$, with Hessian $2(\lambda-\lambda_{{\rm crit},{\rm batch}})$. Under a positive cap, $\lambda<\lambda_{{\rm crit},{\rm batch}}$ makes zero a local maximum and a nonzero minimum better. At threshold equality the minimum remains unique in $\theta$, despite a zero frequency Hessian.
More generally $(a_{\rm E}I+hK)(a_{\rm E}I+hK)^\top=a_{\rm E}^2I-h^2K^2$ has no linear skew response at zero. Thus the isotropic population parameterization $K=U_{\rm fit}\diag(b)V_{\rm fit}^\top-V_{\rm fit}\diag(b)U_{\rm fit}^\top$ has zero gradients in all parameters at $b=0$; Adam with exact gradients and zero moments stays there. Finite samples, anisotropy and score error can break symmetry, precluding a diagnosis of archived fits.

At $h=.1$, $\Omega=2$, batch size 1000 without penalty gives $|\omega_*|=.68280$; the penalized population objective with $\lambda=10^{-5}$ gives $.68214$. Together they give zero: the joint cutoff $6.9362\times10^{-6}$ is below the population cutoff $1.6953\times10^{-5}$. \Cref{fig:joint-calibration} contrasts this with \cref{cor:joint-calibration-limit}. These fixed-scale conditions on $n_{\rm b}$ and $\lambda_h$ certify neither neural order elevation nor optimization success.

FID's finite-sample, generator-dependent bias \citep{chong2020effectively} need not cancel with paired inputs and equal sample sizes. Correcting the characteristic estimator does not yield an unbiased FID difference.

\FloatBarrier

%% file: supplement_compact/experiments.tex
\section{Frozen-Model Experiments and Generated Samples}\label{app:experiments}
The image-model ledger contains 2,056 final records, including validation/reference datasets and eight reused aliases, not independent replications. The 112 local groups and 1,064 analytical classification records are separate. All reported favorable, adverse and null outcomes remain in the tables and code supplement. Its final reports retain the separate partial/cancelled inventories; these are excluded from the final-record count.

\subsection{Common frozen-model protocol and uncertainty}\label{app:neural-experiments}\label{app:cifar}\label{app:neural-protocol}
Official frozen EDM checkpoints cover conditional CIFAR VP/VE, unconditional CIFAR VP, FFHQ 64 VP and AFHQv2 VP \citep{karras2022edm,krizhevsky2009learning,karras2019style,choi2020starganv2}. With $v=(x-D_\theta)/\sigma$, the correction is $u=-A_\sigma v$, where $A_\sigma={P_{\rm PCA}}[{U_{\rm fit}}\diag({b}_\sigma){V_{\rm fit}}^\top-{V_{\rm fit}}\diag({b}_\sigma){U_{\rm fit}}^\top]{P_{\rm PCA}}^\top$ uses calibration PCA32. Width $r$ counts rank pairs, giving matrix rank $\leq2r$; original coefficients use eight global log-noise knots. These learned-score proxies have no exact marginal-preservation guarantee (\cref{app:diffusion,app:feature-limits}).

EDM grids use $\kappa_{\rm EDM}=7$, bounds $\max(.002,\texttt{net.sigma\_min})$ and $\min(80,\texttt{net.sigma\_max})$, and \texttt{net.round\_sigma} before appending zero. Comparisons share noise/labels. Euler $N$/same-partition Heun use $N$/$2N-1$ calls; \Cref{app:calibration-algorithm} specifies the endpoint, objective, fitting families and independent validation. Numerical precision and backend conventions are in the code supplement. Shared calibration/PCA is not an independent subspace audit.

FID uses compatible Inception 2048 and official moments \citep{heusel2017gans}; CIFAR also has polynomial KID \citep{binkowski2018demystifying}, while FFHQ/AFHQ lack raw-real-feature KID/PRDC. Feature SW2 uses 256 fixed projections, not population image-space $W_2$. Paired full-FID intervals ordinarily use 128 bootstrap replicates (64 for the original 32 profile/orientation controls in \cref{tab:controls-cost}) and are pointwise 95\%, conditional on fitted fields, excluding calibration uncertainty/familywise coverage. With only 128 resamples, tail estimates are noisy; the sign of a near-zero effect provides weak evidence. Generator-dependent finite-sample FID bias need not cancel in paired equal-size differences \citep{chong2020effectively}; no extrapolation or bias correction is applied. Refits vary calibration/PCA/optimizer/validation seeds and usually share each checkpoint's test bank; the eight complete N4 pipelines instead use separate fresh banks (\cref{tab:low-nfe}). Fixed-field $t$-intervals, KID subset std and PRDC draw ranges remain distinct; sample sizes are not pooled. All signs/zero amplitudes remain, and negative $\Delta\mathrm{FID}=$field--reference is better. The cost scope is in \cref{tab:recorded-cost}; timing logs and seed manifests accompany the code. These uncertainty conventions apply throughout this appendix unless a different type is stated.
\input{supplement_compact/calibration_algorithm}
\FloatBarrier

\subsection{Low-NFE benefits, calibration robustness and adverse checkpoints}\label{app:neural-cifar}\label{app:neural-checkpoints}
\Cref{tab:low-nfe} separates the initial conditional-CIFAR \emph{VP} checkpoint's fixed-field sampling blocks, independent calibrations and held-out damping tests. Full-amplitude FH can worsen N4 FID despite better KID; validation damping changes that outcome. Bold numerical entries indicate active improvements under the uncertainty stated in each table, not a universal guarantee.

\input{supplement_compact/result_tables/low_nfe}
The later checkpoint expansion uses conditional \emph{VE}, unconditional VP and FFHQ; \cref{tab:checkpoint-refits,tab:own-checkpoints} retain every favorable, adverse and null family. 

\input{supplement_compact/result_tables/checkpoint_refits}
\input{supplement_compact/visuals/calibration_refits}

\input{supplement_compact/galleries/afhq}
\paragraph{Projected LO, controls and efficiency.}
Projected LO uses full $a_w$ ($A'$ and $DvAv$), an empirical 384-gradient weighted Gram Galerkin projector, not population Hodge. It and matched FH fit 3072 states with shared 4096-state PCA. \Cref{tab:controls-cost} keeps the failed prespecified gate, every shuffle/orientation, the grid-specific comparison and the adverse one-call contrast. Random-control distributions and optimizer/grid diagnostics are in the code supplement.  Full-cost limits remain in \cref{tab:recorded-cost}.

\input{supplement_compact/result_tables/controls_cost}
\FloatBarrier
\subsection{Own-field integration, leakage and diversity}\label{app:neural-own}
\Cref{tab:own-checkpoints} puts FID beside distance to each field's \emph{own} Heun 256 flow.  Independent sampling references, coupled refinement, finite-map leakage tests and PRDC \citep{naeem2020reliable} diagnose different questions (\cref{fig:diversity-rank-profiles}).

\input{supplement_compact/result_tables/own_checkpoint}
\input{supplement_compact/visuals/checkpoint_own}
\input{supplement_compact/result_tables/reference_diversity}
\noindent\begin{minipage}{\linewidth}
Refinement alone does not certify the reference errors required by \cref{eq:reference-error-budget}. No score mismatch to the actual learned Base law was measured, leaving continuous-law drift and discretization contributions to FID unidentified. Non-rejection is not marginal equality; normalized finite-map MMD/mean bounds do not bound FID or SW2, and polynomial features have law-difference blind spots. Mixed PRDC diagnostics and reference moments do not prove absence of mode collapse.
\end{minipage}\par

\input{supplement_compact/visuals/diversity}

\FloatBarrier
\subsection{Redistributing a frozen temporal profile}\label{app:neural-mechanisms}\label{app:neural-temporal-profile}
Four cells retain 22 frozen-profile conditions each (Base/FH/reverse/flat/early/late, eight shuffles/eight orientations), yielding 88 pilot 20k datasets (\cref{fig:compact-temporal}) and 24 fresh 50k confirmations fixed regardless of pilot sign. Profiles match integrated correction energy at calibration nodes; orientations preserve spectra without energy rescaling.

\input{supplement_compact/result_tables/temporal_confirmation}
\begin{figure}[H]\centering
\includegraphics[width=\linewidth]{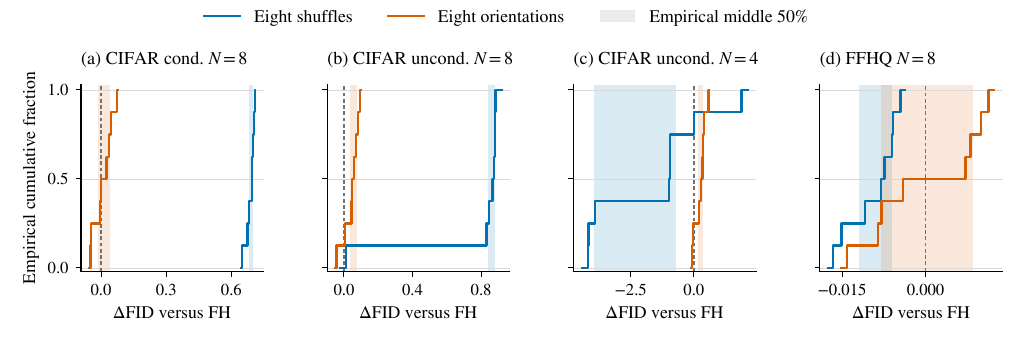}
\caption{Descriptive empirical distributions of eight time shuffles/eight orientations per cell. Shading is the descriptive middle 50\% of the eight draws, not a confidence interval. Curves are fixed-field pilot controls, not $p $-values; fresh 50k confirmations are tabulated in \cref{tab:temporal-confirmation}. Profile-energy matching and spectrum-preserving orientation controls are different interventions.}\label{fig:compact-temporal}\label{fig:neural-temporal}
\end{figure}

\input{supplement_compact/visuals/confirmation}

Fixed-seed CIFAR N8 examples are in \cref{fig:samples-cifar-eight}; the matched N4/FFHQ panels are in the code supplement\textquotesingle s extended gallery.
\FloatBarrier
\input{supplement_compact/galleries/temporal_cifar}

\noindent\begin{minipage}{\linewidth}
\subsection{Native window fitting and cross-transfer}\label{app:neural-window}
Native window fitting (\cref{fig:window-fitting}) comprises 40 fits: four cells $\times$ five independent refits $\times$ early/late, with the mask \emph{inside} the objective. Update, seed and shared-PCA rules are in \cref{tab:calibration-families,app:calibration-algorithm}. Actual-node coefficients are zero on inactive nodes, with piecewise-linear log-noise interpolation rather than smooth compact support. Chronological source-node $j$ transfers to receiver-node $j$ without reversal/rescaling/retraining. All 351 datasets and 132 paired contrasts complete, including 40 direct transfer--native contrasts. The original global-eight-knot comparison changes coefficient layout; equal rank/cap does not ensure equal realized norm, energy or spectra.

\begin{table}[H]\centering\small
\caption{All native-window cells, five refits per row. Native--Base and transferred--native ranges retain all point estimates (four-decimal rounding); zero selections are counted explicitly. The last column counts direct-transfer conditional 95\% intervals below/above zero, not simultaneous calibration coverage. Bold marks active all-five native gains or active all-five transfer gains; the latter has only three below-zero intervals. Zero-amplitude backend residuals are not learned effects.}\label{tab:compact-window}\label{tab:neural-window}
\begin{tabular}{llllrr}\toprule
Checkpoint/N & Window & Native--Base & Transfer--native & $\alpha=0$ & Below/above\\\midrule
CIFAR VE/8 & early & $[-0.0018,+0.0012]$ & $[+0.0000,+0.0020]$ & 2 & 0/0\\
CIFAR VE/8 & late & $[-0.0432,+0.0015]$ & $[+0.0003,+0.0511]$ & 0 & 0/1\\
CIFAR uncond./8 & early & $[-0.0047,+0.0012]$ & $[+0.0004,+0.0040]$ & 0 & 0/0\\
CIFAR uncond./8 & late & $[-0.0486,+0.0004]$ & $[-0.0179,+0.0430]$ & 1 & 0/0\\
CIFAR uncond./4 & early & $[-0.0079,+0.0026]$ & $[-0.0065,+0.0082]$ & 2 & 1/0\\
CIFAR uncond./4 & late & $\boldsymbol{[-4.3635,-1.7421]}$ & $[+0.5902,+2.4492]$ & 0 & 0/5\\
FFHQ/8 & early & $[-0.0042,-0.0000]$ & $[-0.0001,+0.0046]$ & 1 & 0/1\\
FFHQ/8 & late & $[-0.0046,+0.0012]$ & $\boldsymbol{[-0.0403,-0.0044]}$ & 0 & 3/0\\
\bottomrule\end{tabular}\end{table}

Table~\ref{tab:compact-window} retains the heterogeneous cells, including all five $\alpha=1$ unconditional-N4 late fits and FFHQ late-to-early transfer benefits in all five refits (three intervals below zero). Counts such as 5/5 on a common test bank imply neither near-one new-pipeline success nor a universal/pure causal temporal-position effect or simultaneous refit guarantee. Fixed-refit CIFAR N8 examples are in \cref{fig:samples-native-eight}; the matched N4/FFHQ panels remain in the extended gallery.

\end{minipage}\par

\input{supplement_compact/galleries/native_cifar}
\FloatBarrier
\subsection{Solver cross-fit and rank/cap specificity}\label{app:neural-solver-rank}
Matched solver cross-fits use new calibration/PCA, native validation and actual Heun second-call input gradients. \Cref{tab:solver-crossfit} retains native and transferred fields separately: CIFAR's Euler-native benefits do not transfer to Heun, whereas the unconditional Heun-native correction is disabled (\cref{fig:solver-crossfit}). Euler 8/same-partition Heun 8 require 8/15 calls and are not equal-cost comparisons.

\input{supplement_compact/result_tables/solver_crossfit}
\input{supplement_compact/visuals/crossfit}

\Cref{tab:rank-cap} retains all 30 rank/cap fits, including zero selections and nonmonotone FID behavior (\cref{fig:compact-rank}). All 36 dense datasets complete; lower point own-distance among active fields carries no population-SW2 significance. Diagnostics distinguish $A_\sigma$, $L_\sigma^{\mathrm{res}}=A_\sigma/\sigma$, cap hits/rank/energy and $\max|h/\sigma|\|A_\sigma\|$.

\input{supplement_compact/result_tables/rank_cap}
\begin{figure}[!htbp]\centering
\includegraphics[width=\linewidth]{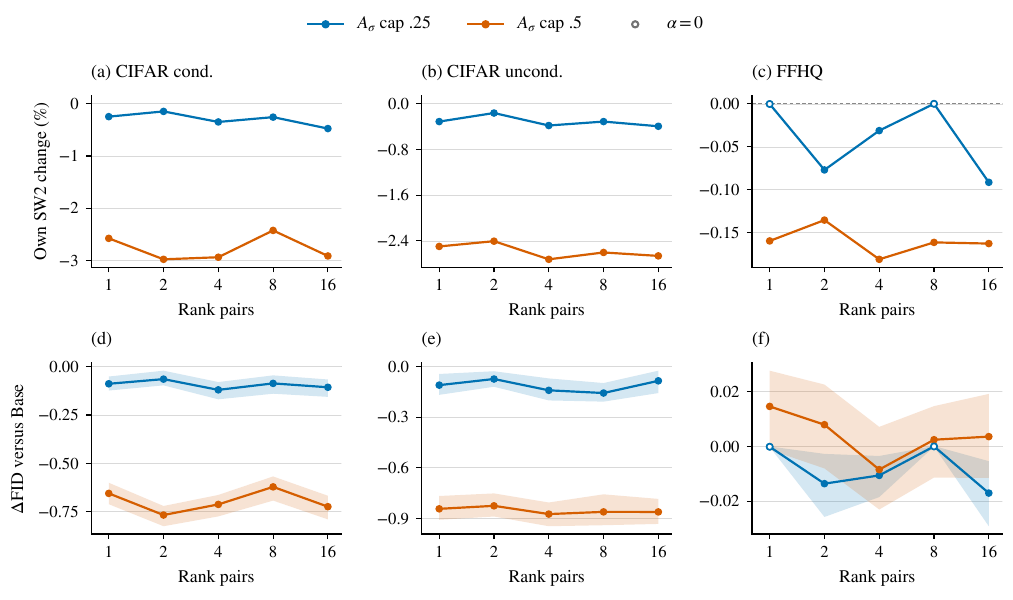}
\caption{Top: change in own-field SW2 across rank pairs and caps, with each field compared against its own Heun 256 reference; no population-SW2 interval is supplied. Bottom: FID change versus Base with conditional pointwise 95\% paired-bootstrap bands. Zero amplitudes and independent Base--Base distances remain separate. Rank does not yield monotone FID gains.}\label{fig:compact-rank}\label{fig:neural-rank-own}
\end{figure}

\FloatBarrier
\subsection{Local one-step response}\label{app:neural-local}
The 112 groups ($(8+8+4+8)$ intervals $\times$ Base/FH/early/late) use 4096 common initial states: one Euler step versus the \emph{same field's} 32-substep Heun interval, audited with 64 substeps on 1024 shared particles (\cref{fig:compact-local}). These FH/early/late fields are the original FH and its redistributed profiles, not Phase 8's native-window fits. \Cref{tab:local-response} retains all interval outcomes. A zero correction at the left Euler node does not imply an identically zero dense interval: late interpolation becomes active inside four such intervals. Entirely inactive intervals and nonbitwise residuals remain separate.

\input{supplement_compact/result_tables/local_response}
Positive-noise substeps are log-uniform; the terminal interval is linear with its final corrector omitted. The code supplement records integrator/backend settings and the additional characteristic/Taylor diagnostics.

\begin{figure}[!htbp]\centering
\includegraphics[width=\linewidth]{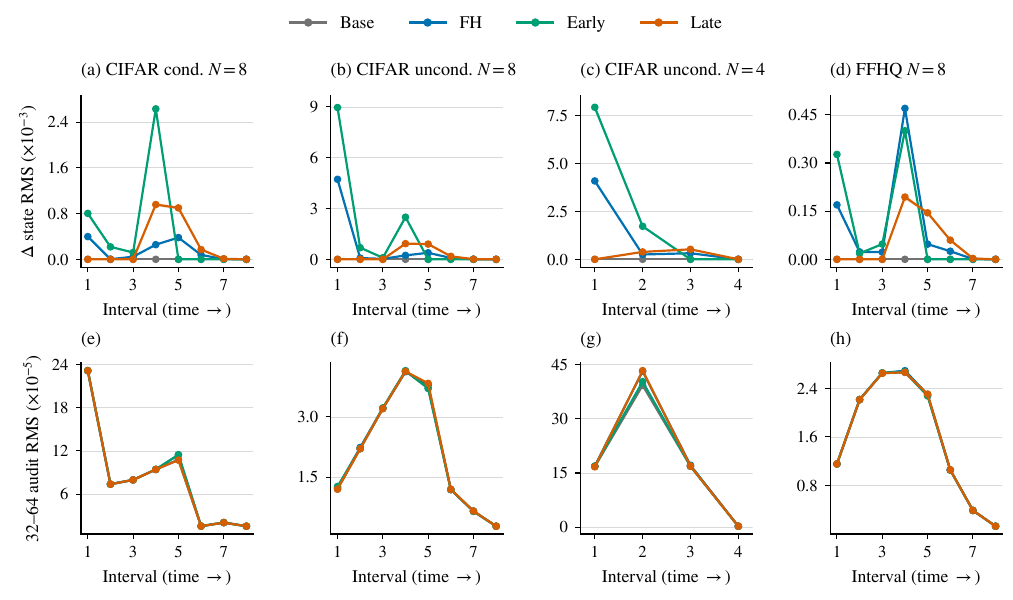}
\caption{Top: local coupled-state RMS error minus the corresponding Base error over all coarse intervals. Bands are 512-replicate conditional pointwise 95\% paired bootstraps; all 84 outcomes are counted in \cref{tab:local-response}. Bottom: own-field 32/64-substep RMS audit on the same bank, a numerical-convergence check without an independent floor or confidence interval. Denoised-feature proxy and characteristic errors remain separate.}\label{fig:compact-local}\label{fig:neural-local}
\end{figure}

Recorded denoised-feature and characteristic/Taylor responses are retained in the code supplement; these proxies do not certify full-state error improvement.

\FloatBarrier
\clearpage
\noindent\begin{minipage}{\linewidth}
\subsection{Approximately matched-call solver context}\label{app:neural-cost}
The 24 fresh 20k datasets and 18 fixed contrasts compare Euler 8 Base/FH, Euler 16 Base/frozen transferred FH8 and Base Heun 4/5/8/9. \Cref{tab:cost-context} retains the favorable low-budget bracket, its high-budget reversal, the same-Euler FFHQ adverse result and nulls (\cref{fig:compact-cost}). Approximate call brackets establish neither exactly equal cost nor call savings; the Euler 16 results concern transferred FH8, not a native N16 fit.

\input{supplement_compact/result_tables/cost_context}
\begin{figure}[H]\centering
\includegraphics[width=\linewidth]{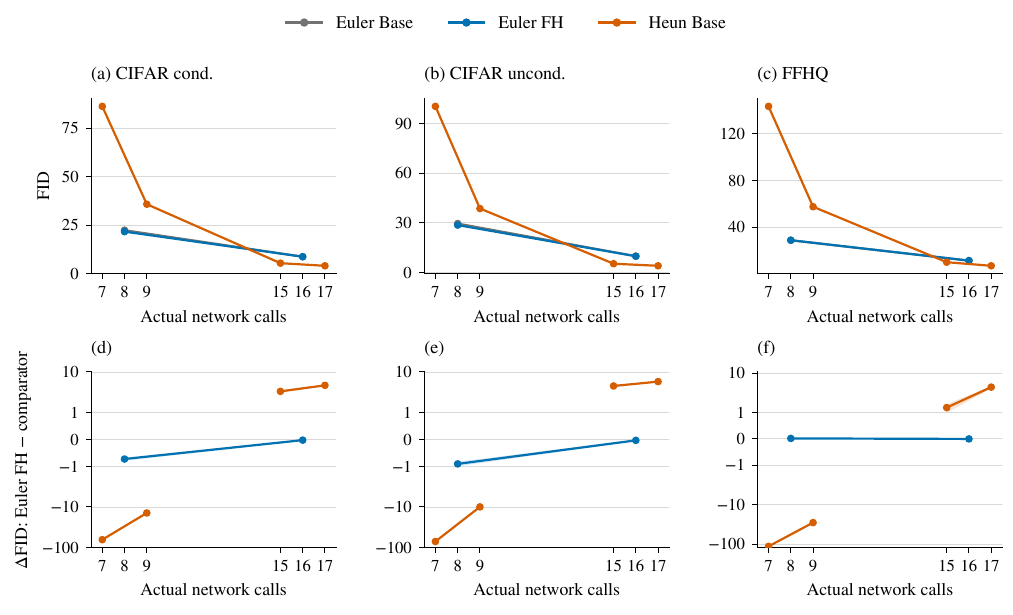}
\caption{Top: recorded FID versus actual network calls. Bottom: Euler FH minus the indicated comparator, with conditional pointwise 95\% paired-bootstrap bands; these difference intervals are not absolute-FID intervals. Competitiveness changes across budgets; Euler 16 carries the frozen N8 field. Connecting measured points does not imply interpolated call savings.}\label{fig:compact-cost}\label{fig:neural-cost}
\end{figure}

\end{minipage}\par
\FloatBarrier
\subsection{Cached objective--own-error associations}\label{app:neural-correlations}
A posthoc cached study of all 30 rank/cap fields finds heterogeneous objective--own-error associations across checkpoint/cap families (five fixed ranks each). The common-map selected-field objective and raw pre-update training histories are different axes, with penalty and amplitude mismatch. Zero fields have exact $\Delta\mathcal C_{\rm cal}=0$. These descriptive associations imply neither causality nor a universal objective--own-error law. All coefficients, the calibration-objective plot and the separate native-history analysis are retained in the code supplement, \texttt{docs/TECHNICAL\_REPRODUCIBILITY.md}.

\subsection{A null original independent-path RF pilot}\label{app:neural-rf}
The original independently coupled 1-RF checkpoint \citep{liu2023rectified} uses the path-specific adapter in \cref{app:rf-gauge}; its exact identity gives no neural preservation certificate or automatic extension to correlated reflow. Gaussian initialization at $\varepsilon=.001$ approximates the initial law; step $.999/N$ differs from the original $1/N$ sampler, so published-FID reproduction is not claimed. RF uses centered inputs with clipped uint8 map $127.5(x+1)$; EDM uses $127.5x+128$. Exact checkpoint restoration, time labels and architecture flags are in the code supplement. Two native N8/N16 fits follow \cref{tab:calibration-families}, share N8 node-3 PCA, select on independent 2048 validation and test on fresh 10k banks. All 20 datasets complete. \Cref{tab:rf-pilot} retains both paired FID nulls and all four own-error point measurements (\cref{fig:RF-pilot-response}). The tiny own-error changes have mixed signs. One checkpoint and two fits are insufficient to establish family-wide robustness or a convincing FID improvement; fixed-seed examples are in \cref{fig:samples-rf}.

\input{supplement_compact/result_tables/rf_pilot}

\input{supplement_compact/visuals/rf}

\input{supplement_compact/galleries/rf}

%% file: supplement_compact/calibration_algorithm.tex
\subsection{Calibration objective, fitting and selection}\label{app:calibration-algorithm}
\paragraph{Finite characteristic objective.}
Let $x_{ki},v_{ki}$ be cached Base states and velocities at coarse node $k$, $h_k$ its signed step, and $T_k^B(x_{ki})$ the Base endpoint from eight Heun substeps. EDM substeps are log-uniform at positive noise and linear on the terminal interval, omitting the terminal corrector; RF substeps are linear in time. These approximate Base targets differ from the frozen field's own-field references. Flattened states have dimension $d$; the uncentered velocity PCA basis $P_{\rm PCA}$ has 32 columns. Concatenate these with 96 seeded standard Gaussian directions, normalizing the $M_{\rm feat}=128$ columns to unit Euclidean norm to obtain $\widetilde d_j$. The earlier actual-Heun diagnostic instead uses only 64 Gaussian objective directions; its PCA32 defines the gauge subspace. Define
\begin{align}
 s_{kj}^{\rm feat}&=\max\{\operatorname{std}_{i\in\mathcal S_k}(x_{ki}^{\top}\widetilde d_j),\,0.05\},
 &d_{kj}^{\rm feat}&=\widetilde d_j/s_{kj}^{\rm feat},\label{eq:calibration-frequency}\\
 \phi_{kj}(z)&=\bigl(\cos((d_{kj}^{\rm feat})^{\top}z+\vartheta_j),\,
 \sin((d_{kj}^{\rm feat})^{\top}z+\vartheta_j)\bigr),\notag\\
 \mathcal C_k(\Theta;\mathcal B)&=\frac1{M_{\rm feat}}\sum_{j=1}^{M_{\rm feat}}
 \left\|\langle\phi_{kj}(\widehat x_k^\Theta)\rangle_{\mathcal B}
       -\langle\phi_{kj}(T_k^B(x_k))\rangle_{\mathcal B}\right\|_2^2,\label{eq:calibration-characteristic}\\
 \mathcal L_k(\Theta;\mathcal B)&=\mathcal C_k(\Theta;\mathcal B)
   +\lambda_{\rm pen}\frac{\langle\|c_k^\Theta\|_2^2/d\rangle_{\mathcal B}}
                  {\max\{\langle\|v_k\|_2^2/d\rangle_{\mathcal B},10^{-6}\}}.
 \label{eq:calibration-penalty}
\end{align}
Here $\operatorname{std}$ uses divisor $|\mathcal S_k|-1$, $\langle\cdot\rangle$ is the empirical particle mean, $\Theta=(U_{\rm fit},V_{\rm fit},b)$, $c_k^\Theta=A_{\sigma_k}v_{ki}$ for EDM and $c_k^\Theta=\delta v_{t_k}$ for RF. Thus $\widehat x=x+h(v-c)$ for EDM Euler, and $\widehat x=x+h(v+c)$ for RF. Full-cloud unpenalized $\mathcal C_{\rm cal}=N^{-1}\sum_k\mathcal C_k$ differs from the penalty-inclusive logged loss; cyclic logs cover only the selected node/batch. Original FH uses seeded $\vartheta_j\sim\operatorname{Unif}[0,2\pi)$; matched solver/window and RF fits use $\vartheta_j=0$. A common phase rotates the retained sine/cosine pair, preserving the ideal squared difference. This finite feature discrepancy is not image FID, exact Hodge projection or a whole-law norm. \Cref{app:calibration-limits} analyzes minibatch, penalty and initialization limitations; archived objectives remain unchanged.

\paragraph{Actual endpoint maps.}
Write $w_k^\Theta$ for the corrected field at node $k$ ($v-A v$ for EDM, $v+\delta v$ for RF). The implemented Euler predictor and Heun endpoint are
\begin{equation}\label{eq:calibration-endpoint}
\widehat x_{\rm E}=x+h_k w_k^\Theta(x),\qquad
\widehat x_{\rm H}=x+\frac{h_k}{2}\bigl[w_k^\Theta(x)+w_{k+1}^\Theta(\widehat x_{\rm E})\bigr].
\end{equation}
Here $h_k<0$ for EDM noise and $h_k>0$ for RF time. EDM uses $\widehat x_{\rm E}$ at receiving noise zero; RF retains the second call at $t=1$. The receiver evaluates the frozen network and gauge at the predictor, not a cached Base receiver. \Cref{eq:calibration-characteristic} uses the specified fitting solver's endpoint.

\paragraph{Fitted parameters and constraints.}
Cached trajectories, targets, PCA and feature maps remain fixed during fitting. Only $U_{\rm fit},V_{\rm fit},b$ are optimized; Heun fits differentiate through the actual receiving-network input in \cref{eq:calibration-endpoint}. Orthogonal latent factors and zero coefficients initialize the field. Adam updates are followed by column-norm and operator-cap projection; under-cap fields are not expanded. Linear interpolation preserves the cap. Native-window masks act \emph{inside} the endpoint objective, with inactive coefficients kept at zero. \Cref{tab:calibration-families} gives the full-cloud/minibatch rule and hyperparameters. Precision, autograd and projection implementation details are provided in the code supplement.

\begin{table}[H]\centering\small
\caption{Fitting families. ``Full'' uses all listed calibration particles/nodes per update; ``cyclic'' uses one node. Unless stated otherwise, $P_{\rm PCA}$ has 32 columns and the map has 128 directions. $r$ counts rank pairs; calibration sizes differ from test sizes.}\label{tab:calibration-families}
\begin{tabular}{p{.25\linewidth}p{.25\linewidth}p{.20\linewidth}p{.20\linewidth}}\toprule
Family & Update/scale bank & Adam updates, rate; $\lambda_{\rm pen}$ & Rank/cap/profile\\\midrule
Primary full-cloud FH; later checkpoint/refit FH; rank/cap & Full 4096, full-node mean $N^{-1}\sum_k\mathcal L_k$; full-node scale. Matched projected-LO FH uses 3072. & 800, .012; $10^{-5}$ & $r=2$ at N8, 8 at N4/N16/N20; cap .5. Rank study $r=1,2,4,8,16$, caps .25/.5; eight global log-noise knots.\\
Early actual-Heun diagnostic & Cyclic nodes, 128 without replacement from 4096/node; batch scale; 64 Gaussian directions only. & 400, .008; 0 & $r=2/8$ at N8/N16, cap .5; eight global knots.\\
Matched Euler/Heun cross-fit & Cyclic eight nodes, 128 without replacement from 4096; batch scale. & 800, .008; 0 & $r=2$, cap .5; eight global knots; identical fit seeds within solver pair.\\
Native early/late windows & Cyclic active half only, 128 without replacement from 4096; batch scale. & 800 active updates, .008; 0 & $r=8/2$ for N4/N8, cap .5; actual-node masked coefficients.\\
RF native N8/N16 & Cyclic nodes, 128 \emph{with} replacement from 4096; full-node scale fixed beforehand. & 800, .008; $10^{-5}$ & $r=2$, $\|K_t^{\rm RF}\|\leq.5$; eight linear-time knots, not the EDM cap.\\
\bottomrule\end{tabular}
\end{table}

\paragraph{Calibration, validation and transfer.}
Checkpoint, grid, random streams and evaluation scope are fixed before fitting. Cache the Base path, eight-substep endpoints and velocity PCA32; draw the feature directions once. Each update uses the family-specific full-cloud or cyclic-node batch in \cref{tab:calibration-families}, evaluates the corrected endpoint and loss, and projects the returned field onto the cap. Unconditional N4/N8 share calibration noise and N8 node-3 PCA within a refit; paired native windows share initialization and feature directions.

On independent validation noise/labels (4096 samples in Phase 2; 2048 in later pipelines), compare $\alpha\in\{0,.125,.25,.5,.75,1\}$ to BaseHeun256 by empirical Inception SW2 with 256 fixed projections. Select the first minimum and freeze the field before the prespecified test bank. Transfer amplitudes unchanged without receiver tuning; retain every negative outcome and zero selection. Exact RNG streams, seed formulas, checkpoint hashes and stage manifests are in the anonymous code supplement, \texttt{docs/TECHNICAL\_REPRODUCIBILITY.md}.

Weak LO replaces \cref{eq:calibration-characteristic} by the squared cached leading Euler characteristic defect with finite-difference Base acceleration. Projected LO instead fits $\frac18\sum_k|h_k|\,b_k^\top G_{k,\rm Gram}^{+}b_k$, normalized by its initial value: 3072 particles, 384 gradient features, relative eigenvalue cutoff $10^{-7}$, 600 Adam updates at .006 and cap .5. The remaining 1024 states share the 4096-state PCA and are not an independent subspace audit. Derivative-cache construction and JVP counts are documented in the code supplement.

\paragraph{Zero correction and cost scope.}
At $\alpha=0$ the correction is mathematically disabled; tiny nonbitwise AFHQ residuals are not learned-gauge effects, and their precise backend cause remains unresolved. Effects comparable to those residuals or dense-refinement distances remain unresolved.

\begin{table}[H]\centering\small
\caption{Cost accounting and its limits. Detailed recorded timings are in the code supplement; heterogeneous execution timings do not establish latency gains or amortization.}\label{tab:recorded-cost}
\begin{tabularx}{\linewidth}{>{\raggedright\arraybackslash}p{.25\linewidth}X}\toprule
Component & Recorded scope and limitation\\\midrule
Coarse sampling & Actual network calls are reported with each solver comparison. Worker generation time excludes fitting, feature extraction, I/O and scheduling.\\
Offline selection & Calibration, PCA, fitting, dense references and six validation candidates are additional costs. No controlled end-to-end deployment or break-even measurement is available.\\
Diagnostic timing & Elapsed local-shard hours include feature calls/I/O; calibration-cloud wall seconds and sampling worker-hours are distinct and cannot be added.\\
\bottomrule\end{tabularx}\end{table}

%% file: supplement_compact/result_tables/low_nfe.tex
\begin{table}[htbp]\centering\small
\setlength{\tabcolsep}{3pt}\renewcommand{\arraystretch}{1.12}
\caption{Initial conditional-CIFAR VP results (upper block), distinct from later conditional VE. Differences are FH--Base unless the comparator is explicitly named. Fixed-field means and $t$95 intervals concern 28 matched 50k sampling blocks, not 28 fits. Refit means/$t$95 concern eight independent calibration/PCA/optimizer runs on one common 20k bank. The eight N4 full pipelines instead use separate calibration, validation and fresh 50k test banks per pipeline; their range is descriptive and intervals conditional per field. All fixed-field methods and adverse/null outcomes remain. Bold denotes active improvements under the stated uncertainty, not calibration-population coverage.}\label{tab:low-nfe}
\begin{tabular}{@{}p{.275\linewidth}cp{.31\linewidth}p{.29\linewidth}@{}}\toprule
Study/scope & N & FID or $\Delta$FID & Uncertainty/interpretation\\\midrule
Initial fixed fields, 50k & 8 & \shortstack{Base: $22.4244$; Random: $21.9074$\\Weak LO: $22.0634$; FH: $21.6174$} & FID point values\\
Full amplitude $\alpha=1$, 50k & 4 & \shortstack{Base: $67.653$; FH: $78.470$\\Weak LO: $62.435$; Random: $70.647$} & FH worsens FID despite lower KID\\
28 fixed-field seed blocks & 8 & $\boldsymbol{-0.80831}$ & $t$95 $[-0.81772,-0.79890]$\\
28 fixed-field seed blocks & 16 & $\boldsymbol{-0.03682}$ & $t$95 $[-0.04009,-0.03354]$\\
28 fixed-field seed blocks & 20 & $\boldsymbol{-0.01890}$ & $t$95 $[-0.02195,-0.01586]$\\
Frozen validation choice, 50k & 4 & $\boldsymbol{-4.41279}$ ($\alpha=0.75$) & Paired95 $[-4.68740,-4.22669]$\\
Frozen validation choice, 50k & 8 & $\boldsymbol{-0.78505}$ ($\alpha=1$) & Paired95 $[-0.81966,-0.75101]$\\
Eight refits,common 20k bank & 8 & $\boldsymbol{-0.77578}$ & Refit $t$95 $[-0.81993,-0.73164]$\\
Eight refits,common 20k bank & 16 & $\boldsymbol{-0.03911}$ & Refit $t$95 $[-0.04484,-0.03337]$\\
Eight full pipelines,fresh 50k each & 4 & $\boldsymbol{[-4.4679,-1.3642]}$ & Descriptive range; 8/8 paired intervals below zero\\
\midrule\multicolumn{4}{l}{Original AFHQ checkpoint: separate frozen fields and fresh 50k banks}\\
AFHQ original FH--Base & 8 & $\boldsymbol{-0.01667}$ & Paired95 $[-0.02405,-0.01053]$\\
AFHQ original FH--Random & 8 & $-0.00633$ & Paired95 $[-0.01668,+0.00160]$\\
AFHQ original FH--Base & 16 & $-0.00037$ & Paired95 $[-0.00142,+0.00087]$\\
AFHQ original FH--Random & 16 & $-0.00001$ & Paired95 $[-0.00104,+0.00106]$\\
\bottomrule\end{tabular}\end{table}

%% file: supplement_compact/result_tables/checkpoint_refits.tex
\begin{table}[htbp]\centering\small
\setlength{\tabcolsep}{3pt}\renewcommand{\arraystretch}{1.12}
\caption{All 25 Phase5 refits plus all five AFHQ refits on common 20k test banks per checkpoint/NFE. Upper rows give five-refit $\Delta$FID ranges, point improve/worsen counts and counts of conditional pointwise 95\% intervals below/above zero (not refit-population coverage). Lower rows give individual $\Delta$FID and those intervals. Bold marks active all-five improvements with five below-zero intervals, or an individual active below-zero interval. $\alpha=0$ disables the gauge: its residual signs, including a positive interval, are backend observations rather than learned effects.}\label{tab:checkpoint-refits}
\begin{tabular}{@{}lclp{.235\linewidth}cc@{}}\toprule
Checkpoint/N & Refit scope & $\alpha$ & $\Delta$FID/range & Improve/worsen & Below/above\\\midrule
CIFAR VE/8 & five refits & $1$ & $\boldsymbol{[-0.7555,-0.4993]}$ & 5/0 & 5/0\\
CIFAR uncond./8 & five refits & $1$ & $\boldsymbol{[-0.8935,-0.6967]}$ & 5/0 & 5/0\\
CIFAR uncond./4 & five refits & $0.75$ & $[-1.0762,+0.5071]$ & 3/2 & 2/1\\
FFHQ/4 & five refits & $1$ & $\boldsymbol{[-0.2814,-0.2001]}$ & 5/0 & 5/0\\
FFHQ/8 & five refits & $1$ & $[+0.0117,+0.0161]$ & 0/5 & 0/4\\
\midrule\multicolumn{6}{l}{AFHQ/8: every independent refit, including the three disabled corrections}\\
AFHQ/8 & refit 0 & $1$ & $-0.00382$ & \multicolumn{2}{c}{$[-0.01499,+0.00831]$}\\
AFHQ/8 & refit 1 & $0$ (off) & $+0.00007$ & \multicolumn{2}{c}{$[-0.00001,+0.00016]$}\\
AFHQ/8 & refit 2 & $0$ (off) & $+0.00006$ & \multicolumn{2}{c}{$[-0.00001,+0.00013]$}\\
AFHQ/8 & refit 3 & $0$ (off) & $+0.00012$ & \multicolumn{2}{c}{$[+0.00002,+0.00020]$}\\
AFHQ/8 & refit 4 & $1$ & $\boldsymbol{-0.01199}$ & \multicolumn{2}{c}{$[-0.02340,-0.00279]$}\\
\midrule AFHQ random & 16 draws & -- & 1/16 beats original FH & \multicolumn{2}{c}{Descriptive; not a $p$-value}\\
\bottomrule\end{tabular}\end{table}

%% file: supplement_compact/visuals/calibration_refits.tex
\begin{figure}[!htbp]
\noindent Because the refits share a common test bank, all-fit success counts should not be extrapolated to new pipelines. All independent refits appear below, with checkpoint/NFE context in \cref{fig:checkpoint-nfe}; full random-draw distributions are retained in the code supplement.\par\smallskip
\centering
\includegraphics[width=\linewidth]{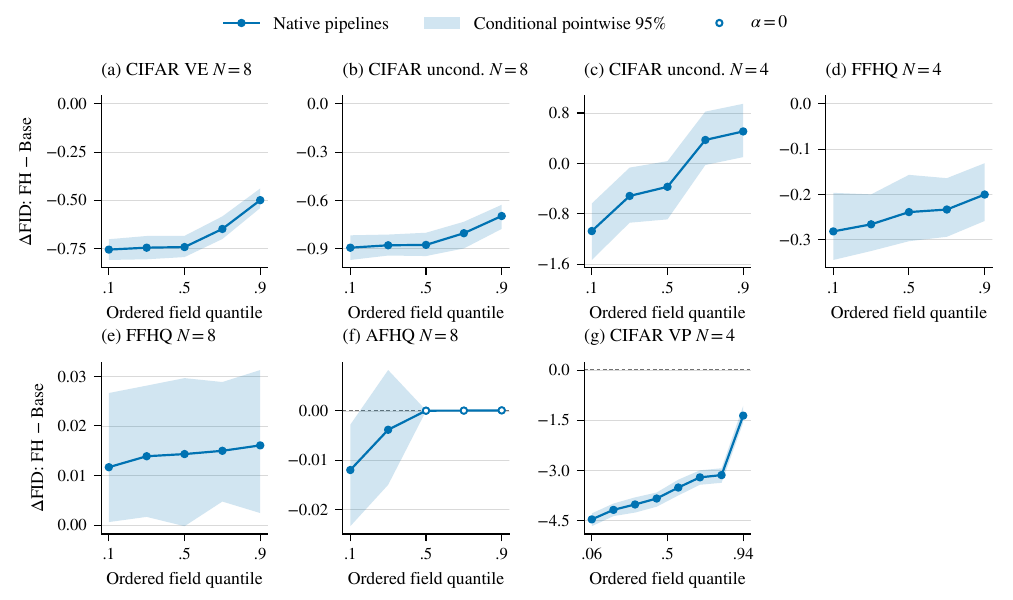}
\caption{All 30 Phase 5/6 independent refits, ordered within each prespecified family by observed FID effect; the separate panel retains all eight conditional-CIFAR N4 fresh-50k pipelines. Bands are the existing conditional intervals for each fitted field, not confidence bands for a refit population or quantile trend. Open symbols retain $\alpha=0$; their tiny nonbitwise residuals are not learned effects.}\label{fig:calibration-refits}
\end{figure}

%% file: supplement_compact/galleries/afhq.tex
\begin{figure}[!htbp]\centering
\includegraphics[width=\linewidth]{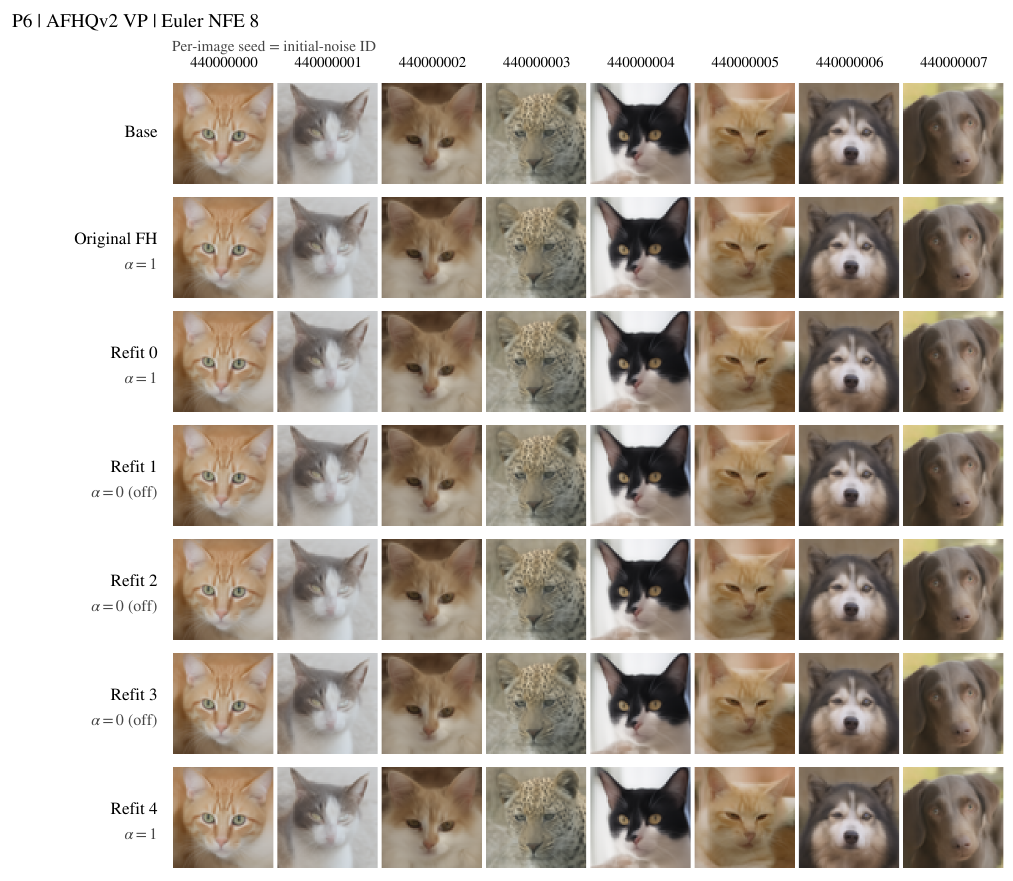}
\caption{AFHQv2 VP, Euler N8: Base, original FH and all five native refits on the same first eight seeds of the common 20k test bank. Refits 1--3 select $\alpha=0$; any tiny backend pixel residual is not a learned-gauge effect. The common appearance does not establish marginal equivalence.}\label{fig:samples-afhq}
\par\smallskip\noindent AFHQ's zero-amplitude corrections are mathematically disabled: their nonbitwise residuals, even an above-zero interval, are not learned harm; the exact backend cause remains unresolved. The active AFHQ examples are linked in \cref{fig:samples-afhq}.\par
\end{figure}

%% file: supplement_compact/result_tables/controls_cost.tex
\begin{table}[htbp]\centering\small
\setlength{\tabcolsep}{3pt}\renewcommand{\arraystretch}{1.12}
\caption{Original conditional-CIFAR VP projected-LO, profile/orientation, grid and call-context tests. Unless marked FID, numbers are method--named-reference $\Delta$FID. Pilot/draw ranges are descriptive; original 32 control intervals use 64 bootstrap resamples, distinct from the 128-replicate later contrasts. Orientation results preserve spectra without state-conditional energy matching. The screened fresh 50k one-call comparison is exploratory; it finds no measured integer-call saving.}\label{tab:controls-cost}
\begin{tabular}{@{}p{.31\linewidth}p{.235\linewidth}p{.38\linewidth}@{}}\toprule
Comparison & Recorded result & Scope/uncertainty\\\midrule
Projected LO--matched FH,8 calls & $+0.07929$ & Conditional paired95 $[+0.01858,+0.13887]$\\
Projected LO--matched FH,16 calls & $-0.00337$ & Conditional paired95 $[-0.03822,+0.03062]$\\
Prespecified advancement & $|\Delta\mathrm{FID}|\geq.10$:FAIL & Direction-agnostic pilot gate; no gated50k result\\
16 time-knot shuffles vs FH & $[+0.11726,+0.62612]$ & 0/16/0 intervals below/above/containing zero; 64 resamples\\
16 spectrum orientations vs FH & $[-0.08190,+0.02965]$ & 1/0/15 intervals below/above/containing zero; 64 resamples\\
Log-grid N8: transferred/native fit & $22.2376/22.3275$ FID & Matched20k point values; no direct interval\\
FH8--Base9,fresh 50k & $+3.30886$ & Conditional paired95 $[+3.23315,+3.36578]$\\
\bottomrule\end{tabular}\end{table}

%% file: supplement_compact/result_tables/own_checkpoint.tex
\begin{table}[htbp]\centering\small
\setlength{\tabcolsep}{3pt}\renewcommand{\arraystretch}{1.12}
\caption{Nine final checkpoint/NFE cells: FID uses fresh 50k banks and own-SW2 uses 20k banks. FH minus Base FID has conditional pointwise 95\% paired intervals, alongside feature SW2 to each field's own Heun 256 flow (point measurements). Bold denotes below-zero active FID intervals, not a confidence statement about SW2. All own distances decrease (0.01\%--14.04\%), including two adverse FID cells; the FFHQ16 FID interval includes zero.}\label{tab:own-checkpoints}
\begin{tabular}{@{}lccp{.255\linewidth}p{.30\linewidth}@{}}\toprule
Checkpoint/N & $\alpha$ & $\Delta$FID & Conditional 95\% & Own SW2: Base $\to$ FH\\\midrule
CIFAR VE/4 & $0.75$ & $\boldsymbol{-5.52007}$ & $[-5.68471,-5.29807]$ & $0.124881\to0.107342$\\
CIFAR VE/8 & $1$ & $\boldsymbol{-0.67896}$ & $[-0.72016,-0.64694]$ & $0.067337\to0.065565$\\
CIFAR VE/16 & $1$ & $\boldsymbol{-0.03520}$ & $[-0.04953,-0.01744]$ & $0.035883\to0.035726$\\
CIFAR uncond./4 & $0.75$ & $+0.40445$ & $[+0.11676,+0.65074]$ & $0.160136\to0.147643$\\
CIFAR uncond./8 & $1$ & $\boldsymbol{-0.83403}$ & $[-0.88015,-0.77928]$ & $0.084550\to0.082423$\\
CIFAR uncond./16 & $1$ & $\boldsymbol{-0.03262}$ & $[-0.04973,-0.01568]$ & $0.040564\to0.040390$\\
FFHQ/4 & $1$ & $\boldsymbol{-0.26932}$ & $[-0.31001,-0.23475]$ & $0.151296\to0.148716$\\
FFHQ/8 & $1$ & $+0.01241$ & $[+0.00257,+0.02168]$ & $0.081905\to0.081813$\\
FFHQ/16 & $1$ & $+0.00187$ & $[-0.00299,+0.00710]$ & $0.044911\to0.044905$\\
\bottomrule\end{tabular}\end{table}

%% file: supplement_compact/visuals/checkpoint_own.tex
\begin{figure}[!htbp]
\noindent Better empirical integration need not give better FID; the AFHQ refit counterexample in \cref{tab:reference-diversity} also shows the reverse pattern.\par\smallskip
\centering
\includegraphics[width=\linewidth]{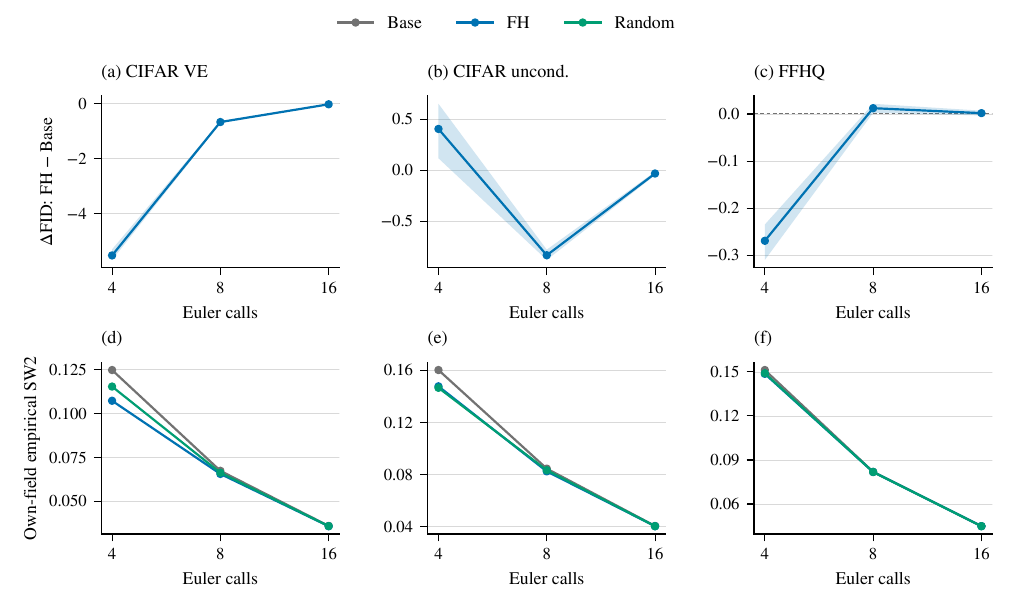}
\caption{All nine fresh-50k checkpoint/NFE cells. Real-data FID contrasts have the registered conditional pointwise 95\% paired intervals; empirical own-field distances have no population interval. Lower own-field integration error can coexist with worse FID, notably at unconditional CIFAR N4.}\label{fig:checkpoint-nfe}
\end{figure}

%% file: supplement_compact/result_tables/reference_diversity.tex
\begin{table}[htbp]\centering\small
\setlength{\tabcolsep}{3pt}\renewcommand{\arraystretch}{1.12}
\caption{Own-error counterexample and all reference/leakage/diversity diagnostic families. Refinement distances are neither independent floors nor certified reference errors, population intervals or equality tests. Non-rejection and finite-map bounds do not establish marginal equality; mixed PRDC changes do not prove absence of mode collapse. No row is highlighted as evidence of a universal benefit.}\label{tab:reference-diversity}
\begin{tabular}{@{}p{.25\linewidth}p{.36\linewidth}p{.30\linewidth}@{}}\toprule
Diagnostic & Recorded result & Scope/uncertainty\\\midrule
AFHQ refit 0 & Own SW2 $.04854216\to.04857189$ & Point only; lower point FID, higher own-error\\
Independent BaseA--BaseB & VE $\approx.005750$; uncond. $\approx.005641$; FFHQ $\approx.005279$ & Same-metric sampling references\\
Coupled Heun 256--512 & SW2 $.000415$--$.000709$ & Same-noise numerical refinement\\
Independent dense kernel test & All Holm-adjusted $p=1$ & Fixed 4096 subsets; 512 permutations; six amplitudes\\
Clipped 128-function mean bound & $\approx.0771$--$.0776$; empirical max $.0102$--$.0107$ & Simultaneous finite-map means, not FID/SW2\\
PRDC $k=5$: eight N4 pipelines & Recall $\uparrow$;precision/density/coverage $\downarrow$ in all eight & Three paired 5k draws;descriptive signs\\
Later diversity supplement & 52 datasets/156 draw records & All conditions retained; no raw-real FFHQ features\\
\bottomrule\end{tabular}\end{table}

%% file: supplement_compact/visuals/diversity.tex
\begin{figure}[!htbp]\centering
\includegraphics[width=\linewidth]{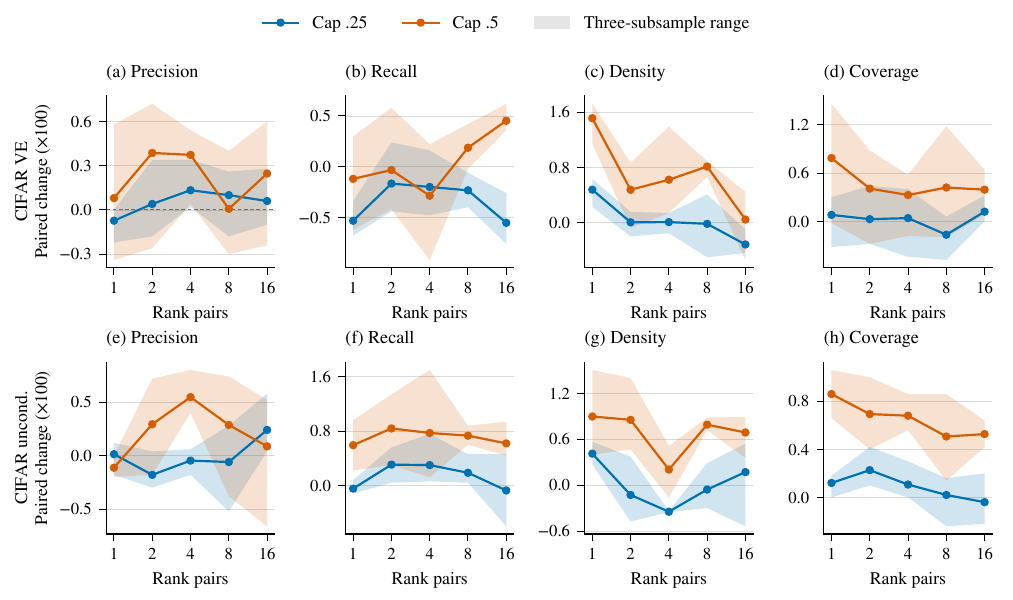}
\caption{All 20 CIFAR rank/cap fields, with PRDC precision, recall, density and coverage. Curves average three paired 5k subsamples and scale changes by 100 (percentage points for precision/recall/coverage, dimensionless density times 100); shading is their descriptive minimum--maximum range, not a confidence interval or independent refit uncertainty. Mixed diversity changes do not prove absence of mode collapse; FFHQ raw-real-feature PRDC is unavailable.}\label{fig:diversity-rank-profiles}
\end{figure}

%% file: supplement_compact/result_tables/temporal_confirmation.tex
\begin{table}[H]\centering\small
\setlength{\tabcolsep}{3pt}\renewcommand{\arraystretch}{1.12}
\caption{All 24 fresh 50k confirmations, fixed before viewing pilot signs. Each cell is condition minus original FH FID and its conditional pointwise 95\% paired interval. Base intervals negate the registered FH--Base intervals; original FH is the zero reference. Bold marks active profile changes with intervals wholly below zero, not a universal temporal-position or familywise claim. Energy-matched profiles and spectrum-preserving orientations are distinct controls.}\label{tab:temporal-confirmation}
\begin{tabular}{@{}p{.14\linewidth}p{.195\linewidth}p{.195\linewidth}p{.195\linewidth}p{.195\linewidth}@{}}\toprule
Condition & CIFAR VE/8 & CIFAR uncond./8 & CIFAR uncond./4 & FFHQ/8\\\midrule
Base & \shortstack{$+0.6866$\\$[+0.6505,+0.7138]$} & \shortstack{$+0.8838$\\$[+0.8404,+0.9389]$} & \shortstack{$-0.5127$\\$[-0.8195,-0.2143]$} & \shortstack{$-0.0107$\\$[-0.0190,-0.0029]$}\\
FH & $0$ (reference) & $0$ (reference) & $0$ (reference) & $0$ (reference)\\
Reverse & \shortstack{$+0.6778$\\$[+0.6501,+0.7047]$} & \shortstack{$+0.8555$\\$[+0.8244,+0.8917]$} & \shortstack{$\boldsymbol{-1.3877}$\\$[-1.6944,-1.0814]$} & \shortstack{$\boldsymbol{-0.0110}$\\$[-0.0209,-0.0004]$}\\
Flat & \shortstack{$+0.6777$\\$[+0.6467,+0.7056]$} & \shortstack{$+0.8580$\\$[+0.8238,+0.9015]$} & \shortstack{$\boldsymbol{-0.7105}$\\$[-1.0159,-0.4280]$} & \shortstack{$\boldsymbol{-0.0079}$\\$[-0.0158,-0.0005]$}\\
Early & \shortstack{$+0.6519$\\$[+0.6249,+0.6829]$} & \shortstack{$+0.8130$\\$[+0.7843,+0.8548]$} & \shortstack{$\boldsymbol{-1.0661}$\\$[-1.3705,-0.7692]$} & \shortstack{$\boldsymbol{-0.0128}$\\$[-0.0218,-0.0031]$}\\
Late & \shortstack{$\boldsymbol{-1.5950}$\\$[-1.6315,-1.5579]$} & \shortstack{$\boldsymbol{-1.5403}$\\$[-1.5815,-1.4895]$} & \shortstack{$+1.3962$\\$[+1.3100,+1.4784]$} & \shortstack{$+0.1014$\\$[+0.0889,+0.1173]$}\\
\bottomrule\end{tabular}\end{table}

%% file: supplement_compact/visuals/confirmation.tex
\begin{figure}[H]
\noindent \Cref{tab:temporal-confirmation} reports all confirmation comparisons to original FH: late helps both CIFAR N8 cells but harms unconditional N4 and FFHQ N8, while reverse/flat/early show the opposite preference (\cref{fig:temporal-confirmation}).\par\smallskip
\centering
\includegraphics[width=\linewidth]{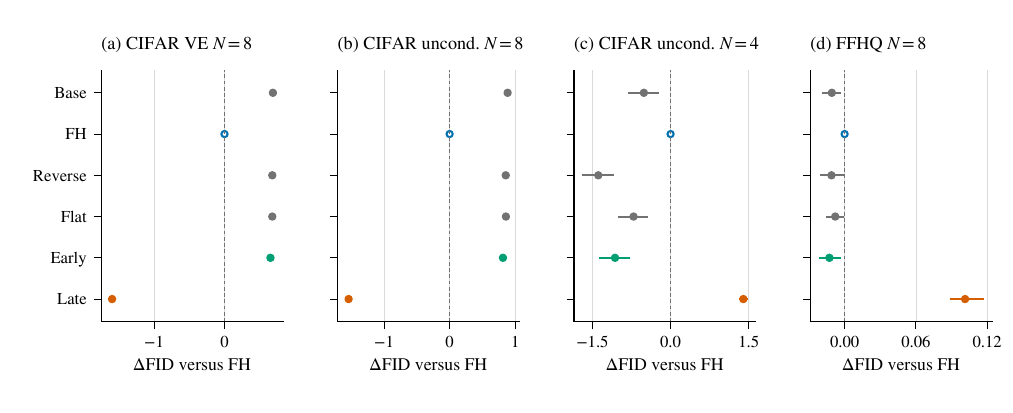}
\caption{All 24 fresh-50k frozen-profile confirmations, with the registered conditional pointwise paired-FID intervals. Control labels are categorical, so they are not joined as a time curve. Late redistribution helps the two CIFAR N8 cells but harms unconditional N4 and FFHQ N8. This frozen-field intervention is separate from native-window retraining in \cref{fig:window-fitting}.}\label{fig:temporal-confirmation}
\end{figure}

%% file: supplement_compact/galleries/temporal_cifar.tex
\begin{figure}[H]\centering
\includegraphics[width=\linewidth]{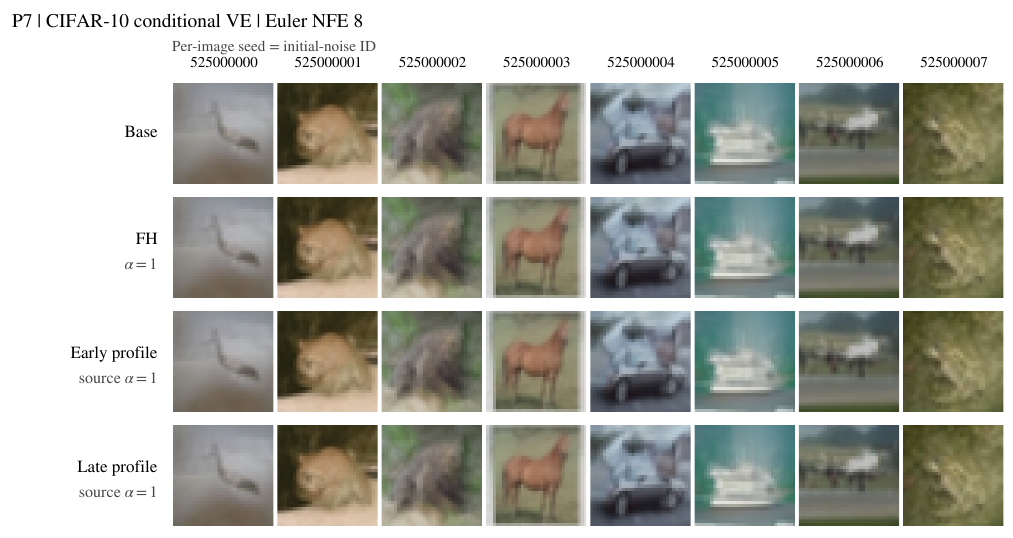}\par\smallskip
\includegraphics[width=\linewidth]{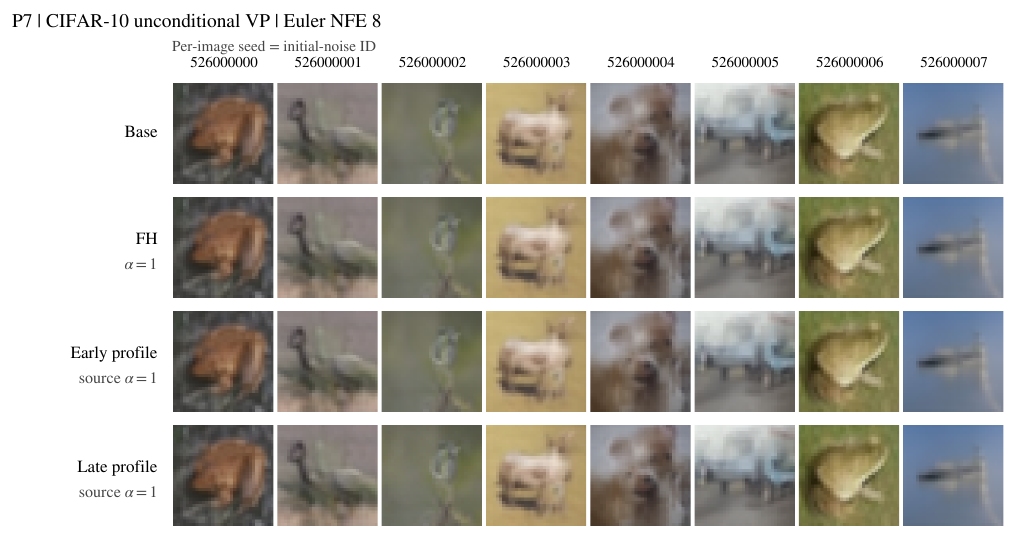}
\caption{Frozen-profile confirmation: conditional VE (top) and unconditional VP (bottom) CIFAR-10, Euler N8. Within each panel, Base/FH/early/late rows share the eight displayed per-image seeds and any conditional labels from the fresh 50k bank. Early/late redistribute the existing fitted profile; source $\alpha$ names its original selected strength, not a new native-window fit.}\label{fig:samples-cifar-eight}
\end{figure}

%% file: supplement_compact/galleries/native_cifar.tex
\begin{figure}[H]\centering
\includegraphics[width=.96\linewidth]{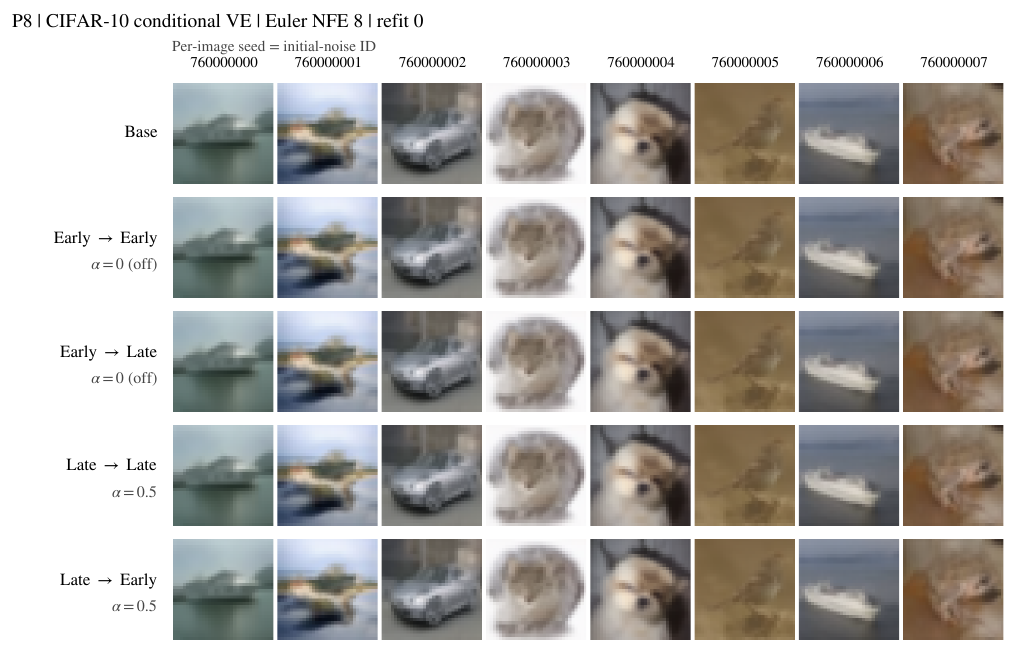}\par\smallskip
\includegraphics[width=.96\linewidth]{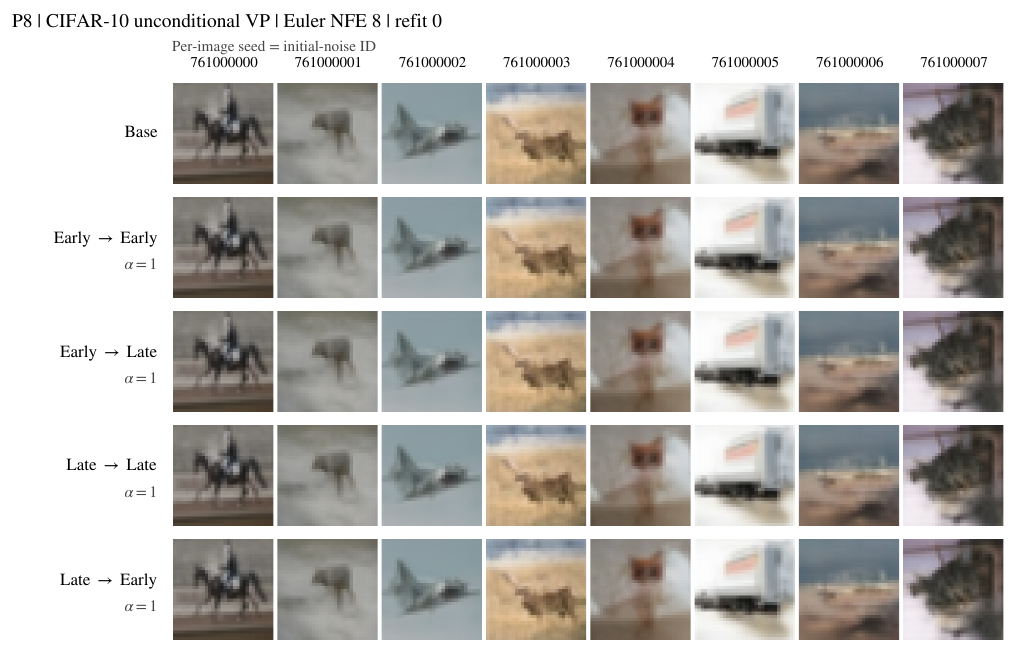}
\caption{Native-window refit 0: conditional VE (top) and unconditional VP (bottom) CIFAR N8. Arrows give fitting window $\to$ receiving window. Both native and transferred rows are retained; transfer inherits the native-selected amplitude without receiver tuning. Refit 0 is chosen by index uniformly across cells, not by appearance or FID.}\label{fig:samples-native-eight}
\end{figure}

%% file: supplement_compact/result_tables/solver_crossfit.tex
\begin{table}[htbp]\centering\small
\setlength{\tabcolsep}{3pt}\renewcommand{\arraystretch}{1.12}
\caption{All 12 fitted-field solver contrasts on common 20k banks. Each cell gives field minus the receiving solver's Base FID, its conditional pointwise 95\% paired-bootstrap interval, and the inherited native-selected amplitude. The six receiving Base rows have $\Delta=0$. Bold denotes active fields with intervals wholly below zero; $\alpha=0$ rows are recorded backend residuals. These are comparisons to receiving Base, not direct transfer--native intervals.}\label{tab:solver-crossfit}
\begin{tabular}{@{}p{.255\linewidth}p{.225\linewidth}p{.225\linewidth}p{.225\linewidth}@{}}\toprule
Fit $\to$ receiver & CIFAR VE & CIFAR uncond. & FFHQ\\\midrule
Euler $\to$ Euler (8 calls) & \shortstack{$\boldsymbol{-0.04242}$\\$[-0.06876,-0.02280]$\\$\alpha=1$} & \shortstack{$\boldsymbol{-0.03922}$\\$[-0.06648,-0.01572]$\\$\alpha=1$} & \shortstack{$-0.00169$\\$[-0.00477,+0.00194]$\\$\alpha=0.125$}\\
Euler $\to$ Heun (15 calls) & \shortstack{$+0.09914$\\$[+0.09076,+0.10550]$\\$\alpha=1$} & \shortstack{$+0.05597$\\$[+0.05007,+0.06246]$\\$\alpha=1$} & \shortstack{$+0.00079$\\$[-0.00129,+0.00369]$\\$\alpha=0.125$}\\
Heun $\to$ Euler (8 calls) & \shortstack{$-0.00050$\\$[-0.00813,+0.00690]$\\$\alpha=0.125$} & \shortstack{$+0.00015$\\$[-0.00015,+0.00045]$\\$\alpha=0$ (off)} & \shortstack{$+0.00213$\\$[-0.00269,+0.00807]$\\$\alpha=0.25$}\\
Heun $\to$ Heun (15 calls) & \shortstack{$-0.00161$\\$[-0.00424,+0.00057]$\\$\alpha=0.125$} & \shortstack{$+0.00000$\\$[-0.00008,+0.00033]$\\$\alpha=0$ (off)} & \shortstack{$\boldsymbol{-0.00311}$\\$[-0.00573,-0.00030]$\\$\alpha=0.25$}\\
\bottomrule\end{tabular}\end{table}

%% file: supplement_compact/visuals/crossfit.tex
\begin{figure}[!htbp]\centering
\includegraphics[width=\linewidth]{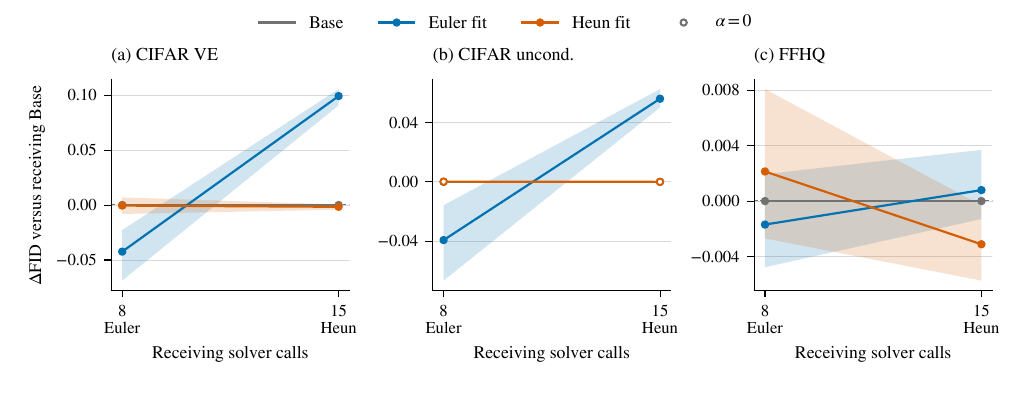}
\caption{All 18 native-fit/receiving-solver conditions, retaining disabled corrections. Existing conditional pointwise 95\% paired intervals compare each field with the receiving solver\textquotesingle s own Base. Transferring an Euler-native correction can worsen Heun. Receiver costs are 8 and 15 network calls, so this matrix does not make an equal-cost claim.}\label{fig:solver-crossfit}
\end{figure}

%% file: supplement_compact/result_tables/rank_cap.tex
\begin{table}[htbp]\centering\small
\setlength{\tabcolsep}{3pt}\renewcommand{\arraystretch}{1.12}
\caption{All 30 rank/cap fields: five fixed rank-pair settings $r=1,2,4,8,16$ per row, not five refit replications. FID ranges retain every point estimate; below/above/contains-zero counts classify conditional pointwise 95\% intervals. The last column counts active fields with smaller point own-SW2, each versus its own Heun 256 reference; no population-SW2 interval is available. Bold marks rows with five active FID improvements and five below-zero intervals. Zero-amplitude residuals remain in the ranges.}\label{tab:rank-cap}
\begin{tabular}{@{}lcp{.245\linewidth}ccc@{}}\toprule
Checkpoint & Cap & $\Delta$FID range & Active/off & \shortstack{Below/above/\\contains zero} & Own lower\\\midrule
CIFAR VE & $0.25$ & $\boldsymbol{[-0.1198,-0.0642]}$ & 5/0 & 5/0/0 & 5/5\\
CIFAR VE & $0.5$ & $\boldsymbol{[-0.7676,-0.6219]}$ & 5/0 & 5/0/0 & 5/5\\
CIFAR uncond. & $0.25$ & $\boldsymbol{[-0.1571,-0.0740]}$ & 5/0 & 5/0/0 & 5/5\\
CIFAR uncond. & $0.5$ & $\boldsymbol{[-0.8748,-0.8260]}$ & 5/0 & 5/0/0 & 5/5\\
FFHQ & $0.25$ & $[-0.0170,+0.0000]$ & 3/2 & 3/0/2 & 3/3\\
FFHQ & $0.5$ & $[-0.0084,+0.0147]$ & 5/0 & 0/0/5 & 5/5\\
\bottomrule\end{tabular}\end{table}

%% file: supplement_compact/result_tables/local_response.tex
\begin{table}[H]\centering\small
\setlength{\tabcolsep}{3pt}\renewcommand{\arraystretch}{1.12}
\caption{Local coupled-state RMS minus Base: counts of 512-replicate conditional pointwise 95\% intervals below/above/including zero. All 84 non-Base contrasts and 28 Base zero-reference rows are retained. Among non-Base rows, 24 intervals are exactly $[0,0]$. Four above-zero late rows have zero correction at the left Euler node but become active inside the dense interpolated interval; they are not globally inactive-field effects.}\label{tab:local-response}
\begin{tabular}{@{}p{.27\linewidth}cp{.16\linewidth}p{.16\linewidth}p{.16\linewidth}@{}}\toprule
Checkpoint/N & Intervals & FH & Frozen early & Frozen late\\\midrule
CIFAR VE/8 & 8 & 0/8/0 & 0/4/4 & 0/5/3\\
CIFAR uncond./8 & 8 & 0/7/1 & 0/4/4 & 0/4/4\\
CIFAR uncond./4 & 4 & 0/3/1 & 0/2/2 & 0/3/1\\
FFHQ/8 & 8 & 0/7/1 & 0/4/4 & 0/4/4\\
\midrule All non-Base & 28 $\times$ 3 & \multicolumn{3}{c}{$0$ below, $55$ above, $29$ include zero (84 total)}\\
\bottomrule\end{tabular}\end{table}

%% file: supplement_compact/result_tables/cost_context.tex
\begin{table}[H]\centering\small
\setlength{\tabcolsep}{3pt}\renewcommand{\arraystretch}{1.12}
\caption{All 18 fixed cost-context contrasts from 24 fresh 20k datasets. Each cell is field--comparator $\Delta$FID and its conditional pointwise 95\% paired-bootstrap interval; parentheses give actual field/comparator calls. Bold marks active below-zero intervals. FH8@16 carries the frozen N8 field to Euler16. The 8-versus-7/9 and 16-versus-15/17 brackets are approximate, not exactly equal cost or measured call savings.}\label{tab:cost-context}
\begin{tabular}{@{}p{.275\linewidth}p{.215\linewidth}p{.215\linewidth}p{.215\linewidth}@{}}\toprule
Field -- comparator (calls) & CIFAR VE & CIFAR uncond. & FFHQ\\\midrule
Euler FH8 -- Base8 (8/8) & \shortstack{$\boldsymbol{-0.7259}$\\$[-0.7904,-0.6797]$} & \shortstack{$\boldsymbol{-0.9069}$\\$[-0.9896,-0.8284]$} & \shortstack{$+0.0165$\\$[+0.0017,+0.0329]$}\\
Euler FH8@16 -- Base16 (16/16) & \shortstack{$-0.0259$\\$[-0.0523,+0.0027]$} & \shortstack{$\boldsymbol{-0.0346}$\\$[-0.0630,-0.0065]$} & \shortstack{$-0.0031$\\$[-0.0104,+0.0035]$}\\
Euler FH8 -- Heun4 (8/7) & \shortstack{$\boldsymbol{-64.6699}$\\$[-65.4678,-64.0654]$} & \shortstack{$\boldsymbol{-71.7132}$\\$[-72.4355,-71.0759]$} & \shortstack{$\boldsymbol{-114.3447}$\\$[-115.2294,-113.5988]$}\\
Euler FH8 -- Heun5 (8/9) & \shortstack{$\boldsymbol{-14.1552}$\\$[-14.7566,-13.7185]$} & \shortstack{$\boldsymbol{-10.0018}$\\$[-10.6906,-9.5506]$} & \shortstack{$\boldsymbol{-28.6022}$\\$[-29.2142,-28.1417]$}\\
Euler FH8@16 -- Heun8 (16/15) & \shortstack{$+3.3165$\\$[+3.1017,+3.5232]$} & \shortstack{$+4.5093$\\$[+4.2581,+4.6729]$} & \shortstack{$+1.3363$\\$[+0.9899,+1.6234]$}\\
Euler FH8@16 -- Heun9 (16/17) & \shortstack{$+4.6772$\\$[+4.4883,+4.8891]$} & \shortstack{$+5.7780$\\$[+5.5436,+5.9581]$} & \shortstack{$+4.4108$\\$[+4.0715,+4.6783]$}\\
\bottomrule\end{tabular}\end{table}

%% file: supplement_compact/result_tables/rf_pilot.tex
\begin{table}[htbp]\centering\small
\setlength{\tabcolsep}{3pt}\renewcommand{\arraystretch}{1.12}
\caption{Original independent-path RF pilot: all two held-out 10k FID contrasts and four own-SW2 point measurements. FID intervals are conditional pointwise 95\% paired bootstraps; both include zero. Own distances compare each field to its own Heun 256 reference, without an independent floor, population-SW2 interval or equivalence inference. No entry is bolded as an established FID improvement.}\label{tab:rf-pilot}
\begin{tabular}{@{}cccp{.25\linewidth}p{.33\linewidth}@{}}\toprule
N & $\alpha$ & $\Delta$FID & Conditional 95\% & Own SW2: Base $\to$ FH\\\midrule
8 & $0.75$ & $-0.00912$ & $[-0.03317,+0.01396]$ & $0.05707995 \to 0.05704312$\\
16 & $0.5$ & $-0.00836$ & $[-0.02428,+0.00342]$ & $0.03542936 \to 0.03543288$\\
\bottomrule\end{tabular}\end{table}

%% file: supplement_compact/visuals/rf.tex
\begin{figure}[!htbp]\centering
\includegraphics[width=\linewidth]{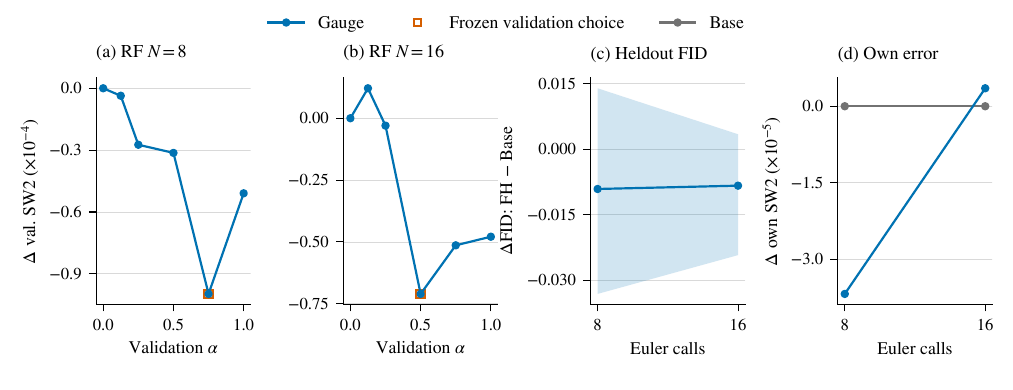}
\caption{Original independent-Gaussian RF pilot: all validation amplitudes, the frozen choices, both fresh-10k FID contrasts and the two own-field comparisons. The last panel shows $(\mathrm{SW2}_{\rm FH,own}-\mathrm{SW2}_{\rm Base,own})\times10^5$; all four original values remain in the scalar record. FID shading is the existing conditional pointwise 95\% paired interval, and both intervals span zero. Validation and own-SW2 points have no population interval. One checkpoint and two fits do not establish family-wide robustness or neural marginal preservation.}\label{fig:RF-pilot-response}
\end{figure}

%% file: supplement_compact/galleries/rf.tex
\begin{figure}[H]\centering
\includegraphics[width=\linewidth]{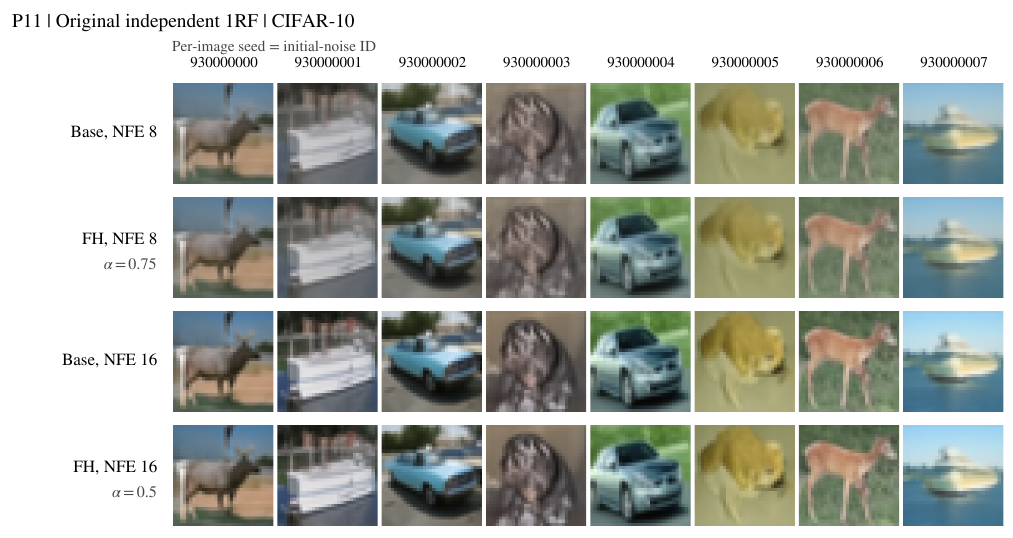}
\caption{The original independently coupled RF checkpoint, Base/FH at N8/N16, with validation-selected strengths .75/.5 and the first eight fresh-10k seeds. These matched examples accompany the null paired-FID result in \cref{app:neural-rf}; visual similarity does not establish marginal equivalence.}\label{fig:samples-rf}\label{fig:samples-rf-digits}
\end{figure}

%% file: supplement_compact/provenance.tex
\section{Attribution and scope}\label{app:novelty-boundary}
\paragraph{Continuous marginal freedom is inherited.}
The weighted gradient/solenoidal geometry is classical in Wasserstein analysis \citep{ambrosio2008gradient}. For diffusion marginals, we use the formulation of \citet{horvat2024gauge} and observable score-error bounds of \citet{khelifa2026gradients}. Antisymmetric invariant-law currents appear in \citet{ma2015recipe}. EDDY's revised Stein-operator analysis conditions its no-bias claim on batch independence and charges bias to dependence \citep{vinograd2026eddy}.

\paragraph{Practical skew-score sampling is also inherited.}
Preconditioned diffusion already tests skew-score terms $M_{\rm sk}\nabla\log\rho$, $M_{\rm sk}^\top=-M_{\rm sk}$, on pretrained models \citep{ma2022preconditioned,ma2025preconditioned}. ECCV Section~5/supplement and journal Section~8.4 report no clear acceleration from these solenoidal ablations, distinct from matrix-preconditioner gains. We fit low-rank temporal fields to a finite-step distributional proxy and test solver/window transfer.

\paragraph{Other uses of field freedom.}
Flux Matching fixes a time-dependent marginal path by matching reference weighted-flux divergence (their Appendix~E, Eqs.~(53)--(56)); their Appendix~C.4 proposes divergence-free score-model augmentation and their Appendix~C.2 discusses field-dependent finite-step behavior at constrained boundaries \citep{pao2026flux}.

Helmholtz-SDE optimizes marginal-compatible stochastic paths for variational inference \citep{smith2026helmholtz}; Gauge Flow Models learn a bundle architecture rather than impose our weighted-solenoidal constraint \citep{strunk2025gauge}. \Citet{vuong2025score} study learned diffusion fields as gradient flows.

\paragraph{Distributional numerical error is not a new subject.}
Finite-time weak expansions \citep{talay1990expansion}, weak modified equations and higher-order invariant-measure sampling \citep{debussche2012weak,abdulle2014invariant,laurent2022invariant,bronasco2025exotic} are classical. \Citet{hurault2026geometry} derive exact-score Euler--Maruyama weak/Fr\'echet errors, including the ODE limit; \citet{pierret2025gaussian} compute Gaussian diffusion solutions and Wasserstein discretization errors. Stationary versus time-dependent analysis alone is not our distinction.

\paragraph{Irreversible dynamics and numerical preservation.}
\citet{lelievre2013optimal} optimize Gaussian-invariant nonreversible linear drifts for convergence to equilibrium; \citet{duncan2017nonreversible} analyze splitting bias and asymptotic variance. Continuous-target preservation therefore need not preserve discrete sampling. The B-series volume-preservation impossibility, apart from the exact-flow series, quantifies over all divergence-free fields \citep{iserles2007volume}; our reachability question instead selects a field for one specified law/path.

\paragraph{The closest Gaussian finite-step comparison.}
\citet{claustre2026gaussian} study commuting Gaussian schedules, a necessary Euler drift bound and stable-convex-path exactness. Changing their variance path changes drift; factorization at a fixed path changes regression variance without changing drift. We vary skew motion within a prescribed affine fixed-shape path, characterizing its numerical orbit, target-law intersection and nearest errors under rank/norm budgets. Bures--Wasserstein identities are standard \citep{gelbrich1990formula,bhatia2019bures}.

\paragraph{Other numerical design variables.}
Transport, rectification and minibatch coupling change transport geometry \citep{khrulkov2023ddpm,lavenant2022flow,liu2023rectified,pooladian2023multisample,bansal2026straightness}. Strain/vorticity analysis links Jacobians to truncation error/stability without our fixed-path constraint \citep{tao2026strain}. Scheduling, learned timesteps and bespoke/differentiable solvers optimize complementary variables \citep{gao2023backward,sabour2024align,xue2024optimized,shaul2024bespoke,tong2025ld3,wang2025solver,li2026catflow,huang2026art}; our nearby-call Heun context neither benchmarks all of them nor claims an optimal solver.

\paragraph{Incremental contribution.}
The leading projected Euler objective is a baseline. Our contribution is the set of sufficient marginal-order conditions, affine orbit--slice exactness results, and constrained spectral projection formulas established under the stated hypotheses. We do not claim arbitrary-path solvability or priority over all related formulations.